\theoremstyle{definition}
\newtheorem{definition}{Definition}
\theoremstyle{plain}
\newtheorem{theorem}{Theorem}
\newtheorem{lemma}{Lemma}
\newtheorem{proposition}{Proposition}
\newtheorem{proof-sketch}{Proof Sketch}
\definecolor{Top1}{HTML}{0072B2} 
\definecolor{Top2}{HTML}{E69F00} 
\definecolor{Top3}{HTML}{CC79A7}
\newcommand{\pmstd}[2]{#1$\pm$#2}
\newcommand{\placeholder}[1]{\textit{#1}} 
\newcommand{\good}[1]{\textcolor{Top3}{#1}}
\newcommand{\bad}[1]{\textcolor{Top2}{#1}}
\newcommand{\paramcells}[6]{%
  \multicolumn{1}{r}{\scriptsize\itshape \#params} &
  \scriptsize #1 & \scriptsize #2 & \scriptsize #3 &
  \scriptsize #4 & \scriptsize #5 & \scriptsize #6\\}
\newcommand{\paramcellsimp}[7]{%
  \multicolumn{1}{r}{\scriptsize\itshape \#params} &
  \scriptsize #1 & \scriptsize #2 & \scriptsize #3 &
  \scriptsize #4 & \scriptsize #5 & \scriptsize #6 & \scriptsize #7\\}
\newcommand{\cmark}{\textcolor{ForestGreen}{\ding{51}}}
\newcommand{\xmark}{\textcolor{BrickRed}{\ding{55}}}
\definecolor{Top1}{HTML}{0072B2}
\definecolor{Top2}{HTML}{E69F00}
\definecolor{Top3}{HTML}{CC79A7}
\definecolor{ufill}{HTML}{E1D5E7}
\definecolor{uborder}{HTML}{9673A6}
\definecolor{vfill}{HTML}{D5E8D4}
\definecolor{vborder}{HTML}{82B366}
\theoremstyle{definition}
\title{Polynomial Neural Sheaf Diffusion: A Spectral Filtering Approach on Cellular Sheaves}
\author{%
\small
\begin{tabular}{>{\centering\arraybackslash}m{0.28\textwidth}
                >{\centering\arraybackslash}m{0.28\textwidth}
                >{\centering\arraybackslash}m{0.28\textwidth}}
Alessio Borgi$^{1,2}$\thanks{Correspondence to: Alessio Borgi \texttt{<alessio.borgi@uniroma1.it, ab3352@cam.ac.uk>}}
&
Fabrizio Silvestri$^{2}$
&
Pietro Liò$^{1}$
\\[0.8em]
\multicolumn{3}{c}{%
\footnotesize
\begin{minipage}{0.75\textwidth}
\centering
$^{1}$Department of Computer Science and Technology,
University of Cambridge,\\ United Kingdom\\[-0.05em]
$^{2}$Department of Computer, Control and Management Engineering,
Sapienza University, Italy
\end{minipage}
}
\end{tabular}%
}
\begin{document}

\maketitle

\begin{abstract}
On heterophilic graphs, neighbouring nodes may carry incompatible features, so isotropic message passing can mix information in the wrong coordinate system and destroy class-discriminative signals. Sheaf Neural Networks (SheafNNs) overcome this by assigning local vector spaces to nodes and edges and by learning restriction maps that transport features before comparison. However, current SheafNNs still rely on first-order spatial diffusion: reaching distant nodes requires depth, while limited first-order propagation is often compensated by parameter-heavy stalks and dense per-edge maps. We propose \textbf{PolyNSD} (\textbf{Poly}nomial \textbf{N}eural \textbf{S}heaf \textbf{D}iffusion), a sheaf model-agnostic spectral diffusion operator that enhances first-order sheaf propagation with a learnable degree-$K$ polynomial filter of the normalised sheaf Laplacian, enabling explicit multi-hop, frequency-selective transport in a single layer. Across standard, filtered, malignant, and newly proposed heterophily benchmarks, PolyNSD improves over first-order NSD and reaches state-of-the-art or highly competitive performance, often with diagonal maps and small stalk dimensions. We further validate its benefits in federated causal graph learning and long-range transductive benchmarks, while diagnostics on depth, influence decay, spectral responses, and restriction-map geometry show that PolyNSD provides stable, interpretable, and efficient higher-order sheaf diffusion.
\end{abstract}

\section{Introduction}
Graph Neural Networks (GNNs) \citep{goller1996learning, gori2005new, scarselli2008graph, bruna2013spectral, defferrard2016convolutional, velickovic2017graph, gilmer2017neural} have become a standard tool for learning on relational data. However, they often underperform on \emph{heterophilic} graphs, where connected nodes may have different labels or incompatible features \citep{zhu2020beyond}, and suffer from \emph{oversmoothing}, where node representations become increasingly indistinguishable as depth grows \citep{nt2019revisiting, rusch2023survey}. A possible solution provided by \cite{graph_rewiring_for_heterophily} is to modify the graph by rewiring it with homophilic edges and pruning heterophilic ones. A more principled way to avoid that, modelling heterophily in the graph’s underlying topology, is via (cellular) \emph{sheaves} \cite{hansen2020sheaf, bodnar2022neural}: each node/edge carries a local feature space (a stalk) and edges carry linear restriction maps that specify how to align and compare features across incidences. The resulting sheaf Laplacian implements \emph{transport-aware} diffusion that can better accommodate heterophily than conventional, isotropic graph filters. However, existing neural sheaf diffusion layers still inherit a first-order spatial propagation mechanism. They are therefore (i) effectively one-step propagators, so reaching distant nodes requires stacking many layers; (ii) often reliant on dense, per-edge restriction maps to increase expressivity; and (iii) highly sensitive to the choice of stalk dimension, with stronger performance requiring larger local feature spaces. Together, these limitations tie expressivity to architectural scale: gains in accuracy often come only through deeper stacks, denser transports, or larger stalks, which increase parameter count and runtime and can make optimisation increasingly fragile as depth and stalk dimensionality grow.

We propose \emph{Polynomial Neural Sheaf Diffusion (PolyNSD)}, a model-agnostic sheaf spectral diffusion operator that addresses this propagation bottleneck directly and can be integrated into any sheaf-based architecture. We argue that SheafNNs are effective because they learn \emph{how} features should be aligned and compared across edges, but their diffusion should not be limited to one-hop, first-order propagation. Rather than repeatedly applying a spatial operator of the form $aI+bL$, where $L$ is the sheaf Laplacian, PolyNSD replaces this update with a learnable degree-$K$ polynomial filter of the normalised sheaf Laplacian. This yields explicit multi-hop, transport-aware propagation in a single layer while preserving the geometric structure learned by the sheaf. Indeed, rather than being limited to repeated local smoothing, PolyNSD can learn low-pass, high-pass, or band-pass responses over sheaf Fourier modes, allowing it to preserve smooth aligned signals, retain class-discriminative disagreement patterns, and mix information across multiple structural scales. This spectral view also improves efficiency: higher-order propagation is controlled by only $K{+}1$ scalar coefficients per layer and evaluated through stable recurrence relations, without requiring an explicit eigendecomposition. Because the same learned restriction maps are reused throughout the polynomial recurrence, PolyNSD decouples the receptive field of each layer from both repeated one-hop diffusion and increasingly complex transports. Importantly, this does not prevent depth: multiple PolyNSD layers can still be stacked, each with its own polynomial recurrence and learned spectral response. As a result, strong performance can be achieved with fewer layers, small stalk dimensions, and simple diagonal restriction maps, reducing parameters, memory, and runtime.

\noindent\textbf{Contributions.} Our main contributions are as follows:
\begin{enumerate}[leftmargin=*]
    \item We propose \textsc{PolyNSD}, a model-agnostic spectral diffusion operator for Sheaf Neural Networks that replaces first-order spatial propagation with learnable orthogonal-polynomial filters of the normalised sheaf Laplacian, enabling higher-order sheaf diffusion.

    \item We formulate sheaf propagation as a \emph{frequency-selective multi-hop filtering} problem, allowing each layer to obtain an explicit $K$-hop receptive field and to learn low-, high-, or band-pass responses over sheaf Fourier modes, instead of relying on repeated local smoothing.

    \item We show that polynomial spectral filtering improves the SheafNNs accuracy--efficiency trade-off, improving over first-order NSD across homophilic and heterophilic benchmarks, often using only diagonal restriction maps and small stalk dimensions.

    \item We validate \textsc{PolyNSD} beyond standard node classification through extensive ablations and diagnostics, including alternative split protocols, polynomial-order and spectral-scaling studies, depth and oversmoothing analyses. We also extend PolyNSD to a continuous-time variant and apply it to federated causal graph learning and long-range transductive benchmarks.
\end{enumerate}
\section{Background and Related Works}
\label{sec:related_work}

\noindent\textbf{Sheaf Neural Networks, Heterophily and Oversmoothing.}
GNNs have evolved from early message–passing formulations to a family of architectures that trade off locality, expressivity, and efficiency. Canonical baselines include spectral and spatial convolutions \cite{defferrard2016convolutional, bruna2013spectral, kipf2016semi}, attention mechanisms \cite{velickovic2017graph}, principled aggregates \cite{gilmer2020} and transformer-based versions \cite{yun2019graph, dwivedi2020generalization}. Despite this progress, two persistent pathologies limit standard GNNs. The former is \emph{Oversmoothing}, which arises as layers deepen, since repeated low-pass propagation collapses node features toward a near-constant signal and yielding accuracy drop-offs beyond shallow depth \cite{nt2019revisiting, rusch2023survey}. \emph{Heterophily} further stresses isotropic message passing: when adjacent nodes belong to different classes or carry contrasting attributes, the averaging operation blurs the high-frequency signals that separate classes, and performance degrades as homophily decreases \cite{zhu2020beyond}. These phenomena are tightly linked in practice and motivate transport-aware architectures that decouple \emph{who} communicates from \emph{how} features are compared.   

\noindent\textbf{Sheaf Neural Networks.}
Cellular sheaf theory \cite{shepard1985cellular, curry2014sheaves} equips graphs with local feature spaces (stalks) and a linear map (restriction map) for each incident node-edge pair, enabling transport-aware diffusion that can better handle heterophily than isotropic message passing. SheafNNs, originally introduced using a hand-crafted sheaf with a single dimensionality in  \cite{hansen2020sheaf} and further improved by learning the sheaf through a parametric function \cite{bodnar2022neural}, demonstrated strong performance under heterophily and against oversmoothing, by instantiating diffusion on the sheaf Laplacian. Subsequent works explored attention on sheaves \cite{barbero2022sheaf_attentional}, learning the graph connection Laplacian directly from data and at preprocessing time \cite{barbero2022sheaf}, sheaves-based positional encoding \cite{he2023sheaf}, introducing non-linearities in the process \cite{zaghen2024nonlinear}, handling graph heterogeneity \cite{braithwaite2024heterogeneous}, sheaf hypergraphs \cite{duta2023sheaf, mule2025directional} and directional extensions \cite{fiorini2025sheavesreloadeddirectionalawakening}, applications to recommendation systems \cite{purificato2023sheaf4rec} and federated learning settings \cite{nguyen2024sheaf}, and more general frameworks such as Copresheafs \cite{hajij2025copresheaftopologicalneuralnetworks}. 

\begin{figure*}[t]
  \centering
  \includegraphics[width=0.99\linewidth]{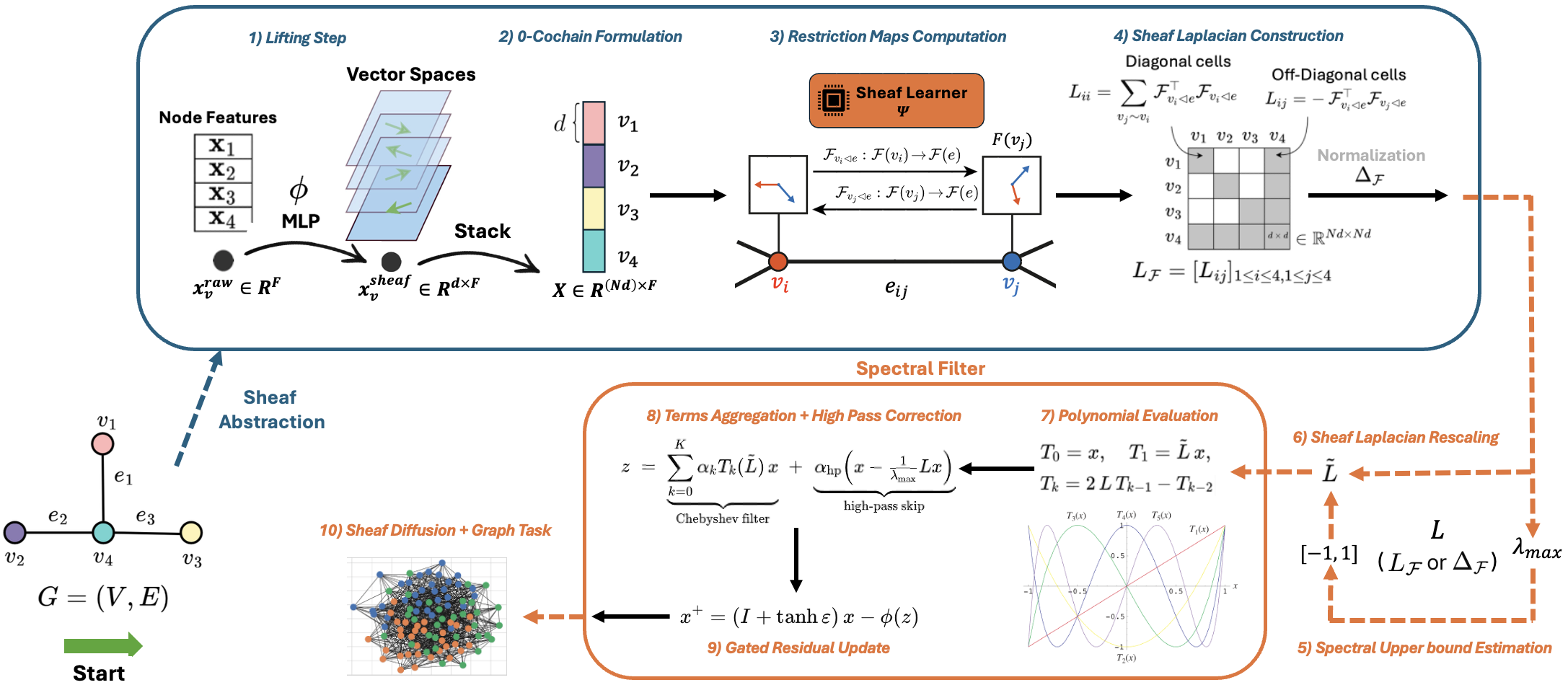}
  \caption{\emph{Polynomial Sheaf Neural Network Architecture.} Graph's raw node features are lifted to \(d\)-dimensional stalk signals and stacked into a \(0\)-cochain. A sheaf learner predicts edge-wise restriction maps and defines the vertex sheaf Laplacian \(L\). PolyNSD then estimates \(\lambda_{\max}\), rescales \(L\) to \(\widetilde L\in[-1,1]\), and applies a degree-\(K\) spectral filter \(p_\theta(\widetilde L)\) via a stable three-term recurrence, followed by a lightweight high-pass correction and a gated residual update, finally enabling \(K\)-hop mixing within a single diffusion layer.}
  \label{fig:polynsd-layer}
\end{figure*}

\noindent\textbf{Cellular Sheaves on Graphs.} Formally, let $\mathcal G=(\mathcal V,\mathcal E)$ be a finite undirected graph with an arbitrary orientation on edges. A \emph{Cellular Sheaf} $\mathcal F$ on $\mathcal G$ assigns a finite-dimensional inner-product space (the \emph{stalk}) $\mathcal F(v)$ to each vertex $v\in\mathcal V$ and $\mathcal F(e)$ to each edge $e\in\mathcal E$, together with linear restriction map $\mathcal F_{v\unlhd e}:\ \mathcal F(v)\to \mathcal F(e), \forall\text{ incident pair }v\unlhd e$. By stacking the vertex and edge stalks, we can obtain, respectively, the $0$- and $1$-cochain spaces $C^0(\mathcal G;\mathcal F)\ :=\ \bigoplus_{v\in\mathcal V}\mathcal F(v),   
C^1(\mathcal G;\mathcal F)\ :=\ \bigoplus_{e\in\mathcal E}\mathcal F(e)$. 
Given the chosen orientation (e.g., $e=(u\!\to\!v)$), we can use them to construct the \emph{sheaf coboundary} $\delta:C^0(\mathcal G; \mathcal F)\to C^1(\mathcal G; \mathcal F)$, which acts edgewise as $\delta (x)_e\ =\ \mathcal F_{v\unlhd e}\,x_v\ -\ \mathcal F_{u\unlhd e}\,x_u$. Using the co-boundary, one can define the \emph{Sheaf Laplacian} as $L_{\mathcal F}\ :=\ \delta^T \delta\ $, or node-wise as $L_{\mathcal F} (x)_u \;=\; \sum_{u,v \unlhd e} \, \mathcal F_{u\unlhd e}^{\top}\!\big(\mathcal F_{u\unlhd e}\,x_u \;-\; \mathcal F_{v\unlhd e}\,x_v\big)$. Its normalised version is obtained by reweighting the sheaf’s inner products as: $\Delta_{\mathcal F} \;:=\; D^{-1/2} \, L_{\mathcal F} \, D^{-1/2}$, where $D$ is the block diagonal of $L_{\mathcal F}$.


\noindent\textbf{Neural Sheaf Diffusion (NSD).}
\label{subsec:nsd}
Starting from the same undirected graph $\mathcal G$, with each node $u\in\mathcal V$ carrying a $d$-dimensional feature vector $x_u\in\mathcal F(u)$, we can stack node features (after a projection to the stalk space), into the column $x\in C^0(\mathcal G;\mathcal F)$, and by allowing up to $f$ feature channels, we can collect them in a matrix $X\in\mathbb R^{nd\times f}$. 
Following \cite{bodnar2022neural}, the edge-wise restriction/transport maps that define the sheaf at layer (or time) $t$ are computed from incident node features via a learnable function. For $e=(u,v)$, we have $\mathcal F_{u\unlhd e}^{(t)} \;=\; \Phi^{(t)}(x_u,x_v),
\mathcal F_{v\unlhd e}^{(t)} \;=\; \Phi^{(t)}(x_v,x_u) $, where, typically, $F_{u\unlhd e}^{(t)} =\Phi^{(t)}(x_u,x_v)=\text{MLP}(x_u||x_v)$ and similarly for $F_{v\unlhd e}^{(t)}$. These can produce matrices of the appropriate shape (diagonal, bundle, or general), that can be used to build the sheaf Laplacian ($L_{\mathcal F(t)}$ or $\Delta_{\mathcal F(t)}$) and perform a sheaf-aware diffusion step on $X$ as: $X^{(t+1)} = X^{(t)} - \sigma\!\left(\Delta_{\mathcal F(t)} (I_{nd}\!\otimes\! W_1^{(t)})\, X^{(t)} W_2^{(t)}\right)$, where $W_1^{(t)}$ and $W_2^{(t)}$ are trainable weight matrices, $I_{nd}$ is the identity, $\otimes$ denotes the Kronecker product, and $\sigma(\cdot)$ is a nonlinearity. Sheaf Diffusion replaces standard graph diffusion by measuring disagreements \emph{after} transporting node features into each edge’s discourse space, and then aggregating them back to nodes via the sheaf Laplacian.

\noindent\textbf{Spectral and Polynomial Graph Filters.}
A large body of GNN methods designs filters in the spectrum of the graph Laplacian, from early spectral CNNs and graph wavelets \cite{bruna2013spectral,hammond2011wavelets,shuman2013emerging} to localized polynomial approximations that avoid explicit eigen-decompositions \cite{defferrard2016convolutional,kipf2016semi}. Chebyshev-style recurrences and related rational/polynomial filters offer stable, scalable propagation and interpretable frequency responses \cite{levie2018cayleynets,bianchi2021graph}. Further developments include Lanczos-based personalized PageRank diffusion and precomputed multi-hop schemes \cite{liao2019lanczosnet,klicpera2019predict,wu2019simplifying,rossi2020sign,chien2021adaptive}. These ideas established that multi-hop context and frequency selectivity can be achieved with sparse matrix–vector primitives instead of decompositions. Our work adopts this philosophy, but lifts it from the graph Laplacian to the \emph{sheaf} Laplacian.


\section{Polynomial Neural Sheaf Diffusion}
\label{sec:polysd}
\noindent\textbf{Problem Setting \& Architectural Overview.}
We consider a graph $G=(V,E)$ with $N=|V|$ nodes and raw node features $\{x_v^{\mathrm{raw}}\in\mathbb R^{F}\}_{v\in V}$. As illustrated in \autoref{fig:polynsd-layer}, PolyNSD first lifts each raw feature vector into a $d$-dimensional stalk space through a map $\phi$, obtaining $x_v=\phi(x_v^{\mathrm{raw}})\in\mathbb R^{d\times F}$. Stacking the lifted node features over all vertices gives a $0$-cochain $x\in C^{0}(G;\mathcal F)\cong\mathbb R^{(Nd)\times F}$ (steps 1--3). A parametric sheaf learner $\Psi$ then predicts the edge-wise restriction maps $\{\mathcal F_{v\unlhd e}\}$, which may be diagonal, bundle-valued, or general linear maps (step 4). These restriction maps define the vertex sheaf Laplacian $L\in\mathbb R^{Nd\times Nd}$, either in its unnormalised form $L_{\mathcal F}$ or its degree-normalised form $\Delta_{\mathcal F}$. In Neural Sheaf Diffusion (NSD) \citep{bodnar2022neural}, propagation is based on a first-order diffusion step of the form $x\mapsto(aI+bL)x$, applied repeatedly across layers. PolyNSD replaces this repeated local diffusion with a learnable degree-$K$ polynomial spectral filter of the sheaf Laplacian (steps 5--10). For $K=1$, this recovers the first-order NSD operator up to normalisation, while for $K>1$, it yields higher-order polynomials in $L$, enabling explicit $K$-hop transport-aware mixing in a single layer. A degree-$K$ PolyNSD layer requires $K$ sparse--dense multiplications of $\widetilde L$, with cost $O(K\,\mathrm{nnz}(L)\,C)$, matching the asymptotic cost of stacking $K$ NSD layers but avoiding repeated sheaf prediction and Laplacian re-assembly. The filtered signal is then stabilised with a lightweight spectral correction and a gated residual update before being passed to the task-specific prediction head. (Further details on each step are provided in \autoref{app:polynsd_layer}, with an extended NSD comparison in Appendix \autoref{app:polynsd-vs-nsd}).

\noindent\textbf{From Spatial Diffusion to Polynomial Sheaf Spectral Filtering.}
\label{subsec:polysd-poly-filter}
Although first-order sheaf diffusion is effective, it inherits two key limitations of local message passing. First, long-range communication requires stacking many diffusion layers, which increases computational cost. Second, the behaviour of the diffusion operator is spectrally constrained: repeated applications of $(aI+bL)$ mostly induce a fixed smoothing pattern, offering limited control over which sheaf-frequency components are preserved, suppressed, or amplified. PolyNSD addresses these limitations by reformulating sheaf diffusion as a \emph{learnable spectral filtering} problem: rather than repeatedly applying a local first-order operator, we parameterise propagation directly as a function of the sheaf Laplacian spectrum. This gives each layer a controllable frequency response while preserving the transport-aware geometry encoded by the learned restriction maps.

\begin{definition}[Polynomial learnable sheaf spectral filter]
\label{def:polynomial-learnable-sheaf-spectral-filter}
\emph{Let $L\in\mathbb R^{Nd\times Nd}$ be a symmetric positive semidefinite sheaf Laplacian with eigendecomposition $L=U\Lambda U^\top$, where $\Lambda=\operatorname{diag}(\lambda_1,\ldots,\lambda_{Nd})$. A \emph{learnable sheaf spectral filter} is an operator of the form:
\begin{equation}
\label{eq:learnable-sheaf-spectral-filter}
    p_\theta(L)
    =
    U p_\theta(\Lambda) U^\top,
    \qquad
    p_\theta(\Lambda)
    =
    \operatorname{diag}\big(p_\theta(\lambda_1),\ldots,p_\theta(\lambda_{Nd})\big)
\end{equation}
where $p_\theta(\lambda)$ is a learnable scalar response over sheaf frequencies. In PolyNSD, this response is parameterised as a degree-$K$ polynomial:
\begin{equation}
\label{eq:polysd-poly-filter}
    p_\theta(\lambda)
    =
    \sum_{k=0}^{K} c_k \lambda^k,
    \qquad
    y
    =
    p_\theta(L)x
    =
    \sum_{k=0}^{K} c_k L^k x
\end{equation}
with learnable scalar coefficients $\theta=\{c_k\}_{k=0}^{K}$ shared across nodes and stalk coordinates.}
\end{definition}

Since $U$ defines an orthonormal basis of \emph{sheaf Fourier modes}, $p_\theta(L)$ acts diagonally in this basis: the $i$-th mode is scaled by the multiplier $p_\theta(\lambda_i)$. Learning the polynomial coefficients therefore directly controls the layer's frequency response, allowing PolyNSD to learn low-, band-, or high-pass sheaf filters. In view of the Dirichlet-energy identity $\langle x,Lx\rangle=\sum_i\lambda_i\widehat{x}_i^2$, decreasing responses emphasise smooth, globally aligned sheaf signals, while band-pass or high-pass responses retain or amplify transported disagreement patterns. Moreover, because $L^k$ propagates information along paths of length up to $k$, the degree-$K$ polynomial gives each layer an explicit $K$-hop receptive field without stacking $K$ first-order diffusion steps. This extends the spectral theory of cellular sheaves \citep{hansen2019toward} to learnable filters and connects with opinion-dynamics interpretations in which $p_\theta(L)$ explicitly shapes how agreement and disagreement propagate across the sheaf. Further details and proofs on the underlying \emph{Sheaf Spectral Theory} are provided in Appendix~\autoref{app:spectral-selfadjoint}--\autoref{app:spectral-smoothness}.

\noindent\textbf{K-hop Locality.}
Although PolyNSD is defined spectrally, its polynomial form also has a clear spatial interpretation. Each multiplication by the sheaf Laplacian $L$ propagates information across one graph edge, while $L^k$ mixes information along paths of length at most $k$. Therefore, a degree-$K$ filter has an explicit $K$-hop receptive field, allowing a single PolyNSD layer to perform multi-hop transport-aware propagation without stacking $K$ first-order message-passing layers.
\begin{proposition}
\label{prop:polysd-khop}
Let $L$ be a sparse block sheaf Laplacian on a graph $G=(V,E)$, whose off-diagonal block $(v,u)$ can be nonzero only if $(v,u)\in E$, while diagonal blocks are arbitrary positive semidefinite matrices. Let $p(L)$ be the degree-$K$ polynomial filter in \autoref{eq:polysd-poly-filter}. Then:
\begin{equation}
    \big(p(L)\big)_{vu}=0
    \qquad
    \text{whenever}
    \qquad
    \operatorname{dist}_G(v,u)>K 
\end{equation}
(Proof provided in Appendix~\autoref{app:poly-proofs}).
\end{proposition}

\noindent\textbf{Commutation and Dirichlet-Energy Control.}
Polynomial filters of the same sheaf Laplacian satisfy two useful structural properties. First, they share the same eigenbasis and therefore commute: composing two filters $p(L)$ and $q(L)$ is equivalent to applying the single polynomial filter $(pq)(L)$. Hence, in the linear case, stacking PolyNSD layers with a fixed Laplacian does not introduce a different class of operators; it simply increases the effective polynomial degree. Second, bounded spectral multipliers provide direct control over the sheaf Dirichlet energy. If $0\leq p(\lambda)\leq 1$ over the spectrum of $L$, then $p(L)$ cannot increase transported disagreement: it can only damp sheaf-frequency components according to their learned spectral response.
\begin{proposition}
\label{prop:polysd-commute-energy}
Let $L$ be a symmetric positive semidefinite sheaf Laplacian and let $p,q$ be real polynomials. Then:
\begin{equation}
    p(L)q(L)=q(L)p(L)=(pq)(L)
\end{equation}
Moreover, if $L=U\Lambda U^\top$, $x=U\widehat{x}$, and $p(\lambda)$ satisfies $0\leq p(\lambda)\leq 1$ for every $\lambda\in\sigma(L)$, then:
\begin{equation}
\label{eq:polysd-energy-monotone}
    \big\langle p(L)x,\ Lp(L)x\big\rangle
    =
    \sum_{i=1}^{Nd}\lambda_i p(\lambda_i)^2 \widehat{x}_i^2
    \leq
    \sum_{i=1}^{Nd}\lambda_i \widehat{x}_i^2
    =
    \langle x,Lx\rangle 
\end{equation}
(Proof provided in Appendix~\autoref{app:poly-proofs}).
\end{proposition}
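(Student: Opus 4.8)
The plan is to reduce both claims to the scalar functional calculus already recalled in \autoref{subsec:polysd-poly-filter}, namely $p(L)=U\,p(\Lambda)\,U^\top$ with $p(\Lambda)=\mathrm{diag}\big(p(\lambda_1),\dots,p(\lambda_{nd})\big)$. This identity follows by writing $p(L)=\sum_k c_k L^k$ and using $L^k=U\Lambda^k U^\top$ (a telescoping consequence of $U^\top U=I$), so that all the work happens on the diagonal.

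For the commutation identity I would compute
\begin{equation}
p(L)\,q(L)=\bigl(U\,p(\Lambda)\,U^\top\bigr)\bigl(U\,q(\Lambda)\,U^\top\bigr)=U\,p(\Lambda)\,q(\Lambda)\,U^\top,
\end{equation}
then observe that $p(\Lambda)$ and $q(\Lambda)$ are diagonal, hence commute, and their product is the diagonal matrix with entries $p(\lambda_i)q(\lambda_i)=(pq)(\lambda_i)$, i.e.\ $p(\Lambda)q(\Lambda)=(pq)(\Lambda)$. Conjugating back gives $U(pq)(\Lambda)U^\top=(pq)(L)$, and the same computation with $p,q$ swapped yields $q(L)p(L)=(pq)(L)$ as well. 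One could equivalently note that powers of a single matrix commute, $L^jL^k=L^{j+k}=L^kL^j$, and extend by bilinearity; the eigenbasis route is chosen only to keep the exposition uniform with the second part. The remark preceding the proposition — that stacking linear, parameter-frozen PolyNSD layers only raises the polynomial degree — is then immediate by induction on the number of layers.

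For the energy bound, set $\hat x=U^\top x$ so that $p(L)x=U\,p(\Lambda)\hat x$ and $L\,p(L)x=U\,\Lambda\,p(\Lambda)\hat x$. Since $U$ is orthogonal it preserves the inner product, so
\begin{equation}
\bigl\langle p(L)\,x,\ L\,p(L)\,x\bigr\rangle=\bigl\langle p(\Lambda)\hat x,\ \Lambda\,p(\Lambda)\hat x\bigr\rangle=\sum_{i=1}^{nd}\lambda_i\,p(\lambda_i)^2\,\hat x_i^2,
\end{equation}
which is the claimed closed form. The inequality is then purely termwise: PSD-ness of $L$ gives $\lambda_i\ge 0$, and the hypothesis $0\le p(\lambda)\le 1$ on $\sigma(L)$ gives $p(\lambda_i)^2\le 1$, hence $\lambda_i p(\lambda_i)^2\hat x_i^2\le \lambda_i\hat x_i^2$; summing over $i$ and recalling $\langle x,Lx\rangle=\sum_i\lambda_i\hat x_i^2$ closes the argument.

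There is no genuine obstacle here — the statement is a direct application of the polynomial functional calculus — so the only points needing care are bookkeeping: (a) the inequality really uses $\lambda_i\ge 0$, so PSD-ness of $L$ is essential rather than decorative; (b) it suffices that $0\le p(\lambda)\le 1$ holds on $\sigma(L)$ rather than on all of $\mathbb{R}$, which is what makes the hypothesis checkable for a learned $p$ after spectral rescaling; and (c) equality holds iff $p(\lambda_i)=1$ for every mode with $\lambda_i\hat x_i^2>0$, so the filter is non-expansive in Dirichlet energy and strictly contracting on any excited disagreement mode it does not fully preserve — the precise sense in which PolyNSD damps disagreement modes and never amplifies them.
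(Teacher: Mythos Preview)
Your proof is correct and follows exactly the approach the paper indicates: the paper does not give a detailed proof but simply states that ``the commutation identity and the energy bound follow from the spectral theorem and polynomial functional calculus,'' and your argument is a faithful unpacking of precisely that. Your additional remarks on where PSD-ness is used and on the equality case are accurate and go slightly beyond what the paper spells out.
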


\noindent\textbf{Choice of Basis and Chebyshev Parameterisation.}
Learning the monomial coefficients in \autoref{eq:polysd-poly-filter} is expressive, but numerically unstable at moderate or large degrees, since monomial bases are poorly conditioned on bounded intervals. We therefore parameterise the filter in an orthogonal polynomial basis, after rescaling the spectrum of the sheaf Laplacian to $[-1,1]$. Given an upper bound $\lambda_{\max}$ such that $\sigma(L)\subset[0,\lambda_{\max}]$, we define:
\begin{equation}
\label{eq:polysd-rescale}
    \widetilde L
    =
    \frac{2}{\lambda_{\max}}L-I,
    \qquad
    \sigma(\widetilde L)\subset[-1,1]
\end{equation}
For the degree-normalised sheaf Laplacian $\Delta_{\mathcal F}$, we use the canonical bound $\lambda_{\max}=2$, while for the unnormalised $L_{\mathcal F}$ we use either a Gershgorin-type analytic bound or a short power iteration (see Appendix~\autoref{subsec:lambda-max} for further details). We then write the spectral response as:
\begin{equation}
\label{eq:polysd-basis-expansion}
    p_\theta(\lambda)
    =
    \sum_{k=0}^{K}\theta_k B_k\!\left(\xi(\lambda)\right),
    \qquad
    \xi(\lambda)=\frac{2}{\lambda_{\max}}\lambda-1
\end{equation}
where $\{B_k\}_{k=0}^{K}$ is an orthogonal polynomial basis on $[-1,1]$. In our default implementation, we use first-kind Chebyshev polynomials, $B_k(\xi)=T_k(\xi)$, which satisfy $|T_k(\xi)|\leq 1$ for $\xi\in[-1,1]$. This boundedness ensures that Chebyshev filters remain stable after spectral rescaling, while convex coefficients $\theta=\mathrm{softmax}(\eta)$, $\eta\in\mathbb R^{K+1}$, yield uniformly controlled spectral multipliers. The implementation remains \emph{basis-agnostic} (alternative orthogonal polynomial formulation used can be found in Appendix~\autoref{app:cheb-bound}).

\noindent\textbf{High-Pass Skip and Gated Residual.}
To counteract the low-pass bias of diffusion, we augment the polynomial filter with a lightweight high-pass correction and a gated residual path. Given $L$ and an upper bound $\lambda_{\max}$, we define:
\begin{equation}
\label{eq:polynsd-core-update-prev}
    h_{\mathrm{hp}}
    =
    x-\lambda_{\max}^{-1}Lx,
    \qquad
    z
    =
    p_\theta(\widetilde L)x
    +
    \alpha_{\mathrm{hp}}h_{\mathrm{hp}}
\end{equation}
where $\alpha_{\mathrm{hp}}\in\mathbb R$ is learnable. The layer output is then:$
    x^{+}
    =
    \big(I+\tanh\varepsilon\big)x
    -
    \phi(z)$, with a $1$-Lipschitz nonlinearity $\phi$ and a diagonal residual gate $\varepsilon$.
Since $\widetilde L$ is an affine function of $L$, the polynomial term and the high-pass correction share the same eigenbasis. For $L=U\Lambda U^\top$, the pre-nonlinear map acts on each sheaf Fourier mode with multiplier:
\begin{equation}
\label{eq:polynsd-spectral-multiplier}
    m(\lambda)
    =
    p_\theta\!\left(\frac{2\lambda}{\lambda_{\max}}-1\right)
    +
    \alpha_{\mathrm{hp}}
    \left(1-\frac{\lambda}{\lambda_{\max}}\right),
    \qquad
    \lambda\in\sigma(L)
\end{equation}
Thus, $\alpha_{\mathrm{hp}}$ provides a simple, interpretable way to deform the learned spectral response, adjusting the balance between low- and high-frequency components. If $p_\theta(\xi)\geq 0$ on $[-1,1]$ and $\alpha_{\mathrm{hp}}>0$, then $m(\lambda)>0$ for all $\lambda\in[0,\lambda_{\max})$, preventing non-harmonic modes from being accidentally cancelled. The gated residual controls the deviation from the identity, while the boundedness of $p_\theta(\widetilde L)$ and $I-\lambda_{\max}^{-1}L$ yields an explicit Lipschitz control for the update. (Further details and proofs are provided in Appendix~\autoref{app:hp-gate}).

\section{Experimental Evaluation}
\label{sec:experiments}

\begin{table*}[t]
  \centering
  \caption{\textit{PolyNSD discrete node-classification benchmark results.}
  Results are reported as mean $\pm$ std over 10 splits. The top three models for each dataset are coloured by
  \textcolor{Top1}{\textbf{First}},
  \textcolor{Top2}{\textbf{Second}} and
  \textcolor{Top3}{\textbf{Third}}, respectively.}
  \label{tab:big-leaderboard}
  \setlength{\tabcolsep}{4pt}
  \renewcommand{\arraystretch}{1.10}
  \begin{adjustbox}{max width=0.9\textwidth}
  \begin{tabular}{lccccccccc}
    \toprule
     & \textbf{Texas} & \textbf{Wisconsin} & \textbf{Film} & \textbf{Squirrel} & \textbf{Chameleon} & \textbf{Cornell} & \textbf{Citeseer} & \textbf{Pubmed} & \textbf{Cora}\\
    \midrule
    \textit{Homophily} & \textbf{0.11} & \textbf{0.21} & \textbf{0.22} & \textbf{0.22} & \textbf{0.23} & \textbf{0.30} & \textbf{0.74} & \textbf{0.80} & \textbf{0.81} \\
    \#Nodes  & 183 & 251 & 7{,}600 & 5{,}201 & 2{,}277 & 183 & 3{,}327 & 18{,}717 & 2{,}708 \\
    \#Edges  & 295 & 466 & 26{,}752 & 198{,}493 & 31{,}421 & 280 & 4{,}676 & 44{,}327 & 5{,}278 \\
    \#Classes& 5 & 5 & 5 & 5 & 5 & 5 & 7 & 3 & 6 \\
    \midrule

    \textbf{DiagPolySD}
    & \textcolor{Top1}{\textbf{\pmstd{90.00}{4.68}}}
    & \pmstd{88.63}{3.59}
    & \pmstd{37.31}{0.98}
    & \textcolor{Top1}{\textbf{\pmstd{56.61}{2.06}}}
    & \textcolor{Top1}{\textbf{\pmstd{71.45}{2.03}}}
    & \textcolor{Top2}{\textbf{\pmstd{86.49}{5.54}}}
    & \textcolor{Top2}{\textbf{\pmstd{77.74}{1.26}}}
    & \pmstd{89.70}{0.32}
    & \textcolor{Top1}{\textbf{\pmstd{88.79}{1.13}}} \\

    \textbf{BundlePolySD}
    & \textcolor{Top2}{\textbf{\pmstd{89.74}{5.32}}}
    & \textcolor{Top1}{\textbf{\pmstd{89.41}{4.04}}}
    & \pmstd{37.47}{0.86}
    & \pmstd{55.76}{2.02}
    & \textcolor{Top2}{\textbf{\pmstd{71.18}{1.46}}}
    & \textcolor{Top1}{\textbf{\pmstd{86.76}{4.90}}}
    & \textcolor{Top3}{\textbf{\pmstd{77.57}{1.55}}}
    & \textcolor{Top3}{\textbf{\pmstd{89.75}{0.34}}}
    & \pmstd{88.33}{1.34} \\

    \textbf{GeneralPolySD}
    & \textcolor{Top3}{\textbf{\pmstd{89.21}{5.05}}}
    & \pmstd{88.82}{4.89}
    & \pmstd{37.34}{1.13}
    & \textcolor{Top3}{\textbf{\pmstd{55.79}{2.52}}}
    & \pmstd{69.62}{1.85}
    & \textcolor{Top3}{\textbf{\pmstd{86.49}{5.80}}}
    & \pmstd{77.21}{1.58}
    & \pmstd{89.73}{0.41}
    & \textcolor{Top2}{\textbf{\pmstd{88.47}{1.19}}} \\

    \textbf{PolySpectralGNN}
    & \pmstd{64.59}{6.40}
    & \pmstd{58.62}{6.04}
    & \pmstd{25.50}{0.85}
    & \pmstd{51.72}{1.76}
    & \pmstd{62.65}{3.03}
    & \pmstd{54.59}{6.26}
    & \pmstd{60.40}{0.89}
    & \pmstd{78.70}{0.56}
    & \pmstd{76.20}{0.67} \\

    \midrule
    ChebNetII
    & \pmstd{68.42}{6.96}
    & \pmstd{64.76}{6.93}
    & \pmstd{32.39}{4.45}
    & \pmstd{47.06}{1.74}
    & \pmstd{63.23}{5.83}
    & \pmstd{80.40}{5.56}
    & \pmstd{72.00}{1.66}
    & \pmstd{85.87}{3.34}
    & \pmstd{85.35}{6.75} \\

    BernNet
    & \pmstd{73.68}{6.00}
    & \pmstd{63.65}{6.76}
    & \pmstd{30.23}{6.34}
    & \pmstd{46.47}{1.47}
    & \pmstd{61.00}{2.34}
    & \pmstd{76.87}{5.26}
    & \pmstd{65.29}{5.55}
    & \pmstd{83.94}{3.83}
    & \pmstd{84.76}{4.75} \\

    \midrule
    RiSNN
    & \pmstd{86.84}{3.72}
    & \pmstd{87.84}{2.60}
    & \placeholder{N/A}
    & \pmstd{53.30}{3.30}
    & \pmstd{65.15}{2.40}
    & \pmstd{85.95}{6.14}
    & \pmstd{76.23}{1.81}
    & \pmstd{88.00}{0.42}
    & \pmstd{85.27}{1.11} \\

    JdSNN
    & \pmstd{87.37}{5.10}
    & \textcolor{Top3}{\textbf{\pmstd{89.22}{3.42}}}
    & \placeholder{N/A}
    & \pmstd{49.89}{1.71}
    & \pmstd{66.40}{2.33}
    & \pmstd{85.41}{4.55}
    & \pmstd{73.27}{1.86}
    & \pmstd{88.19}{0.55}
    & \pmstd{85.43}{1.73} \\

    Conn-NSD
    & \pmstd{86.16}{2.24}
    & \pmstd{88.73}{4.47}
    & \textcolor{Top1}{\textbf{\pmstd{37.91}{1.28}}}
    & \pmstd{45.19}{1.57}
    & \pmstd{65.21}{2.04}
    & \pmstd{85.95}{7.72}
    & \pmstd{75.61}{1.93}
    & \pmstd{89.28}{0.38}
    & \pmstd{83.74}{2.19} \\

    Diag-NSD
    & \pmstd{85.67}{6.95}
    & \pmstd{88.63}{2.75}
    & \pmstd{37.79}{1.01}
    & \pmstd{54.78}{1.81}
    & \pmstd{68.68}{1.73}
    & \pmstd{86.49}{7.35}
    & \pmstd{77.14}{1.85}
    & \pmstd{89.42}{0.43}
    & \pmstd{87.14}{1.06} \\

    O($d$)-NSD
    & \pmstd{85.95}{5.51}
    & \textcolor{Top2}{\textbf{\pmstd{89.41}{4.74}}}
    & \textcolor{Top2}{\textbf{\pmstd{37.81}{1.15}}}
    & \textcolor{Top2}{\textbf{\pmstd{56.34}{1.32}}}
    & \pmstd{68.04}{1.58}
    & \pmstd{84.86}{4.71}
    & \pmstd{76.70}{1.57}
    & \pmstd{89.49}{0.40}
    & \pmstd{86.90}{1.13} \\

    Gen-NSD
    & \pmstd{82.97}{5.13}
    & \pmstd{89.21}{3.84}
    & \textcolor{Top3}{\textbf{\pmstd{37.80}{1.22}}}
    & \pmstd{53.17}{1.31}
    & \pmstd{67.93}{1.58}
    & \pmstd{85.68}{6.51}
    & \pmstd{76.32}{1.65}
    & \pmstd{89.33}{0.35}
    & \pmstd{87.30}{1.15} \\

    \midrule
    GGCN
    & \pmstd{84.86}{4.55}
    & \pmstd{86.86}{3.29}
    & \pmstd{37.54}{1.56}
    & \pmstd{55.17}{1.58}
    & \textcolor{Top3}{\textbf{\pmstd{71.14}{1.84}}}
    & \pmstd{85.68}{6.63}
    & \pmstd{77.14}{1.45}
    & \pmstd{89.15}{0.37}
    & \pmstd{87.95}{1.05} \\

    H2GCN
    & \pmstd{84.86}{7.23}
    & \pmstd{87.65}{4.98}
    & \pmstd{35.70}{1.00}
    & \pmstd{36.48}{1.86}
    & \pmstd{60.11}{2.15}
    & \pmstd{82.70}{5.28}
    & \pmstd{77.11}{1.57}
    & \pmstd{89.49}{0.38}
    & \pmstd{87.87}{1.20} \\

    GCNII
    & \pmstd{77.57}{3.83}
    & \pmstd{80.39}{3.40}
    & \pmstd{37.44}{1.30}
    & \pmstd{38.47}{1.58}
    & \pmstd{63.86}{3.04}
    & \pmstd{77.86}{3.79}
    & \pmstd{77.33}{1.48}
    & \textcolor{Top1}{\textbf{\pmstd{90.15}{0.43}}}
    & \textcolor{Top3}{\textbf{\pmstd{88.37}{1.25}}} \\

    \midrule
    GraphSAGE
    & \pmstd{82.43}{6.14}
    & \pmstd{81.18}{5.56}
    & \pmstd{34.23}{0.99}
    & \pmstd{41.61}{0.74}
    & \pmstd{58.73}{1.68}
    & \pmstd{75.95}{5.01}
    & \pmstd{76.04}{1.30}
    & \pmstd{88.45}{0.50}
    & \pmstd{86.90}{1.04} \\

    GCN
    & \pmstd{55.14}{5.16}
    & \pmstd{51.76}{3.06}
    & \pmstd{27.32}{1.10}
    & \pmstd{53.43}{2.01}
    & \pmstd{64.82}{2.24}
    & \pmstd{60.54}{5.30}
    & \pmstd{76.50}{1.36}
    & \pmstd{88.42}{0.50}
    & \pmstd{86.98}{1.27} \\

    GAT
    & \pmstd{52.16}{6.63}
    & \pmstd{49.41}{4.09}
    & \pmstd{27.44}{0.89}
    & \pmstd{40.72}{1.55}
    & \pmstd{60.26}{2.50}
    & \pmstd{61.89}{5.05}
    & \pmstd{76.55}{1.23}
    & \pmstd{87.30}{1.10}
    & \pmstd{86.33}{0.48} \\

    MLP
    & \pmstd{80.81}{4.75}
    & \pmstd{85.29}{3.31}
    & \pmstd{36.53}{0.70}
    & \pmstd{28.77}{1.56}
    & \pmstd{46.21}{2.99}
    & \pmstd{81.89}{6.40}
    & \pmstd{74.02}{1.90}
    & \pmstd{75.69}{2.00}
    & \pmstd{87.16}{0.37} \\

    \bottomrule
  \end{tabular}
  \end{adjustbox}
\end{table*} 
We evaluate \emph{Polynomial Neural Sheaf Diffusion (PolyNSD)} on a broad set of graph learning settings, including standard real-world and synthetic node-classification benchmarks, filtered and newly proposed heterophily datasets, federated causal graph learning, and long-range transductive tasks. Beyond accuracy, we analyse the behaviour of PolyNSD through depth and Dirichlet-energy diagnostics, long-range influence decay, learned spectral responses, restriction-map geometry, and a continuous-time Neural Sheaf ODE extension. These experiments test four main claims: (i) PolyNSD improves over first-order NSD across homophilic and heterophilic regimes; (ii) higher-order polynomial filtering ($K>1$) provides better long-range, frequency-selective propagation than repeated first-order diffusion; (iii) strong performance can often be achieved with diagonal maps and small stalk dimensions, improving the accuracy--efficiency trade-off of SheafNNs; and (iv) the learned spectral and geometric diagnostics reveal stable, interpretable, and task-adaptive sheaf diffusion.

\begin{table*}[t]
\centering
\scriptsize
\setlength{\tabcolsep}{3.2pt}
\renewcommand{\arraystretch}{0.92}
\caption{Results on filtered and new heterophily benchmarks from Platonov et al.~\citep{platonov2023critical}. Accuracy is reported for \textsc{Chameleon-Filtered}, \textsc{Squirrel-Filtered}, \textsc{Roman-Empire}, and \textsc{Amazon-Ratings}; ROC-AUC is reported for \textsc{Minesweeper}, \textsc{Tolokers}, and \textsc{Questions}. Higher is better. The top three models are coloured by \textcolor{Top1}{\textbf{First}}, \textcolor{Top2}{\textbf{Second}}, and \textcolor{Top3}{\textbf{Third}}.}
\label{tab:filtered_new_heterophily_results}
\resizebox{\textwidth}{!}{%
\begin{tabular}{lccccccc}
\toprule
\textbf{Method}
& \texttt{Cham.-Filt.}
& \texttt{Squi.-Filt.}
& \texttt{Roman}
& \texttt{Amazon}
& \texttt{Mine.}
& \texttt{Tolok.}
& \texttt{Quest.} \\
\midrule
ResNet
& $36.73 \pm 4.71$
& $36.55 \pm 1.82$
& $65.88 \pm 0.38$
& $45.90 \pm 0.52$
& $50.89 \pm 1.39$
& $72.95 \pm 1.06$
& $70.34 \pm 0.76$ \\

GCN
& $40.89 \pm 4.12$
& $39.47 \pm 1.47$
& $73.69 \pm 0.74$
& $48.70 \pm 0.63$
& $89.75 \pm 0.52$
& $83.64 \pm 0.67$
& $76.09 \pm 1.27$ \\

SAGE
& $37.77 \pm 4.14$
& $36.09 \pm 1.99$
& $85.74 \pm 0.67$
& \textcolor{Top3}{$\mathbf{53.63 \pm 0.39}$}
& $93.51 \pm 0.57$
& $82.43 \pm 0.44$
& $76.44 \pm 0.62$ \\

GT-sep
& $40.31 \pm 3.01$
& $36.66 \pm 1.63$
& $87.32 \pm 0.39$
& $52.18 \pm 0.80$
& $92.29 \pm 0.47$
& $82.52 \pm 0.92$
& \textcolor{Top3}{$\mathbf{78.05 \pm 0.93}$} \\

FAGCN
& $41.90 \pm 2.72$
& $41.08 \pm 2.27$
& $65.22 \pm 0.56$
& $44.12 \pm 0.30$
& $88.17 \pm 0.73$
& $77.75 \pm 1.05$
& $77.24 \pm 1.26$ \\

FSGNN
& $40.61 \pm 2.97$
& $35.92 \pm 1.32$
& $79.92 \pm 0.56$
& $52.74 \pm 0.83$
& $90.08 \pm 0.70$
& $82.76 \pm 0.61$
& \textcolor{Top2}{$\mathbf{78.86 \pm 0.92}$} \\

\midrule
ChebNetII
& $40.82 \pm 4.96$
& $41.09 \pm 2.41$
& $63.86 \pm 0.43$
& $38.82 \pm 0.07$
& $74.73 \pm 1.51$
& $75.09 \pm 0.87$
& $68.50 \pm 1.55$ \\

BernNet
& $37.30 \pm 4.17$
& $42.41 \pm 1.97$
& $67.46 \pm 4.38$
& $36.79 \pm 0.05$
& $80.94 \pm 1.16$
& $76.19 \pm 0.85$
& $66.35 \pm 1.23$ \\

PolySpectralGNN
& $38.25 \pm 5.33$
& $41.24 \pm 1.87$
& $59.16 \pm 1.64$
& $36.98 \pm 0.64$
& $86.59 \pm 0.60$
& $78.20 \pm 0.61$
& $69.40 \pm 1.74$ \\

\midrule
NSD-Diag
& $41.05 \pm 4.01$
& $43.11 \pm 2.09$
& $76.82 \pm 2.49$
& $38.79 \pm 0.39$
& $86.34 \pm 0.69$
& $75.31 \pm 1.14$
& $69.94 \pm 2.04$ \\

NSD-Bundle
& $40.56 \pm 2.67$
& $42.43 \pm 3.23$
& $78.75 \pm 2.23$
& $40.23 \pm 0.23$
& $87.43 \pm 0.96$
& $78.65 \pm 2.94$
& $66.23 \pm 1.49$ \\

NSD-General
& $42.72 \pm 2.06$
& $43.71 \pm 2.06$
& $80.41 \pm 0.72$
& $42.76 \pm 0.54$
& $92.15 \pm 0.84$
& $80.31 \pm 0.76$
& $69.69 \pm 1.46$ \\

\textbf{PolyNSD-Diag}
& \textcolor{Top2}{$\mathbf{45.88 \pm 3.60}$}
& \textcolor{Top2}{$\mathbf{48.60 \pm 1.85}$}
& $87.49 \pm 2.21$
& $52.87 \pm 0.24$
& \textcolor{Top3}{$\mathbf{96.98 \pm 0.55}$}
& $82.27 \pm 0.61$
& $75.37 \pm 1.08$ \\

\textbf{PolyNSD-Bundle}
& \textcolor{Top3}{$\mathbf{44.64 \pm 3.78}$}
& \textcolor{Top3}{$\mathbf{47.86 \pm 1.79}$}
& \textcolor{Top3}{$\mathbf{88.54 \pm 1.65}$}
& \textcolor{Top2}{$\mathbf{53.87 \pm 0.28}$}
& \textcolor{Top2}{$\mathbf{97.23 \pm 0.55}$}
& \textcolor{Top3}{$\mathbf{83.92 \pm 0.61}$}
& $76.37 \pm 1.08$ \\

\textbf{PolyNSD-General}
& \textcolor{Top1}{$\mathbf{45.94 \pm 3.79}$}
& \textcolor{Top1}{$\mathbf{48.82 \pm 1.60}$}
& \textcolor{Top1}{$\mathbf{89.87 \pm 1.36}$}
& \textcolor{Top1}{$\mathbf{54.34 \pm 0.34}$}
& \textcolor{Top1}{$\mathbf{98.86 \pm 0.55}$}
& \textcolor{Top1}{$\mathbf{84.75 \pm 0.61}$}
& \textcolor{Top1}{$\mathbf{79.37 \pm 1.08}$} \\
\bottomrule
\end{tabular}%
}
\end{table*}

\noindent\textbf{Datasets, Splits, Metrics and Models.}
We evaluate PolyNSD on standard real-world node-classification benchmarks from the sheaf-learning literature \citep{sen2008collective,tang2009social,namata2012query,pei2020geom,rozemberczki2021multi}, using the fixed per-class split protocol of \citet{bodnar2022neural}: 48\%/32\%/20\% train/validation/test, with test accuracy reported as mean $\pm$ std over 10 splits. We further test on filtered \textsc{Chameleon}/\textsc{Squirrel} and the new heterophily benchmark suite of \citet{platonov2023critical}, as well as malignant heterophily splits following the 60\%/20\%/20\% protocol \cite{luan2024re}. For synthetic experiments, we use the controlled benchmark of \citet{caralt2024joint}, which varies feature noise, heterophily level $het$, and graph scale $(N,K,n_c)$ to decouple feature complexity from graph connectivity. (Further details about experimental settings, dataset statistics, and hyperparameters are provided in Appendix\autoref{app:hyperparams} and \autoref{app:datasets}). We evaluate our three PolyNSD proposed models, and to isolate the effect of polynomial spectral filtering from learned sheaf transports, we also introduce \emph{PolySpectralGNN}, a matched non-sheaf baseline that applies the same Chebyshev filtering mechanism to the scalar normalised graph Laplacian. Equivalently, PolySpectralGNN can be seen as PolyNSD with stalk dimension $d=1$ and identity transports. We then compare against classical GNNs, including GCN \citep{kipf2016semi}, GAT \citep{velickovic2017graph}, and GraphSAGE \citep{hamilton2017representation}; heterophily-oriented models such as GGCN \citep{yan2022two}, Geom-GCN \citep{pei2020geom}, H2GCN \citep{zhu2020beyond}, GPRGNN \citep{chien2020adaptive}, FAGCN \citep{bo2021beyond}, and MixHop \citep{abu2019mixhop}; oversmoothing remedies such as GCNII \citep{chien2020adaptive} and PairNorm \citep{chen2020simple}; spectral baselines including ChebNet \citep{defferrard2016convolutional}, ChebNetII, and BernNet; and sheaf-based models including NSD \citep{bodnar2022neural}, SAN/ANSD \citep{barbero2022sheaf_attentional}, Conn-NSD \citep{barbero2022sheaf}, RiSNN/JdSNN \citep{caralt2024joint}.

\noindent\textbf{Node Classification under Heterophily: Real-World, New and Synthetic Benchmarks with Malignant and Continuous Variants.}
\label{subsec:hetero-real-main}
We first compare PolyNSD against sheaf and non-sheaf baselines on the nine standard real-world benchmarks in \autoref{tab:big-leaderboard}. PolyNSD matches or improves prior sheaf models, reaching top-3 performance on almost all datasets across both heterophilic and homophilic regimes. The gains over first-order NSD show that learning a degree-$K$ spectral response $p_K(\widetilde L)$ is beneficial beyond local diffusion. At the same time, the gap between PolyNSD and PolySpectralGNN confirms that the sheaf structure itself is fundamental and that polynomial filtering alone is not sufficient. A key empirical finding is that PolyNSD often reaches strong results even with \emph{diagonal} restriction maps and small stalk dimensions. This contrasts with the common NSD preference for denser bundle or general maps, suggesting that higher-order spectral control can substitute for the expressivity usually obtained through expensive transports. Stalk-dimension sweeps in Appendix~\autoref{app:stalk-sweep-real} confirm that these gains do not rely on inflating the local feature space, improving parameter, memory, and runtime efficiency. We further ablate different polynomial bases in Appendix~\autoref{app:orthogonal-bases}, finding stable performance across Chebyshev, Legendre, Gegenbauer, Jacobi. Under the malignant heterophily protocol of \citet{luan2024re}, we separately evaluate \textsc{Texas}, \textsc{Wisconsin}, \textsc{Film}, and \textsc{Cornell}. PolyNSD consistently improves over the corresponding first-order NSD variants and remains competitive with strong MLP baselines. This suggests that polynomial sheaf diffusion can also mitigate harmful local graph structure by learning a higher-order spectral response that selectively combines local and non-local information (full results are reported in Appendix~\autoref{app:malignant_heterophily_60_20_20}). We then extend the evaluation to stricter and more recent heterophily settings in \autoref{tab:filtered_new_heterophily_results}. On the filtered \textsc{Chameleon} and \textsc{Squirrel} datasets of \citet{platonov2023critical}, where duplicate nodes are removed to reduce leakage, PolyNSD remains the strongest model, improving over NSD-General by roughly $+3.2$ and $+5.1$ points, respectively. On the new heterophily benchmark suite \cite{platonov2023critical}, PolyNSD-General achieves the best result on all five datasets. Controlled synthetic experiments confirm the same trends under isolated stress factors: as heterophily increases, while homophily-biased GNNs degrade toward MLP-level performance, PolyNSD variants remain near the top of the accuracy curves. When scaling the number of nodes and graph degree, PolyNSD maintains high accuracy across larger and denser graphs. Under increasing feature noise, PolyNSD degrades more slowly than classical GNNs and remains among the most robust sheaf models, supporting the interpretation that learned transports align local features while polynomial filtering controls the propagation of noisy frequency components. (Full synthetic results are provided in Appendix~\autoref{app:synthetic}). Finally, we also instantiate a continuous-depth variant of PolyNSD through Neural Sheaf ODEs, where we replace the affine NSD generator with a learnable polynomial generator: $q_\theta(\Delta_{\mathcal F(t)})$. More in particular, the diffusion changes to: $\dot{X}(t)
  =
  -\,\sigma\!\Big(
    q_\theta(\Delta_{\mathcal{F}(t)}) \, (I_n \otimes W_1)\, X(t)\, W_2
  \Big)$, where $q_\theta$ is implemented through a stable Chebyshev recurrence. For fixed sheaf structure and linear dynamics, each spectral mode with eigenvalue $\lambda$ is scaled by $\exp(-Tq_\theta(\lambda))$, generalising continuous NSD from affine to polynomial spectral generators. PolyNSD variants, in continuous time, frequently reach top-three performance, and the diagonal variant serves as a strong default. (Full formulation and results are given in Appendix~\autoref{app:ode-polynsd}).

\noindent\textbf{Depth Robustness, Polynomial Order, and Accuracy--Efficiency.}
We study depth robustness by sweeping \(L\in\{2,4,8,16,32\}\) on representative homophilic and heterophilic benchmarks, reporting full results in Appendix~\autoref{app:oversmoothing}. Classical GNNs often degrade rapidly as depth increases, especially under heterophily, while PolyNSD remains stable and competitive up to \(L=32\), often more robust than first-order NSD variants. To analyse this beyond accuracy, we track the channel-averaged normalised Dirichlet energy
\(E_{\mathrm{norm}}(x_\ell)=\frac{\langle x_\ell,Lx_\ell\rangle}{\langle x_\ell,x_\ell\rangle}\),
\(\ell=1,\dots,L\), aggregated across seeds, stalk dimensions, depths, and transport classes. NSD exhibits increasing energy with depth, indicating amplification of transported disagreement, whereas PolyNSD maintains lower and more stable trajectories, suggesting better-conditioned depth-wise dynamics. We further ablate the Chebyshev order \(K\in\{1,2,4,8,12,16\}\) at fixed depth and width, finding that the best PolyNSD configuration always uses \(K>1\), hence strictly improving over the NSD-equivalent first-order case (Appendix~\autoref{app:chebK-sweep}). Moderate orders \(K\approx4\text{--}8\) are typically sufficient on homophilic graphs, while strongly heterophilic graphs benefit from larger orders \(K\approx8\text{--}16\), consistent with their need for longer-range, multi-frequency mixing. Finally, depth and width sweeps against spatial NSD show that PolyNSD matches or improves NSD with substantially fewer resources: it remains competitive at comparable or smaller parameter counts on homophilic graphs and yields gains up to \(\sim\!+20\%\) on heterophilic graphs at equal propagation power, despite using fewer layers or much smaller hidden dimensions (Appendix~\autoref{app:polynsd-vs-nsd-exp}).

\begin{table*}[t]
\centering
\scriptsize
\caption{\textit{Transductive long-range results on \textsc{CityNetworks}.}
We report accuracy (\%) on \textsc{Paris}, \textsc{Shanghai},
\textsc{Los Angeles}, and \textsc{London}. Baseline results are the best
reported configurations at depth $L=16$ for all models, except for the
\emph{Sheaf} models, which are reported at depth $L=6$. Higher is better.
The top three models for each dataset are coloured by
\textcolor{Top1}{\textbf{First}},
\textcolor{Top2}{\textbf{Second}} and
\textcolor{Top3}{\textbf{Third}}, respectively.}
\label{tab:citynetworks_transductive_results}
\setlength{\tabcolsep}{4.8pt}
\renewcommand{\arraystretch}{1.08}
\begin{adjustbox}{max width=0.7\textwidth}
\begin{tabular}{llccccc}
\toprule
\textbf{Family} & \textbf{Method}
& \textbf{Paris}
& \textbf{Shanghai}
& \textbf{Los Angeles}
& \textbf{London}
& \textbf{Avg.} \\
\midrule
\multirow{3}{*}{MPNNs}
& MLP
& $25.50 \pm 0.40$
& $28.40 \pm 0.60$
& $24.10 \pm 0.50$
& $27.90 \pm 0.10$
& $26.48$ \\

& GCN
& $53.20 \pm 0.30$
& $62.10 \pm 0.20$
& $58.30 \pm 0.30$
& $50.10 \pm 0.70$
& $55.93$ \\

& GraphSAGE
& \textcolor{Top3}{\textbf{$54.60 \pm 0.20$}}
& \textcolor{Top3}{\textbf{$68.30 \pm 0.50$}}
& \textcolor{Top3}{\textbf{$61.40 \pm 0.30$}}
& $55.40 \pm 0.20$
& \textcolor{Top2}{\textbf{$59.93$}} \\
\midrule
\multirow{1}{*}{GTs}
& Exphormer
& \textcolor{Top2}{\textbf{$55.10 \pm 0.80$}}
& \textcolor{Top1}{\textbf{$70.20 \pm 0.40$}}
& \textcolor{Top1}{\textbf{$63.80 \pm 0.60$}}
& $49.50 \pm 0.40$
& \textcolor{Top3}{\textbf{$59.65$}} \\
\midrule
\multirow{2}{*}{Spectral}
& ChebNetII
& $49.89 \pm 2.33$
& $66.83 \pm 0.00$
& $58.34 \pm 4.72$
& \textcolor{Top2}{\textbf{$57.37 \pm 5.45$}}
& $58.11$ \\

& BernNet
& $47.43 \pm 7.23$
& $65.23 \pm 0.00$
& $57.77 \pm 4.45$
& \textcolor{Top3}{\textbf{$56.75 \pm 4.91$}}
& $56.80$ \\
\midrule
\multirow{3}{*}{Sheaf}
& NSD-Diag
& $34.55 \pm 4.76$
& $43.85 \pm 0.78$
& $34.54 \pm 3.22$
& $38.14 \pm 2.67$
& $37.77$ \\

& PolySpectralGNN
& $45.72 \pm 3.36$
& $57.78 \pm 0.00$
& $55.67 \pm 3.45$
& $51.25 \pm 2.04$
& $52.61$ \\

& \textbf{PolyNSD-Diag}
& \textcolor{Top1}{\textbf{$57.43 \pm 2.28$}}
& \textcolor{Top2}{\textbf{$68.57 \pm 0.34$}}
& \textcolor{Top2}{\textbf{$61.82 \pm 4.23$}}
& \textcolor{Top1}{\textbf{$58.72 \pm 3.33$}}
& \textcolor{Top1}{\textbf{$61.64$}} \\
\bottomrule
\end{tabular}
\end{adjustbox}
\end{table*}

\noindent\textbf{Long-Range, Spectral, and Geometric Diagnostics.}
We complement accuracy with diagnostics focused on whether PolyNSD preserves long-range influence, how its learned spectral filters behave, and how restriction maps organise transport across layers. We first measure long-range influence through gradients. For a target node \(v\), source node \(u\), and score \(s_v(x)\), we define \(G_{uv}=\|\partial s_v/\partial x_u\|_2\) and aggregate by hop distance $I(d)
=
\mathbb{E}_{v\in\mathcal{T}}
\left[
\frac{1}{|\mathcal{N}_d(v)|}
\sum_{u\in\mathcal{N}_d(v)}G_{uv}
\right],
\qquad
\widetilde I(d)=\frac{I(d)}{I(0)}$. On \textsc{Minesweeper}, \textsc{Roman-Empire}, and \textsc{Amazon-Ratings}, we compare NSD with \(10\) layers against PolyNSD with \(6\) layers and polynomial degree \(K=10\), across all transport classes. We show in Appendix \autoref{app:long_range_influence} that NSD influence typically decays by several orders of magnitude with distance, whereas PolyNSD preserves substantially stronger medium- and long-range influence, often remaining orders of magnitude above NSD at large \(d\). This shows that polynomial sheaf filtering improves sensitivity to distant nodes by combining learned transports with explicit multi-hop spectral propagation. We further validate this long-range behaviour on \textsc{CityNetworks}~\citep{liang2026quantifyinglongrangeinteractionsgraph}, a transductive benchmark on city-scale spatial graphs. PolyNSD-Diag achieves the best average accuracy across \textsc{Paris}, \textsc{Shanghai}, \textsc{Los Angeles}, and \textsc{London}, reaching \(61.64\%\), compared to \(37.77\%\) for NSD-Diag, and is competitive with or stronger than deeper MPNN, transformer, and spectral baselines. Runtime-normalised results in Appendix \autoref{app:citynetworks_transductive} also show that PolyNSD is faster per epoch than NSD in this setting. We then inspect the \emph{learned spectral response}. For a sheaf Laplacian eigenpair \(Lu=\lambda u\), the polynomial filter acts as \(p(L)u=p(\lambda)u\); including the high-pass reinjection, the effective multiplier is $m(\lambda)
=
p(\xi(\lambda))
+
\alpha_{\mathrm{hp}}
\left(1-\frac{\lambda}{\lambda_{\max}}\right),
\qquad
\xi(\lambda)=\frac{2\lambda}{\lambda_{\max}}-1$. 
We summarise \(m(\lambda)\) through low/high gains, \(\Delta G=G_{\mathrm{high}}-G_{\mathrm{low}}\), and non-monotonicity. Across datasets, homophilic graphs learn stronger low--high separation and larger high-pass reinjection, while heterophilic graphs more often learn non-monotone, band-pass-like responses, consistent with the need to combine information across multiple radii rather than only smooth signals (see Appendix~\autoref{app:spectral-response-diagnostics}). Finally, we analyse the \emph{learned restriction-map geometry}. UMAP projections in Appendix \autoref{app:restriction-map-geometry} show that early layers learn broad transport manifolds, while deeper layers form more separated, filament-like structures with larger transport norms. Heatmaps reveal increasing dimension-wise specialisation, with later layers learning stronger activation and suppression patterns across edges and stalk dimensions. 

\noindent\textbf{Federated causal sheaf learning.}
We further test whether PolyNSD remains effective when training is distributed across clients by integrating it into the FedATH causal federated graph-learning pipeline~\cite{fu2025less} on \textsc{Roman-Empire}. This setting is challenging because the graph is sparse, weakly homophilic, and contains long-range syntactic dependencies. We partition the graph into $K=10$ clients and compare conventional federated baselines, FedATH graph backbones, first-order sheaf backbones, and polynomial sheaf variants. Replacing standard graph backbones with sheaf models already yields a large improvement, with FedCausalSheaf-General reaching $72.31\%$ compared to $48.18\%$ for FedATH-GCN. Adding polynomial sheaf propagation further improves performance, with FedCausalSheaf-GeneralPoly reaching $76.19\%$. Finally, when learned restriction maps are reused as edge-level signals for causal masking, performance increases further: LoadRMaps-FedCausalSheaf-GeneralPoly achieves the best result, $80.24\%$. Full details are provided in Appendix~\ref{app:federated-causal-sheaf}.

\section{Conclusions}
\label{sec:conclusions}
We introduced \emph{Polynomial Neural Sheaf Diffusion} (PolyNSD), a model-agnostic spectral extension of Neural Sheaf Diffusion that replaces first-order spatial propagation with learnable degree-$K$ orthogonal-polynomial filters of the normalised sheaf Laplacian. PolyNSD gives each layer an explicit multi-hop receptive field and a learnable frequency response, while reusing the learned transports through stable recurrence relations. Across standard, filtered, malignant, synthetic, and new heterophily benchmarks, as well as homophilic graphs, PolyNSD consistently improves over first-order NSD and often reaches state-of-the-art or highly competitive results with diagonal maps and small stalk dimensions. Ablations confirm the benefit of higher-order filtering ($K>1$), while PolySpectralGNN shows that the gains require both polynomial spectral control and sheaf-based transports. Diagnostics further show improved depth stability, stronger long-range influence, interpretable spectral responses, and structured restriction-map geometry. Extensions to continuous-time Sheaf ODEs, federated causal graph learning, and CityNetworks indicate that polynomial sheaf filtering is a general mechanism for stable, efficient, and interpretable higher-order diffusion on graphs.

\section{Limitations}
\label{sec:limitations}
While \textsc{PolyNSD} provides an efficient spectral route to higher-order sheaf diffusion, it still inherits the computational structure of Sheaf Laplacian-based models. A degree-$K$ layer requires $K$ sparse applications of the learned sheaf Laplacian, so runtime can grow on very large graphs, at high polynomial orders, or with large stalk dimensions, especially when the sheaf Laplacian is recomputed across layers. Moreover, although diagonal restriction maps are often sufficient in our experiments, some domains may require more expressive bundle or general transports, whose memory and parameter costs scale with the stalk dimension. Finally, our evaluation focuses primarily on transductive and semi-supervised graph learning; extending PolyNSD to larger-scale inductive and heterogeneous settings remains an important direction for future work.
\printbibliography
\newpage
\clearpage
\onecolumn
\appendix

\providecolor{Green}{RGB}{0,128,0}
\providecolor{Red}{RGB}{200,0,0}
\providecommand{\cmark}{\textcolor{Green}{\ding{51}}}
\providecommand{\xmark}{\textcolor{Red}{\ding{55}}}

\sisetup{
    mode=text,
    table-alignment-mode=none,
    reset-text-family = false,
    reset-text-series = false,
    reset-text-shape = false,
    table-number-alignment = center,
    table-align-comparator = false,
    table-align-text-pre = false,
    table-align-text-post = false,
    round-mode=uncertainty,
    round-precision=3
}

\definecolor{proofbar}{RGB}{225,225,225}
\definecolor{theorembar}{RGB}{128,0,128}
\definecolor{propositionbar}{RGB}{0,100,0}
\definecolor{definitionbar}{RGB}{0,0,200}
\definecolor{lemmabar}{RGB}{255,140,0}
\definecolor{corollarybar}{RGB}{204,153,255}
\definecolor{remarkbar}{RGB}{0,139,139}
\definecolor{assumptionbar}{RGB}{178,34,34}

\makeatletter
\@ifundefined{theorem}{\newtheorem{theorem}{Theorem}}{}
\@ifundefined{proposition}{\newtheorem{proposition}{Proposition}}{}
\@ifundefined{remark}{\newtheorem{remark}{Remark}}{}
\@ifundefined{definition}{\newtheorem{definition}{Definition}}{}
\@ifundefined{lemma}{\newtheorem{lemma}{Lemma}}{}
\@ifundefined{corollary}{\newtheorem{corollary}{Corollary}}{}
\@ifundefined{assumption}{\newtheorem{assumption}{Assumption}}{}
\makeatother

\surroundwithmdframed[
  hidealllines=true,
  leftline=true,
  linecolor=proofbar,
  linewidth=2pt,
  innerleftmargin=6pt,
  innerrightmargin=0pt,
  innertopmargin=2pt,
  innerbottommargin=2pt,
  splittopskip=\topskip,
  splitbottomskip=0pt,
  skipabove=0.5\baselineskip,
  skipbelow=0.5\baselineskip
]{proof}

\surroundwithmdframed[
  hidealllines=true,leftline=true,linecolor=theorembar,linewidth=2pt,
  innerleftmargin=6pt,innerrightmargin=0pt,innertopmargin=2pt,innerbottommargin=2pt
]{theorem}
\surroundwithmdframed[
  hidealllines=true,leftline=true,linecolor=propositionbar,linewidth=2pt,
  innerleftmargin=6pt,innerrightmargin=0pt,innertopmargin=2pt,innerbottommargin=2pt
]{proposition}
\surroundwithmdframed[
  hidealllines=true,leftline=true,linecolor=remarkbar,linewidth=2pt,
  innerleftmargin=6pt,innerrightmargin=0pt,innertopmargin=2pt,innerbottommargin=2pt
]{remark}
\surroundwithmdframed[
  hidealllines=true,leftline=true,linecolor=definitionbar,linewidth=2pt,
  innerleftmargin=6pt,innerrightmargin=0pt,innertopmargin=2pt,innerbottommargin=2pt
]{definition}
\surroundwithmdframed[
  hidealllines=true,leftline=true,linecolor=lemmabar,linewidth=2pt,
  innerleftmargin=6pt,innerrightmargin=0pt,innertopmargin=2pt,innerbottommargin=2pt
]{lemma}
\surroundwithmdframed[
  hidealllines=true,leftline=true,linecolor=corollarybar,linewidth=2pt,
  innerleftmargin=6pt,innerrightmargin=0pt,innertopmargin=2pt,innerbottommargin=2pt
]{corollary}
\surroundwithmdframed[
  hidealllines=true,leftline=true,linecolor=assumptionbar,linewidth=2pt,
  innerleftmargin=6pt,innerrightmargin=0pt,innertopmargin=2pt,innerbottommargin=2pt
]{assumption}


\newenvironment{restatedtheorem}[2][]{%
  \begin{mdframed}[
    hidealllines=true,leftline=true,linecolor=theorembar,linewidth=2pt,
    innerleftmargin=6pt,innerrightmargin=0pt,
    innertopmargin=2pt,innerbottommargin=2pt
  ]
  \noindent\textbf{Theorem~\ref{#2}}%
  \if\relax\detokenize{#1}\relax
  .%
  \else
  \textbf{ (#1).}%
  \fi
  \itshape
}{%
  \end{mdframed}
}

\newenvironment{restatedproposition}[2][]{%
  \begin{mdframed}[
    hidealllines=true,leftline=true,linecolor=propositionbar,linewidth=2pt,
    innerleftmargin=6pt,innerrightmargin=0pt,
    innertopmargin=2pt,innerbottommargin=2pt
  ]
  \noindent\textbf{Proposition~\ref{#2}}%
  \if\relax\detokenize{#1}\relax
  .%
  \else
  \textbf{ (#1).}%
  \fi
  \itshape
}{%
  \end{mdframed}
}

\newenvironment{restatedlemma}[2][]{%
  \begin{mdframed}[
    hidealllines=true,leftline=true,linecolor=lemmabar,linewidth=2pt,
    innerleftmargin=6pt,innerrightmargin=0pt,
    innertopmargin=2pt,innerbottommargin=2pt
  ]
  \noindent\textbf{Lemma~\ref{#2}}%
  \if\relax\detokenize{#1}\relax
  .%
  \else
  \textbf{ (#1).}%
  \fi
  \itshape
}{%
  \end{mdframed}
}

\newenvironment{restatedcorollary}[2][]{%
  \begin{mdframed}[
    hidealllines=true,leftline=true,linecolor=corollarybar,linewidth=2pt,
    innerleftmargin=6pt,innerrightmargin=0pt,
    innertopmargin=2pt,innerbottommargin=2pt
  ]
  \noindent\textbf{Corollary~\ref{#2}}%
  \if\relax\detokenize{#1}\relax
  .%
  \else
  \textbf{ (#1).}%
  \fi
  \itshape
}{%
  \end{mdframed}
}

\providecommand*{\theoremautorefname}{Theorem}
\providecommand*{\lemmaautorefname}{Lemma}
\providecommand*{\propositionautorefname}{Proposition}
\providecommand*{\definitionautorefname}{Definition}
\providecommand*{\corollaryautorefname}{Corollary}
\providecommand*{\remarkautorefname}{Remark}
\providecommand*{\exampleautorefname}{Example}
\providecommand*{\assumptionautorefname}{Assumption}

\makeatletter
\@ifundefined{ifshowcomments}{\newboolean{showcomments}}{}
\makeatother
\setboolean{showcomments}{true}
\providecommand{\fr}[1]{\textcolor{red}{\textbf{[FR: #1]}}}

\definecolor{Top1}{HTML}{0072B2}
\definecolor{Top2}{HTML}{E69F00}
\definecolor{Top3}{HTML}{CC79A7}

\usetikzlibrary{calc,positioning}

\definecolor{ufill}{HTML}{E1D5E7}
\definecolor{uborder}{HTML}{9673A6}
\definecolor{vfill}{HTML}{D5E8D4}
\definecolor{vborder}{HTML}{82B366}


\begin{center}
    {\LARGE \bfseries Polynomial Neural Sheaf Diffusion: A Spectral Filtering Approach on Cellular Sheaves\par}
    \vspace{0.4em}
    {\Large Supplementary Material\par}
\end{center}

\vspace{1em}

\addcontentsline{toc}{part}{Supplementary Material}
\etocsetnexttocdepth{3}
\localtableofcontents


\newpage
\section{Spectral Theory applied to Sheaf Laplacians}
\label{app:spectral-sheaf}

This appendix is meant to better explain and justify the spectral viewpoint adopted in \autoref{sec:polysd}. We make explicit the role of the sheaf Laplacian as a real symmetric positive semidefinite operator on $C^0(\mathcal G;\mathcal F)$, introduce the associated \emph{sheaf Fourier transform}, and explain how spectral multipliers $f(L_{\mathcal F})$ implement frequency-selective filters on sheaf signals. We also clarify why small eigenvalues correspond to smooth, globally aligned sheaf signals, while large eigenvalues capture oscillatory disagreement patterns, connecting to the opinion-dynamics interpretation of \cite{hansen2020sheaf} and the spectral theory of cellular sheaves developed in Definition \cite{hansen2019toward}. Throughout, $\mathcal G=(\mathcal V,\mathcal E)$ is a finite undirected graph
equipped with a cellular sheaf $\mathcal F$, and $L_{\mathcal F}$ denotes the (vertex) sheaf Laplacian.

\subsection{Self-adjointness and Positive Semidefiniteness}
\label{app:spectral-selfadjoint}

We first collect the basic operator-theoretic properties of the vertex sheaf
Laplacian. Throughout, $C^0(\mathcal G;\mathcal F)\cong\mathbb R^{nd}$ and
$C^1(\mathcal G;\mathcal F)\cong\mathbb R^{|\mathcal E|d_e}$ are equipped with
the canonical Euclidean inner product induced by the inner products on the
stalks.

\begin{lemma}[\textbf{Sheaf Coboundary and Laplacian}]
\label{lem:spectral-delta}
Let $\delta : C^0(\mathcal G;\mathcal F) \to C^1(\mathcal G;\mathcal F)$ be the
sheaf coboundary associated with $\mathcal F$ and the chosen orientation, and
let $L_{\mathcal F} := \delta^\top\delta$. Then:
\begin{enumerate}
    \item $L_{\mathcal F}$ is \emph{symmetric} (self-adjoint) with respect to the 
    canonical inner product on $C^0(\mathcal G;\mathcal F)$.
    \item $L_{\mathcal F}$ is \emph{positive semidefinite} (PSD), i.e.\ 
    $\langle x, L_{\mathcal F} x\rangle \ge 0$ for all $x\in C^0(\mathcal G;\mathcal F)$.
\end{enumerate}
\end{lemma}

\begin{proof}
For the first point, by definition, $\delta^\top$ denotes the adjoint of $\delta$ with respect to the inner products on $C^0$ and $C^1$. Hence $L_{\mathcal F}=\delta^\top\delta$ satisfies: 
for all $x,y\in C^0$:
\begin{equation}
\langle x, L_{\mathcal F} y\rangle
=
\langle x, \delta^\top\delta y\rangle
=
\langle \delta x, \delta y\rangle
=
\langle \delta^\top\delta x, y\rangle
=
\langle L_{\mathcal F} x, y\rangle,
\end{equation}
which is exactly self-adjointness. For the second point, for any $x\in C^0(\mathcal G;\mathcal F)$, $L_{\mathcal F}$ is PSD because:
\begin{equation}
    \langle x, L_{\mathcal F} x\rangle
=
\langle x, \delta^\top\delta x\rangle
=
\langle \delta x, \delta x\rangle
=
\|\delta x\|^2
\;\ge\; 0.
\end{equation}
\end{proof}

\noindent
\autoref{lem:spectral-delta} shows that the sheaf Laplacian
is the canonical “energy” operator associated with the coboundary: it is
self-adjoint because it is built as an adjoint composition, and it is
positive semidefinite because it always measures a squared norm of
edge-wise discrepancies. Since $C^0(\mathcal G;\mathcal F)$ is finite-dimensional and $L_{\mathcal F}$ is real symmetric PSD, the \emph{Spectral Theorem} applies.

\begin{proposition}[\textbf{Spectral Decomposition of $L_{\mathcal F}$}]
\label{prop:spectral-decomp}
There exists an orthonormal basis of $C^0(\mathcal G;\mathcal F)$ consisting of
eigenvectors of $L_{\mathcal F}$. Equivalently, there is an orthogonal matrix
$U\in\mathbb R^{nd\times nd}$ and a diagonal matrix
$\Lambda=\mathrm{diag}(\lambda_1,\dots,\lambda_{nd})$ with
$\lambda_i\ge 0$ s.t. $L_{\mathcal F} \;=\; U \Lambda U^\top$. We call the columns of $U$ the \emph{sheaf Fourier modes}.
\end{proposition}

\paragraph{Sheaf-Graph Laplacians Relation.}
\autoref{prop:spectral-decomp} is the sheaf analogue of the usual
spectral decomposition of the graph Laplacian. In the trivial-sheaf case
($d{=}1$ and scalar weights on edges), $L_{\mathcal F}$ reduces to the standard
(combinatorial) Laplacian and $U$ is the familiar basis of graph Fourier modes.
In the general sheaf case, stalks can have dimension $d>1$ and edge restriction
maps can perform anisotropic transports between local feature spaces, but the
same spectral picture survives.

\subsection{Sheaf Fourier Transform, Plancherel Identity and Dirichlet Energy in the Spectral Domain}
\label{app:spectral-fourier}

Once the spectral decomposition $L_{\mathcal F} = U \Lambda U^\top$ is available
(\autoref{prop:spectral-decomp}), it is natural to regard the columns of $U$ as an orthonormal “frequency” basis for sheaf signals. In complete analogy with
the classical graph Laplacian, any $x\in C^0(\mathcal G;\mathcal F)$ can be
expanded in this eigenbasis, and the corresponding coefficients play the role of
\emph{sheaf Fourier coefficients}. The change of coordinates
$x \mapsto \widehat x = U^\top x$ is therefore the sheaf analogue of the graph
Fourier transform: it expresses $x$ as a linear combination of sheaf Fourier
modes ordered by their associated eigenvalues. This spectral viewpoint is convenient for two reasons:
(i) the Euclidean norm is preserved by this change of basis (Plancherel
identity), so energy computations can be carried out either in the original or
in the spectral domain; and (ii) the Dirichlet energy $\langle x, L_{\mathcal F}
x\rangle$ becomes a simple weighted sum of squared Fourier coefficients, where
the weights are precisely the eigenvalues $\lambda_i$. This makes explicit how
each mode contributes to the total “disagreement” on the sheaf and underpins the
interpretation of small eigenvalues as smooth directions. We now formalise the \emph{sheaf Fourier transform} and the representation of
Dirichlet energy in the spectral domain.

\begin{definition}[\textbf{Sheaf Fourier Transform}]
\label{def:sheaf-fourier}
Let $L_{\mathcal F}=U\Lambda U^\top$ be as in
\autoref{prop:spectral-decomp}. For a sheaf signal
$x\in C^0(\mathcal G;\mathcal F)\cong\mathbb R^{nd}$ we define $\widehat x \;:=\; U^\top x \in \mathbb R^{nd}$ as the \emph{sheaf Fourier transform} of $x$. The inverse transform is then defined as $x = U \widehat x$.
\end{definition}

\begin{lemma}[\textbf{Parseval/Plancherel Identity}]
\label{lem:plancherel}
For all $x,y\in C^0(\mathcal G;\mathcal F)$ (with $\|x\|_2=\|\widehat x\|_2$), we have: $\langle x, y\rangle \;=\; \langle \widehat x, \widehat y\rangle$.
\end{lemma}

\begin{proof}
We have $x=U\widehat x$, $y=U\widehat y$, with $\widehat x=U^\top x$,
$\widehat y=U^\top y$. Since $U$ is orthogonal, we have:
\begin{equation}
    \langle x,y\rangle
=
\langle U\widehat x, U\widehat y\rangle
=
\widehat x^\top U^\top U \widehat y
=
\widehat x^\top \widehat y
=
\langle \widehat x, \widehat y\rangle
\end{equation}
Taking $y=x$ gives $\|x\|_2^2 = \|\widehat x\|_2^2$.
\end{proof}

\begin{proposition}[\textbf{Dirichlet Energy in the Spectral Domain}]
\label{prop:dirichlet-spectral}
Let $x\in C^0(\mathcal G;\mathcal F)$ and $\widehat x=U^\top x$. Then the
Dirichlet Energy of $x$ with respect to $L_{\mathcal F}$ can be written as:
\begin{equation}
\label{eq:dirichlet-spectral}
\mathcal E(x)
\;:=\;
\langle x,L_{\mathcal F}x\rangle
\;=\;
\sum_{i=1}^{nd} \lambda_i\,\widehat x_i^2.
\end{equation}
\end{proposition}

\begin{proof}
Using $L_{\mathcal F}=U\Lambda U^\top$ and $x=U\widehat x$,
\[
\langle x,L_{\mathcal F}x\rangle
=
x^\top L_{\mathcal F} x
=
\widehat x^\top U^\top U \Lambda U^\top U \widehat x
=
\widehat x^\top \Lambda \widehat x
=
\sum_{i=1}^{nd} \lambda_i \widehat x_i^2.
\]
\end{proof}

\noindent
\autoref{eq:dirichlet-spectral} can now be interpreted in two complementary
ways: edge-wise and spectral perspective.

\paragraph{Edge-wise Viewpoint: Energy as Disagreement.}
Recall that, by definition of the Sheaf Laplacian, we have:
\begin{equation}
    \mathcal E(x)
=
\langle x,L_{\mathcal F}x\rangle
=
\|\delta x\|^2
=
\sum_{e=(u,v)\in\mathcal E}
\big\|
  \mathcal F_{u\unlhd e}x_u
  -
  \mathcal F_{v\unlhd e}x_v
\big\|^2
\end{equation}
Thus, $\mathcal E(x)$ is literally the sum of squared \emph{disagreements}
between the incident opinions transported into each edge’s discourse space. If
$x$ is \emph{perfectly consistent} with the sheaf (every incident pair agrees
after transport), then $\delta x=0$ and $\mathcal E(x)=0$.

\paragraph{Spectral Viewpoint: Eigenvalues as Smoothness Penalties.}
In the eigenbasis of $L_{\mathcal F}$, from \autoref{prop:dirichlet-spectral}, we have that each sheaf Fourier mode (eigenvector) contributes to the total energy according to its eigenvalue $\lambda_i$. From this, depending on the value that the eigenvalue assumes, we have:
\begin{itemize}
    \item If $\lambda_i=0$, then any component along the $i$-th eigenvector
    contributes with \emph{no} energy. These eigenvectors span $\ker(L_{\mathcal F})$
    and coincide with discrete \emph{harmonic} sheaf signals (i.e., the global sections). They satisfy $\delta x=0$ and are perfectly aligned across edges.

    \item For $\lambda_i>0$, the contribution of mode $i$ is
    $\lambda_i \widehat x_i^2$: to achieve a large coefficient $\widehat x_i$
    without incurring a large energy cost, the eigenvalue $\lambda_i$ must be
    small. Conversely, modes with large $\lambda_i$ are heavily penalised and
    can only appear with substantial energy if they encode strong edge-wise
    disagreements.
\end{itemize}

This leads to the usual notion of \emph{smoothness} on the sheaf: eigenvectors
with small eigenvalues are those directions along which one can vary the signal
$x$ without creating much disagreement on edges (small $\|\delta x\|$), so they
represent slowly varying, globally aligned opinion profiles. Eigenvectors with
large eigenvalues, instead, necessarily create large contributions to
$\|\delta x\|^2$ and thus correspond to highly \emph{oscillatory} patterns of
disagreement across incidences. In other words, the spectrum of $L_{\mathcal
F}$ orders sheaf Fourier modes from smooth, globally consistent patterns
(low $\lambda$) to rapidly varying, strongly conflicting ones (high
$\lambda$), exactly mirroring the role of the graph Laplacian spectrum in
classical spectral graph theory but now in the richer, transport-aware sheaf
setting.

\subsection{Spectral Multipliers and Functional Calculus}
\label{app:spectral-multipliers}

Once the spectral decomposition $L_{\mathcal F} = U \Lambda U^\top$ is available,
it becomes natural to ask how to apply a \emph{function} $f$ to the operator
$L_{\mathcal F}$ itself. Intuitively, since $U$ diagonalises $L_{\mathcal F}$,
we can move to the eigenbasis, modify each eigenmode by a scalar factor
$f(\lambda_i)$, and then move back. In other words, $f$ acts as a
\emph{frequency response} that scales each sheaf Fourier mode according to its
eigenvalue, exactly as in classical spectral graph theory. For polynomials $p$, this idea coincides with the $p(L_{\mathcal F}) = \sum_k c_k L_{\mathcal F}^k$ definition. The spectral theorem guarantees that this polynomial
action is equivalent to applying $p$ entrywise on the eigenvalues:
$p(L_{\mathcal F}) = U\,p(\Lambda)\,U^\top$. More generally, by uniform
approximation of continuous functions with polynomials, one can extend this
construction from polynomials to arbitrary continuous $f$ on the spectrum of
$L_{\mathcal F}$, leading to the standard \emph{continuous functional calculus}.
In this framework, diffusion semigroups, heat kernels, and our polynomial sheaf
filters all appear as particular choices of spectral multipliers $f(\lambda)$. 

\begin{lemma}[\textbf{Polynomial Functional Calculus}]
\label{lem:poly-functional}
Let $p(\lambda)=\sum_{k=0}^{K}c_k\lambda^k$ be a real polynomial and
$L_{\mathcal F}=U\Lambda U^\top$ as above. Let's define $p(L_{\mathcal F}) \;:=\; \sum_{k=0}^K c_k L_{\mathcal F}^k$  . Then:
\begin{equation}
\label{eq:poly-functional-calc}
p(L_{\mathcal F}) \;=\; U\,p(\Lambda)\,U^\top,
\qquad
p(\Lambda)=\mathrm{diag}\!\big(p(\lambda_1),\dots,p(\lambda_{nd})\big).
\end{equation}
\end{lemma}

\begin{proof}
From $L_{\mathcal F}=U\Lambda U^\top$ we obtain by induction $L_{\mathcal F}^k
=U\Lambda^k U^\top$ for all $k\ge 0$. Indeed, for $k=0$ we have
$L_{\mathcal F}^0 = I = U I U^\top = U \Lambda^0 U^\top$. Assume the identity now holds for some $k\ge 0$, then:
\begin{equation}
    L_{\mathcal F}^{k+1}
=
L_{\mathcal F} L_{\mathcal F}^{k}
=
(U\Lambda U^\top)(U\Lambda^k U^\top)
=
U\Lambda (U^\top U)\Lambda^k U^\top
=
U\Lambda^{k+1} U^\top
\end{equation}
By induction the claim holds for all $k$. Thus:
\begin{equation}
    p(L_{\mathcal F})
=
\sum_{k=0}^K c_k L_{\mathcal F}^k
=
\sum_{k=0}^K c_k U\Lambda^k U^\top
=
U\Big(\sum_{k=0}^K c_k \Lambda^k\Big)U^\top
\end{equation}
Since $\Lambda$ is diagonal, $\Lambda^k$ is also diagonal with entries
$\lambda_i^k$, and therefore, we have:
\begin{equation}
    \sum_{k=0}^K c_k \Lambda^k
=
\mathrm{diag}\!\Big(\sum_{k=0}^K c_k \lambda_1^k,\dots,
                    \sum_{k=0}^K c_k \lambda_{nd}^k\Big)
=
\mathrm{diag}\!\big(p(\lambda_1),\dots,p(\lambda_{nd})\big)
=
p(\Lambda)
\end{equation}
Substituting back yields \eqref{eq:poly-functional-calc}.
\end{proof}

\begin{proposition}[\textbf{Continuous functional Calculus and Operator Norm}]
\label{prop:cont-functional}
Let $f:[0,\infty)\to\mathbb R$ be continuous on an interval containing
$\sigma(L_{\mathcal F})$. Then there exists an operator $f(L_{\mathcal F})$
defined by:
\begin{equation}
f(L_{\mathcal F}) := U\,f(\Lambda)\,U^\top,
\qquad
f(\Lambda)=\mathrm{diag}\!\big(f(\lambda_1),\dots,f(\lambda_{nd})\big)
\end{equation}
which is the uniform limit (in operator norm) of polynomial approximants
$p^{(m)}(L_{\mathcal F})$. Moreover:
\begin{equation}
\|f(L_{\mathcal F})\|_2
=
\max_{\lambda\in\sigma(L_{\mathcal F})} |f(\lambda)|.
\end{equation}
\end{proposition}

\begin{proof}
We split the proof into several steps.

\emph{Step 1: Polynomial approximation of $f$.}
Let $\lambda_{\max} := \max \sigma(L_{\mathcal F})$. Since
$\sigma(L_{\mathcal F}) \subset [0,\lambda_{\max}]$ and $f$ is continuous on an
interval containing this set, the restriction of $f$ to $[0,\lambda_{\max}]$
is continuous. By the Weierstrass approximation theorem, there exists a
sequence of real polynomials $(p^{(m)})_{m\in\mathbb N}$ such that:
\begin{equation}
\label{eq:weierstrass}
\sup_{\lambda \in [0,\lambda_{\max}]}
\big|p^{(m)}(\lambda) - f(\lambda)\big|
\;\xrightarrow[m\to\infty]{}\; 0.
\end{equation}

\emph{Step 2: Induced convergence in operator norm.}
For each $m$, define $p^{(m)}(L_{\mathcal F})$ via \autoref{lem:poly-functional}:
\[
p^{(m)}(L_{\mathcal F})
=
U\,p^{(m)}(\Lambda)\,U^\top,
\quad
p^{(m)}(\Lambda)
=
\mathrm{diag}\!\big(p^{(m)}(\lambda_1),\dots,p^{(m)}(\lambda_{nd})\big).
\]
We first show that the family $(p^{(m)}(L_{\mathcal F}))_m$ is Cauchy in
operator norm. Let $m,n\in\mathbb N$. Then:
\begin{equation}
    p^{(m)}(L_{\mathcal F}) - p^{(n)}(L_{\mathcal F})
=
U\big(p^{(m)}(\Lambda) - p^{(n)}(\Lambda)\big) U^\top.
\end{equation}
Since $U$ is orthogonal, conjugation by $U$ preserves the spectral (operator)
norm:
\begin{equation}
\label{eq:unitary-invariance}
\big\|p^{(m)}(L_{\mathcal F}) - p^{(n)}(L_{\mathcal F})\big\|_2
=
\big\|p^{(m)}(\Lambda) - p^{(n)}(\Lambda)\big\|_2.
\end{equation}
Now $p^{(m)}(\Lambda) - p^{(n)}(\Lambda)$ is diagonal, with $i$-th diagonal
entry $p^{(m)}(\lambda_i) - p^{(n)}(\lambda_i)$. The operator norm of a real
diagonal matrix is the maximum of the absolute values of its diagonal entries,
hence:
\begin{equation}
\label{eq:diag-norm}
\big\|p^{(m)}(\Lambda) - p^{(n)}(\Lambda)\big\|_2
=
\max_{1\le i\le nd}
\big|p^{(m)}(\lambda_i) - p^{(n)}(\lambda_i)\big|.
\end{equation}
Combining \eqref{eq:unitary-invariance} and \eqref{eq:diag-norm} we obtain:
\begin{equation}
\label{eq:opnorm-poly-diff}
\big\|p^{(m)}(L_{\mathcal F}) - p^{(n)}(L_{\mathcal F})\big\|_2
=
\max_{\lambda\in\sigma(L_{\mathcal F})}
\big|p^{(m)}(\lambda) - p^{(n)}(\lambda)\big|.
\end{equation}
Since $\sigma(L_{\mathcal F})\subset [0,\lambda_{\max}]$, the right-hand side
is bounded by the uniform difference on the whole interval:
\[
\max_{\lambda\in\sigma(L_{\mathcal F})}
\big|p^{(m)}(\lambda) - p^{(n)}(\lambda)\big|
\le
\sup_{\lambda\in[0,\lambda_{\max}]}
\big|p^{(m)}(\lambda) - p^{(n)}(\lambda)\big|.
\]
By \eqref{eq:weierstrass}, $(p^{(m)})_m$ is Cauchy in the uniform norm on
$[0,\lambda_{\max}]$, so the right-hand side tends to zero as $m,n\to\infty$.
Hence $(p^{(m)}(L_{\mathcal F}))_m$ is Cauchy in operator norm and therefore
convergent in the finite-dimensional space of operators on $C^0(\mathcal G;\mathcal F)$.

\emph{Step 3: Identification of the limit with $U f(\Lambda) U^\top$.}
Let's define the diagonal matrix $f(\Lambda)=\mathrm{diag}\!\big(f(\lambda_1),\dots,f(\lambda_{nd})\big)$ 
and the operator $f(L_{\mathcal F}) := U f(\Lambda) U^\top$. We claim that
$p^{(m)}(L_{\mathcal F}) \to f(L_{\mathcal F})$ in operator norm. Indeed, similarly to \eqref{eq:opnorm-poly-diff}, we have:
\begin{align}
\big\|p^{(m)}(L_{\mathcal F}) - f(L_{\mathcal F})\big\|_2
=
\big\|U\big(p^{(m)}(\Lambda) - f(\Lambda)\big)U^\top\big\|_2
&=
\big\|p^{(m)}(\Lambda) - f(\Lambda)\big\|_2\\
&=
\max_{1\le i\le nd}
\big|p^{(m)}(\lambda_i) - f(\lambda_i)\big|
\end{align}
Again using $\sigma(L_{\mathcal F})\subset [0,\lambda_{\max}]$ and
\eqref{eq:weierstrass}, we bound $\max_{1\le i\le nd}
\big|p^{(m)}(\lambda_i) - f(\lambda_i)\big|
\le
\sup_{\lambda\in[0,\lambda_{\max}]}
\big|p^{(m)}(\lambda) - f(\lambda)\big|
\xrightarrow[m\to\infty]{} 0$. Hence, we get:
\begin{equation}
    p^{(m)}(L_{\mathcal F}) \;\xrightarrow[m\to\infty]{\|\cdot\|_2}\; f(L_{\mathcal F})
\end{equation}
This shows both that $f(L_{\mathcal F})$ is the operator-theoretic limit of
$p^{(m)}(L_{\mathcal F})$ and that the operator is independent of the
particular approximating sequence $(p^{(m)})_m$, since any two such sequences
will converge to the same diagonal $f(\Lambda)$ entrywise on the eigenvalues.
\end{proof}
\begin{proof}
\emph{Step 4: Operator norm identity.}
We now prove that:
\begin{equation}
    \|f(L_{\mathcal F})\|_2
=
\max_{\lambda\in\sigma(L_{\mathcal F})}|f(\lambda)|.
\end{equation}
First, using the orthogonal invariance of the spectral norm, we have:
\begin{equation}
\|f(L_{\mathcal F})\|_2
=
\|U f(\Lambda) U^\top\|_2
=
\|f(\Lambda)\|_2.
\end{equation}
Thus it suffices to compute the operator norm of the diagonal matrix
$f(\Lambda)$. Let $\mu_i := f(\lambda_i)$ denote its diagonal entries. For any
vector $y\in\mathbb R^{nd}$ with $\|y\|_2=1$ we have:
\begin{equation}
\|f(\Lambda) y\|_2^2
=
\sum_{i=1}^{nd} \mu_i^2 y_i^2
\le
\Big(\max_i |\mu_i|^2\Big)\sum_{i=1}^{nd} y_i^2
=
\Big(\max_i |\mu_i|\Big)^2.
\end{equation}
Taking square roots and the supremum over all $\|y\|_2=1$ yields $\|f(\Lambda)\|_2
\le
\max_i |\mu_i|
=
\max_i |f(\lambda_i)|$. To see that equality holds, pick an index $j$ such that
$|\mu_j| = \max_i |\mu_i|$, and let $e_j$ be the $j$-th standard basis vector
in $\mathbb R^{nd}$ (which has norm $1$). , $\|f(\Lambda)e_j\|_2 =
|\mu_j|
=
\max_i |f(\lambda_i)|$. Therefore, we have: 
\begin{equation}
\|f(\Lambda)\|_2
=
\sup_{\|y\|_2=1} \|f(\Lambda)y\|_2
\ge
\|f(\Lambda)e_j\|_2
=
\max_i |f(\lambda_i)|.
\end{equation}
Combining the upper and lower bounds, we conclude that $\|f(\Lambda)\|_2
=
\max_i |f(\lambda_i)|.$ Hence, we can prove the claimed norm identity: 
\begin{equation}
    \|f(L_{\mathcal F})\|_2
=
\|f(\Lambda)\|_2
=
\max_{1\le i\le nd} |f(\lambda_i)|
=
\max_{\lambda\in\sigma(L_{\mathcal F})} |f(\lambda)|
\end{equation}
\end{proof}

\noindent
In particular, taking $f(\lambda)=p(\lambda)$ as a polynomial recovers \autoref{lem:poly-functional}. Taking $f(\lambda)=e^{-t\lambda}$ gives the
\emph{sheaf heat kernel} $e^{-tL_{\mathcal F}}$, i.e.\ the diffusion semigroup
at time $t$, which contracts each sheaf Fourier mode by $e^{-t\lambda_i}$.

\subsection{Smoothness, Global Sections and Opinion Dynamics}
\label{app:spectral-smoothness}

The spectral decomposition of $L_{\mathcal F}$, in addition to provide a convenient basis, it organises sheaf signals according to how \emph{globally consistent} or \emph{disagreeing} they are with respect to the underlying transports. At one extreme, vectors in the kernel of $L_{\mathcal F}$ generate perfectly aligned sheaf sections that incur no edge-wise disagreement. At the other, eigenvectors 
with large eigenvalues necessarily create strong local conflicts after transport.
Between these two extremes lies the full spectrum of ``almost harmonic'' modes,
whose small but positive eigenvalues encode smooth, slowly varying patterns of
opinion. This viewpoint is made precise by the Dirichlet energy identity
(\autoref{prop:dirichlet-spectral}) and matches the linear opinion-dynamics
models where diffusion driven by $L_{\mathcal F}$ relaxes arbitrary initial
profiles toward globally consistent sections. We now connect the spectrum of $L_{\mathcal F}$ with smoothness and global consistency of sheaf signals, as well as with the opinion-dynamics viewpoint.

\paragraph{Zero Modes and Global Sections.}
The kernel of $L_{\mathcal F}$ encodes \emph{discrete harmonic} $0$-cochains
for the sheaf: by \autoref{lem:spectral-delta}, we have $L_{\mathcal F}x=0
\Longleftrightarrow\delta x=0$, i.e., $x$ is a $0$-cocycle (i.e., a 0-cochain whose coboundary is zero). When the sheaf is suitably regular,
\cite{hansen2019toward} shows that $\ker L_{\mathcal F}$ is isomorphic to the
space of \emph{global sections} (or $0$-th cohomology) $H^0(\mathcal G;\mathcal
F)$, meaning that $\dim\ker L_{\mathcal F} \;=\; \dim H^0(\mathcal G;\mathcal F)$. Intuitively, eigenvectors with $\lambda_i=0$ are perfectly consistent sheaf
signals that incur zero disagreement on every edge, and therefore correspond to
globally aligned opinion profiles across the network.

\paragraph{Opinion Dynamics and Diffusion.}
In the linearised opinion-dynamics model of \cite{hansen2021opinion}, node
opinions $x_t\in C^0(\mathcal G;\mathcal F)$ evolve according to a (discrete or
continuous) diffusion equation driven by $L_{\mathcal F}$, such as $ x_{t+1} = x_t - \eta L_{\mathcal F} x_t,
\quad\text{or}\quad
\dot x(t) = -L_{\mathcal F} x(t)$, for some step size $\eta>0$. In the eigenbasis of $L_{\mathcal F}$, each mode evolves independently as:
\begin{equation}
    \widehat x_{t+1}(\lambda_i)
=
(1-\eta\lambda_i)\,\widehat x_t(\lambda_i)
\quad\text{or}\quad
\widehat x(t,\lambda_i)
=
e^{-\lambda_i t}\,\widehat x(0,\lambda_i)    
\end{equation}
so low-frequency modes (small $\lambda_i$) decay slowly and persist, while
high-frequency (large-$\lambda_i$) disagreement patterns decay quickly. In
steady state, only the kernel modes survive, corresponding to globally
consistent sections. This directly supports the interpretation used in the main
text: small eigenvalues correspond to smooth, globally aligned sheaf signals,
and large eigenvalues to oscillatory disagreement patterns.

\paragraph{Spectral Filters as Opinion Shapers.}
Given a spectral multiplier $p(L_{\mathcal F})$ as in
\autoref{lem:poly-functional}, the response of the $i$-th sheaf Fourier mode is
scaled by $p(\lambda_i)$. Choosing $p(\lambda)$ decreasing in $\lambda$
emphasises smooth, consensus-like profiles (low-pass); band-pass choices
emphasise intermediate disagreement patterns; and choices increasing in
$\lambda$ can highlight highly localised conflicts. In this sense, the learned
polynomial filters $p(L_{\mathcal F})$ in PolyNSD explicitly shape how opinion
profiles are smoothed, sharpened or selectively amplified across the sheaf
spectrum, while remaining decomposition-free and implementable via sparse
matrix--vector products.

\subsection{Polynomial Filters: K-hop Locality and Energy Monotonicity}
\label{app:poly-proofs}

In this subsection we justify the structural properties of polynomial sheaf
filters used in \autoref{sec:polysd}, namely \emph{K-hop Locality}
(\autoref{prop:polysd-khop}) and a \emph{non-increasing Dirichlet Energy} for
diffusion-like multipliers (\autoref{prop:polysd-commute-energy}). Throughout,
$L$ denotes a symmetric positive semidefinite (vertex) sheaf Laplacian on a
finite graph $G=(V,E)$, with the usual block sparsity pattern induced by
edges, and $p,q$ denote real polynomials.

\paragraph{K-hop Locality.}In order to prove K-hop locality principle, we first make precise how the sparsity pattern of $L^k$ reflects walks in $G$:

\begin{lemma}[\textbf{Support of Powers and Walks}]
\label{lem:power-walks}
Let $L$ be a block matrix indexed by $V$ such that its off-diagonal block
$(v,u)$ is nonzero only when $(v,u)\in E$, and diagonal blocks $(v,v)$ are
arbitrary. Then for any $k\ge 1$ and any pair $(v,u)\in V\times V$: $
    \big(L^k\big)_{vu}\neq 0
\quad\Rightarrow\quad
\exists\text{ a walk } v=v_0\sim v_1\sim\cdots\sim v_\ell=u
\text{ in }G \text{ of length } \ell\le k$. In particular, if $\mathrm{dist}_G(v,u)>k$ (no walk of length $\le k$), then
$\big(L^k\big)_{vu}=0$.
\end{lemma}

\begin{proof}
We proceed by induction on $k$.

\emph{Base case $k=1$.}
By assumption on $L$, for $v\neq u$ we have $L_{vu}\neq 0$ only if $(v,u)\in E$,
which is a walk of length $1$. For $v=u$ the block $L_{vv}$ may be nonzero,
corresponding to the trivial walk of length $0\le 1$. Hence the claim holds.

\emph{Induction step.}
Assume the statement holds for some $k\ge 1$. Consider $L^{k+1}=L^k L$. Its
$(v,u)$ block satisfies: $\big(L^{k+1}\big)_{vu}
=
\sum_{w\in V} \big(L^k\big)_{vw}\,L_{wu}$. If $\big(L^{k+1}\big)_{vu}\neq 0$, then there exists some $w$ such that
$\big(L^k\big)_{vw}\neq 0$ and $L_{wu}\neq 0$. By the induction hypothesis,
$\big(L^k\big)_{vw}\neq 0$ implies the existence of a walk
$v=v_0\sim v_1\sim\cdots\sim v_\ell=w$ of length $\ell\le k$. The condition
$L_{wu}\neq 0$ means either $w=u$ (diagonal block) or $(w,u)\in E$ (off-diagonal
edge block). In particular:
\begin{itemize}
    \item If $w=u$, we obtain a walk from $v$ to $u$ of length $\ell\le k<k+1$ by appending a trivial step at $u$.
    \item If $(w,u)\in E$, we can append this edge to the walk $v\leadsto w$ and obtain a walk $v=v_0\sim\cdots\sim v_\ell=w\sim u$ of length $\ell+1\le k+1$.
\end{itemize}
In both cases, there exists a walk from $v$ to $u$ of length at most $k+1$. This
proves the implication in the lemma for $k+1$, closing the induction. For the final statement, if $\mathrm{dist}_G(v,u)>k$ then by definition there is
no walk from $v$ to $u$ of length $\le k$, so the implication above forces
$\big(L^k\big)_{vu}=0$.
\end{proof}

We can now prove the \emph{K-hop locality of Polynomial Filters}.

\begin{proof}[Proof of \autoref{prop:polysd-khop}]
Recall that, as defined in \autoref{eq:polysd-poly-filter}, we have: $p(L)=\sum_{k=0}^K c_k L^k$, and more specifically, for $v,u\in V$ we have $
    \big(p(L)\big)_{vu}
=
\sum_{k=0}^K c_k\,\big(L^k\big)_{vu}$. If $\mathrm{dist}_G(v,u)>K$, then $\mathrm{dist}_G(v,u)>k$ for all $0\le k\le K$. For $k=0$ we have $L^0=I$, so $(L^0)_{vu}=0$ whenever $v\neq u$. For $k\ge 1$, \autoref{lem:power-walks} implies that $\big(L^k\big)_{vu}=0$ whenever $\mathrm{dist}_G(v,u)>k$. Therefore, each term in the sum vanishes, and we obtain $\big(p(L)\big)_{vu}=0$ whenever $\mathrm{dist}_G(v,u)>K$, as claimed.
\end{proof}

\paragraph{Commutation and Dirichlet Energy Monotonicity.}

We now justify \autoref{prop:polysd-commute-energy}.

\begin{proof}[Proof of \autoref{prop:polysd-commute-energy}]
Let $L$ be symmetric PSD and let $L=U\Lambda U^\top$ be its spectral
decomposition, with $\Lambda=\mathrm{diag}(\lambda_1,\dots,\lambda_{nd})$ and
$U$ orthogonal. For any real polynomial $r$ we have, by
\autoref{lem:poly-functional}, we have $ r(L)=U\,r(\Lambda)\,U^\top$, where $r(\Lambda)$ is diagonal with entries $r(\lambda_i)$.

\emph{Commutation and Polynomial Composition.}
Let $p,q$ be real polynomials. Then:
\begin{equation}
    p(L)q(L)
=
\big(U p(\Lambda)U^\top\big)\big(U q(\Lambda)U^\top\big)
=
U\,p(\Lambda)\,(U^\top U)\,q(\Lambda)\,U^\top
=
U\,p(\Lambda)q(\Lambda)\,U^\top
\end{equation}
Since $p(\Lambda)$ and $q(\Lambda)$ are diagonal matrices, they commute and
their product is diagonal with entries $p(\lambda_i)q(\lambda_i)$. In other
words, we have: $p(\Lambda)q(\Lambda) = q(\Lambda)p(\Lambda) = (pq)(\Lambda)$ and therefore, $p(L)q(L) = U\,(pq)(\Lambda)\,U^\top = (pq)(L)$. By symmetry of the argument, $q(L)p(L)=(pq)(L)$ as well. This proves the commutation and composition identity.

\emph{Dirichlet Energy under a Diffusion Multiplier.}
Let $x\in C^0(\mathcal G;\mathcal F)$ and write $x=U\hat x$, so that
$\hat x=U^\top x$ are the sheaf Fourier coefficients. Using the spectral
functional calculus, we have $p(L)x = U p(\Lambda) U^\top x = U p(\Lambda) \hat x
$. The Dirichlet energy of $p(L)x$ with respect to $L$ is:
\begin{align}
\big\langle p(L)x,\ L\,p(L)x\big\rangle
=
\big\langle U p(\Lambda)\hat x,\ U\Lambda p(\Lambda)\hat x\big\rangle
=
\hat x^\top p(\Lambda)^\top \Lambda p(\Lambda)\,\hat x
\end{align}
where we used orthogonality of $U$ and symmetry of $L$. Since $p(\Lambda)$ and
$\Lambda$ are diagonal and real, $p(\Lambda)^\top=p(\Lambda)$ and these matrices
all commute. Writing $\mu_i:=p(\lambda_i)$, we obtain, $p(\Lambda) \Lambda p(\Lambda) = \mathrm{diag}\!\big(\lambda_1 \mu_1^2,\dots,\lambda_{nd}\mu_{nd}^2\big)$. We therefore have then: 
\begin{equation}
  \big\langle p(L)x,\ L\,p(L)x\big\rangle
=
\sum_{i=1}^{nd} \lambda_i\,\mu_i^2\,\hat x_i^2
=
\sum_{i=1}^{nd} \lambda_i\,p(\lambda_i)^2\,\hat x_i^2  
\end{equation}
By contrast, the original Dirichlet energy of $x$ is, by
\autoref{prop:dirichlet-spectral}, defined as: $\langle x,Lx\rangle
=
\sum_{i=1}^{nd} \lambda_i\,\hat x_i^2
$.
Assume now that $0\le p(\lambda)\le 1$ on $\sigma(L)$. Then for every
eigenvalue $\lambda_i$ we have $0\le p(\lambda_i)\le 1$ and therefore
$0\le p(\lambda_i)^2\le 1$. Since each term in the sums is nonnegative
($\lambda_i\ge 0$ and $\hat x_i^2\ge 0$), we obtain $\lambda_i\,p(\lambda_i)^2\,\hat x_i^2
\;\le\;
\lambda_i\,\hat x_i^2
\quad\text{for all }i$. Summing over $i$ yields:
\begin{equation}
    \big\langle p(L)x,\ L\,p(L)x\big\rangle
=
\sum_{i=1}^{nd} \lambda_i\,p(\lambda_i)^2\,\hat x_i^2
\;\le\;
\sum_{i=1}^{nd} \lambda_i\,\hat x_i^2
=
\langle x,Lx\rangle
\end{equation}
which is precisely \eqref{eq:polysd-energy-monotone}. This shows that any
spectral multiplier with $0\le p(\lambda)\le 1$ on the spectrum acts as a
diffusion-like contraction of Dirichlet energy.
\end{proof}

\noindent
Together, \autoref{prop:polysd-khop} and \autoref{prop:polysd-commute-energy}
formalise the two key structural properties of PolyNSD layers: \emph{locality}
(a single layer implements exactly $K$-hop mixing) and \emph{stability}
(diffusion-like multipliers cannot increase disagreement energy).

\subsection{Chebyshev Filters: Spectral Rescaling, Norm Control and Energy Monotonicity}
\label{app:cheb-bound}
In this subsection we make precise the claims stated in \autoref{subsec:polysd-poly-filter}. In particular, we justify why the affine rescaling in \autoref{eq:polysd-rescale} is a \emph{structural} requirement for
Chebyshev parameterisations, and prove that, once the spectrum of $L$ is mapped
to $[-1,1]$, convex mixtures of first-kind Chebyshev polynomials define
nonexpansive spectral multipliers $p(\widetilde L)$ with non-increasing
Dirichlet energy. This provides the technical backbone for the design choices
in the Chebyshev-PolyNSD layer used in the main text.

\paragraph{Spectral Rescaling to \texorpdfstring{$[-1,1]$}{[-1,1]}.}
The affine rescaling that occurs inside the \emph{Polynomial Neural Sheaf Diffusion Layer}, as described in \autoref{eq:polysd-rescale}, is what makes the PolyNSD layer both \emph{numerically stable} and \emph{theoretically controlled}. In particular:
\begin{enumerate}
\item \emph{Extremal Property and Uniform Bound.}
First-kind Chebyshev polynomials satisfy $T_k(\xi) = \cos(k\arccos\xi), |\xi|\le 1$, and hence:
\begin{equation}
    |T_k(\xi)| \;\le\; 1 \quad\text{for all}\quad \xi\in[-1,1],\ k\ge 0
\end{equation}
This extremal/minimax property \emph{only} holds on $[-1,1]$. For $|\xi|>1$,
Chebyshev polynomials grow exponentially:
\begin{equation}
    T_k(\xi) = \cosh\!\big(k\,\mathrm{arcosh}|\xi|\big),\qquad |\xi|>1
\end{equation}
so $|T_k(\xi)|$ behaves like $c(\xi)\,\rho(\xi)^k$ with $\rho(\xi)>1$. If we
were to apply $T_k(L)$ directly to a Laplacian whose eigenvalues satisfy
$\lambda_i>1$, high-degree terms would produce operator norms that grow
exponentially in $k$. In other words, the very same sequence that is
beautifully bounded on $[-1,1]$ is catastrophically unbounded as soon as
eigenvalues leave that interval. By forcing the spectrum of the \emph{rescaled} operator $\widetilde L$ to lie in $[-1,1]$, we ensure that every eigenvalue $\tilde\lambda_i$ sits exactly in the regime where Chebyshev polynomials are uniformly bounded by $1$. This is what makes convex mixtures $\sum_k \theta_k T_k(\widetilde L)$ nonexpansive
in operator norm.

\item \emph{Stability of the Recurrence.}
The matrix recurrence:
\begin{equation}
    T_0 = x,\qquad T_1 = \widetilde L x,\qquad
T_{k+1} = 2\widetilde L T_k - T_{k-1}
\end{equation}
is a direct lifting of the scalar recurrence $T_{k+1}(\xi)=2\xi T_k(\xi)-T_{k-1}(\xi)$. When $|\xi|\le 1$, this recurrence is numerically stable: rounding errors do not get amplified exponentially and $T_k(\xi)$ remains $\mathcal O(1)$ in $k$. If instead some eigenvalues $\lambda_i$ satisfy $|\lambda_i|>1$, the same recurrence \emph{amplifies}
errors and the computed $T_k(\lambda_i)$ will quickly diverge in magnitude,
even for moderate degrees $K$. Rescaling to $[-1,1]$ by design avoids this
pathological regime and allows us to evaluate $p(\widetilde L)$ reliably with
$K$ sparse matrix–vector multiplies.

\item \emph{Transferring Scalar Approximation Bounds to Operators.}
All classical Chebyshev approximation results, are stated for functions on $[-1,1]$.
What we want in PolyNSD is an \emph{operator} approximation of a spectral
multiplier $f(L)$, with $\sigma(L)\subset[0,\lambda_{\max}]$. The affine map $\xi(\lambda) = \frac{2}{\lambda_{\max}}\lambda - 1$ identifies $[0,\lambda_{\max}]$ with $[-1,1]$. We approximate $\tilde f(\xi):=f(\tfrac{\lambda_{\max}}{2}(\xi+1))$ by a Chebyshev polynomial
$p^{(K)}(\xi)$ on $[-1,1]$, and then lift the scalar bound $\sup_{\xi\in[-1,1]}|\tilde f(\xi)-p^{(K)}(\xi)|$ to an operator bound:
$ \|f(L)-p^{(K)}(\widetilde L)\|_2 = 
\max_{\lambda_i\in\sigma(L)}
\big|f(\lambda_i)-p^{(K)}(\xi(\lambda_i))\big|$.
\end{enumerate}

We now specialise the general energy argument of \autoref{prop:polysd-commute-energy} to the Chebyshev parametrisation used in PolyNSD. This is then easily provable to be the same also for different basis parameterisations. The key point is that convex mixtures of first-kind Chebyshev polynomials on $[-1,1]$ are uniformly bounded by $1$, which immediately implies nonexpansiveness and non-increasing Dirichlet energy after spectral rescaling.


\begin{proposition}[\textbf{Boundedness and Energy Monotonicity of Convex Chebyshev Mixtures}]
\label{prop:cheb-bound-appendix}
If $\sigma(\widetilde L)\subset[-1,1]$ and $\theta\in\Delta^{K}$, then $\label{eq:cheb-operator-norm}
\big\|p_\theta(\widetilde L)\big\|_2 \;\le\; 1$, and for any symmetric PSD $L$ with $\widetilde L$ as above and any $x\in C^0(\mathcal G;\mathcal F)$, we have: $\big\langle p_\theta(\widetilde L)x,\ L\,p_\theta(\widetilde L)x\big\rangle
\;\le\;
\langle x,Lx\rangle$.
\end{proposition}

\begin{proof}
By the spectral theorem, there exists an orthogonal $U$ and diagonal
$\widetilde\Lambda=\mathrm{diag}(\tilde\lambda_i)$ with
$\tilde\lambda_i\in[-1,1]$ such that $\widetilde L=U\widetilde\Lambda U^\top$.
By polynomial functional calculus,
$p_\theta(\widetilde L)=U\,p_\theta(\widetilde\Lambda)\,U^\top$ with
$p_\theta(\widetilde\Lambda)=\mathrm{diag}(p_\theta(\tilde\lambda_i))$.
Since $|T_k(\xi)|\le 1$ for all $\xi\in[-1,1]$ and $\theta\in\Delta^{K}$,
for any $\tilde\lambda\in[-1,1]$ we have:
\begin{equation}
    \bigl|p_\theta(\tilde\lambda)\bigr|
=
\Bigl|\sum_{k}\theta_k T_k(\tilde\lambda)\Bigr|
\le
\sum_k \theta_k |T_k(\tilde\lambda)|
\le
\sum_k\theta_k
=1
\end{equation}
Thus $|p_\theta(\tilde\lambda_i)|\le 1$ for all $i$, and the spectral norm of
$p_\theta(\widetilde L)$ is $ \big\|p_\theta(\widetilde L)\big\|_2
=
\big\|p_\theta(\widetilde\Lambda)\big\|_2
=
\max_i |p_\theta(\tilde\lambda_i)|
\le 1$ which proves \eqref{eq:cheb-operator-norm}. For the energy claim, write $L=U\Lambda U^\top$ and $x=U\hat x$ as in \autoref{prop:dirichlet-spectral}. Since $\widetilde L$ is an affine function of $L$, the two are simultaneously diagonalisable by $U$, and by \autoref{prop:polysd-commute-energy} we obtain $
    \big\langle p_\theta(\widetilde L)x,\ L\,p_\theta(\widetilde L)x\big\rangle
=
\sum_{i}\lambda_i\,p_\theta(\tilde\lambda_i)^2\,\hat x_i^2$. We have $0\le p_\theta(\tilde\lambda_i)^2\le 1$ for all $i$ by the previous bound, so each term satisfies $\lambda_i p_\theta(\tilde\lambda_i)^2\hat x_i^2\le \lambda_i\hat x_i^2$. Since all quantities are nonnegative, summing over $i$ yields, as claimed: 
\begin{equation}
    \big\langle p_\theta(\widetilde L)x,\ L\,p_\theta(\widetilde L)x\big\rangle
\le
\sum_i \lambda_i\hat x_i^2
=
\langle x,Lx\rangle
\end{equation}
\end{proof}
So, Proposition~\ref{prop:cheb-bound-appendix} shows that, once the spectrum is
rescaled to $[-1,1]$, the Chebyshev-PolyNSD layer inherits a \emph{hard}
$L^2$-stability guarantee: the linear core has operator norm at most $1$ and
cannot increase the sheaf Dirichlet energy. This means that
PolyNSD behaves as a spectral regulariser whose frequency response is learned
within a nonexpansive envelope. It is worth noting that the proof strategy is not specific to Chebyshev polynomials: any orthogonal basis $\{B_k\}$ that is uniformly
bounded on $[-1,1]$, combined with a convex parametrisation of the coefficients,
admits an analogous argument. Chebyshev polynomials are simply the canonical
instance that simultaneously provide (i) this stability, (ii) near-minimax
approximation properties, and (iii) an efficient three-term recurrence.

\newpage
\section{PolyNSD Layer}
\label{app:polynsd_layer}

This section makes precise the layer-level construction sketched in
\autoref{fig:polynsd-layer}, and provides detailed operator-theoretic
and approximation-theoretic guarantees. We first introduce the Chebyshev-PolyNSD
block, then describe the full architecture, discuss approximation of diffusion
kernels, justify our choices for estimating the spectral scale, and finally
analyse the high-pass skip and residual gating.


\subsection{Chebyshev-PolyNSD Layer}
\label{app:cheb-polynsd-detailed}

Let $L$ be either the unnormalised sheaf
Laplacian $L_{\mathcal F}$ or its degree-normalised variant
$\Delta_{\mathcal F}$, and let $\lambda_{\max}$ be an upper bound on
$\sigma(L)$ (equal to 2 in the normalised case, set analytically or estimated by power iteration in the unnormalised case (see \autoref{subsec:lambda-max})). We first rescale to $[-1,1]$ as in \autoref{eq:polysd-rescale}, i.e., $\widetilde L \;=\; \frac{2}{\lambda_{\max}}L - I \sigma(\widetilde L)\subset[-1,1]$. \emph{Chebyshev Polynomials} of the first kind satisfy the following: 
\begin{equation} 
    T_0(\xi)=1,\qquad T_1(\xi)=\xi,\qquad
T_{k+1}(\xi)=2\xi T_k(\xi)-T_{k-1}(\xi)
\end{equation}
which we lift to operators via the three-term recurrence:
\begin{equation}
\label{eq:polysd-cheb-recurrence-appendix}
T_0 = x,\qquad T_1 = \widetilde L\,x,\qquad
T_{k+1} = 2\,\widetilde L\,T_k - T_{k-1},\quad k\ge 1
\end{equation}
A degree-$K$ Chebyshev-PolyNSD filter with trainable logits
$\eta\in\mathbb R^{K+1}$ and $\theta=\mathrm{softmax}(\eta)\in\Delta^K$ is
then:
\begin{equation}
\label{eq:polysd-cheb-filter-appendix}
p_\theta(\widetilde L)\,x \;=\; \sum_{k=0}^{K}\theta_k\,T_k
\end{equation}
By construction $|T_k(\xi)|\le 1$ for all $\xi\in[-1,1]$ and
$\theta$ is a convex combination, so $|p_\theta(\xi)|\le 1$ on $[-1,1]$.
Therefore $\|p_\theta(\widetilde L)\|_2\le 1$, and by
\autoref{prop:polysd-commute-energy} and
\autoref{prop:cheb-bound-appendix} Chebyshev-PolyNSD layers are linear
\emph{nonexpansive} maps that cannot increase the Dirichlet energy
$\langle x,Lx\rangle$.

\subsection{Full Polynomial Neural Sheaf Diffusion Architecture}
\label{app:polynsd-architecture}

\autoref{fig:polynsd-layer} provides a detailed view of the full
PolyNSD architecture, from raw node features to task outputs. We summarise each step, but in subsequent subsections, you can find extensive details for each of them. 

\emph{(1) Feature Lifting to Stalks.}
Given a graph $G=(V,E)$ with raw node features
$x_v^{\mathrm{raw}}\in\mathbb R^{F_{\mathrm{in}}}$, an input MLP
$\phi:\mathbb R^{F_{\mathrm{in}}}\to\mathbb R^d$ produces stalk features
$x_v=\phi(x_v^{\mathrm{raw}})\in\mathbb R^d$. Stacking across nodes yields a
$0$-cochain $x\in C^0(\mathcal G;\mathcal F)\cong\mathbb R^{Nd\times C}$, where
$C$ denotes the number of feature channels.

\emph{(2) Sheaf Learner and Restriction Maps.}
A sheaf learner $\Psi$ takes as input local edge neighbourhoods (and optionally
edge attributes) and outputs per-incidence restriction maps
$\mathcal F_{v\unlhd e}:\mathbb R^d\to\mathbb R^d$. We support three families:
\emph{diagonal} maps $\mathcal F_{v\unlhd e}=\mathrm{diag}(t_{v\unlhd e})$ with
$O(|E|d)$ parameters, \emph{bundle/orthogonal} maps
$\mathcal F_{v\unlhd e}\in O(d)$ acting as parallel transports, and
\emph{general} linear maps $\mathcal F_{v\unlhd e}\in\mathrm{GL}(d)$ with
maximal expressivity. Optional scalar edge weights $w_e$ further modulate the
assembled operator.

\emph{(3) Laplacian Assembly and Spectral Scale.}
From $\{\mathcal F_{v\unlhd e}\}$ we assemble the vertex sheaf Laplacian
$L_{\mathcal F}=\delta_{\mathcal F}^\top\delta_{\mathcal F}$ (and, optionally,
its degree-normalised variant $\Delta_{\mathcal F}$). For the normalised Laplacian we use the
standard spectral bound $\lambda_{\max}=2$, while for the unnormalised case we
either use a Gershgorin-type analytic bound or a short power iteration, as
detailed in \autoref{subsec:lambda-max}.

\emph{(4) PolyNSD + High-pass Skip and Residual Gate.}
Given $L$ and $\lambda_{\max}$, we form $\widetilde L$ and evaluate a
degree-$K$ Chebyshev-PolyNSD filter $p_\theta(\widetilde L)$ via the recurrence
\autoref{eq:polysd-cheb-recurrence-appendix}. To compensate for the intrinsic
low-pass bias of diffusion we add the high-pass skip
$h_{\mathrm{hp}} = x - \lambda_{\max}^{-1} Lx$, scaled by a learned scalar
$\alpha_{\mathrm{hp}}$, and define the pre-nonlinearity
$z = p_\theta(\widetilde L)x + \alpha_{\mathrm{hp}} h_{\mathrm{hp}}$.
A 1-Lipschitz nonlinearity $\phi$ and a diagonal residual gate $\varepsilon$
then produce the update $x^{+} = (I+\tanh\varepsilon)x - \phi(z)$. Its spectral form and global Lipschitz bound are analysed in \autoref{app:hp-gate}.

\emph{(5) Readout and depth.}
A linear readout head maps the final stalk features to logits or regression
targets. Multiple PolyNSD blocks can be stacked. In practice, we recompute
restriction maps at each depth, but keep the within-block recurrence cheap by
reusing the same sparse Laplacian and storing only two work buffers
$(T_{k-1},T_k)$, so the extra memory overhead is $O(NdC)$ and independent of
$K$.

\subsection{Chebyshev Approximation of Diffusion Kernels}
\label{subsec:cheb-approx}

This subsection formalises the intuition that Chebyshev-PolyNSD can approximate
diffusion semigroups and other smooth spectral responses with exponentially
small error in the polynomial degree $K$, provided the target response admits
an analytic extension to a neighbourhood of the spectrum. Let $f:[0,\lambda_{\max}]\to\mathbb R$ be a continuous spectral response and
consider its affine rescaling to $[-1,1]$ via $\xi(\lambda)=\tfrac{2}{\lambda_{\max}}\lambda-1$. We recall the standard notion of a Bernstein ellipse $E_\rho$ with foci at $\pm 1$ and parameter $\rho>1$, and use it to quantify the analyticity region of $f$ (see \autoref{fig:bernstein-ellipse}).
\begin{figure}[h!]
  \centering
  \includegraphics[width=0.35\linewidth]{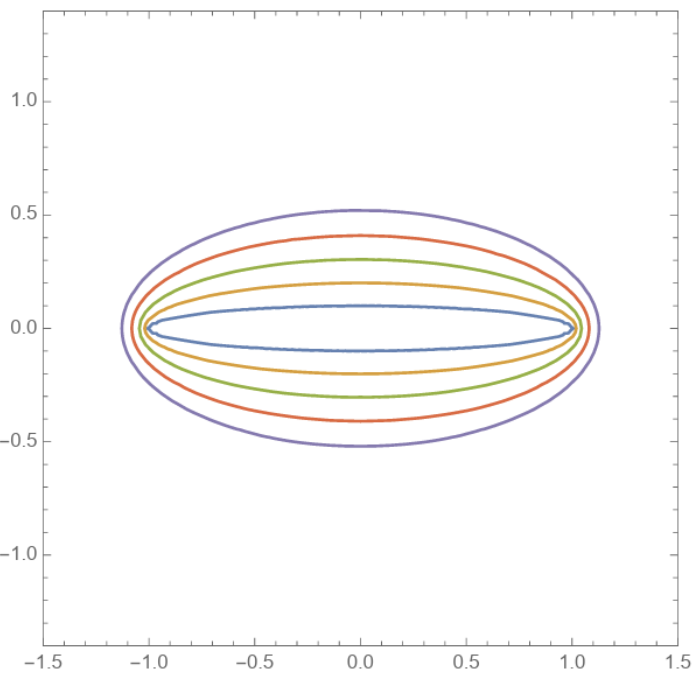}
  \caption{\textit{Bernstein ellipses $E_\rho$ for increasing $\rho$.}
  Each ellipse has foci at $\pm 1$. The rescaled response
  $\tilde f$ on $E_\rho$ implies an exponential-in-$K$ Chebyshev approximation
  rate with factor $\rho^{-K}$. }
  \label{fig:bernstein-ellipse}
\end{figure}
Our first step is to control the Chebyshev coefficients of a function that is
analytic and bounded on a Bernstein ellipse.

\begin{lemma}[\textbf{Chebyshev Coefficients under Bernstein Analyticity}]
\label{lem:cheb-coeff-bernstein}
Let $\tilde f:[-1,1]\to\mathbb R$ be continuous, and assume that $\tilde f$
admits an analytic continuation to the interior of $E_\rho$ and is bounded by
$M$ there, i.e., $ |\tilde f(z)| \le M
\quad\text{for all } z\in E_\rho $. Let $\tilde f(\xi) = \sum_{k=0}^{\infty} a_k T_k(\xi),   \xi\in[-1,1]$ be its (pointwise convergent) Chebyshev series expansion on $[-1,1]$, where $T_k$ are the Chebyshev polynomials of the first kind. Then, for all $k\ge 1$, we have:
\begin{equation}
\label{eq:cheb-coeff-bound}
|a_k| \;\le\; \frac{2M}{\rho^k}.
\end{equation}
\end{lemma}

\begin{proof}
We proceed in three steps: (i) relate $\tilde f$ on $[-1,1]$ to a function on a
circle in the complex plane, (ii) express the Chebyshev coefficients $a_k$ as
contour integrals, and (iii) bound those integrals by Cauchy’s estimate.

\emph{Step 1: Mapping the Circle to the Ellipse.}
Recall that the Bernstein ellipse $E_\rho$ is the image of the circle
$\{w\in\mathbb C:|w|=\rho\}$ under the Joukowski map:
$\Phi(w)
\;:=\;
\frac{1}{2}\Bigl(w + \frac{1}{w}\Bigr)
$. More precisely, for $w=\rho e^{i\theta}$ we have $\Phi(w)\in E_\rho$, and as $\theta$ ranges over $[0,2\pi)$, $\Phi(w)$ traces the ellipse $E_\rho$ once. For $|w|=1$ the same map $\Phi$ parametrises the interval $[-1,1]$ via $\Phi(e^{i\theta})=\cos\theta$.

Define the function $g$ on the annulus $\{w\in\mathbb C : \rho^{-1} < |w| < \rho\}$ by $g(w) := \tilde f(\Phi(w)) = \tilde f\!\Bigl(\frac{1}{2}\bigl(w+w^{-1}\bigr)\Bigr)$. By assumption, $\tilde f$ is analytic on the interior of $E_\rho$, and the Joukowski map is analytic and maps $\{w:|w|\le\rho\}$ onto the closed ellipse and its interior. Hence $g$ is analytic for $|w|<\rho$ (away from $w=0$ the inverse map is well defined, and at $w=0$ analyticity follows by removable singularity, since $\Phi$ is bounded near $0$ and $\tilde f$ is analytic in a
neighbourhood of $\Phi(0)$). On the circle $|w|=\rho$ we have $|\Phi(w)|\in E_\rho$ so $|\tilde f(\Phi(w))|\le M$, thus:
\begin{equation}
\label{eq:g-bound}
|g(w)|\le M \quad\text{for all } |w|=\rho
\end{equation}
\end{proof}
\begin{proof}
\emph{Step 2: Chebyshev Expansion and Fourier Series of $g$.}
On the unit circle $|w|=1$, we can write $w=e^{i\theta}$ and
$\Phi(w)=\cos\theta$. The Chebyshev polynomials satisfy $T_k(\cos\theta) = \cos(k\theta) \text{for all } k\ge 0$, so the Chebyshev series expansion of $\tilde f$ on $[-1,1]$ becomes a cosine series in the angular variable:
\begin{equation}
\label{eq:cheb-series-on-unit-circle}
\tilde f(\cos\theta)
=
\sum_{k=0}^{\infty} a_k T_k(\cos\theta)
=
\sum_{k=0}^{\infty} a_k \cos(k\theta),
\qquad \theta\in\mathbb R
\end{equation}
On the other hand, using $w=e^{i\theta}$, we have $g(e^{i\theta}) = \tilde f\bigl(\Phi(e^{i\theta})\bigr) = \tilde f(\cos\theta)$, so, $g$ is restricted to the unit circle has the same real cosine expansion. We now express this cosine expansion in terms of the complex Fourier coefficients of $g$. Recall that $\cos(k\theta) = \tfrac{1}{2}(e^{ik\theta}+e^{-ik\theta})$, so \autoref{eq:cheb-series-on-unit-circle} becomes:
\begin{equation}
\label{eq:g-cos-fourier}
g(e^{i\theta})
=
\tilde f(\cos\theta)
=
\sum_{k=0}^{\infty} a_k \cos(k\theta)
=
\frac{a_0}{2}
+
\sum_{k=1}^{\infty} \frac{a_k}{2}\bigl(e^{ik\theta}+e^{-ik\theta}\bigr)
\end{equation}
Comparing with the standard Fourier series $g(e^{i\theta}) = \sum_{m=-\infty}^{\infty} c_m e^{im\theta}$, we identify:
\begin{equation}
\label{eq:fourier-coeffs}
c_0 = \frac{a_0}{2},\qquad
c_k = \frac{a_k}{2},\qquad
c_{-k} = \frac{a_k}{2}
\quad\text{for }k\ge 1
\end{equation}

\emph{Step 3: Cauchy Integral Formula and Coefficient Bounds.}
By analyticity of $g$ in $|w|<\rho$, each $c_m$ can be written via Cauchy’s
integral formula on the circle $|w|=\rho$:$c_m
=
\frac{1}{2\pi i} \int_{|w|=\rho} \frac{g(w)}{w^{m+1}}\,dw, m\in\mathbb Z$. Taking absolute values and using \autoref{eq:g-bound} and $|w|=\rho$, we obtain:
\begin{equation}
|c_m|
\le
\frac{1}{2\pi} \int_{|w|=\rho} \frac{|g(w)|}{|w|^{|m|+1}}\,|dw|
\le
\frac{1}{2\pi} \int_{|w|=\rho} \frac{M}{\rho^{|m|+1}}\,|dw|
=
\frac{M}{2\pi\rho^{|m|+1}} \cdot 2\pi\rho
=
\frac{M}{\rho^{|m|}}
\end{equation}
In particular, for $k\ge 1$ we have $|c_k| \le \frac{M}{\rho^k}
\quad\text{and}\quad
|c_{-k}| \le \frac{M}{\rho^k}$. By \autoref{eq:fourier-coeffs}, $a_k = 2c_k = 2c_{-k}$ for $k\ge 1$, we obtain exactly \autoref{eq:cheb-coeff-bound} as:
\begin{equation}
    |a_k|
=
2|c_k|
\le
\frac{2M}{\rho^k}
\end{equation}
\end{proof}

\noindent
We now use this coefficient bound to control the truncation error of the
Chebyshev series, and then lift this scalar bound to an operator bound for
matrix-valued spectral filters.

\begin{theorem}[\textbf{Exponential Chebyshev Convergence for Analytic Spectral Responses}]
\label{thm:cheb-analytic-appendix}
Assume that the rescaled response
$\tilde f(\xi):=f\big(\tfrac{\lambda_{\max}}{2}(\xi+1)\big)$ admits an analytic
extension to the Bernstein ellipse $E_\rho$ for some $\rho>1$, and let
$M:=\max_{z\in E_\rho}|\tilde f(z)|$. Let $p^{(K)}$ be the degree-$K$
Chebyshev truncation of $\tilde f$ on $[-1,1]$. Then:
\begin{equation}
\label{eq:bernstein-scalar}
\sup_{\xi\in[-1,1]}
\big|\tilde f(\xi)-p^{(K)}(\xi)\big|
\;\le\;
\frac{2M}{\rho^K(\rho-1)}
\end{equation}
Consequently, for any symmetric PSD $L$ with spectrum $\sigma(L)\subset[0,\lambda_{\max}]$ and $\widetilde L$ as in \autoref{eq:polysd-rescale}:
\begin{equation}
\label{eq:bernstein-operator}
\big\|f(L)-p^{(K)}(\widetilde L)\big\|_2
\;\le\;
\frac{2M}{\rho^K(\rho-1)}
\end{equation}
In particular, the heat kernel $f(\lambda)=e^{-t\lambda}$ admits exponentially
convergent Chebyshev approximations on $[0,\lambda_{\max}]$ for any fixed
$t>0$.
\end{theorem}

\begin{proof}
We split the proof into two parts: first we bound the scalar approximation
error of $\tilde f$ by its degree-$K$ Chebyshev truncation on $[-1,1]$, then we
transfer the bound to the operator setting via the spectral decomposition of
$L$ and the affine rescaling $\lambda\leftrightarrow\xi$.

\emph{Step 1: Scalar Chebyshev Truncation Error on $[-1,1]$.}
Let $\tilde f(\xi) = \sum_{k=0}^{\infty} a_k T_k(\xi),
 \xi\in[-1,1]$ be the Chebyshev series expansion of $\tilde f$ on $[-1,1]$, which converges uniformly on $[-1,1]$ under our analyticity assumption. Denote by $p^{(K)}$ the degree-$K$ truncation $p^{(K)}(\xi)
:=\sum_{k=0}^{K} a_k T_k(\xi)$. The truncation error (the tail of the series) is then:
\begin{equation}
    R_K(\xi)
:=
\tilde f(\xi) - p^{(K)}(\xi)
=
\sum_{k=K+1}^{\infty} a_k T_k(\xi)
\end{equation}
Using the uniform bound $|T_k(\xi)|\le 1$ for all $\xi\in[-1,1]$ and
\autoref{eq:cheb-coeff-bound}, we obtain:
\begin{equation}
\label{eq:tail-bound-raw}
|R_K(\xi)|
\le
\sum_{k=K+1}^{\infty} |a_k|\,|T_k(\xi)|
\le
\sum_{k=K+1}^{\infty} |a_k|
\le
\sum_{k=K+1}^{\infty} \frac{2M}{\rho^k}
\quad\text{for all }\xi\in[-1,1]
\end{equation}
The remaining sum is geometric $\sum_{k=K+1}^{\infty} \frac{1}{\rho^k}
=
\frac{\rho^{-(K+1)}}{1-\rho^{-1}}
=
\frac{\rho^{-K}}{\rho(\,1-\rho^{-1}\,)}
=
\frac{\rho^{-K}}{\rho-1}$. Plugging this into \autoref{eq:tail-bound-raw} yields: \begin{equation}
    |R_K(\xi)|
\le
2M\cdot \frac{\rho^{-K}}{\rho-1}
=
\frac{2M}{\rho^K(\rho-1)}
\quad\text{for all }\xi\in[-1,1]
\end{equation}
Taking the supremum over $\xi\in[-1,1]$ gives exactly \autoref{eq:bernstein-scalar}:
\begin{equation}
    \sup_{\xi\in[-1,1]}
\big|\tilde f(\xi)-p^{(K)}(\xi)\big|
=
\sup_{\xi\in[-1,1]} |R_K(\xi)|
\le
\frac{2M}{\rho^K(\rho-1)}
\end{equation}

\emph{Step 2: Lifting the Scalar Bound to Operators.}
Let $L$ be a real symmetric PSD matrix with eigenvalues
$\lambda_1,\dots,\lambda_{nd}$ contained in $[0,\lambda_{\max}]$, and
spectral decomposition $ L = U\Lambda U^\top, \Lambda = \mathrm{diag}(\lambda_1,\dots,\lambda_{nd})$ with $U$ orthogonal. Define the affine rescaling as in \autoref{eq:polysd-rescale}. By functional calculus, we have $f(L) = U f(\Lambda) U^\top, p^{(K)}(\widetilde L) = U\,p^{(K)}(\widetilde\Lambda)\,U^\top$ where $f(\Lambda)$ is diagonal with entries $f(\lambda_i)$, and $p^{(K)}(\widetilde\Lambda)$ is diagonal with entries $p^{(K)}(\tilde\lambda_i)=p^{(K)}(\xi(\lambda_i))$. Hence, $f(L)-p^{(K)}(\widetilde L) = U\big(f(\Lambda)-p^{(K)}(\widetilde\Lambda)\big)U^\top$. The spectral (operator) norm is invariant under orthogonal conjugation, so:
\begin{equation}
\label{eq:operator-norm-diag}
\big\|f(L)-p^{(K)}(\widetilde L)\big\|_2
=
\big\|f(\Lambda)-p^{(K)}(\widetilde\Lambda)\big\|_2
\end{equation}
The right-hand side is the operator norm of a diagonal matrix whose diagonal entries are $d_i := f(\lambda_i) - p^{(K)}(\xi(\lambda_i))$. The spectral norm of a diagonal matrix is just the maximum absolute value of its diagonal entries, thus:
\begin{equation}
\label{eq:diag-norm-max}
\big\|f(\Lambda)-p^{(K)}(\widetilde\Lambda)\big\|_2
=
\max_{1\le i\le nd} |d_i|
=
\max_{\lambda_i\in\sigma(L)}
\big|f(\lambda_i)-p^{(K)}(\xi(\lambda_i))\big|
\end{equation}
By definition of $\tilde f$, we have $\tilde f(\xi(\lambda))
=
f\Bigl(\frac{\lambda_{\max}}{2}(\xi(\lambda)+1)\Bigr)
=
f(\lambda)$, so $f(\lambda_i) = \tilde f(\xi(\lambda_i))$. Therefore for each eigenvalue $\lambda_i$ we can rewrite $\big|f(\lambda_i)-p^{(K)}(\xi(\lambda_i))\big| =  \big|\tilde f(\xi(\lambda_i)) - p^{(K)}(\xi(\lambda_i))\big|$. Since $\xi(\lambda_i)\in[-1,1]$, \autoref{eq:bernstein-scalar} gives $\big|\tilde f(\xi(\lambda_i)) - p^{(K)}(\xi(\lambda_i))\big|
\le
\frac{2M}{\rho^K(\rho-1)}$. Taking the maximum over all $i$ and combining \autoref{eq:operator-norm-diag}–\autoref{eq:diag-norm-max} yields precisely \autoref{eq:bernstein-operator}.

\emph{Step 3: Application to Heat Kernels.}
For the heat kernel $f(\lambda)=e^{-t\lambda}$ with any fixed $t>0$, the
rescaled function: $\tilde f(\xi)=f\Bigl(\tfrac{\lambda_{\max}}{2}(\xi+1)\Bigr)
=
\exp\!\Bigl(-t\,\tfrac{\lambda_{\max}}{2}(\xi+1)\Bigr)$ is an entire function of $\xi\in\mathbb C$, hence analytic on every Bernstein ellipse $E_\rho$ and bounded by some finite $M_\rho$. Thus the assumptions of the theorem hold for all $\rho>1$, and we conclude that the Chebyshev truncations $p^{(K)}$ approximate $f(L)$ with an error that decays
exponentially in $K$ as in \autoref{eq:bernstein-operator}.
\end{proof}

\noindent
\autoref{thm:cheb-analytic-appendix} formalises the intuition behind the
Chebyshev-PolyNSD design: once the Laplacian spectrum is rescaled to $[-1,1]$, the quality of the approximation $p^{(K)}(\widetilde L)\approx f(L)$ is controlled by \emph{how far} the target response $f$ extends analytically into the complex plane, as quantified by the largest Bernstein ellipse $E_\rho$ on which $\tilde f$ remains analytic and bounded. The larger this ellipse (i.e., the larger $\rho$), the faster the error decreases geometrically in the degree $K$ with factor $\rho^{-K}$. The theorem makes the \emph{spectral sharpness} and \emph{analyticity} trade-off explicit: sharper spectral filters require either a higher polynomial degree $K$ or a relaxation of the desired uniform accuracy, whereas smoother filters (such as diffusions) can be captured extremely well with modest $K$.

In addition to that, the scalar bound in \autoref{eq:bernstein-scalar} directly controls the operator approximation error in \autoref{eq:bernstein-operator} through the spectral calculus. Combined with the $K$-hop locality result for polynomial filters (see \autoref{app:poly-proofs}), this yields a clean picture: a single Chebyshev-PolyNSD layer with degree $K$ implements a \emph{strictly $K$-hop
local} operator whose action approximates a global diffusion semigroup
$e^{-tL}$ (or, more generally, any analytic response $f(L)$) up to an error
decaying like $\rho^{-K}$. PolyNSD trades off the depth of the
GNN with the polynomial degree $K$, while retaining both locality and a
spectral approximation guarantee.

Finally, the argument is completely agnostic to the particular choice of $f$:
any spectral response that is analytic on a Bernstein ellipse containing the
rescaled spectrum can be uniformly approximated by Chebyshev polynomials with
an explicit geometric rate. This covers not only diffusion semigroups but also
families of low-pass, band-pass, and high-pass filters used in graph signal
processing, provided they are regularised to admit an analytic extension on a
sufficiently large $E_\rho$. So, \autoref{thm:cheb-analytic-appendix} gives us the approximation-theoretic backbone for Chebyshev-PolyNSD: it guarantees that PolyNSD layers can approximate a wide class of smooth spectral filters (in particular diffusion semigroups), with exponentially small error in the polynomial degree, while retaining strict $K$-hop locality and the stability properties established in \autoref{app:cheb-bound}.

\subsection{Estimating the Spectral Scale: Analytic Bound and Power Iteration}
\label{subsec:lambda-max}

Chebyshev rescaling requires an upper spectral bound $\lambda_{\max}$ for the chosen Laplacian $L$, so that the affine map in \autoref{eq:polysd-rescale}, has spectrum in $[-1,1]$.
In PolyNSD we use two strategies to obtain such a bound:
(i) \emph{closed-form spectral enclosures} for normalised sheaf Laplacians and
for unnormalised Laplacians via Gershgorin-type arguments, and
(ii) a \emph{short power iteration} to refine (or replace) the analytic bound
on unnormalised operators. Both approaches are cheap for sparse sheaf
Laplacians and provide the structural precondition needed by
\autoref{subsec:cheb-approx} and \autoref{app:cheb-bound}.

\paragraph{Normalised Sheaf Laplacian.}

For the degree-normalised sheaf Laplacian
$\Delta_{\mathcal F}=D^{-1/2}L_{\mathcal F}D^{-1/2}$, the situation is
directly analogous to the scalar graph case: all eigenvalues lie in $[0,2]$.

\begin{proposition}[\textbf{Spectral Enclosure for the Normalised Sheaf Laplacian}]
\label{prop:norm-sheaf-spectrum}
Let $\Delta_{\mathcal F}$ be the degree-normalised (vertex) sheaf Laplacian on
a finite sheaf $\mathcal F$ over $G=(V,E)$. Then $\Delta_{\mathcal F}$ is
symmetric positive semidefinite and its spectrum satisfies
$\sigma(\Delta_{\mathcal F}) \subset [0,2]$.
\end{proposition}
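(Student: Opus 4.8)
The plan is to reduce the statement to two facts about the unnormalised vertex sheaf Laplacian $L_{\mathcal F}=\delta^\top\delta$: (a) it is symmetric positive semidefinite, which is immediate from its Gram form; and (b) its block diagonal $D$ dominates it, $L_{\mathcal F}\preceq 2D$. Granting (a)--(b), for any cochain $y\in C^0(\mathcal G;\mathcal F)$ set $x=D^{-1/2}y$ (assuming for the moment $D\succ 0$); then $\langle y,\Delta_{\mathcal F}\,y\rangle=\langle x,L_{\mathcal F}\,x\rangle$, so (a) gives $\langle y,\Delta_{\mathcal F}\,y\rangle\ge 0$ and (b) gives $\langle y,\Delta_{\mathcal F}\,y\rangle\le 2\langle x,Dx\rangle=2\langle y,y\rangle$. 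Hence $0\preceq\Delta_{\mathcal F}\preceq 2I$, i.e.\ $\sigma(\Delta_{\mathcal F})\subset[0,2]$; symmetry is automatic since $(D^{-1/2}L_{\mathcal F}D^{-1/2})^\top=D^{-1/2}L_{\mathcal F}^\top D^{-1/2}=\Delta_{\mathcal F}$.

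For step (b) I would use the Dirichlet-energy form coming from \autoref{eq:sheaf-lap-nodewise}, namely $\langle x,L_{\mathcal F}\,x\rangle=\|\delta x\|^2=\sum_{e=(u,v)}\bigl\|\mathcal F_{v\unlhd e}x_v-\mathcal F_{u\unlhd e}x_u\bigr\|^2$. Applying the elementary bound $\|a-b\|^2\le 2\|a\|^2+2\|b\|^2$ edgewise and then regrouping the resulting terms by incident vertex yields
\begin{equation}
\|\delta x\|^2\;\le\;2\sum_{v\in\mathcal V}\ \sum_{v\unlhd e}\bigl\|\mathcal F_{v\unlhd e}x_v\bigr\|^2\;=\;2\sum_{v\in\mathcal V}\Bigl\langle x_v,\Bigl(\sum_{v\unlhd e}\mathcal F_{v\unlhd e}^\top\mathcal F_{v\unlhd e}\Bigr)x_v\Bigr\rangle\;=\;2\,\langle x,Dx\rangle,
\end{equation}
where the last equality is exactly the formula $D_{vv}=\sum_{v\unlhd e}\mathcal F_{v\unlhd e}^\top\mathcal F_{v\unlhd e}$ for the diagonal blocks of $L_{\mathcal F}$. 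This proves $L_{\mathcal F}\preceq 2D$. An equivalent, slightly more structural route is to introduce the \emph{signless} coboundary $\bar\delta(x)_e=\mathcal F_{v\unlhd e}x_v+\mathcal F_{u\unlhd e}x_u$ and observe, by expanding $\bar\delta^\top\bar\delta$ blockwise, that its diagonal blocks are $D_{vv}$ and its off-diagonal $(u,v)$-blocks are $+\mathcal F_{u\unlhd e}^\top\mathcal F_{v\unlhd e}=-\,(L_{\mathcal F})_{uv}$; hence $\bar\delta^\top\bar\delta=2D-L_{\mathcal F}\succeq 0$, giving the same bound.

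The one point that needs care is the invertibility of $D$: a vertex whose incident restriction maps jointly vanish on some subspace makes the corresponding diagonal block of $D$ (and of $D^{1/2}$) singular. The clean fix is that $\ker D\subseteq\ker L_{\mathcal F}$ --- a zero diagonal block of the Gram matrix $L_{\mathcal F}=\delta^\top\delta$ forces the whole corresponding block row and column to vanish --- so one defines $\Delta_{\mathcal F}=D^{+1/2}L_{\mathcal F}D^{+1/2}$ using the pseudoinverse; it then acts as $0$ on $\ker D$, and all the inner-product computations above carry over verbatim on $(\ker D)^\perp$, where $D$ is positive definite. Everything else is routine linear algebra, so I expect this invertibility bookkeeping to be the only genuine obstacle.
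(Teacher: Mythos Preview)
Your proof is correct and follows essentially the same Rayleigh-quotient strategy as the paper, bounding $\langle x,L_{\mathcal F}x\rangle$ by $2\langle x,Dx\rangle$ via the Dirichlet-energy form. Your version is in fact more complete: you make the key edgewise inequality $\|a-b\|^2\le 2\|a\|^2+2\|b\|^2$ explicit (the paper simply hand-waves this step as ``the same computation as in the scalar normalised Laplacian case''), supply the equivalent signless-coboundary identity $2D-L_{\mathcal F}=\bar\delta^\top\bar\delta\succeq 0$, and handle the degenerate $\ker D\neq 0$ case that the paper glosses over by asserting $D^{-1/2}$ is invertible.
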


\begin{proof}
Symmetry and positive semidefiniteness follow from the usual sheaf Laplacian
construction: $L_{\mathcal F}$ is symmetric PSD as a discrete Hodge Laplacian
and $D^{-1/2}$ is symmetric and invertible, so
$\Delta_{\mathcal F}=D^{-1/2}L_{\mathcal F}D^{-1/2}$ is symmetric and
PSD as well. For the upper bound, we use a Rayleigh quotient argument. For any nonzero
$x\in C^0(\mathcal G;\mathcal F)$, the Rayleigh quotient of $\Delta_{\mathcal F}$
is:
\begin{equation}
  R(x)
  :=
  \frac{\langle x,\Delta_{\mathcal F}x\rangle}{\|x\|_2^2}
  =
  \frac{\langle D^{-1/2}x,\ L_{\mathcal F}D^{-1/2}x\rangle}
       {\|x\|_2^2} =
  \frac{\langle y, L_{\mathcal F}y\rangle}{\langle D^{1/2}y, D^{1/2}y\rangle}
\end{equation}
where the last equality holds if we have $y=D^{-1/2}x$. The numerator has the usual sheaf Dirichlet form representation $\langle y,L_{\mathcal F}y\rangle
=
\frac12\sum_{(u,v)\in E}\big\|T_{vu}y_u-y_v\big\|^2_{\mathcal F_v}\;\ge\;0$, while the denominator is a weighted norm
$\langle D^{1/2}y, D^{1/2}y\rangle=\sum_v\langle D_v y_v, y_v\rangle$.
The same computation as in the scalar normalised Laplacian case shows that $0 \;\le\; R(x) \;\le\; 2, \forall x\neq 0$ because the local contributions at each edge are controlled by the incident degrees. Since the eigenvalues of a symmetric matrix are exactly the extremal values of its Rayleigh quotient, this implies $\sigma(\Delta_{\mathcal F})\subset[0,2]$.
\end{proof}

In practice, for normalised sheaf Laplacians we simply set $\lambda_{\max}=2$,
which guarantees $\sigma(\widetilde L)\subset[-1,1]$ without any numerical
estimation.

\paragraph{Unnormalised Sheaf Laplacian: Analytic Bound via Gershgorin Discs.}

When working with an unnormalised sheaf Laplacian
$L=L_{\mathcal F}=D-A$, we can obtain a cheap but safe upper bound on the
spectral radius by combining Gershgorin’s theorem with the structural
constraints on $D$ and $A$.

\begin{proposition}[\textbf{Gershgorin-Type Bound for Unnormalised Sheaf Laplacians}]
\label{prop:gershgorin-sheaf}
Let $L_{\mathcal F}=D-A$ be an unnormalised sheaf Laplacian, with
$D=\mathrm{blkdiag}(D_v)\succeq 0$ block-diagonal over vertices and
$A$ supported on off-diagonal edge blocks (so that the diagonal of $A$ is
zero). Assume that for each vertex $v$ the diagonal block $D_v$ dominates
the outgoing sheaf couplings in the sense that:
\begin{equation}
\label{eq:block-domination}
  \sum_{u\neq v} \|A_{vu}\|_2 \;\le\; \|D_v\|_2
\end{equation}
Then $L_{\mathcal F}$ is symmetric PSD and its largest eigenvalue satisfies:
\begin{equation}
  \lambda_{\max}(L_{\mathcal F}) \;\le\; 2\max_{v\in V}\|D_v\|_2
\end{equation}
In the scalar trivial-sheaf case ($\dim\mathcal F_v=1$, $D_v=\deg(v)$) this
reduces to the familiar bound $\lambda_{\max}\le 2\max_v\deg(v)$.
\end{proposition}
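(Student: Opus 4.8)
The plan is to bound the top eigenvalue directly through its Rayleigh quotient, splitting the quadratic form of $L_{\mathcal F}$ into its block-diagonal and off-block-diagonal contributions and estimating each separately; the Gershgorin flavour enters only through the per-vertex coupling sum $\sum_{u\neq v}\|A_{vu}\|_2$ that already appears in the hypothesis.

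First I would dispose of the qualitative part. Since $L_{\mathcal F}=\delta_{\mathcal F}^\top\delta_{\mathcal F}$ by the definition of the sheaf Laplacian (cf.\ \autoref{eq:sheaf-lap-nodewise}), it is symmetric positive semidefinite, so its largest eigenvalue equals its operator norm and admits the variational form $\lambda_{\max}(L_{\mathcal F})=\sup_{\|x\|_2=1}\langle x,L_{\mathcal F}x\rangle$, where $x$ is written block-wise as $x=(x_v)_{v\in V}$ with $x_v\in\mathcal F(v)$ and $\|x\|_2^2=\sum_v\|x_v\|_2^2$. Using $L_{\mathcal F}=D-A$ with $D=\mathrm{blkdiag}(D_v)$ and $A$ having zero diagonal blocks, I would expand $\langle x,L_{\mathcal F}x\rangle=\sum_{v}\langle x_v,D_vx_v\rangle-\sum_{v\neq u}\langle x_v,A_{vu}x_u\rangle$. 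The first sum is at most $\sum_v\|D_v\|_2\|x_v\|_2^2\le(\max_v\|D_v\|_2)\|x\|_2^2$ since each $D_v\succeq 0$ (so $\|D_v\|_2=\lambda_{\max}(D_v)$).

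For the off-diagonal sum I would apply, block by block, Cauchy--Schwarz $|\langle x_v,A_{vu}x_u\rangle|\le\|A_{vu}\|_2\|x_v\|_2\|x_u\|_2$, then Young's inequality $\|x_v\|_2\|x_u\|_2\le\tfrac12(\|x_v\|_2^2+\|x_u\|_2^2)$, and finally use that $A$ is symmetric (hence $A_{uv}=A_{vu}^\top$ and $\|A_{uv}\|_2=\|A_{vu}\|_2$) to symmetrise the double sum, collapsing it to $\sum_v\|x_v\|_2^2\sum_{u\neq v}\|A_{vu}\|_2$. Invoking the domination hypothesis $\sum_{u\neq v}\|A_{vu}\|_2\le\|D_v\|_2$ bounds this by $(\max_v\|D_v\|_2)\|x\|_2^2$ as well; adding the two estimates and taking the supremum over unit $x$ gives $\lambda_{\max}(L_{\mathcal F})\le 2\max_v\|D_v\|_2$. (Equivalently one could write $\|L_{\mathcal F}\|_2\le\|D\|_2+\|A\|_2$ by the triangle inequality and estimate $\|A\|_2$ with a block version of Gershgorin's theorem, but the Rayleigh-quotient route is self-contained.) The scalar specialisation is then immediate: for the trivial sheaf $D_v=\deg(v)$, every present edge block $A_{vu}$ has unit norm, so $\sum_{u\neq v}\|A_{vu}\|_2=\deg(v)=\|D_v\|_2$, the hypothesis holds with equality, and the bound collapses to $\lambda_{\max}\le 2\max_v\deg(v)$.

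The only delicate point is the symmetrisation bookkeeping in the off-diagonal sum: one must pair the $(v,u)$ and $(u,v)$ terms correctly and use symmetry of $A$ to replace $\|A_{uv}\|_2$ by $\|A_{vu}\|_2$, so that the final coefficient of $\|x_v\|_2^2$ is exactly the row-block sum appearing in the domination hypothesis rather than a mixture of row and column sums. Everything else is a routine chain of Cauchy--Schwarz and Young inequalities.
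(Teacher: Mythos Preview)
Your argument is correct and takes a genuinely different route from the paper. The paper expands $L_{\mathcal F}$ to a scalar $N\times N$ matrix and applies the classical (entrywise) Gershgorin disc theorem: for a critical row index $i$ lying in the fibre of some vertex $v$, it bounds the diagonal entry by $\|D_v\|_2$ and the off-diagonal row sum by $\sum_{u\neq v}\|A_{vu}\|_2$, then invokes the domination hypothesis and PSD-ness. You instead stay at the block level throughout, bounding the Rayleigh quotient via Cauchy--Schwarz on each $\langle x_v,A_{vu}x_u\rangle$ and Young's inequality, then symmetrising with $\|A_{vu}\|_2=\|A_{uv}\|_2$. Your approach is arguably cleaner: it never needs to compare scalar row sums $\sum_j|M_{ij}|$ with the spectral norm $\|M\|_2$ (an inequality the paper implicitly uses but which does not hold in general for non-diagonal blocks), and it does not silently drop the off-diagonal scalar entries of $D_v$ from the Gershgorin radius. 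The paper's route is shorter and more recognisably ``Gershgorin,'' but your block-level Rayleigh-quotient argument is the more robust of the two and makes the role of the hypothesis \eqref{eq:block-domination} completely transparent.
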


\begin{proof}
Positive semidefiniteness follows from the usual sheaf Laplacian construction
(as for $L_{\mathcal F}$ above), so all eigenvalues are real and nonnegative. We now apply Gershgorin’s disc theorem to the full matrix $L_{\mathcal F}$, viewed as an $N\times N$ real symmetric matrix with $N=\sum_v\dim\mathcal F_v$. Let $L_{ij}$ denote its scalar entries, and let $\lambda$ be any eigenvalue. Gershgorin’s theorem guarantees that there exists a row index $i$ such that $|L_{ii}-\lambda| \;\le\; \sum_{j\neq i}|L_{ij}|$. The row index $i$ belongs to some vertex fibre $\mathcal F_v$, so the diagonal entry $L_{ii}$ is one of the diagonal entries of the block $D_v$. Hence $|L_{ii}|\le\|D_v\|_2$. On the other hand, the off-diagonal entries $L_{ij}$ in that row come from the off-diagonal sheaf blocks $\{-A_{vu}\}_{u\neq v}$, so by the triangle inequality and the definition of the operator norm, we will have $\sum_{j\neq i}|L_{ij}|   \;\le\;   \sum_{u\neq v}\|A_{vu}\|_2   \;\le\;  \|D_v\|_2$ by the domination assumption \autoref{eq:block-domination}. Therefore, the corresponding Gershgorin disc for row $i$ is contained in the real interval:
\begin{equation}
  [L_{ii}-R_i,\ L_{ii}+R_i]
  \subseteq
  [-\|D_v\|_2,\ 2\|D_v\|_2],
\qquad
R_i:=\sum_{j\neq i}|L_{ij}|
\end{equation}
Since $L_{\mathcal F}$ is symmetric, all its eigenvalues are real and must lie
in the union of these intervals over all rows $i$, and hence over all vertices
$v$. Intersecting with $[0,\infty)$ (because $L_{\mathcal F}$ is PSD), we obtain $0 \;\le\; \lambda \;\le\; 2\max_{v\in V}\|D_v\|_2$ for every eigenvalue $\lambda$ of $L_{\mathcal F}$, which proves the claim.
In the scalar case, $D_v=\deg(v)$ and the off-diagonal entries in row $v$
are $-1$ along each incident edge, so $R_i=\deg(v)$ and the usual scalar
Gershgorin discs $[0,2\deg(v)]$ are recovered.
\end{proof}

In PolyNSD, the Gershgorin-type bound of
\autoref{prop:gershgorin-sheaf} provides a very fast, purely local estimate
of $\lambda_{\max}$ that is guaranteed to be safe for Chebyshev rescaling,
even though it can be somewhat conservative.

\paragraph{Unnormalised Sheaf Laplacian: Power Iteration.}
To obtain a tighter estimate of the largest eigenvalue of $L_{\mathcal F}$, we can refine (or replace) the analytic bound via a short power iteration. For a symmetric matrix $L$ with eigenvalues $\lambda_1\ge\lambda_2\ge\cdots\ge 0$,
the classical power method repeatedly applies $L$ and normalises: $x^{(t+1)}
  \;\leftarrow\;
  \frac{Lx^{(t)}}{\|Lx^{(t)}\|_2},
  \|x^{(0)}\|_2=1$, and uses the Rayleigh quotient $\widehat\lambda^{(t)}
  :=
  \frac{\langle x^{(t)},Lx^{(t)}\rangle}{\|x^{(t)}\|_2^2}
  =
  \langle x^{(t)},Lx^{(t)}\rangle$ as an approximation to $\lambda_1$. The next proposition states a standard error bound.

\begin{proposition}[\textbf{Power-Iteration Error}]
\label{prop:power-error-appendix}
Let $L=U\Lambda U^\top$ be real symmetric with eigenvalues
$\lambda_1>\lambda_2\ge\cdots\ge\lambda_n\ge 0$ and orthonormal eigenvectors
$\{u_i\}_{i=1}^n$. Let $x^{(0)}=\sum_{i=1}^n\alpha_i u_i$ with $\alpha_1\neq 0$
and define $x^{(t)}$ and $\widehat\lambda^{(t)}$ as above. Set
$r:=\lambda_2/\lambda_1\in(0,1)$. Then for every $t\ge 0$:
\begin{equation}
  0\;\le\; \lambda_1-\widehat\lambda^{(t)}
  \;\le\;
  (\lambda_1-\lambda_2)\,r^{2t}\,
  \frac{\|x^{(0)}-\langle x^{(0)},u_1\rangle u_1\|_2^2}
       {\langle x^{(0)},u_1\rangle^2}
\end{equation}
In particular, the Rayleigh quotient converges to $\lambda_1$ at a geometric
rate $r^{2t}$ as $t$ increases.
\end{proposition}

\begin{proof}
We make the dependence on the spectral decomposition explicit. Since
$L=U\Lambda U^\top$ and the eigenvectors form an orthonormal basis, we can
write: $x^{(0)}=\sum_{i=1}^n\alpha_i u_i, \alpha_i=\langle ^{(0)},u_i\rangle$. After $t$ steps of the (unnormalised) power method, we have $L^t x^{(0)} = \sum_{i=1}^n \alpha_i \lambda_i^t u_i$. Normalising, then yields:
\begin{equation}
  x^{(t)}
  =
  \frac{L^t x^{(0)}}{\|L^t x^{(0)}\|_2}
  =
  \frac{\sum_{i=1}^n \alpha_i \lambda_i^t u_i}
       {\Big(\sum_{i=1}^n \alpha_i^2\lambda_i^{2t}\Big)^{1/2}}
\end{equation}
so the coordinates of $x^{(t)}$ in the eigenbasis are: $\beta_i^{(t)}
  :=
  \langle x^{(t)},u_i\rangle
  =
  \frac{\alpha_i \lambda_i^t}
       {\Big(\sum_{j=1}^n \alpha_j^2\lambda_j^{2t}\Big)^{1/2}}$. The Rayleigh quotient at step $t$ can then be expressed as: 
\begin{equation}
  \widehat\lambda^{(t)}
  =
  \langle x^{(t)},Lx^{(t)}\rangle
  =
  \sum_{i=1}^n \lambda_i \big(\beta_i^{(t)}\big)^2
  =
  \frac{\sum_{i=1}^n \alpha_i^2\lambda_i^{2t+1}}
       {\sum_{i=1}^n \alpha_i^2\lambda_i^{2t}}
\end{equation}

\emph{Nonnegativity of the Error.}
We first show that $\widehat\lambda^{(t)}\le\lambda_1$. Since
$\lambda_1\ge\lambda_i$ for all $i$, we have $\lambda_1\sum_{i=1}^n \alpha_i^2\lambda_i^{2t} \;\ge\; \sum_{i=1}^n \alpha_i^2\lambda_i^{2t+1}$ and hence:
\begin{equation}
  \lambda_1 - \widehat\lambda^{(t)}
  =
  \frac{\lambda_1\sum_i \alpha_i^2\lambda_i^{2t}
        -\sum_i \alpha_i^2\lambda_i^{2t+1}}
       {\sum_i \alpha_i^2\lambda_i^{2t}}
  \;\ge\; 0
\end{equation}
Thus $0\le\widehat\lambda^{(t)}\le\lambda_1$ for all $t$.

\emph{Closed-Form Expression for the Error.}
We can rewrite the numerator more transparently by factoring $\lambda_1$:
\begin{equation}
  \lambda_1\sum_i \alpha_i^2\lambda_i^{2t}
  -\sum_i \alpha_i^2\lambda_i^{2t+1}
  =
  \sum_{i=1}^n \alpha_i^2(\lambda_1-\lambda_i)\lambda_i^{2t}
\end{equation}
The $i=1$ term vanishes because $\lambda_1-\lambda_1=0$, so: $\lambda_1-\widehat\lambda^{(t)}
  =
  \frac{\sum_{i=2}^n \alpha_i^2(\lambda_1-\lambda_i)\lambda_i^{2t}}
       {\sum_{i=1}^n \alpha_i^2\lambda_i^{2t}}$.

\emph{Upper Bound via the Spectral Gap.}
For $i\ge 2$ we have $\lambda_i\le\lambda_2$ and
$\lambda_1-\lambda_i\le\lambda_1-\lambda_2$. Therefore:
\begin{equation}
  \sum_{i=2}^n \alpha_i^2(\lambda_1-\lambda_i)\lambda_i^{2t}
  \;\le\;
  (\lambda_1-\lambda_2)\sum_{i=2}^n \alpha_i^2\lambda_i^{2t}
  \;\le\;
  (\lambda_1-\lambda_2)\lambda_2^{2t}\sum_{i=2}^n \alpha_i^2
\end{equation}
On the other hand, the denominator is bounded from below by the $i=1$ term $\sum_{i=1}^n \alpha_i^2\lambda_i^{2t}
  \;\ge\;
  \alpha_1^2\lambda_1^{2t}$. Combining these two inequalities yields:
\begin{equation}
  \lambda_1-\widehat\lambda^{(t)}
  \;\le\;
  (\lambda_1-\lambda_2)\,
  \frac{\lambda_2^{2t}\sum_{i=2}^n \alpha_i^2}
       {\alpha_1^2\lambda_1^{2t}}
  =
  (\lambda_1-\lambda_2)\,
  \Big(\frac{\lambda_2}{\lambda_1}\Big)^{2t}\,
  \frac{\sum_{i=2}^n\alpha_i^2}{\alpha_1^2}
\end{equation}
Recalling that $r=\lambda_2/\lambda_1\in(0,1)$, we identify:
\begin{equation}
  \sum_{i=2}^n\alpha_i^2
  =
  \|x^{(0)}-\alpha_1 u_1\|_2^2
  =
  \|x^{(0)}-\langle x^{(0)},u_1\rangle u_1\|_2^2,
  \qquad
  \alpha_1=\langle x^{(0)},u_1\rangle
\end{equation}
so that we have the following equation that we can substitute in the previous inequality:
\begin{equation}
  \frac{\sum_{i=2}^n\alpha_i^2}{\alpha_1^2}
  =
  \frac{\|x^{(0)}-\langle x^{(0)},u_1\rangle u_1\|_2^2}
       {\langle x^{(0)},u_1\rangle^2},
\end{equation}
\begin{equation}
  \lambda_1-\widehat\lambda^{(t)}
  \;\le\;
  (\lambda_1-\lambda_2)\,r^{2t}\,
  \frac{\|x^{(0)}-\langle x^{(0)},u_1\rangle u_1\|_2^2}
       {\langle x^{(0)},u_1\rangle^2}
\end{equation}
Since $0<r<1$, the prefactor $r^{2t}$ ensures geometric
convergence of $\widehat\lambda^{(t)}$ to $\lambda_1$.
\end{proof}

In PolyNSD we typically combine these ingredients as follows. For normalised
sheaf Laplacians we fix $\lambda_{\max}=2$ by
\autoref{prop:norm-sheaf-spectrum}. For unnormalised sheaf Laplacians we
initialise $\lambda_{\max}$ using the Gershgorin-type analytic bound of
\autoref{prop:gershgorin-sheaf}, which requires only local degree information,
and (optionally) refine it with $5$--$10$ steps of power iteration as in
\autoref{prop:power-error-appendix}. This yields a spectrally safe rescaling
$\widetilde L$ for the Chebyshev-PolyNSD layer, while keeping the overhead
negligible compared to the overall training cost.

\subsection{High-pass Skip and Residual Gating}
\label{app:hp-gate}

We conclude the analysis of PolyNSD by formalising the spectral effect and
stability properties of the \emph{high-pass correction} and \emph{gated
residual} used in the main text, and by explaining why they are structurally
useful in the design of our PolyNSD layers. A single PolyNSD layer applies, before the pointwise nonlinearity $\phi$, the linear transformation:
\begin{align}
h_{\mathrm{hp}}
&:= x - \tfrac{1}{\lambda_{\max}}Lx
\\
z
&:= p(\widetilde L)x + \alpha_{\mathrm{hp}}\, h_{\mathrm{hp}}
\\
x^{+}
&:= (I+\tanh\varepsilon)x - \phi(z)
\end{align}
where $\widetilde L$ is defined as \autoref{eq:polysd-rescale} and it is the spectrally rescaled Laplacian, $p$ is a polynomial filter (typically a convex Chebyshev mixture as in \autoref{app:cheb-bound}), $\alpha_{\mathrm{hp}}\in\mathbb R$ is a learnable scalar, $\varepsilon$ is a learnable diagonal parameter (so $I+\tanh\varepsilon$ is a diagonal residual gate), and $\phi$ is a component-wise nonlinearity with Lipschitz constant $\mathrm{Lip}(\phi)\le 1$. The term $p(\widetilde L)x$ plays the role of a learned, spectrally controlled diffusion-like filter, whose response is bounded and well approximated as in \autoref{subsec:cheb-approx}. The additional term $h_{\mathrm{hp}}=x-\tfrac{1}{\lambda_{\max}}Lx$ is a simple linear spectral correction: it is affine in $L$, cheap to compute, and has a closed-form frequency response that can be combined with $p$ to compensate for excessive low-pass bias. The residual gate $(I+\tanh\varepsilon)$ then allows us to tune the deviation from the identity map while keeping a global Lipschitz control. 
\begin{proposition}[\textbf{Spectral Shape and Global Lipschitz Bound}]
\label{prop:hp-spectral-appendix}
Let $L=U\Lambda U^\top$ be symmetric positive semidefinite, with eigenvalues
$\{\lambda_i\}_{i=1}^{nd}$ and orthonormal eigenvectors $\{u_i\}$, and let
$\widetilde L=\tfrac{2}{\lambda_{\max}}L-I$ with
$\sigma(L)\subset[0,\lambda_{\max}]$. Let $p$ be any real polynomial such that
$\|p(\widetilde L)\|_2\le 1$.
Then:
\begin{enumerate}
\item In the eigenbasis of $L$, the linear operator
      $x\mapsto p(\widetilde L)x+\alpha_{\mathrm{hp}}h_{\mathrm{hp}}$
      has per-eigenvalue multiplier:
      \begin{equation}
      \label{eq:hp-multiplier}
      m(\lambda)
      =
      p\!\Big(\tfrac{2\lambda}{\lambda_{\max}}-1\Big)
      +
      \alpha_{\mathrm{hp}}\Big(1-\tfrac{\lambda}{\lambda_{\max}}\Big),
      \qquad \lambda\in\sigma(L)
      \end{equation}
      In particular, if $\alpha_{\mathrm{hp}}>0$ and
      $p(\xi)\ge 0$ for all $\xi\in[-1,1]$, then $m(\lambda)>0$ for all
      $\lambda\in[0,\lambda_{\max})$, so no non-harmonic mode
      ($\lambda>0$ with $\lambda<\lambda_{\max}$) can be annihilated.
\item The full mapping $T:x\mapsto x^{+}$ satisfies the global Lipschitz bound:
      \begin{equation}
      \label{eq:hp-lip-constant}
      \|T(x)-T(y)\|_2
      \;\le\;
      \Bigl[
        \bigl(1+\|\tanh\varepsilon\|_\infty\bigr)
        +
        \mathrm{Lip}(\phi)\,\bigl(1+2|\alpha_{\mathrm{hp}}|\bigr)
      \Bigr]\,
      \|x-y\|_2,
      \end{equation}
      i.e.\ $T$ is Lipschitz with constant at most: $L_T
      \;\le\;
      \bigl(1+\|\tanh\varepsilon\|_\infty\bigr)
      +
      \mathrm{Lip}(\phi)\,\bigl(1+2|\alpha_{\mathrm{hp}}|\bigr)$.
\end{enumerate}
\end{proposition}

\begin{proof}
We treat the two claims separately.

\emph{(1) Spectral Multiplier of the High-pass Skip.}

Since $\widetilde L$ is an affine function of $L$, we have $L\widetilde L=\widetilde L L$ and the two operators are
simultaneously diagonalisable by the same orthogonal basis $U$. Writing $L = U\Lambda U^\top, \Lambda=\mathrm{diag}(\lambda_1,\dots,\lambda_{nd})$, we obtain:
\begin{equation}
    \widetilde L
=
\tfrac{2}{\lambda_{\max}}U\Lambda U^\top - I
=
U\Big(\tfrac{2}{\lambda_{\max}}\Lambda-I\Big)U^\top
=
U\widetilde\Lambda U^\top
\end{equation}
with $\widetilde\Lambda=\mathrm{diag}(\tilde\lambda_i)$ and $\tilde\lambda_i
=
\tfrac{2}{\lambda_{\max}}\lambda_i-1
\;\in\;[-1,1] \text{for all }i$, 
by $\sigma(L)\subset[0,\lambda_{\max}]$. Let $x\in C^0(\mathcal G;\mathcal F)$ and write $x=U\hat x$ with $\hat x=(\hat x_i)_i$. By polynomial functional calculus, we have: $p(\widetilde L)x = U\,p(\widetilde\Lambda)\hat x = U\big(\mathrm{diag}(p(\tilde\lambda_i))\hat x\big)$ so the $i$-th Fourier coefficient of $p(\widetilde L)x$ is: 
$p(\tilde\lambda_i)\hat x_i =
p(\tfrac{2\lambda_i}{\lambda_{\max}}-1)\,\hat x_i$. Similarly, for the
high-pass term we have:
\begin{equation}
    h_{\mathrm{hp}}
=
\Big(I-\tfrac{1}{\lambda_{\max}}L\Big)x
=
U\Big(I-\tfrac{1}{\lambda_{\max}}\Lambda\Big)\hat x
\end{equation}
so its $i$-th Fourier coefficient is then $\Big(1-\tfrac{\lambda_i}{\lambda_{\max}}\Big)\hat x_i$. Putting these together, the pre-nonlinearity linear combination $ x\mapsto p(\widetilde L)x + \alpha_{\mathrm{hp}} h_{\mathrm{hp}} $ acts diagonally in the eigenbasis as:
\begin{equation}
    \hat x_i \;\longmapsto\;
\Big[
  p\!\Big(\tfrac{2\lambda_i}{\lambda_{\max}}-1\Big)
  +
  \alpha_{\mathrm{hp}}\Big(1-\tfrac{\lambda_i}{\lambda_{\max}}\Big)
\Big]\hat x_i
\end{equation}
which is exactly the multiplier $m(\lambda_i)$ in
\autoref{eq:hp-multiplier}. For the positivity statement, assume
$\alpha_{\mathrm{hp}}>0$ and $p(\xi)\ge 0$ on $[-1,1]$. For any eigenvalue
$\lambda\in[0,\lambda_{\max})$ we have $1-\lambda/\lambda_{\max}>0$, hence, $\alpha_{\mathrm{hp}}\Big(1-\tfrac{\lambda}{\lambda_{\max}}\Big) > 0$ and the additional term $p(\tfrac{2\lambda}{\lambda_{\max}}-1)\ge 0$ by hypothesis. Therefore $m(\lambda)>0$ for all $\lambda\in[0,\lambda_{\max})$, so no non-harmonic
eigenmode can be annihilated. The only potential zero of $m$ on the spectrum
can appear at $\lambda=\lambda_{\max}$, e.g.\ if $p(1)=0$ and
$\alpha_{\mathrm{hp}}=0$.
\emph{(2) Global Lipschitz Bound for the Full Update.}

Let $x,y$ be two arbitrary inputs and set: $\Delta x := x-y, \Delta z := z(x)-z(y)$. Then, $T(x)-T(y) = (I+\tanh\varepsilon)\Delta x
-
\bigl(\phi\big(z(x)\big)-\phi\big(z(y)\big)\bigr)$
So by the triangle inequality, we have:
\begin{equation}
\label{eq:hp-lip-split}
\|T(x)-T(y)\|_2
\le
\|(I+\tanh\varepsilon)\Delta x\|_2
+
\|\phi(z(x))-\phi(z(y))\|_2
\end{equation}
We first bound the residual gate term. Since $I+\tanh\varepsilon$ is diagonal, its spectral norm equals the maximum absolute value of its diagonal entries:
\begin{equation}
    \|I+\tanh\varepsilon\|_2
=
\max_j |1+\tanh\varepsilon_j|
\le
1+\max_j |\tanh\varepsilon_j|
=
1+\|\tanh\varepsilon\|_\infty
\end{equation}
Hence, 
\begin{equation}
\label{eq:hp-residual-bound}
\|(I+\tanh\varepsilon)\Delta x\|_2
\le
\bigl(1+\|\tanh\varepsilon\|_\infty\bigr)\,\|\Delta x\|_2
\end{equation}

Next we control the nonlinear term. By Lipschitzness of $\phi$ with constant
$\mathrm{Lip}(\phi)$, we have:
\begin{equation}
\label{eq:hp-phi-lip}
\|\phi(z(x))-\phi(z(y))\|_2
\le
\mathrm{Lip}(\phi)\,\|\Delta z\|_2
\end{equation}
The increment $\Delta z$ can then be written as $\Delta z = p(\widetilde L)\Delta x + \alpha_{\mathrm{hp}}\Big(I-\tfrac{1}{\lambda_{\max}}L\Big)\Delta x$, so by the triangle inequality and submultiplicativity of the operator norm:
\begin{equation}
\label{eq:hp-delta-z}
\|\Delta z\|_2
\le
\|p(\widetilde L)\|_2\,\|\Delta x\|_2
+
|\alpha_{\mathrm{hp}}|\,\Big\|I-\tfrac{1}{\lambda_{\max}}L\Big\|_2\,\|\Delta x\|_2
\end{equation}
By hypothesis $\|p(\widetilde L)\|_2\le 1$. Moreover, since $L$ is symmetric
PSD with eigenvalues $\lambda_i\in[0,\lambda_{\max}]$, the eigenvalues of
$I-\tfrac{1}{\lambda_{\max}}L$ are $\mu_i=1-\lambda_i/\lambda_{\max}\in[0,1]$,
so in fact $\big\|I-\tfrac{1}{\lambda_{\max}}L\big\|_2\le 1$. Using instead
the looser bound $\big\|I-\tfrac{1}{\lambda_{\max}}L\big\|_2\le 2$ (which
remains valid even if $\lambda_{\max}$ is only an upper bound on the spectrum), we obtain from \autoref{eq:hp-delta-z}:
\begin{equation}
\label{eq:hp-delta-z-final}
\|\Delta z\|_2
\le
\bigl(1+2|\alpha_{\mathrm{hp}}|\bigr)\,\\|\Delta x\|_2
\end{equation}
\end{proof}
\begin{proof}
Combining \autoref{eq:hp-phi-lip} and \autoref{eq:hp-delta-z-final} yields:
\begin{equation}
\|\phi(z(x))-\phi(z(y))\|_2
\le
\mathrm{Lip}(\phi)\,\bigl(1+2|\alpha_{\mathrm{hp}}|\bigr)\,\|\Delta x\|_2
\end{equation}
Substituting together with \autoref{eq:hp-residual-bound} into \autoref{eq:hp-lip-split}, we obtain then:
\begin{equation}
\|T(x)-T(y)\|_2
\le
\Bigl[
  \bigl(1+\|\tanh\varepsilon\|_\infty\bigr)
  +
  \mathrm{Lip}(\phi)\,\bigl(1+2|\alpha_{\mathrm{hp}}|\bigr)
\Bigr]
\|\Delta x\|_2
\end{equation}
which is exactly the claimed Lipschitz bound:
\autoref{eq:hp-lip-constant}.
\end{proof}

\noindent
\paragraph{High-pass Skip and Residual Gating.} \autoref{eq:hp-multiplier} shows that the frequency response $m(\lambda)$ of the linear core is a \emph{sum of two simple terms}: a learned polynomial
$p(\tfrac{2\lambda}{\lambda_{\max}}-1)$ and a linear profile
$\alpha_{\mathrm{hp}}(1-\lambda/\lambda_{\max})$. If $p$ is chosen to
approximate a diffusion semigroup $e^{-t\lambda}$ (or any analytic low-pass
filter as in \autoref{subsec:cheb-approx}), then $p$ is typically strongly
decreasing in $\lambda$, i.e.\ $p(0)\approx 1$ and $p(\lambda_{\max})\approx 0$.
In isolation, this tends to \emph{aggressively suppress} high-frequency modes, and repeated application of such a filter can readily lead to
oversmoothing. The correction $ \lambda\;\longmapsto\;\alpha_{\mathrm{hp}}\Big(1-\tfrac{\lambda}{\lambda_{\max}}\Big)$ is a simple knob that deforms this pure diffusion profile. For instance, indeed:
\begin{itemize}
    \item If $\alpha_{\mathrm{hp}}<0$ with small magnitude, then $m(\lambda)=p(\cdot)+\alpha_{\mathrm{hp}}(1-\lambda/\lambda_{\max})$ \emph{subtracts more} from low frequencies (where $1-\lambda/\lambda_{\max}\approx 1$) than from high ones (where $1-\lambda/\lambda_{\max}\approx 0$), relatively flattening the low-pass behaviour and partially reweighting intermediate and high modes. 
    \item Conversely, if $\alpha_{\mathrm{hp}}>0$, the same term can be used to ensure $m(\lambda)$ remains strictly positive on $[0,\lambda_{\max})$ when $p\ge 0$, as captured in item (1) of \autoref{prop:hp-spectral-appendix}, so that no non-harmonic mode is accidentally annihilated by the learned polynomial. 
\end{itemize}
In both cases, the effect is controlled by a \emph{single scalar} and a linear function of the spectrum, making the impact on $m(\lambda)$ easy to analyse and to regularise. Second, the Lipschitz estimate in \autoref{eq:hp-lip-constant} quantifies the stability cost of inserting both the high-pass correction and the gated residual. The term $1+\|\tanh\varepsilon\|_\infty$ bounds how far the residual branch deviates from the identity: when $\varepsilon$ is small, we have $\tanh\varepsilon\approx 0$ and the residual behaves nearly isometrically. The contribution $\mathrm{Lip}(\phi)\,(1+2|\alpha_{\mathrm{hp}}|)$ comes from the composition of a 1-Lipschitz nonlinearity with the pre-activation linear map $z(x)$: here $1$ corresponds to the nonexpansive polynomial $p(\widetilde L)$ (by design of the Chebyshev-PolyNSD core), while $2|\alpha_{\mathrm{hp}}|$ bounds the additional amplification introduced by the high-pass term $I-\tfrac{1}{\lambda_{\max}}L$. This makes transparent the trade-off: larger $|\alpha_{\mathrm{hp}}|$ allows more aggressive reshaping of the spectrum but increases the worst-case Lipschitz constant linearly and explicitly, which can be controlled during optimisation (e.g.\ via weight regularisation or explicit constraints on $\alpha_{\mathrm{hp}}$ and
$\varepsilon$). Together with the approximation results of
\autoref{subsec:cheb-approx} and the nonexpansive properties of Chebyshev
mixtures in \autoref{app:cheb-bound},
\autoref{prop:hp-spectral-appendix} therefore explains why PolyNSD augments
its diffusion-like polynomial core with exactly (i) a linear spectral
correction of the form $I-\tfrac{1}{\lambda_{\max}}L$ and (ii) a diagonal
residual gate: they provide a low-complexity, analytically tractable way to
combat oversmoothing and to preserve informative high-frequency content,
while keeping the overall layer within a globally controlled Lipschitz
envelope.

\subsection{PolyNSD vs Neural Sheaf Diffusion: Operator Class}
\label{app:polynsd-vs-nsd}

In this subsection we make precise the claim that Chebyshev-PolyNSD with
degree \(K{=}1\) recovers the same diffusion operator class as Neural Sheaf
Diffusion (NSD) \cite{bodnar2022neural}, while for \(K{>}1\) it strictly
generalises it to higher-order polynomials in the sheaf Laplacian. In order to do that, we focus on the \emph{diffusion core}, i.e. the linear operator that acts on sheaf
0-cochains between pointwise nonlinearities and feature-mixing MLPs.

\paragraph{Canonical NSD Diffusion Core.}
In Neural Sheaf Diffusion, once a sheaf Laplacian \(L\) has been assembled
from learned restriction maps, the diffusion step acting on a 0-cochain
\(x\in C^0(\mathcal G;\mathcal F)\) can be written abstractly as:
\begin{equation}
\label{eq:nsd-core}
x^{+}
=
A x - B L x,
\end{equation}
where \(A\) and \(B\) are feature-wise scaling operators, encoding the step size and residual weighting. For the purposes of spectral
analysis, the key point is that, \emph{for fixed \(L\)}, NSD’s diffusion core
always lies in the two-dimensional linear span \(\{I,L\}\): it is a first-order
polynomial \(aI + bL\), with \(a,b\) determined by the learnable scalings in
\autoref{eq:nsd-core} (possibly different per feature channel). We now show that Chebyshev-PolyNSD with \(K{=}1\) recovers exactly this operator class, and that allowing \(K{>}1\) extends it to higher-order polynomials in \(L\) with explicit \(K\)-hop locality. 

\subsubsection{Chebyshev PolyNSD with \texorpdfstring{$K{=}1$}{K=1} Recovers NSD}
\label{app:cheb1-nsd-equivalence}

Let's start from the layer update used in PolyNSD (\autoref{eq:polynsd-core-update-prev}, where \(p_\theta(\widetilde L)\) is a degree-\(K\) polynomial in \(\widetilde L\)
(e.g. a convex Chebyshev mixture), \(\alpha_{\mathrm{hp}}\in\mathbb R\) is a
scalar high-pass weight, \(\varepsilon\) is a diagonal residual gate, and
\(\phi\) is a 1-Lipschitz nonlinearity. For the operator-class comparison we
temporarily set \(\phi\) to the identity and focus on the linear map
\(x\mapsto x^{+}\).

\paragraph{Degree-\(1\) Chebyshev Parameterisation.}
For \(K{=}1\), the Chebyshev recurrence yields:
\begin{equation}
T_0(\widetilde L)=I,
\qquad
T_1(\widetilde L)=\widetilde L
=
\tfrac{2}{\lambda_{\max}}L - I
\end{equation}
Let \(\theta=(\theta_0,\theta_1)\in\Delta^1\) be the (convex) coefficients of
the Chebyshev mixture, e.g. \(\theta=\mathrm{softmax}(\eta)\) for some logits
\(\eta\in\mathbb R^2\). The degree-\(1\) filter is then $p_\theta(\widetilde L)
=
\theta_0 T_0(\widetilde L) + \theta_1 T_1(\widetilde L)
=
\theta_0 I + \theta_1\Big(\tfrac{2}{\lambda_{\max}}L - I\Big)$. Acting on an input \(x\), this gives $x_{\mathrm{cheb}}:=p_\theta(\widetilde L)x=(\theta_0-\theta_1)\,x+\theta_1\tfrac{2}{\lambda_{\max}}Lx$. The high-pass correction is $h_{\mathrm{hp}}=x - \tfrac{1}{\lambda_{\max}}Lx$ and the pre-nonlinearity activation becomes: 
\begin{equation}
\label{eq:z-k1}
z
=
x_{\mathrm{cheb}}
+
\alpha_{\mathrm{hp}} h_{\mathrm{hp}}
=
(\theta_0-\theta_1 + \alpha_{\mathrm{hp}})\,x
+
\Big(\theta_1\tfrac{2}{\lambda_{\max}}
     - \alpha_{\mathrm{hp}}\tfrac{1}{\lambda_{\max}}\Big)Lx
\end{equation}
Finally, setting \(\phi\) to the identity\footnote{In practice \(\phi\) is a
1-Lipschitz nonlinearity; here we isolate the linear diffusion core. The
nonlinearity can be placed before or after this core in both NSD and PolyNSD, and does not change the operator class of the linear part.}, we obtain:
\begin{equation}
\label{eq:polynsd-k1-update}
x^{+}
=
(I+\tanh\varepsilon)x - z
\end{equation}

\begin{proposition}[\textbf{Chebyshev-PolyNSD with $K{=}1$ Induces a First-Order Polynomial in $L$}]
\label{prop:cheb-k1-linear}
Under the setup above with \(K{=}1\) and \(\phi=\mathrm{id}\), the PolyNSD
update in \autoref{eq:polynsd-k1-update} can be written as:
\begin{equation}
\label{eq:cheb-k1-ab-form}
x^{+}
=
a\,x + b\,Lx
\end{equation}
where the coefficients \(a,b\) (per feature channel) are given by:
\begin{equation}
\label{eq:cheb-k1-a-b}
a
=
(1+\tanh\varepsilon) - \big(\theta_0-\theta_1 + \alpha_{\mathrm{hp}}\big),
\qquad
b
=
-\Big(\theta_1\tfrac{2}{\lambda_{\max}}
     - \alpha_{\mathrm{hp}}\tfrac{1}{\lambda_{\max}}\Big)
\end{equation}
In particular, for any fixed \(L\) and \(\lambda_{\max}\), the degree-\(1\)
Chebyshev-PolyNSD core spans the same operator class \(\{aI + bL\}\) as NSD’s diffusion core in \autoref{eq:nsd-core}. 
\end{proposition}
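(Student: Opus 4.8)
The plan is to carry out the degree-$1$ specialisation of the PolyNSD update explicitly and then read off the coefficients by collecting the $I$- and $L$-terms; the statement is essentially a bookkeeping identity, so the work is in organising the substitution cleanly rather than in any deep argument.

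First I would unfold the Chebyshev recurrence \eqref{eq:polysd-cheb-recurrence-appendix} at $K=1$: $T_0(\widetilde L)=I$ and $T_1(\widetilde L)=\widetilde L=\tfrac{2}{\lambda_{\max}}L-I$. Substituting into $p_\theta(\widetilde L)=\theta_0 T_0+\theta_1 T_1$ gives $p_\theta(\widetilde L)=(\theta_0-\theta_1)I+\theta_1\tfrac{2}{\lambda_{\max}}L$, hence $x_{\mathrm{cheb}}=p_\theta(\widetilde L)x=(\theta_0-\theta_1)x+\theta_1\tfrac{2}{\lambda_{\max}}Lx$ as in \eqref{eq:cheb-k1-core}. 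Note that this already shows the filter is a first-order polynomial in $L$ for \emph{any} real $\theta_0,\theta_1$; the convex constraint $\theta\in\Delta^1$ is only needed later for the non-expansiveness bound and for the clean surjectivity statement.

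Second, I would add the high-pass skip $h_{\mathrm{hp}}=x-\tfrac{1}{\lambda_{\max}}Lx$ weighted by $\alpha_{\mathrm{hp}}$ and collect coefficients to obtain $z=(\theta_0-\theta_1+\alpha_{\mathrm{hp}})x+\bigl(\theta_1\tfrac{2}{\lambda_{\max}}-\alpha_{\mathrm{hp}}\tfrac{1}{\lambda_{\max}}\bigr)Lx$, i.e.\ \eqref{eq:z-k1}. Third, with $\phi=\mathrm{id}$ I would substitute $z$ into $x^{+}=(I+\tanh\varepsilon)x-z$ and once more gather the $x$- and $Lx$-terms; the $I$-coefficient is $(1+\tanh\varepsilon)-(\theta_0-\theta_1+\alpha_{\mathrm{hp}})$ and the $L$-coefficient is $-\bigl(\theta_1\tfrac{2}{\lambda_{\max}}-\alpha_{\mathrm{hp}}\tfrac{1}{\lambda_{\max}}\bigr)$, which are exactly the claimed $a$ and $b$ in \eqref{eq:cheb-k1-a-b}. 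Since $\varepsilon$ (and, if one allows it, $\theta$ and $\alpha_{\mathrm{hp}}$) may be chosen per feature channel, $a$ and $b$ are per-channel scalars, matching the per-channel scalings $A,B$ in the NSD core \eqref{eq:nsd-core}.

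For the final ``spans the same operator class'' claim I would argue both inclusions. By the above, every operator produced by degree-$1$ Chebyshev-PolyNSD has the form $aI+bL$, so it lies in NSD's class $\{aI+bL\}$; conversely, the parameter-to-coefficient map is essentially onto: eliminating $\alpha_{\mathrm{hp}}$ via $\alpha_{\mathrm{hp}}=\lambda_{\max}b+2\theta_1$ shows $b$ ranges over all of $\mathbb R$ as $\alpha_{\mathrm{hp}}$ varies, and then, using $\theta_0+\theta_1=1$, the identity coefficient reduces to $a=\tanh\varepsilon-\lambda_{\max}b$, so $a$ sweeps an open interval of width $2$ about $-\lambda_{\max}b$ as $\varepsilon$ ranges over $\mathbb R$; dropping the $\tanh$ clip on the residual gate (or absorbing a constant shift into the linear head) removes even this mild restriction and yields all of $\mathbb R^2$. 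I expect the only thing to watch is the sign and normalisation bookkeeping — in particular the $\tfrac{2}{\lambda_{\max}}$ coming from $\widetilde L$ versus the $\tfrac{1}{\lambda_{\max}}$ coming from $h_{\mathrm{hp}}$ — and to state explicitly that the claim concerns the functional form of the linear diffusion core with $\phi=\mathrm{id}$, not the exact reachable set once the $1$-Lipschitz $\phi$ and the bounded gate are reinstated.
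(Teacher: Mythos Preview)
Your proposal is correct and follows essentially the same approach as the paper: substitute the explicit $K{=}1$ expression for $z$ into $x^{+}=(I+\tanh\varepsilon)x-z$, collect the $x$- and $Lx$-terms, and read off $a,b$. Your additional surjectivity discussion (including the honest caveat about the $\tanh$ gate restricting the reachable $a$) is more detailed than the paper's formal proof, which defers that point to a separate ``Exact Matching'' paragraph immediately afterwards.
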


\begin{proof}
Substituting the expression for \(z\) in \autoref{eq:z-k1} into
\autoref{eq:polynsd-k1-update} yields:
\begin{align}
x^{+}
&=
(I+\tanh\varepsilon)x
-
\Big[
  (\theta_0-\theta_1 + \alpha_{\mathrm{hp}})\,x
  +
  \Big(\theta_1\tfrac{2}{\lambda_{\max}}
       - \alpha_{\mathrm{hp}}\tfrac{1}{\lambda_{\max}}\Big)Lx
\Big]
\\
&=
\Big[
  (1+\tanh\varepsilon)
  - (\theta_0-\theta_1 + \alpha_{\mathrm{hp}})
\Big]x
-
\Big[
  \theta_1\tfrac{2}{\lambda_{\max}}
  - \alpha_{\mathrm{hp}}\tfrac{1}{\lambda_{\max}}
\Big]Lx
\end{align}
Defining \(a,b\) as in \autoref{eq:cheb-k1-a-b} gives the claimed form in 
\autoref{eq:cheb-k1-ab-form}. Since \(a,b\in\mathbb R\) (or feature-wise scalars
when \(\varepsilon\) is diagonal), this coincides with the NSD form
\autoref{eq:nsd-core}, i.e.\ a first-order polynomial \(aI + bL\).
\end{proof}

\paragraph{Finding PolyNSD Parameters from NSD Diffusion Step.}
Given a target NSD diffusion step: $ x^{+}_{\mathrm{NSD}} = A x - B L x$, we can always find PolyNSD parameters \((\theta_0,\theta_1,\alpha_{\mathrm{hp}},\varepsilon)\) realising the same
operator, up to per-channel scaling, provided \(\lambda_{\max}>0\). For
example, in the degree-normalised setting where \(\lambda_{\max}=2\) and we
ignore the high-pass term (\(\alpha_{\mathrm{hp}}=0\)), choosing $\theta_1 = \frac{B}{2}, \theta_0 = 1-\theta_1$ gives \(b=-B\) in \autoref{eq:cheb-k1-a-b}, and then we can enforce \(a=A\) by tuning \(\varepsilon\), i.e. setting the residual gate so that $1+\tanh\varepsilon = A + (\theta_0-\theta_1)$. Thus, after fixing \(\lambda_{\max}\), the map from PolyNSD parameters to the pair \((a,b)\) is surjective onto the NSD operator class \(\{aI+bL\}\).

\subsubsection{Strict Generalisation for \texorpdfstring{$K{>}1$}{K>1}}
\label{app:polynsd-strict}

We now show that allowing higher polynomial degrees \(K{>}1\) extends the
diffusion operator class from \(\{aI+bL\}\) to all polynomials in \(L\) of
degree at most \(K\). Under mild spectral assumptions on \(L\), this
generalises NSD strictly, meaning that there exist PolyNSD operators that cannot be represented by any single NSD diffusion step.

\paragraph{PolyNSD Operator Class for General $K$.}
For a degree-\(K\) Chebyshev-PolyNSD core, the filter \(p_\theta\) can be
written as $p_\theta(\widetilde L)
=
\sum_{k=0}^{K} \theta_k T_k(\widetilde L),
\theta\in\Delta^K$. Each \(T_k(\widetilde L)\) is itself a degree-\(k\) polynomial in \(L\) (since \(\widetilde L\) is affine in \(L\)), so \(p_\theta(\widetilde L)\) is a polynomial in \(L\) of degree at most \(K\). The same calculation as in the \(K{=}1\) case shows that the full linear core: $x\mapsto (I+\tanh\varepsilon)x - \big(p_\theta(\widetilde L)x + \alpha_{\mathrm{hp}} h_{\mathrm{hp}}\big)$ remains a polynomial in \(L\) of degree at most \(K\). 

\begin{proposition}[\textbf{PolyNSD Core as a Polynomial in $L$}]
\label{prop:polynsd-poly-class}
For fixed \(L\) and \(\lambda_{\max}>0\), any Chebyshev-PolyNSD diffusion core
with degree \(K\) (and \(\phi=\mathrm{id}\)) can be written as $x^{+} = q(L)\,x$, where \(q\) is a polynomial in one variable of degree at most \(K\), whose coefficients depend linearly on \(\theta\), \(\alpha_{\mathrm{hp}}\) and
\(I+\tanh\varepsilon\). In particular, taking \(K{=}1\) recovers the class
\(\{aI+bL\}\) of NSD, while \(K{>}1\) yields higher-order polynomials.
\end{proposition}

\begin{proof}
By definition, \(T_k(\widetilde L)\) is a degree-\(k\) polynomial in
\(\widetilde L\). Since \(\widetilde L\) is an
affine function of \(L\), each \(T_k(\widetilde L)\) is a degree-\(k\)
polynomial in \(L\). Therefore, $p_\theta(\widetilde L)
=
\sum_{k=0}^K \theta_k T_k(\widetilde L)
$ is a polynomial in \(L\) of degree at most \(K\). The high-pass term
\(h_{\mathrm{hp}}=x-\lambda_{\max}^{-1}Lx\) is a first-order polynomial in
\(L\), and the residual term \((I+\tanh\varepsilon)x\) is a degree-0
polynomial (the identity scaled). Subtracting a linear combination of these
three contributions preserves polynomial structure and does not increase the
degree. Hence the overall linear map \(x\mapsto x^{+}\) is of the form
\(x^{+}=q(L)x\) with \(\deg q\le K\).
\end{proof}

\paragraph{Strictness Under Mild Spectral Assumptions.}
To argue that PolyNSD with \(K{>}1\) is not merely a reparameterisation of
NSD, we show that, under mild conditions on the spectrum of \(L\), there exist
polynomials \(q\) of degree at least \(2\) that cannot be represented as
\(aI+bL\).

\begin{proposition}[\textbf{Strict Generalisation for $K>1$}]
\label{prop:polynsd-strict}
Assume \(L\) has at least three distinct eigenvalues
\(\lambda_{i_1},\lambda_{i_2},\lambda_{i_3}\in\sigma(L)\). Let \(q\) be a
real polynomial of degree \(d\ge 2\), and consider the operator \(q(L)\). Then, there is no pair \(a,b\in\mathbb R\) such that $q(L) = aI + bL$, unless \(q\) coincides with an affine polynomial on the set \(\{\lambda_{i_1},\lambda_{i_2},\lambda_{i_3}\}\). In particular, for generic
choices of \(\theta\) that induce a polynomial of degree at least \(2\),
the resulting PolyNSD operator cannot be realised by any NSD core of the form \autoref{eq:nsd-core}.
\end{proposition}

\begin{proof}
Write the spectral decomposition \(L=U\Lambda U^\top\) with
\(\Lambda=\mathrm{diag}(\lambda_1,\dots,\lambda_{nd})\). Then, $q(L) = U\,q(\Lambda)\,U^\top, aI + bL = U(aI + b\Lambda)U^\top$. If \(q(L)=aI+bL\), then conjugating by \(U^\top\) gives \(q(\Lambda)=aI + b\Lambda\), i.e., $q(\lambda_i) = a + b\lambda_i, \text{for all }i$. In particular, for the three distinct eigenvalues \(\lambda_{i_1},\lambda_{i_2},\lambda_{i_3}\), we must have $q(\lambda_{i_j}) = a + b\lambda_{i_j}, j=1,2,3$. 
But an affine function \(\lambda\mapsto a + b\lambda\) is determined uniquely
by its values on any two distinct points. If a degree-\(d\) polynomial
\(q\) with \(d\ge 2\) coincides with an affine function on three distinct
points, then its second finite difference on these points must vanish, forcing the quadratic and higher-order terms of \(q\) to cancel there. For a
generic choice of \(q\) and eigenvalues, this does not occur. Concretely, if
we choose coefficients so that the highest-degree coefficient is nonzero and
the polynomial is not exactly affine on the three eigenvalues, then no such
\(a,b\) can exist. This shows that, whenever \(q\) is genuinely of degree at
least 2 on three distinct eigenvalues, the operator \(q(L)\) cannot be
compressed into the NSD form \(aI+bL\).
\end{proof}

Thus, as soon as \(L\) exhibits a reasonably rich spectrum (three or more
distinct eigenvalues) and the learned Chebyshev coefficients
\(\theta\in\Delta^K\) activate higher-order terms (\(\deg q\ge 2\)), PolyNSD
realises operators outside the NSD class.

\paragraph{Depth vs Polynomial Degree and Computational Cost.}
Let's now also compare PolyNSD and NSD from a \emph{depth} and
\emph{complexity} perspective. For fixed \(L\) and linear activations, stacking
\(T\) NSD diffusion cores of the form \(a_t I + b_t L\) yields $x^{(T)}
=
\Big(\prod_{t=1}^T (a_t I + b_t L)\Big) x^{(0)}$, which is a polynomial in \(L\) of degree at most \(T\). Conversely, any polynomial in \(L\) with real roots can be factorised (over \(\mathbb C\)) as a product of linear factors, which can in principle be mapped to NSD-style layers. In practice, however, this requires \(T\) layers, each with its own sheaf prediction and Laplacian assembly. By contrast, a single degree-\(K\) PolyNSD layer computes \(p_\theta(\widetilde L)x\) via a three-term recurrence using \(K\) sparse matrix–vector products with \(\widetilde L\), reusing the same sheaf Laplacian within the block. The resulting complexity is $\mathcal O\big(K\,\mathrm{nnz}(L)\,C\big)$, for feature dimension \(C\), which matches the asymptotic cost of stacking \(K\) NSD layers with \emph{fixed} sheaf structure, \emph{but avoids repeated edge-wise sheaf prediction and Laplacian construction}. Combined with the approximation guarantees in \autoref{subsec:cheb-approx} and the stability
results in \autoref{app:cheb-bound} and \autoref{app:hp-gate}, this shows
that PolyNSD provides a spectrally controlled, computationally efficient
generalisation of NSD: degree \(K{=}1\) recovers NSD’s diffusion core, while
\(K{>}1\) trades depth for polynomial degree, enlarging the operator class
from first-order to higher-order polynomials in the sheaf Laplacian with
explicit \(K\)-hop locality.

\newpage
\section{Extensive and Additional Experiments}
\label{app:exp-setup}

This appendix collects the full experimental details for the results reported
in \autoref{sec:experiments}. We first describe the datasets used in our
evaluation, then provide a consolidated view of model variants,
hyperparameters, ablations, interpretability diagnostics, and training protocols for PolyNSD and all baselines. We also provide extended quantitative results that complement the main trends reported in \autoref{sec:experiments}.

\subsection{Hyper-Parameters}
\label{app:hyperparams}

This section collects the full hyper-parameter search space used in the
experiments in \autoref{sec:experiments}. The final reported test-set accuracy for each run is chosen from the checkpoint that achieved the highest validation accuracy. All experiments were run on a single \emph{NVIDIA A100–SXM4–80GB} GPU. For what concerns \emph{real datasets}, 
\autoref{tab:hyper-table} lists the hyperparameters and their searchable values/ranges used in the experiments, reflecting the same choices of \cite{bodnar2022neural}. For \emph{synthetic datasets}, we evaluate three families of settings mirroring the experiments in \cite{caralt2024joint}: \emph{Heterophily}, \emph{Feature Noise}, and \emph{Amount of Data}, each instantiated in two columns (\textsc{Diff} on the right and \textsc{RiSNN} on the left), which differ only in feature dimensionality and a few graph knobs. \autoref{tab:synthetic-hparams-common} summarises the common training setup and \autoref{tab:synthetic-hparams-panels} the panel–specific grids.

\subsection{Hardware and Software Setup}
\label{app:hardware_setup}
All experiments were conducted on an AWS EC2 g6.xlarge instance running Ubuntu 24.04 LTS, equipped with an NVIDIA L4 GPU (CUDA 12.8, cuDNN 9.1). The codebase is implemented in Python 3.13.2 using PyTorch 2.8 and PyTorch Geometric 2.7. Hyperparameter searches were performed via Weights \& Biases sweeps. These choices were kept fixed across models to ensure that empirical differences reflect modelling choices rather than software-stack variation.

\subsection{Datasets}
\label{app:datasets}

\paragraph{Real-World Node-Classification Benchmarks.}
For real-world experiments we use the standard node-classification benchmarks
commonly adopted in the sheaf-learning and heterophily literature:
\textsc{Texas}, \textsc{Wisconsin}, \textsc{Film}, \textsc{Squirrel},
\textsc{Chameleon}, \textsc{Cornell}, \textsc{Citeseer}, \textsc{PubMed}, and
\textsc{Cora}. Below we briefly recall their construction.

\emph{WebKB (Cornell, Texas, Wisconsin).}
Nodes correspond to webpages from the computer science departments of the
respective universities, and edges are hyperlinks between pages.
Node features are bag-of-words representations of page content.  
The node labels distinguish different page types (e.g., student, project,
course, staff, faculty).
\begin{itemize}
    \item \emph{WikipediaNetwork (Chameleon, Squirrel).}
Nodes are Wikipedia pages and edges denote hyperlinks, while the node features are derived from page content, and labels denote page categories/roles.
    \item \emph{Film.} In the \textsc{Film} graph, nodes correspond to actors and edges connect actors co-occurring on the same Wikipedia page or film. Node features are keyword-based descriptors, while the task is multi-class node classification.
    \item \emph{Citation graphs (Cora, Citeseer, PubMed).}
Nodes represent scientific articles, edges encode citation relations.
Node features are bag-of-words or standard pre-computed attributes, and the task is to classify each article into a subject category.
\end{itemize}

\emph{Filtered Wikipedia Heterophily Benchmarks and New Benchmark.}
Platonov et al.~\cite{platonov2023critical} proposed filtered versions of the
Wikipedia \textsc{Chameleon} and \textsc{Squirrel} datasets, after identifying some duplicate nodes in the original graphs, could induce train--test leakage and alter model rankings. The same work also introduces
a new benchmark suite of larger and more diverse heterophilic graphs. Following
their evaluation protocol, the datasets can be grouped according to the reported
metric as follows:

\begin{enumerate}
    \item \emph{Accuracy datasets: \textsc{Roman-Empire} and
    \textsc{Amazon-Ratings}.}
    \textsc{Roman-Empire} is constructed from the English Wikipedia article on
    the Roman Empire. Nodes correspond to words, while edges connect consecutive
    words or words linked by syntactic dependencies. Node features are word
    embeddings, and the task is to predict the syntactic role of each word.
    The graph is sparse, chain-like, and contains long-range structural
    dependencies. \textsc{Amazon-Ratings} is derived from an Amazon product
    co-purchasing network. Nodes are products, edges connect frequently
    co-purchased products, node features are obtained from product descriptions,
    and labels correspond to discretised product ratings.

    \item \emph{ROC-AUC datasets: \textsc{Minesweeper}, \textsc{Tolokers}, and
    \textsc{Questions}.}
    \textsc{Minesweeper} is a synthetic grid-based graph inspired by the
    Minesweeper game. Nodes are grid cells, edges connect neighbouring cells,
    and the task is to predict whether a cell contains a mine from partially
    masked local-count features. \textsc{Tolokers} is built from a
    crowdsourcing platform: nodes represent workers, edges connect workers who
    participated in the same task, node features describe worker profiles and
    task-performance statistics, and the goal is to predict whether a worker was
    banned in one of the projects. \textsc{Questions} is derived from a
    question-answering platform. Nodes are users, edges connect users when one
    answered another user's question, node features are based on user
    descriptions, and the task is to predict whether a user remains active on
    the platform.
\end{enumerate}

\autoref{tab:dataset-stats} and \autoref{tab:new_heterophilous_dataset_stats} reports the basic statistics used throughout our
experiments: homophily level $h$ measured as the fraction of edges whose
endpoints share the same label, number of nodes, number of edges, and number of
classes. For the standard sheaf-learning benchmarks, we adopt the per-class
$48\%/32\%/20\%$ train/validation/test protocol and average results over the same
$10$ fixed splits as in prior work. For the additional Platonov et al.
heterophily benchmarks, we follow their standard evaluation protocol unless
otherwise specified.


\begin{table*}[t]
  \centering

  \begin{minipage}[t]{0.44\textwidth}
    \centering
    \caption{Real-World dataset statistics.}
    \label{tab:dataset-stats}
    \sffamily
    \vspace{0pt}
    \begin{adjustbox}{width=\linewidth}
      {\small
      \setlength{\tabcolsep}{6pt}
      \renewcommand{\arraystretch}{1.05}
      \begin{tabular}{lcccc}
        \toprule
        Dataset & Homophily $h$ & \#Nodes & \#Edges & \#Classes \\
        \midrule
        Texas     & 0.11 & 183    & 295     & 5 \\
        Wisconsin & 0.21 & 251    & 466     & 5 \\
        Film      & 0.22 & 7,600  & 26,752  & 5 \\
        Squirrel  & 0.22 & 5,201  & 198,493 & 5 \\
        Chameleon & 0.23 & 2,277  & 31,421  & 5 \\
        Cornell   & 0.30 & 183    & 280     & 5 \\
        Citeseer  & 0.74 & 3,327  & 4,676   & 7 \\
        Pubmed    & 0.80 & 18,717 & 44,327  & 3 \\
        Cora      & 0.81 & 2,708  & 5,278   & 6 \\
        \bottomrule
      \end{tabular}}
    \end{adjustbox}
  \end{minipage}
  \hfill
  \begin{minipage}[t]{0.53\textwidth}
    \centering
    \caption{Hyper-parameter search space.}
    \label{tab:hyper-table}
    \sffamily
    \vspace{0pt}
    \begin{adjustbox}{width=\linewidth}
      \begin{tabular}{@{} ll @{}}
        \toprule
        \textbf{Hyper-parameter} & \textbf{Searchable values / notes} \\
        \midrule
        Hidden channels & \(\{8,16,32\}\) (WebKB) and \(\{8,16,32,64\}\) (others) \\
        Stalk width \(d\) & \(\{1,2,\dots,5\}\) \\
        Layers & \(\{1,2,\dots,8\}\) \\
        Poly degree \(K\) & \(\{2,3,4,5,8,12,16\}\) (for PolySD / Chebyshev) \\
        Learning rate & \(0.02\) (WebKB) and \(0.01\) (others) \\
        Activations & ELU \\
        Weight decay (model) & Log-uniform (exponent range e.g. \([-4.5,11.0]\)) \\
        Sheaf decay & Log-uniform (exponent range e.g. \([-4.5,11.0]\)) \\
        Input dropout & categorical \(\{0.0,0.1,\dots,0.9\}\) \\
        Layer dropout & Uniform \([0.0,0.9]\) \\
        Patience (epochs) & 100 (Wiki) and 200 (others) \\
        Max training epochs & 1000 (Wiki) and 500 (others) \\
        Optimiser & Adam (\(\beta_1=0.9,\ \beta_2=0.999\)) \\
        Misc model flags & left/right weights, normalised, use\_act, edge\_weights, etc. \\
        \bottomrule
      \end{tabular}
    \end{adjustbox}
  \end{minipage}

\end{table*}

\begin{table}[t]
    \centering
    \scriptsize
    \caption{Statistics of the new heterophilous datasets.}
    \label{tab:new_heterophilous_dataset_stats}
    \resizebox{\textwidth}{!}{%
    \begin{tabular}{lccccc}
        \toprule
        & \textsc{Roman-Empire} & \textsc{Amazon-Ratings} & \textsc{Minesweeper} & \textsc{Tolokers} & \textsc{Questions} \\
        \midrule
        nodes                   & $22662$ & $24492$ & $10000$ & $11758$  & $48921$ \\
        edges                   & $32927$ & $93050$ & $39402$ & $519000$ & $153540$ \\
        avg degree              & $2.91$  & $7.60$  & $7.88$  & $88.28$  & $6.28$ \\
        global clustering       & $0.29$  & $0.32$  & $0.43$  & $0.23$   & $0.02$ \\
        avg local clustering    & $0.39$  & $0.58$  & $0.44$  & $0.53$   & $0.03$ \\
        diameter                & $6824$  & $46$    & $99$    & $11$     & $16$ \\
        node features           & $300$   & $300$   & $7$     & $10$     & $301$ \\
        classes                 & $18$    & $5$     & $2$     & $2$      & $2$ \\
        edge homophily          & $0.05$  & $0.38$  & $0.68$  & $0.59$   & $0.84$ \\
        adjusted homophily      & $-0.05$ & $0.14$  & $0.01$  & $0.09$   & $0.02$ \\
        label informativeness   & $0.11$  & $0.04$  & $0.00$  & $0.01$   & $0.00$ \\
        \bottomrule
    \end{tabular}%
    }
\end{table}

\subsubsection{Synthetic Benchmarks}
\label{app:synthetic-datasets}

For the controlled stress tests, we adopt the synthetic benchmark of \cite{caralt2024joint}, which explicitly decouples (i) feature complexity, (ii) graph connectivity, and (iii) label structure. Each graph instance is specified by: $N$ (number of nodes), $K$ (even base degree of a regular ring lattice before rewiring), $n_c$ (number of balanced classes), $het \in [0,1]$ (heterophily coefficient controlling inter-/intra-class edges) and $\sigma \ge 0$ (feature noise level). We briefly recall the feature and graph generation steps below.

\emph{Feature Generation: non-linear Class Manifolds with Shared Mean.} We construct class-specific, non-linearly separable features while enforcing a
common mean across classes. This makes naive averaging ineffective while
preserving class structure under non-trivial aggregation. Let $n_{\text{data}}$ be the feature dimension, $n_h$ an auxiliary latent dimension, and $n_c$ the number of classes. The process is detailed as follows:

\begin{enumerate}
  \item Draw class prototypes
  $v_k \sim \mathcal{U}([0,1]^{\,n_{\text{data}}})$ for $k=1,\dots,n_c$.
  \item Define a fixed non-linear map
  $f : \mathbb{R}^{n_h} \to \mathbb{R}^{n_{\text{data}}}$ as the
  sine–cosine embedding of the $(n_h-1)$-sphere.
  We sample angles
  $\theta_0,\dots,\theta_{n_h-3} \sim \mathcal{U}([0,\pi])$ and
  $\theta_{n_h-2} \sim \mathcal{U}([0,2\pi))$,
  construct $s \in \mathbb{S}^{n_h-1}$ via standard hyperspherical
  coordinates, and set $z = f(\theta)$ after truncating/tiling to length
  $n_{\text{data}}$.
  \item For a node of class $k$, we sample $\theta$ as above and set the raw
  feature $x_{\text{raw}} \;=\; v_k \odot f(\theta)$, where $\odot$ denotes element-wise multiplication, yielding an ellipsoidal shell per class.
  \item We centre features to enforce a shared expected value across classes $\mu \;=\; \frac{1}{N}\sum_{i=1}^N x_{\text{raw},i}, \tilde{x}_i \;=\; x_{\text{raw},i} - \mu$.
  \item Finally, we add i.i.d.\ Gaussian noise
  $\epsilon_i \sim \mathcal{N}(0,\sigma^2 I)$ and optionally rescale $x_i \;=\; \tilde{x}_i + \epsilon_i$. 
\end{enumerate}

\emph{Graph Generation: Ring–rewire with Class-aware Mixing.} Connectivity is generated independently of features and controlled by the
heterophily index $het$. The graph generation process consists in:

\begin{enumerate}
  \item Assign classes almost uniformly so that $|C_k| \approx N/n_c$.
  \item Build a regular ring lattice of degree $K$, being $E \;=\; \bigl\{(i,j)\;:\; 0 < \min(|i{-}j|,\, N{-}|i{-}j|)\le K/2 \bigr\}$.
  \item Define the class–transition matrix $R^{c} \;=\; (1- het)\, I_{n_c}
      \;+\; \frac{het}{n_c-1}\,(\mathbf 1\mathbf 1^\top - I_{n_c})$ so that $\Pr(\text{same class})=1-het$ and each different class has
  probability $het/(n_c-1)$. 
  \item For each node $i$ and each of its $K/2$ ``rightmost'' edges, rewire
  with probability $p$, we sample a target class $c' \sim \text{Categorical}\bigl((R^{c})_{c_i,:}\bigr)$, where $c_i$ is the class of node $i$ and choose $j$ uniformly among nodes in class $c'$ that are not $i$ and not already adjacent to $i$, and replace the endpoint with $j$.
  We then de-duplicate multi-edges and remove self-loops. The average degree
  remains $K$, while the expected fraction of inter-class edges equals $het$.
\end{enumerate}

We consider two synthetic regimes (\textsc{RiSNN} and \textsc{Diff}) that
differ only in feature dimensionality and a few graph knobs, following
\cite{caralt2024joint}.

\begin{table}[ht]
  \centering
  \caption{Synthetic experiments common training hyper-parameters.}
  \label{tab:synthetic-hparams-common}
  \sffamily
  \begin{adjustbox}{width=0.7\linewidth}
  \begin{tabular}{@{} ll @{}}
    \toprule
    \textbf{Hyper-parameter} & \textbf{Value / note} \\
    \midrule
    Models & DiagSheafChebyshev,\; BundleSheafChebyshev,\; GeneralSheafChebyshev \\
    Optimiser & Adam (learning rate $0.01$) \\
    Weight decay & $5\times10^{-4}$ \\
    Sheaf decay & $5\times10^{-4}$ \\
    Max epochs / Patience & $1500$ / $200$ \\
    Seed & $43$ \\
    Device & single NVIDIA A100–SXM4–80GB \\
    Dataset flag & \texttt{dataset} $=$ \texttt{synthetic\_exp}, \; \texttt{ellipsoids} $=$ \texttt{false}, \; \texttt{edge\_noise} $=0.0$ \\
    \bottomrule
  \end{tabular}
  \end{adjustbox}
\end{table}

\begin{table}[ht]
  \centering
  \caption{Synthetic experiments panel–specific grids.}
  \label{tab:synthetic-hparams-panels}
  \sffamily
  \begin{adjustbox}{width=\textwidth}
  \begin{tabular}{@{} llllllll @{}}
    \toprule
    \textbf{Family (Fig)} & \textbf{Column} & \textbf{\#Classes} & \textbf{\#Feats} & \textbf{\#Nodes} & \textbf{Degree} & \textbf{Feat noise} & \textbf{Heterophily ($het$)} \\
    \midrule
    Heterophily & Diff   & $\{2,3,4,5\}$ & $4$   & $\{400,600,800,1000\}$ & $4$  & $0.0$ & $\{0.0,0.25,0.50,0.75,1.0\}$ \\
    Heterophily  & RiSNN  & $\{2,3,4,5\}$ & $15$  & $\{400,600,800,1000\}$ & $4$  & $0.0$ & $\{0.0,0.25,0.50,0.75,1.0\}$ \\
    \midrule
    Feature Noise  & Diff  & $2$           & $3$   & $400$                   & $2$  & $\{0.0,0.2,0.4,0.6,0.8,1.0\}$ & $1.0$ \\
    Feature Noise  & RiSNN & $2$           & $15$  & $500$                   & $5$  & $\{0.0,0.05,0.1,0.15,0.2,0.25,0.30\}$ & $1.0$ \\
    \midrule
    Amount of Data  & Diff & $3$           & $3$   & $\{100,500,1000\}$      & $\{2,6,10\}$ & $0.0$ & $0.9$ \\
    Amount of Data  & RiSNN& $3$           & $15$  & $\{100,500,1000\}$      & $\{2,6,10\}$ & $0.0$ & $0.9$ \\
    \bottomrule
  \end{tabular}
  \end{adjustbox}
\end{table}

\subsection{Stalk Dimension vs.~Accuracy}
\label{app:stalk-sweep-real}
In this subsection here we detail the results obtained in \autoref{subsec:hetero-real-main} about the stalk dimensionality impact on the accuracy. This experiment aims to isolate the effect of stalk dimension on real-world performance. As previously seen, the stalk dimension $d$ controls the size of the local sheaf fibres and therefore the dimensionality of the features transported along edges. In prior sheaf models, relatively large stalks, with $d\approx 4$, were often used by default and occasionally increased further in search of improved expressivity. To quantify how much PolyNSD depends on stalk size, we sweep $d \in \{2,3,4,5\}$ on a representative subset of datasets spanning different homophily regimes: \textsc{Cora}, \textsc{PubMed} (homophilous citation graphs) and \textsc{Texas}, \textsc{Film} (strongly heterophilous graphs). For each dataset, we run all three PolyNSD variants (Diagonal, Bundle, and General transports). Within each dataset–variant pair, we fix all architectural hyperparameters to the default PolyNSD configuration (depth $L=2$, hidden width $32$, Chebyshev parameterisation, same regularisation and optimiser settings) and vary only the stalk dimension $d$, training on the standard $10$ fixed splits. We then report mean$\pm$std test accuracy across the $10$ runs.
\begin{figure}[h!]
  \centering
  \includegraphics[width=0.7\linewidth]{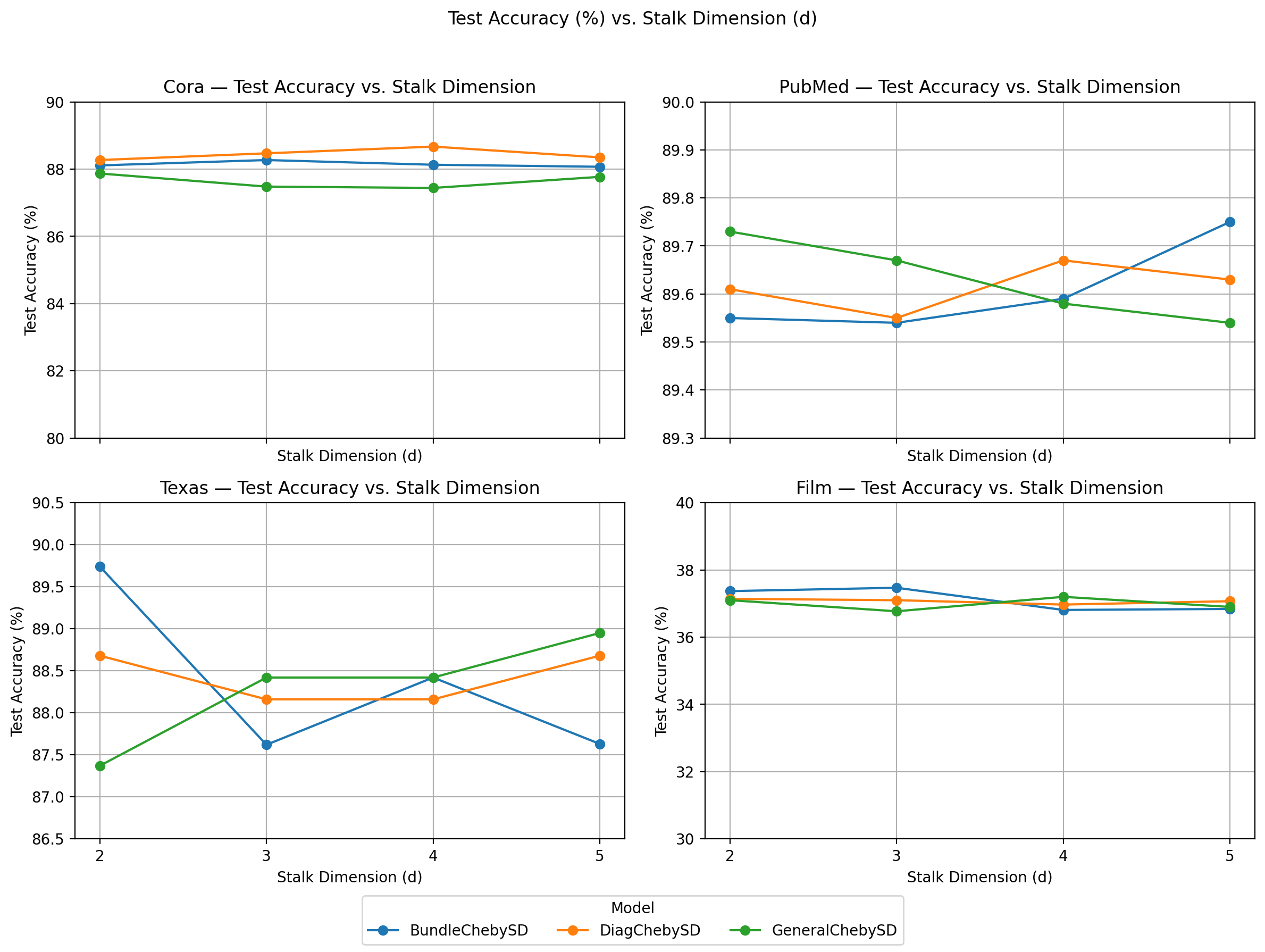}
  \caption{\textit{Test accuracy vs.~stalk dimension $d\in\{2,3,4,5\}$.}
  We sweep $d$ on four real-world datasets over the three versions, keeping fixed the other hyperparameters.}
  \label{fig:stalk-sweep-real}
\end{figure}

\paragraph{Findings: PolyNSD maintains SOTA results across depth.}
The results we get are summarised in \autoref{fig:stalk-sweep-real} and show that on \textsc{PubMed} and \textsc{Film}, accuracies are essentially flat within error bars as $d$ varies, indicating that increasing stalk dimension beyond $d=2$ brings no systematic benefit. On \textsc{Cora} and \textsc{Texas} we observe mild, non-monotonic trends: performance can slightly increase when moving from $d=2$ to $d=3$, but often degrades again for $d\ge 4$. The strongest configurations on all four datasets typically occur at \emph{small stalk dimensions} ($d\in\{2,3\}$). Larger stalks do not lead to clear improvements and can even hurt performance through over-parameterisation and increased optimisation difficulty. This contrasts with earlier sheaf architecture settings, where $d=4$ emerged as a de facto default and larger stalks were sometimes required to approach state-of-the-art performance. This is because polynomial diffusion on the sheaf Laplacian compensates for the reduced stalk dimensionality, where expressive spectral filters and explicit parallel transport allows us to achieve near state-of-the-art accuracy with compact stalks, motivating the choice of low $d$ in the main experiments.

\subsection{Depth Robustness and Oversmoothing}
\label{app:oversmoothing}

\begin{table*}[t]
\centering
\scriptsize
\setlength{\tabcolsep}{2.5pt}
\caption{\textit{Depth ablation study: studying oversmoothing presence.} We report accuracy $\pm$ stdev for real-world datasets used to study the
oversmoothing effect. The “best” result corresponds to the number of layers
with the highest accuracy. “OOM” denotes out-of-memory and “INS” numerical
instability. The top three models for each dataset and layer are coloured by
\textcolor{Top1}{\textbf{First}}, \textcolor{Top2}{\textbf{Second}} and
\textcolor{Top3}{\textbf{Third}}, respectively.}
\label{tab:oversmoothing}

\begin{subtable}[t]{0.48\textwidth}
\centering
\begin{adjustbox}{width=\linewidth}
\begin{tabular}{lcccccc}
\toprule
\textbf{Layers} & \textbf{2} & \textbf{4} & \textbf{8} & \textbf{16} & \textbf{32} & \textbf{Best} \\
\midrule
\multicolumn{7}{c}{\textbf{Cora ($h{=}0.81$)}}\\
\addlinespace
\textbf{DiagChebySD}    & \textcolor{Top1}{\textbf{\pmstd{88.67}{1.29}}} & \textcolor{Top1}{\textbf{\pmstd{88.23}{1.36}}} & \textcolor{Top1}{\textbf{\pmstd{88.11}{1.08}}} & \textcolor{Top2}{\textbf{\pmstd{87.73}{1.64}}} & \pmstd{87.22}{1.64} & 2  \\
\textbf{BundleChebySD}  & \textcolor{Top2}{\textbf{\pmstd{87.95}{1.31}}} & \pmstd{87.75}{1.49} & \textcolor{Top3}{\textbf{\pmstd{87.87}{1.90}}} & \textcolor{Top1}{\textbf{\pmstd{88.01}{1.46}}} & \pmstd{87.36}{1.49} & 16  \\
\textbf{GeneralChebySD} & \pmstd{87.44}{1.09} & \textcolor{Top3}{\textbf{\pmstd{87.77}{1.24}}} & \pmstd{87.48}{1.25} & \pmstd{86.92}{1.18} & \pmstd{86.90}{1.48} & 4  \\
SAN         & 86.90$\pm$1.31 & 86.84$\pm$0.97 & 86.68$\pm$1.13 & 86.54$\pm$0.89 & 86.62$\pm$1.39  & 2 \\
ANSD        & 86.98$\pm$1.07 & 87.08$\pm$1.26 & 86.80$\pm$1.15 & 86.84$\pm$1.24 & \textcolor{Top3}{\textbf{86.56$\pm$0.75}} & 4 \\
GGCN              & 87.00$\pm$1.15 & 87.48$\pm$1.32 & 87.63$\pm$1.33 & \textcolor{Top3}{\textbf{87.51$\pm$1.19}} & \textcolor{Top1}{\textbf{87.95$\pm$1.05}}  & 32 \\
GPRGNN            & \textcolor{Top3}{\textbf{87.93$\pm$1.11}} & \textcolor{Top2}{\textbf{87.95$\pm$1.18}} & \textcolor{Top2}{\textbf{87.87$\pm$1.41}} & 87.26$\pm$1.51 & 87.18$\pm$1.29  & 4 \\
H2GCN            & 87.87$\pm$1.20 & 86.10$\pm$1.51 & 86.18$\pm$2.10 & OOM & OOM  & 2 \\
GCNII            & 85.35$\pm$1.56 & 85.35$\pm$1.48 & 86.38$\pm$0.98 & 87.12$\pm$1.11 & \textcolor{Top2}{\textbf{87.95$\pm$1.23}}  & 64 \\
PairNorm          & 85.79$\pm$1.01 & 85.07$\pm$0.91 & 84.65$\pm$1.09 & 82.21$\pm$2.84 & 60.32$\pm$8.28  & 2 \\
Geom-GCN         & 85.35$\pm$1.57 & 21.01$\pm$2.61 & 13.98$\pm$1.48 & 13.98$\pm$1.48 & 13.98$\pm$1.48  & 2 \\
GCN               & 86.98$\pm$1.27 & 83.24$\pm$1.56 & 31.03$\pm$3.08 & 31.05$\pm$2.36 & 30.76$\pm$3.43  & 2 \\
GAT               & 87.30$\pm$1.10 & 86.50$\pm$1.20 & 84.97$\pm$1.24 & INS & INS & 2 \\
\bottomrule
\end{tabular}
\end{adjustbox}
\end{subtable}\hfill
\begin{subtable}[t]{0.48\textwidth}
\centering
\begin{adjustbox}{width=\linewidth}
\begin{tabular}{lcccccc}
\toprule
\textbf{Layers} & \textbf{2} & \textbf{4} & \textbf{8} & \textbf{16} & \textbf{32} & \textbf{Best} \\
\midrule
\multicolumn{7}{c}{\textbf{Citeseer ($h{=}0.74$)}}\\
\addlinespace
\textbf{DiagChebySD}    & \textcolor{Top3}{\textbf{\pmstd{77.19}{1.25}}} & \textcolor{Top2}{\textbf{\pmstd{77.12}{1.61}}} & \pmstd{76.54}{1.93} & \pmstd{76.71}{2.50} & \pmstd{76.34}{1.24} & 2  \\
\textbf{BundleChebySD}  & \pmstd{76.98}{1.76} & \textcolor{Top1}{\textbf{\pmstd{77.57}{1.55}}} & \textcolor{Top2}{\textbf{\pmstd{76.97}{1.90}}} & \textcolor{Top3}{\textbf{\pmstd{76.87}{1.51}}} & \textcolor{Top2}{\textbf{\pmstd{77.26}{1.99}}} & 4  \\
\textbf{GeneralChebySD} & \textcolor{Top2}{\textbf{\pmstd{77.82}{1.68}}} & \pmstd{76.75}{1.59} & \pmstd{76.62}{1.04} & \pmstd{75.87}{1.69} & \pmstd{75.11}{1.86} & 2  \\
SAN         & 76.27$\pm$1.76 & 76.30$\pm$1.80 & 76.62$\pm$1.70 & 76.18$\pm$1.47 & 76.07$\pm$2.18  & 8 \\
ANSD        & 76.99$\pm$1.74 & 76.86$\pm$1.71 & 76.61$\pm$1.51 & 76.69$\pm$1.56 & 76.22$\pm$1.47  & 2 \\
GGCN              & 76.83$\pm$1.82 & 76.77$\pm$1.48 & \textcolor{Top3}{\textbf{76.91$\pm$1.56}} & \textcolor{Top2}{\textbf{76.88$\pm$1.56}} & \textcolor{Top3}{\textbf{76.97$\pm$1.52}} & 10 \\
GPRGNN            & 77.13$\pm$1.67 & \textcolor{Top3}{\textbf{77.05$\pm$1.43}} & \textcolor{Top1}{\textbf{77.09$\pm$1.62}} & 76.00$\pm$1.64 & 74.97$\pm$1.47  & 2 \\
H2GCN            & 76.90$\pm$1.80 & 76.09$\pm$1.54 & 74.10$\pm$1.83 & OOM & OOM  & 1 \\
GCNII            & 75.42$\pm$1.78 & 75.29$\pm$1.90 & 76.00$\pm$1.66 & \textcolor{Top1}{\textbf{76.96$\pm$1.38}} & \textcolor{Top1}{\textbf{77.33$\pm$1.48}}  & 32 \\
PairNorm          & 73.59$\pm$1.47 & 72.62$\pm$1.97 & 72.32$\pm$1.58 & 59.71$\pm$15.97 & 27.21$\pm$10.95  & 2 \\
Geom-GCN         & \textcolor{Top1}{\textbf{78.02$\pm$1.15}} & 23.01$\pm$1.95 & 7.23$\pm$0.87 & 7.23$\pm$0.87 & 7.23$\pm$0.87  & 2 \\
GCN               & 76.50$\pm$1.36 & 64.33$\pm$8.27 & 24.18$\pm$1.71 & 23.07$\pm$2.95 & 25.3$\pm$1.77  & 2 \\
GAT               & 76.55$\pm$1.23 & 75.33$\pm$1.39 & 66.57$\pm$5.08 & INS & INS & 2 \\
\bottomrule
\end{tabular}
\end{adjustbox}
\end{subtable}

\vspace{0.8em}

\begin{subtable}[t]{0.49\textwidth}
\centering
\begin{adjustbox}{width=\linewidth}
\begin{tabular}{lcccccc}
\toprule
\textbf{Layers} & \textbf{2} & \textbf{4} & \textbf{8} & \textbf{16} & \textbf{32} & \textbf{Best} \\
\midrule
\multicolumn{7}{c}{\textbf{Cornell ($h{=}0.3$)}}\\
\addlinespace
\textbf{DiagChebySD}    & \textcolor{Top1}{\textbf{\pmstd{85.40}{5.16}}} & \textcolor{Top1}{\textbf{\pmstd{85.13}{5.57}}} & \textcolor{Top2}{\textbf{\pmstd{85.68}{7.36}}} & \pmstd{84.05}{8.41} & \pmstd{81.62}{6.92} & 8  \\
\textbf{BundleChebySD}  & \textcolor{Top2}{\textbf{\pmstd{85.40}{7.95}}} & \textcolor{Top2}{\textbf{\pmstd{84.86}{5.82}}} & \pmstd{84.59}{7.05} & \textcolor{Top1}{\textbf{\pmstd{85.13}{8.31}}} & \textcolor{Top1}{\textbf{\pmstd{84.59}{7.93}}} & 2  \\
\textbf{GeneralChebySD} & \textcolor{Top3}{\textbf{\pmstd{85.13}{6.19}}} & \textcolor{Top3}{\textbf{\pmstd{84.86}{6.42}}} & \pmstd{84.32}{6.49} & \pmstd{81.62}{7.53} & \pmstd{81.35}{6.22} & 2  \\
SAN        & 82.70$\pm$6.64 & 84.59$\pm$4.69 & \textcolor{Top1}{\textbf{85.68$\pm$4.53}} & \textcolor{Top3}{\textbf{84.32$\pm$6.82}} & \textcolor{Top3}{\textbf{83.51$\pm$7.20}}  & 8 \\
ANSD       & 84.86$\pm$6.07 & 84.32$\pm$5.10 & 84.86$\pm$5.95 & \textcolor{Top2}{\textbf{84.59$\pm$6.51}} & 83.24$\pm$3.97  & 8 \\
GGCN              & 83.78$\pm$6.73 & 83.78$\pm$6.16 & \textcolor{Top3}{\textbf{84.86$\pm$5.69}} & 83.78$\pm$6.73 & \textcolor{Top2}{\textbf{83.78$\pm$6.51}}  & 6 \\
GPRGNN            & 76.76$\pm$8.22 & 77.57$\pm$7.46 & 80.27$\pm$8.11 & 78.38$\pm$6.04 & 74.59$\pm$7.66  & 8 \\
H2GCN            & 81.89$\pm$5.98 & 82.70$\pm$6.27 & 80.27$\pm$6.63 & OOM & OOM & 1 \\
GCNII            & 67.57$\pm$11.34 & 64.59$\pm$9.63 & 73.24$\pm$5.91 & 77.84$\pm$3.97 & 75.41$\pm$5.47  & 16 \\
PairNorm          & 50.27$\pm$7.17 & 53.51$\pm$8.00 & 58.38$\pm$5.01 & 58.38$\pm$3.01 & 58.92$\pm$3.15  & 32 \\
Geom-GCN         & 60.54$\pm$3.67 & 23.78$\pm$11.64 & 12.97$\pm$2.91 & 12.97$\pm$2.91 & 12.97$\pm$2.91  & 2 \\
GCN               & 60.54$\pm$5.30 & 59.19$\pm$3.30 & 58.92$\pm$3.15 & 58.92$\pm$3.15 & 58.92$\pm$3.15  & 2 \\
GAT               & 61.89$\pm$5.05 & 58.38$\pm$4.05 & 58.38$\pm$3.86 & INS & INS  & 2 \\
\bottomrule
\end{tabular}
\end{adjustbox}
\end{subtable}\hfill
\begin{subtable}[t]{0.49\textwidth}
\centering
\begin{adjustbox}{width=\linewidth}
\begin{tabular}{lcccccc}
\toprule
\textbf{Layers} & \textbf{2} & \textbf{4} & \textbf{8} & \textbf{16} & \textbf{32} & \textbf{Best} \\
\midrule
\multicolumn{7}{c}{\textbf{Chameleon ($h{=}0.23$)}}\\
\addlinespace
\textbf{DiagChebySD}    & \textcolor{Top2}{\textbf{\pmstd{69.50}{1.81}}} & \textcolor{Top1}{\textbf{\pmstd{70.39}{1.66}}} & \textcolor{Top2}{\textbf{\pmstd{70.04}{2.87}}} & \textcolor{Top2}{\textbf{\pmstd{68.99}{4.13}}} & \pmstd{66.78}{3.99} & 4  \\
\textbf{BundleChebySD}  & \textcolor{Top3}{\textbf{\pmstd{67.13}{1.63}}} & \textcolor{Top3}{\textbf{\pmstd{67.13}{3.59}}} & \pmstd{66.03}{1.70} & \pmstd{66.51}{2.11} & \pmstd{66.47}{2.79} & 2  \\
\textbf{GeneralChebySD} & \pmstd{63.29}{2.47} & \pmstd{65.22}{1.72} & \pmstd{58.55}{3.43} & \pmstd{61.80}{4.64} & \pmstd{65.17}{5.19} & 4  \\
SAN       & 65.88$\pm$2.10 & 65.99$\pm$1.28 & \textcolor{Top3}{\textbf{68.16$\pm$2.18}} & \textcolor{Top3}{\textbf{68.62$\pm$2.81}} & \textcolor{Top2}{\textbf{67.61$\pm$2.80}}  & 16 \\
ANSD       & 65.35$\pm$1.26 & \textcolor{Top2}{\textbf{67.11$\pm$1.88}} & 66.69$\pm$2.30 & 67.39$\pm$1.84 & \textcolor{Top3}{\textbf{66.91$\pm$1.61}}  & 16 \\
GGCN              & \textcolor{Top1}{\textbf{70.77$\pm$1.42}} & 69.58$\pm$2.68 & \textcolor{Top1}{\textbf{70.33$\pm$1.70}} & \textcolor{Top1}{\textbf{70.44$\pm$1.82}} & \textcolor{Top1}{\textbf{70.29$\pm$1.62}}  & 5 \\
GPRGNN            & 46.58$\pm$1.77 & 45.72$\pm$3.45 & 41.16$\pm$5.79 & 39.58$\pm$7.85 & 35.42$\pm$8.52  & 2 \\
H2GCN            & 59.06$\pm$1.85 & 60.11$\pm$2.15 & OOM & OOM & OOM  & 4 \\
GCNII            & 61.07$\pm$4.10 & 63.86$\pm$3.04 & 62.89$\pm$1.18 & 60.20$\pm$2.10 & 56.97$\pm$1.81  & 4 \\
PairNorm          & 62.74$\pm$2.82 & 59.01$\pm$2.80 & 54.12$\pm$2.24 & 46.38$\pm$2.23 & 46.78$\pm$2.26  & 2 \\
Geom-GCN         & 60.00$\pm$2.81 & 19.17$\pm$1.66 & 19.58$\pm$1.73 & 19.58$\pm$1.73 & 19.58$\pm$1.73  & 2 \\
GCN               & 64.82$\pm$2.24 & 53.11$\pm$4.44 & 35.15$\pm$3.14 & 35.39$\pm$3.23 & 35.20$\pm$3.25  & 2 \\
GAT               & 60.26$\pm$2.50 & 48.71$\pm$2.96 & 35.09$\pm$3.55 & INS & INS  & 2 \\
\bottomrule
\end{tabular}
\end{adjustbox}
\end{subtable}

\end{table*}

In this subsection, we will better investigate how PolyNSD behaves as depth increases, compared to classical GNNs and other sheaf models. We consider four representative datasets: \textsc{Cora}, \textsc{Citeseer} (homophilous citation graphs), and \textsc{Cornell}, \textsc{Chameleon} (heterophilous benchmarks). For each model we sweep the number of layers $L \in \{2,4,8,16,32\}$, keeping all other hyperparameters fixed. For each model–depth pair, we report test accuracy mean$\pm$std over the $10$ fixed
splits, and we additionally highlight, for each model, the depth at which its
best test accuracy is achieved. The complete results are given in \autoref{tab:oversmoothing}. This makes explicit the qualitative patterns summarised in the main text: GCN and Geom-GCN deteriorate quickly with depth (often approaching random performance on heterophilous graphs), GAT runs into numerical instability at high depth, GCNII and SAN/ANSD provide strong deep baselines, and PolyNSD variants remain competitive and stable up to $L=32$ layers, especially on the heterophilous benchmarks.

\paragraph{Dirichlet-energy Diagnostics on Heterophily.}
To complement the depth sweep in \autoref{tab:oversmoothing}, we add a \emph{representation-level} diagnostic that probes how the intermediate signals evolve as depth increases on heterophilous graphs. In particular, we focus on the two canonical heterophily benchmarks \textsc{Chameleon} and \textsc{Squirrel} and track the (normalized) Dirichlet energy of intermediate representations produced by NSD and PolyNSD. Accuracy as a function of depth (as in \autoref{tab:oversmoothing}) reveals whether a method degrades as layers increase, but it does not reveal \emph{why} a degradation might occur. In deep message passing, poor depth scaling can arise either from oversmoothing (collapse to an overly low-frequency subspace, which is reconducible to have Dirichlet Energy = 0) or from the opposite failure mode, a.k.a. \emph{Energy Amplification}, where intermediate representations become increasingly ``rough'' with respect to the learned propagation operator, yielding unstable depth-wise dynamics.
The goal here is to quantify whether a model exhibits \emph{controlled propagation} as depth increases, i.e., whether the representation variation induced by the learned transport remains bounded and predictable.

\paragraph{Measuring channel-averaged Normalised Dirichlet-Energy}
Let $x_\ell \in \mathbb{R}^{(Nd)\times C}$ be the intermediate sheaf representation after the $\ell$-th propagation (diffusion/filter) step, and let $L$ denote the sparse Laplacian-like operator applied at the probed point in the layer.
We compute the channel-averaged normalised Dirichlet energy as $E_{\mathrm{norm}}(x_\ell) \;=\; \frac{\langle x_\ell,\, Lx_\ell\rangle}{\langle x_\ell,\, x_\ell\rangle}$, and report the depth-wise trajectory $\{E_{\mathrm{norm}}(x_\ell)\}_{\ell=1}^{L}$. Here, larger values indicate higher variation with respect to the learned transport, while smaller values indicate increased alignment with the low-energy modes of $L$. We train NSD and PolyNSD with identical routines and log $E_{\mathrm{norm}}(x_\ell)$ at the best validation checkpoint. We repeat over a controlled grid of settings: $\text{seeds}\in\{0,1,2\}$, stalk dimensions $d\in\{2,3,4\}$, and depths $L\in\{2,3,4\}$, for each transport class (Diagonal, Bundle, General). For each (dataset, transport class, method) group we aggregate all runs (total $n=27$) and plot mean$\pm$std per layer.

\paragraph{Findings: PolyNSD Controls Propagation Energy.}
Across both heterophilous datasets and all transport classes, we observe a consistent qualitative separation in which \emph{NSD shows high and increasing energy with depth}, while \emph{PolyNSD produces substantially lower energy and more stable trajectories.} More specifically, we have that NSD trajectories typically grow monotonically with $\ell$, indicating that intermediate signals become progressively more varying with respect to the learned transport operator. This is consistent with an \emph{energy amplification} regime, where deeper propagation accumulates higher-frequency components in the representation space. On the other hand, PolyNSD yields markedly smaller $E_{\mathrm{norm}}$ values and a smoother depth-wise evolution, suggesting that the polynomial parameterisation learns a propagation rule with \emph{better-conditioned} spectral behaviour. This result, in relation to depth robustness, provides an explanatory lens: compared to NSD, PolyNSD induces \emph{tighter control} over the variation of intermediate representations with depth, preventing the progressive energy growth observed in NSD. 

\begin{figure*}[t]
    \centering
    \begin{subfigure}[t]{0.48\textwidth}
        \centering
        \includegraphics[width=\linewidth]{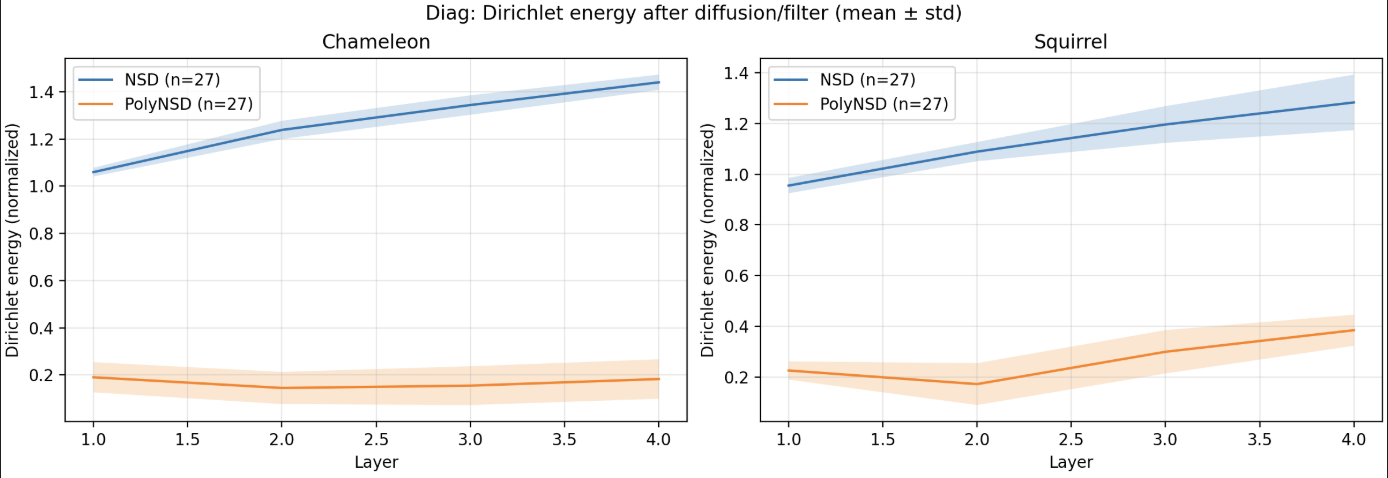}
        \caption{\textbf{Diagonal}}
        \label{fig:dirichlet-diag}
    \end{subfigure}\hfill
    \begin{subfigure}[t]{0.48\textwidth}
        \centering
        \includegraphics[width=\linewidth]{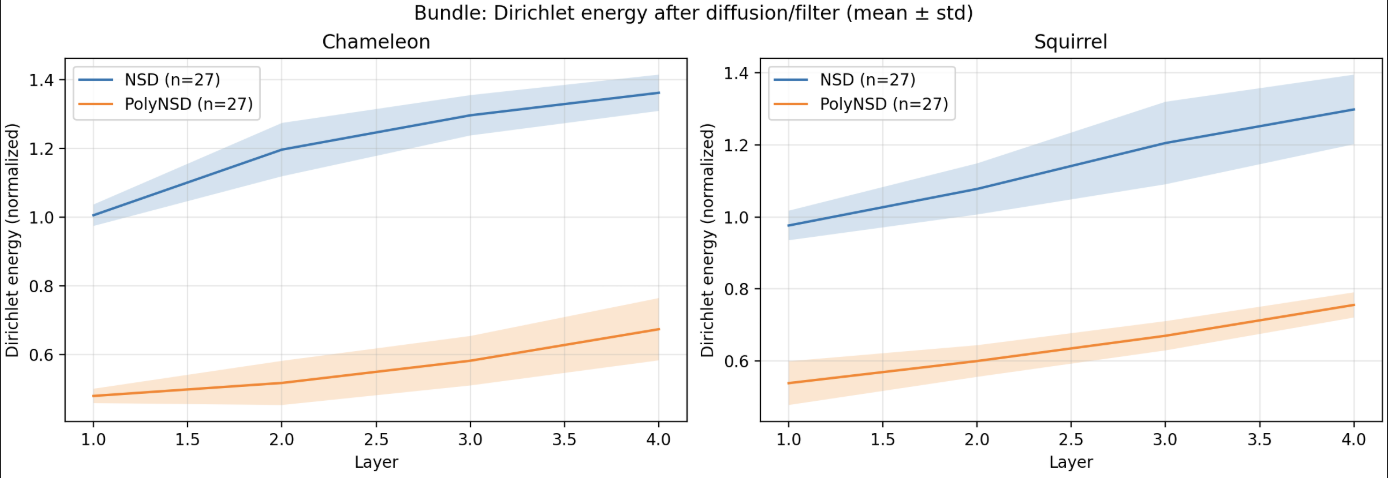}
        \caption{\textbf{Bundle}}
        \label{fig:dirichlet-bundle}
    \end{subfigure}

    \vspace{0.6em}

    \begin{subfigure}[t]{0.48\textwidth}
        \centering
        \includegraphics[width=\linewidth]{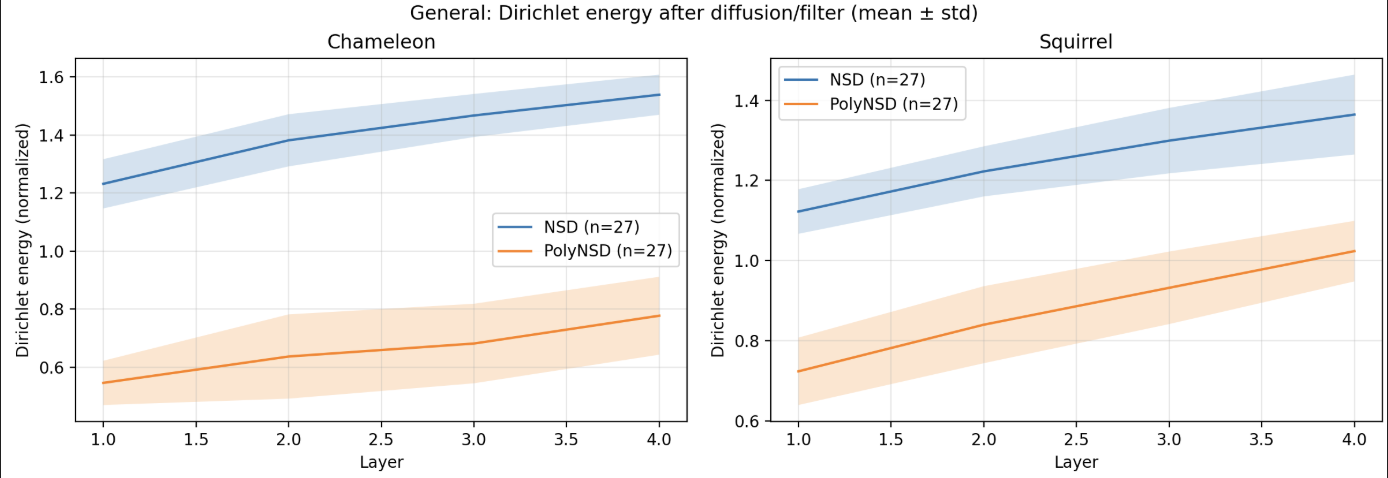}
        \caption{\textbf{General}}
        \label{fig:dirichlet-general}
    \end{subfigure}

    \caption{\emph{Dirichlet-energy diagnostics on heterophily.} Normalized Dirichlet energy trajectories on \textsc{Chameleon} and \textsc{Squirrel} for NSD vs.\ PolyNSD across transport classes. }
    \label{fig:dirichlet-hetero}
\end{figure*}

\subsection{Long-Range Influence Decay Diagnostics}
\label{app:long_range_influence}
The depth-robustness study in \autoref{app:oversmoothing} examines how performance changes as the number of propagation steps increases. However, accuracy alone does not reveal how strongly a model uses information from distant nodes. Since PolyNSD introduces an explicit polynomial spectral component that enables multi-hop mixing within each layer, we complement the depth and energy analyses with a gradient-based \emph{long-range influence} diagnostic. This diagnostic measures how the sensitivity of a target node prediction to source-node features changes as a function of graph distance. More formally, let $x \in \mathbb{R}^{N\times F}$ denote the input node features and let $s_v(x)$ be a scalar score associated with a target node $v$; in practice, we use the logit or log-probability of the ground-truth class at $v$. For a source node $u$, we measure how much $u$ can affect the prediction at $v$ through the gradient magnitude:
\begin{equation}
    G(u \!\to\! v)
    =
    \left\lVert \frac{\partial s_v}{\partial x_u} \right\rVert_2 
\end{equation}
We then aggregate this quantity by graph distance. Let $\operatorname{dist}(u,v)$ denote the shortest-path distance on the graph. For each hop $d \in \{0,\dots,D\}$, we define the distance-binned influence as:
\begin{equation}
    I(d)
    =
    \mathbb{E}_{v \in \mathcal{T}}
    \left[
    \frac{1}{|\{u:\operatorname{dist}(u,v)=d\}|}
    \sum_{\substack{u:\\ \operatorname{dist}(u,v)=d}}
    \left\lVert \frac{\partial s_v}{\partial x_u} \right\rVert_2
    \right]
\end{equation}
where $\mathcal{T}$ is a set of sampled target nodes from the evaluation split. Finally, to focus on the relative decay with distance, we normalise by the zero-hop influence:
\begin{equation}
    \widetilde{I}(d)
    =
    \frac{I(d)}{I(0)} 
\end{equation}
A steep decay of $\widetilde{I}(d)$ indicates that the prediction is mostly sensitive to nearby nodes, whereas a slower decay indicates that information from farther nodes remains influential.

\paragraph{Experimental protocol.}
We evaluate this diagnostic on three recent heterophily benchmarks with different structural regimes: \textsc{Minesweeper}, \textsc{Roman-Empire}, and \textsc{Amazon-Ratings}. For each dataset, we compare the three transport classes, Diagonal, Bundle, and General, under both diffusion families, NSD and PolyNSD, yielding six curves per dataset. NSD is trained with $10$ layers, while PolyNSD is trained with $6$ layers and polynomial degree $K=10$. Thus, NSD relies on a deeper stack of repeated first-order local updates, whereas PolyNSD combines moderate depth with higher-order spectral propagation inside each layer. For each trained model, we sample target nodes from the evaluation split, compute $\nabla_x s_v$ by backpropagation, and aggregate $\left\lVert \partial s_v / \partial x_u \right\rVert_2$ into hop-distance bins up to $D=10$. Hop distances are computed by BFS on the undirected input graph. We report $\widetilde{I}(d)$ on a log scale for readability, clamping extremely small values only for plotting stability.

\paragraph{Findings: PolyNSD Preserves Long-Range Influence.}
\autoref{fig:long-range-influence} shows a clear separation between NSD and PolyNSD across \textsc{Minesweeper}, \textsc{Roman-Empire}, and \textsc{Amazon-Ratings}. We observe that:
\begin{itemize}
    \item \emph{NSD influence decays sharply with distance.}
    Across datasets and transport classes, NSD curves generally drop by several orders of magnitude as the hop distance increases, indicating that predictions become rapidly less sensitive to distant nodes under repeated first-order diffusion.

    \item \emph{PolyNSD maintains stronger medium- and long-range influence.}
    PolyNSD variants preserve larger values of $\widetilde{I}(d)$ at intermediate and large distances. This is especially visible after $d\geq 5$, where PolyNSD curves often remain several orders of magnitude above their NSD counterparts.

    \item \emph{Polynomial propagation benefits all transport classes.}
    The long-range advantage is visible for Diagonal, Bundle, and General variants, suggesting that the gain is not only due to more expressive restriction maps, but also to the higher-order spectral filtering introduced by PolyNSD.
\end{itemize}

Overall, this diagnostic indicates that PolyNSD improves long-range sensitivity compared to first-order NSD. While NSD propagates information through repeated local updates, PolyNSD combines $6$ layers with degree-$10$ polynomial filtering, allowing each layer to perform multi-hop transport-aware mixing. This leads to a slower decay of influence with graph distance and suggests that PolyNSD can retain useful signals from farther regions of the graph.

\begin{figure*}[t]
    \centering
    \begin{subfigure}[t]{0.32\textwidth}
        \centering
        \includegraphics[width=\linewidth]{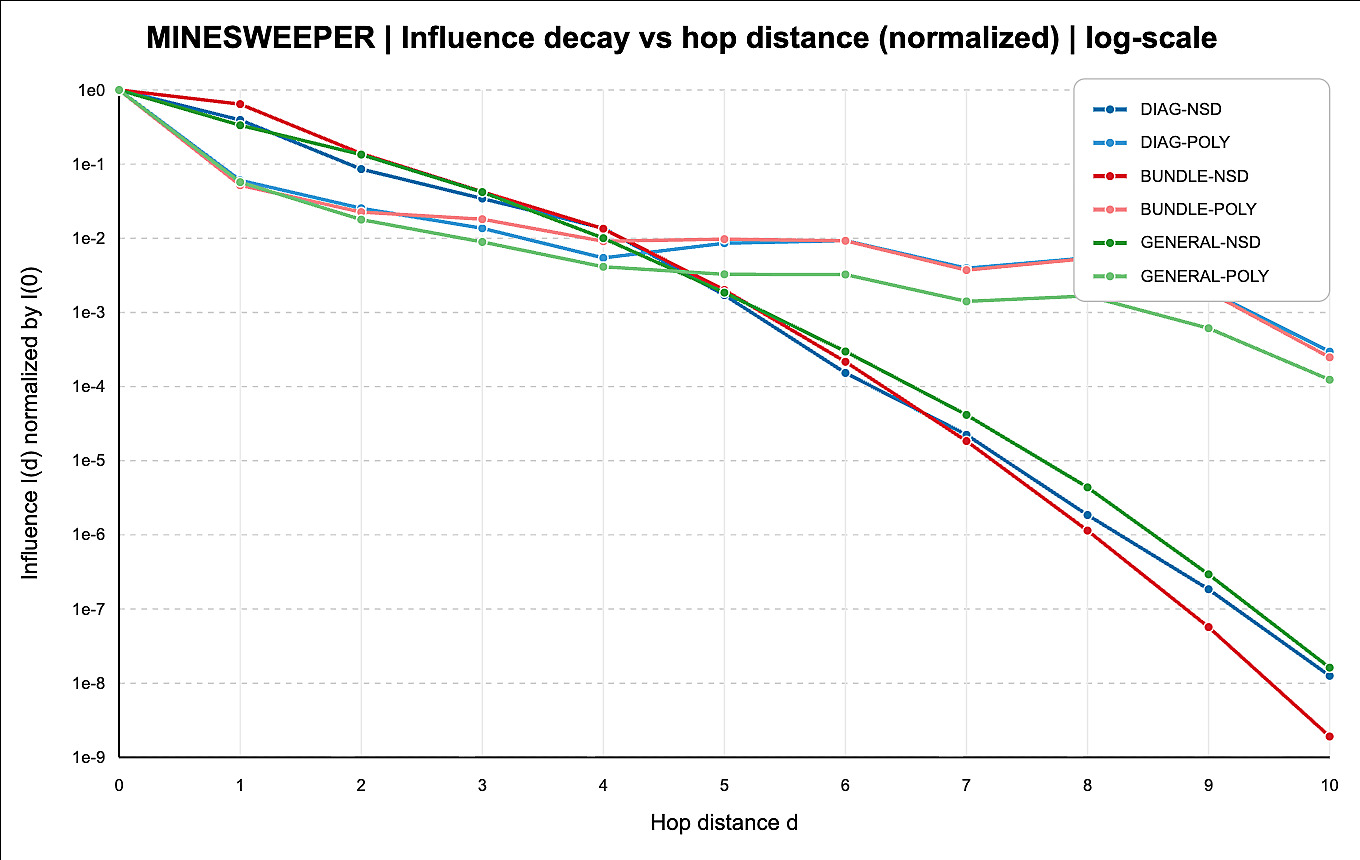}
        \caption{\textbf{\textsc{Minesweeper}}}
        \label{fig:long-range-minesweeper}
    \end{subfigure}\hfill
    \begin{subfigure}[t]{0.32\textwidth}
        \centering
        \includegraphics[width=\linewidth]{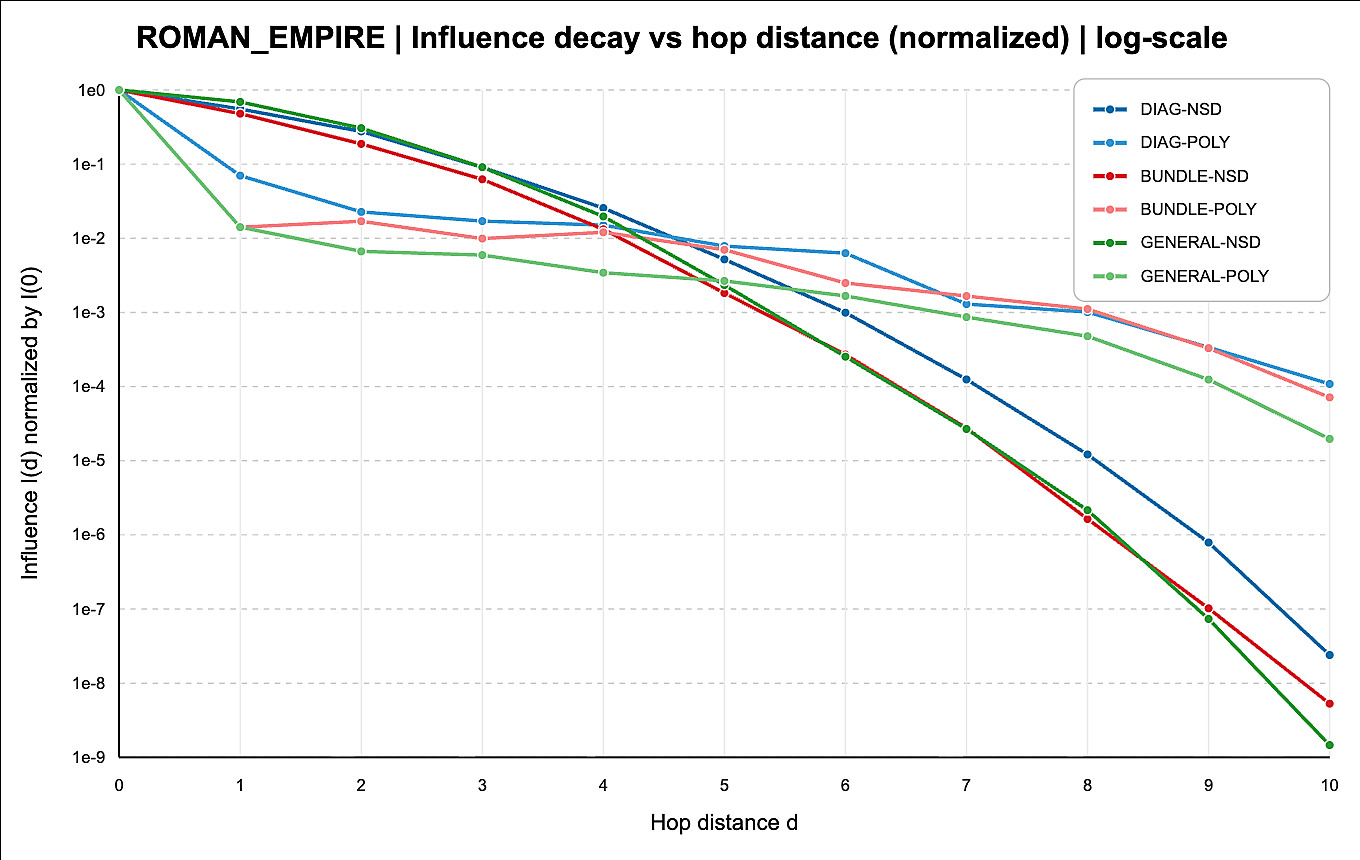}
        \caption{\textbf{\textsc{Roman-Empire}}}
        \label{fig:long-range-roman-empire}
    \end{subfigure}\hfill
    \begin{subfigure}[t]{0.32\textwidth}
        \centering
        \includegraphics[width=\linewidth]{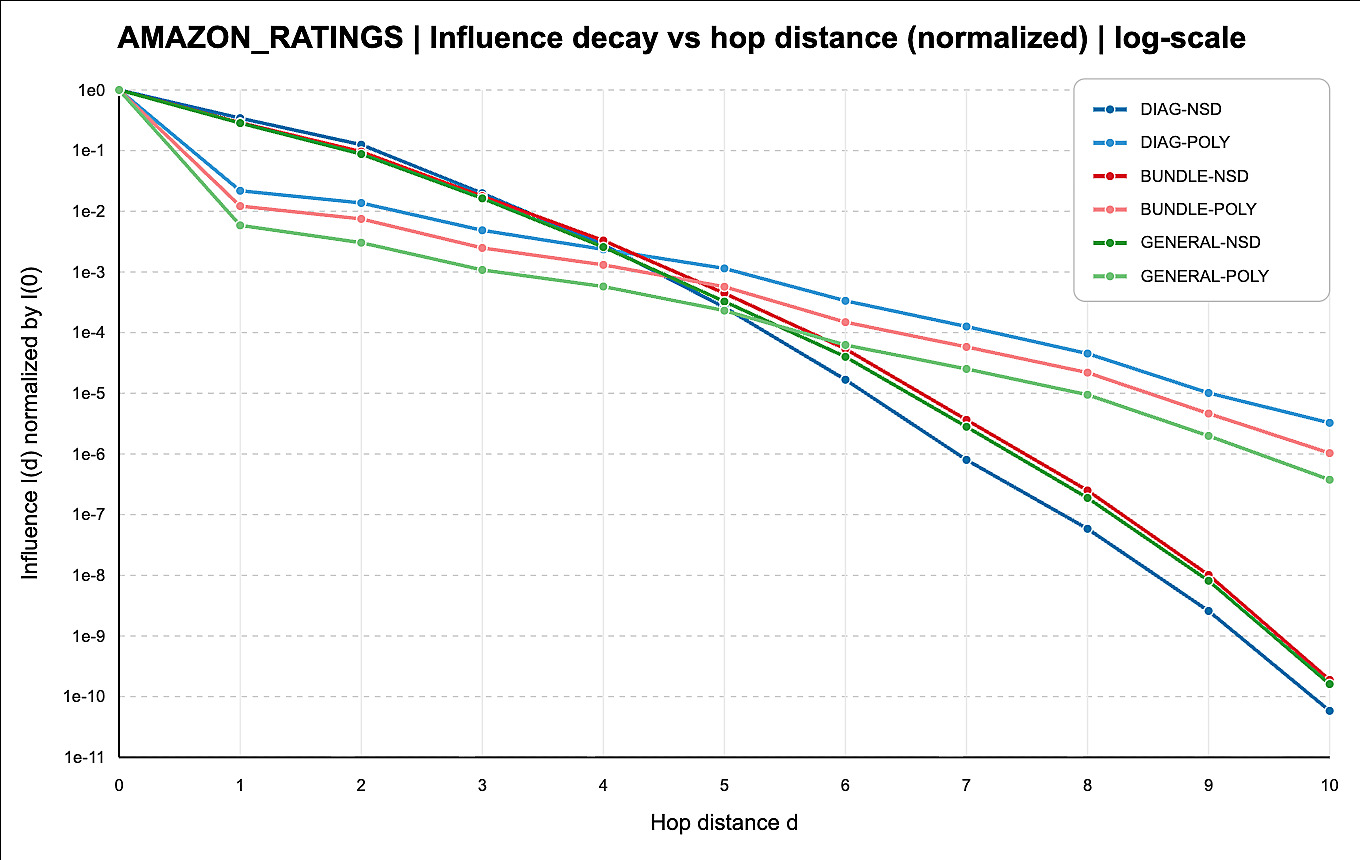}
        \caption{\textbf{\textsc{Amazon-Ratings}}}
        \label{fig:long-range-amazon-ratings}
    \end{subfigure}

    \caption{\emph{Long-range influence decay diagnostics.}
    Normalised influence $\widetilde{I}(d)=I(d)/I(0)$, shown on a log scale, as a function of hop distance $d$ for NSD and PolyNSD across Diagonal, Bundle, and General transport classes. NSD is evaluated with $10$ layers, while PolyNSD uses $6$ layers with polynomial degree $K=10$. Across \textsc{Minesweeper}, \textsc{Roman-Empire}, and \textsc{Amazon-Ratings}, PolyNSD generally preserves stronger medium- and long-range influence, indicating higher sensitivity to distant nodes than first-order NSD.}
    \label{fig:long-range-influence}
\end{figure*}

\subsection{Chebyshev Order \texorpdfstring{$K$}     -- Sweep}
\label{app:chebK-sweep}
This section investigates how the polynomial order $K$ of PolyNSD affects performance and how this interacts with the choice of spectral scaling $\lambda_{\max}$. The goal is to understand when higher-order filters are beneficial and to what extent they strictly improve over the $K{=}1$ (NSD-equivalent) case. For these ablations we use a controlled configuration shared across all datasets: stalk dimension $d=4$, number of diffusion layers $L=2$, hidden channels $16$, and within each dataset and PolyNSD variant, we sweep the polynomial order $K \in \{2,4,8,16\}$ and the spectral scaling strategy: \emph{analytic} vs \emph{iterative} estimate of $\lambda_{\max}$, as described in \autoref{subsec:polysd-poly-filter}. We report mean$\pm$std test accuracy over the
$10$ splits. 

\begin{figure}[h]
  \centering
  \includegraphics[width=0.75\linewidth]{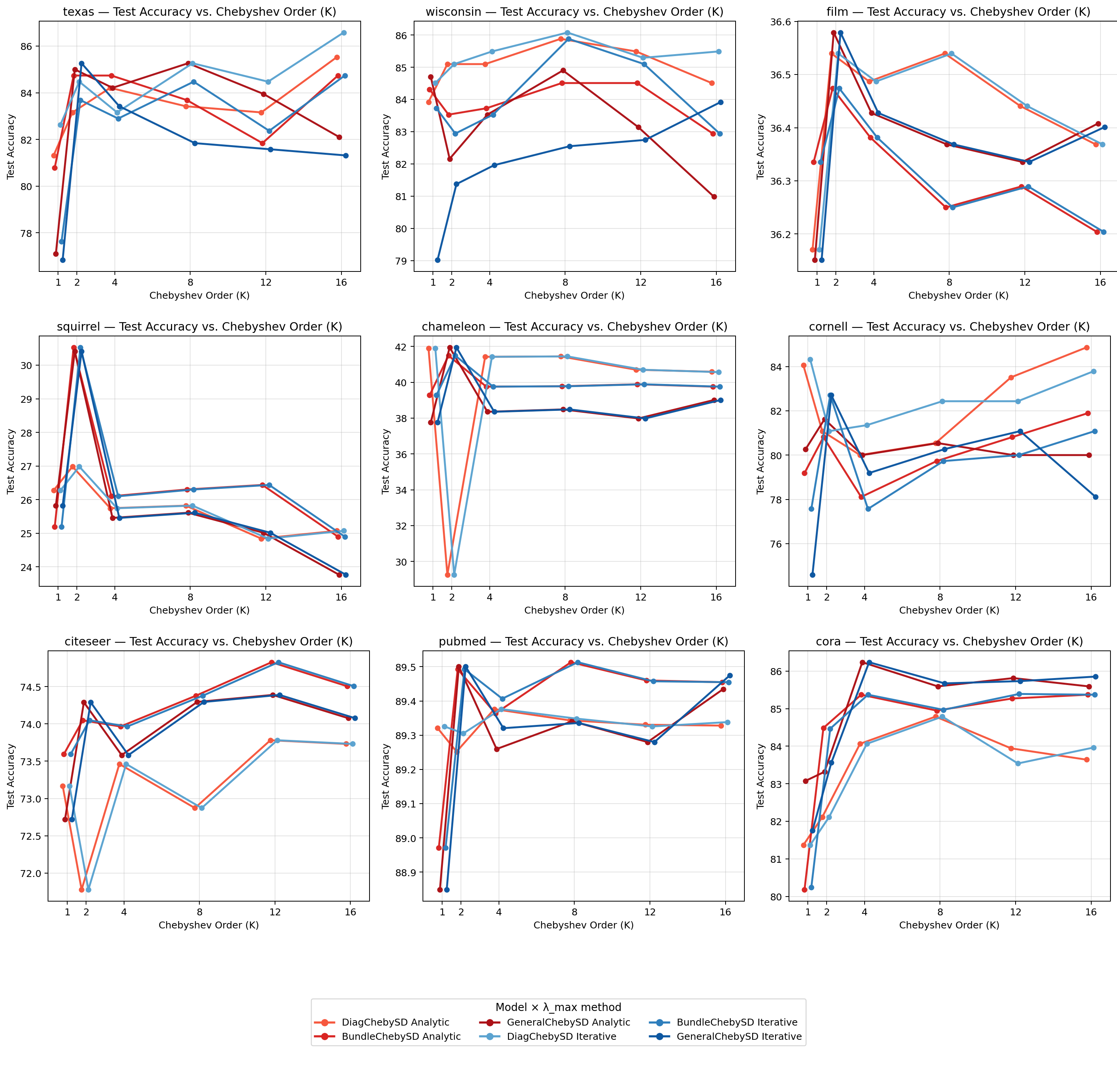}
  \caption{\textit{Chebyshev order $K$ sweep.}
  Test accuracy vs.~$K$ for the nine real-world benchmarks.
  Each panel corresponds to one dataset and overlays the six configurations
  given by the three PolyNSD variants crossed with analytic vs.~iterative
  estimates of $\lambda_{\max}$. Error bars denote mean$\pm$std over the
  $10$ fixed splits.}
  \label{fig:chebK-sweep-3x3}
\end{figure}

\paragraph{Homophily and Heterophily Order \(K\)'s Impact.} The results in \autoref{fig:chebK-sweep-3x3} shows test accuracy as a function of $K$ for the nine real-world benchmarks, with one panel per dataset. Each panel overlays the six configurations obtained by crossing the three transport classes with the two choices of $\lambda_{\max}$ estimation. Across datasets : (i) On homophilous graphs such as \textsc{Cora}, \textsc{Citeseer} and \textsc{PubMed}, moderate orders ($K\approx 4\text{--}8$) are often optimal. Increasing $K$ beyond this range yields diminishing returns. (ii) On heterophilous graphs such as \textsc{Squirrel} and \textsc{Chameleon}, the optimum shifts towards larger values ($K\in\{8,16\}$), consistent with the need for longer-range, multi-frequency propagation to stabilise learning under label inconsistency. (iii) For all three transport classes and across all datasets, the best configuration always satisfies $K>1$, meaning that higher-order polynomial filters strictly improve over the NSD-like $K=1$ baseline at fixed depth and width.

\begingroup
\setlength{\tabcolsep}{2pt} 
\renewcommand{\pmstd}[2]{\footnotesize #1{\scriptsize$\pm$}#2} 

\begin{table}[H]
\centering
\small
\caption{\textit{PolyNSD (Chebyshev) $K$-sweep}. The three PolySD variants are held fixed with  \emph{Layers=2}, \emph{StalkDim=4} and \emph{Hidden=16}, and columns sweep \(K\in\{2,4,8,16\}\) are reported, with the associated number of parameters. Top-1 per dataset and model (per-row) are coloured as \textcolor{Top1}{\textbf{First}}.}
\label{tab:polysd_ksweep}
\resizebox{0.5\linewidth}{!}{
\begin{tabular}{l@{\hspace{2pt}}c@{\hspace{2pt}}c@{\hspace{2pt}}c@{\hspace{2pt}}c@{\hspace{2pt}}c@{\hspace{2pt}}c}
\toprule
\textbf{$K$} & \textbf{2} & \textbf{4} & \textbf{6} & \textbf{8} & \textbf{16} & \textbf{Best} \\
\midrule\addlinespace[6pt]

\multicolumn{7}{c}{\footnotesize\emph{\textbf{PubMed} ($h{=}0.80,\ \#\text{N}=18{,}707,\ \#\text{E}=44{,}327,\ \#\text{C}=3$)}}\\
\addlinespace[4pt]\midrule\addlinespace[6pt]
\textbf{DiagChebySD}    & \pmstd{87.37}{0.45} & \pmstd{87.95}{0.60} & \pmstd{88.01}{0.39} & \textcolor{Top1}{\textbf{\pmstd{88.18}{0.40}}} & \pmstd{87.65}{0.42} & 8 \\
\paramcells{48,659}{48,661}{48,663}{48,665}{48,673}{48,665}
\textbf{BundleChebySD}  & \pmstd{87.86}{0.47} & \textcolor{Top1}{\textbf{\pmstd{87.91}{0.32}}} & \pmstd{87.52}{0.45} & \pmstd{87.63}{0.37} & \pmstd{87.51}{0.74} & 4 \\
\paramcells{49,619}{49,621}{49,623}{49,625}{49,633}{49,621}
\textbf{GeneralChebySD} & \pmstd{87.74}{0.49} & \pmstd{87.80}{0.25} & \pmstd{87.62}{0.43} & \textcolor{Top1}{\textbf{\pmstd{87.87}{0.49}}} & \pmstd{87.82}{0.49} & 8 \\
\paramcells{52,499}{52,501}{52,503}{52,505}{52,513}{52,505}

\midrule\addlinespace[6pt]
\multicolumn{7}{c}{\footnotesize\emph{\textbf{Chameleon} ($h{=}0.23,\ \#\text{N}=2{,}277,\ \#\text{E}=31{,}421,\ \#\text{C}=5$)}}\\
\addlinespace[4pt]\midrule
\textbf{DiagChebySD}    & \pmstd{36.27}{2.48} & \pmstd{61.21}{10.09} & \pmstd{59.36}{10.55} & \pmstd{68.55}{2.32} & \textcolor{Top1}{\textbf{\pmstd{70.57}{1.32}}} & 16 \\
\paramcells{194,821}{194,823}{194,825}{194,827}{194,835}{194,835}
\textbf{BundleChebySD}  & \pmstd{59.85}{7.93} & \pmstd{61.97}{7.18} & \pmstd{66.18}{2.34} & \textcolor{Top1}{\textbf{\pmstd{66.64}{2.35}}} & \pmstd{66.58}{9.92} & 8 \\
\paramcells{195,781}{195,783}{195,785}{195,787}{195,795}{195,787}
\textbf{GeneralChebySD} & \pmstd{62.39}{3.16} & \pmstd{66.05}{1.65} & \pmstd{65.94}{1.64} & \pmstd{62.08}{3.27} & \textcolor{Top1}{\textbf{\pmstd{67.39}{2.50}}} & 16 \\
\paramcells{198,661}{198,663}{198,665}{198,667}{198,675}{198,675}

\midrule\addlinespace[6pt]
\multicolumn{7}{c}{\footnotesize\emph{\textbf{Squirrel} ($h{=}0.22,\ \#\text{N}=5{,}201,\ \#\text{E}=198{,}493,\ \#\text{C}=5$)}}\\
\addlinespace[4pt]\midrule
\textbf{DiagChebySD}    & \pmstd{35.34}{2.76} & \pmstd{46.65}{1.76} & \pmstd{43.92}{2.83} & \pmstd{47.18}{1.18} & \textcolor{Top1}{\textbf{\pmstd{47.72}{1.20}}} & 16 \\
\paramcells{174,661}{174,663}{174,665}{174,667}{174,675}{174,675}
\textbf{BundleChebySD}  & \pmstd{44.12}{1.65} & \pmstd{43.58}{2.26} & \pmstd{41.69}{5.56} & \pmstd{41.29}{2.55} & \textcolor{Top1}{\textbf{\pmstd{44.48}{3.22}}} & 16 \\
\paramcells{176,901}{174,983}{176,903}{176,905}{176,913}{176,913}
\textbf{GeneralChebySD} & \pmstd{41.66}{1.42} & \pmstd{40.27}{2.69} & \textcolor{Top1}{\textbf{\pmstd{42.02}{2.50}}} & \pmstd{40.70}{2.01} & \pmstd{41.82}{3.24} & 6 \\
\paramcells{179,781}{179,783}{179,785}{179,787}{179,795}{179,785}

\bottomrule
\end{tabular}%
} 
\end{table}
\endgroup

\paragraph{Numeric Results: Degree $K>1$ is always beneficial.} To provide a more compact view of the effect of $K$ in the context of the accuracy–efficiency trade-offs, \autoref{tab:polysd_ksweep} reports PolyNSD performance on three datasets with distinct homophily levels: \textsc{PubMed} (high homophily), \textsc{Chameleon}, and \textsc{Squirrel} (heterophilous). We fix the depth and width and sweep $K\in\{2,4,6,8,16\}$, reporting for each entry both the test accuracy and the corresponding parameter count. On all three datasets, the best configuration occurs at $K>1$, and increasing $K$ allows PolyNSD to emulate a wide range of effective propagation depths without changing the number of layers. Here, higher-order sheaf polynomials are shown to strictly enlarge the class of linear operators that can be realised at fixed depth.

\subsection{PolyNSD vs NSD: Detailed Accuracy–Efficiency Analysis}
\label{app:polynsd-vs-nsd-exp}

This subsection quantifies how much accuracy can be retained or gained when replacing NSD layers with PolyNSD layers at comparable or reduced parameter counts. We focus on three large benchmarks that cover both homophilous and heterophilous regimes: \textsc{PubMed} (high homophily), \textsc{Chameleon} and \textsc{Squirrel} (low homophily). For each dataset and
for each transport class, we consider two complementary sweeps: (i) \emph{Depth sweep (NSD)}, where hidden width and stalk dimension are fixed, while the number of NSD layers $L$ is varied, and (ii) \emph{Width sweep (NSD)}, where depth and stalk dimension are fixed, while the hidden width (number of channels) is varied. For each configuration we record the test accuracy (mean$\pm$std over the $10$ fixed splits) and the total number of trainable parameters. This allows us to draw iso-accuracy and iso-parameter comparisons between PolyNSD and NSD.

\begin{table}[H]
\centering
\small
\setlength{\tabcolsep}{4pt}
\caption{\textit{PolySD vs NSD: layers sweep.} “OOM” stands for out of memory, whilst “N/A” means that the parameter count is not available. We report the best results, for each dataset, for the three PolySD variant, with the associated parameter count. The last column contains the summary of improvements of the PolySD model variant, showing the PolySD \good{\textbf{Improvement}} or \bad{\textbf{Deterioration}} w.r.t. the NSD Best setting. Top-1 per dataset and model (per-row) are coloured as \textcolor{Top1}{\textbf{First}}.  }

\label{tab:nsd_layers_plus_polysd_summary}
\resizebox{0.7\linewidth}{!}{%
\begin{tabular}{lccccccc}
\toprule
\textbf{Layers} & \textbf{2} & \textbf{4} & \textbf{8} & \textbf{16} & \textbf{32} & \textbf{Best} & \textbf{PolySD Improvement}\\
\midrule\addlinespace[6pt]

\multicolumn{8}{c}{\emph{\textbf{PubMed} ($h{=}0.80,\ \#\text{N}=18{,}707,\ \#\text{E}=44{,}327,\ \#\text{C}=3$)}}\\\addlinespace[4pt]
\multicolumn{8}{c}{\textbf{DiagChebySD}: \pmstd{88.18}{0.40} (K=8), \#params=48,665}\\
\multicolumn{8}{c}{\textbf{BundleChebySD}: \pmstd{87.91}{0.32} (K=4), \#params=49,621}\\
\multicolumn{8}{c}{\textbf{GeneralChebySD}: \pmstd{87.87}{0.49} (K=8), \#params=52,505}\\[2pt]

\midrule\addlinespace[6pt]
Diag-NSD    & \pmstd{87.82}{0.55} & \textcolor{Top1}{\textbf{\pmstd{87.92}{0.51}}} & \pmstd{87.92}{0.52} & \pmstd{65.92}{20.39} & \pmstd{39.49}{1.60} & 4 & \good{\textbf{+0.26\%}}\\
\paramcellsimp{48,655}{50,507}{54,211}{61,619}{76,435}{50,507}{\good{\textbf{-3,65\% (-1,842)}}}
Bundle-NSD  & \pmstd{87.70}{0.56} & \pmstd{87.85}{0.42} & \textcolor{Top1}{\textbf{\pmstd{87.94}{0.36}}} & \pmstd{87.63}{0.47} & \pmstd{37.03}{4.72} & 8 & \bad{\textbf{-0.03\%}}\\
\paramcellsimp{49,615}{52,427}{58,051}{69,299}{91,795}{58,051}{\good{\textbf{-14,52\%(-8,430)}}}
General-NSD & \pmstd{87.48}{0.64} & \pmstd{87.62}{0.36} & \textcolor{Top1}{\textbf{\pmstd{87.72}{0.68}}} & \pmstd{39.94}{1.03} & \pmstd{39.33}{2.25} & 8 & \good{\textbf{+0.15\%}}\\
\paramcellsimp{52,495}{58,187}{69,571}{92,339}{137,875}{69,571}{\good{\textbf{-24,53\%(-17,066)}}}

\midrule\addlinespace[6pt]
\multicolumn{8}{c}{\emph{\textbf{Chameleon} ($h{=}0.23,\ \#\text{N}=2{,}277,\ \#\text{E}=31{,}421,\ \#\text{C}=5$)}}\\\addlinespace[4pt]
\multicolumn{8}{c}{\textbf{DiagChebySD}: \pmstd{70.57}{1.32} (K=16), \#params=194,835}\\
\multicolumn{8}{c}{\textbf{BundleChebySD}: \pmstd{66.64}{2.35} (K=8), \#params=195,787}\\
\multicolumn{8}{c}{\textbf{GeneralChebySD}: \pmstd{67.39}{2.50} (K=16), \#params=198,675}\\[2pt]

\midrule\addlinespace[6pt]
Diag-NSD    & \textcolor{Top1}{\textbf{\pmstd{64.43}{2.06}}} & \pmstd{61.27}{5.14} & \pmstd{57.34}{6.10} & \pmstd{22.92}{1.42} & \pmstd{22.89}{2.59} & 2 & \good{\textbf{+6.15\%}}\\
\paramcellsimp{194,817}{196,669}{200,373}{207,781}{222,597}{194,817}{\bad{\textbf{+0.009\%(+18)}}}
Bundle-NSD  & \pmstd{47.10}{10.14} & \textcolor{Top1}{\textbf{\pmstd{54.08}{6.30}}} & \pmstd{50.57}{3.53} & \pmstd{24.91}{2.87} & \pmstd{23.05}{2.46} & 4 & \good{\textbf{+12.56\%}}\\
\paramcellsimp{195,777}{198,589}{204,213}{215,461}{237,957}{198,589}{\good{\textbf{-1,41\%(-2,802)}}}
General-NSD & \textcolor{Top1}{\textbf{\pmstd{59.60}{4.53}}} & \pmstd{58.18}{3.56} & \pmstd{26.05}{3.41} & \pmstd{23.79}{3.41} & \pmstd{20.07}{3.74} & 2 & \good{\textbf{+7.79\%}}\\
\paramcellsimp{198,657}{204,349}{215,733}{238,501}{284,037}{198,657}{\good{\textbf{+0\%(+0)}}}

\midrule\addlinespace[6pt]
\multicolumn{7}{c}{\emph{\textbf{Squirrel} ($h{=}0.22,\ \#\text{N}=5{,}201,\ \#\text{E}=198{,}493,\ \#\text{C}=5$)}}\\\addlinespace[4pt]
\multicolumn{8}{c}{\textbf{DiagChebySD}: \pmstd{47.72}{1.20} (K=16), \#params=174,675}\\
\multicolumn{8}{c}{\textbf{BundleChebySD}: \pmstd{44.48}{3.22} (K=16), \#params=176,913}\\
\multicolumn{8}{c}{\textbf{GeneralChebySD}: \pmstd{42.02}{2.50} (K=6), \#params=179,785}\\[2pt]

\midrule\addlinespace[6pt]
Diag-NSD    & \pmstd{41.98}{1.17} & \textcolor{Top1}{\textbf{\pmstd{42.60}{1.83}}} & \pmstd{42.05}{1.54} & \pmstd{20.80}{1.33} & OOM & 4 & \good{\textbf{+5.12\%}} \\
\paramcellsimp{175,937}{177,789}{181,493}{188,901}{N/A}{177,789}{\good{\textbf{-1,75\%(-3,114)}}}
Bundle-NSD  & \textcolor{Top1}{\textbf{\pmstd{42.46}{1.45}}} & \pmstd{42.30}{1.83} & \pmstd{37.99}{2.61} & \pmstd{22.86}{3.03} & OOM & 2 & \good{\textbf{+2.02\%}} \\
\paramcellsimp{176,897}{179,709}{185,333}{196,581}{N/A}{176,897}{\bad{\textbf{+0.009\%(+16)}}}
General-NSD & \pmstd{39.11}{1.96} & \textcolor{Top1}{\textbf{\pmstd{39.96}{1.76}}} & \pmstd{33.69}{1.32} & OOM & OOM & 4 & \good{\textbf{+2.06\%}} \\
\paramcellsimp{179,777}{185,469}{196,853}{N/A}{N/A}{185,469}{\good{\textbf{-3.06\%(-5,684)}}}

\bottomrule
\end{tabular}%
}
\end{table}

\subsubsection{PolyNSD VS NSD: Depth Sweep}
\label{app:layers-sweep}

In the first comparison we fixed PolyNSD depth to $L=2$ and vary its polynomial
order $K$, while sweeping NSD depth $L\in\{2,4,8,16,32\}$ at fixed hidden
width $16$ and stalk dimension $4$. Once chosen the best setting from \autoref{tab:polysd_ksweep}, we compare it against the NSD depth sweep. \autoref{tab:nsd_layers_plus_polysd_summary} summarises the results by reporting, for each dataset and transport class: (i) the best NSD configuration over the depth sweep, (ii) the best PolyNSD configuration over the $K$ sweep at depth $L=2$ and (iii) the accuracy difference and parameter ratio between PolyNSD and the best NSD configuration. 

\paragraph{Findings: PolyNSD improves performances w.r.t. NSD and it does this with fewer layers.} On \textsc{PubMed} (homophilic setting), PolyNSD matches or slightly improves over the best NSD configuration with comparable or smaller parameter counts. On \textsc{Chameleon} and \textsc{Squirrel} (heterophilic settings), PolyNSD yields substantial gains (up to  $+6\%\text{--}+13\%$ accuracy) over NSD at similar or smaller parameter budgets, particularly in the diagonal and bundle variants, highlighting the advantage of higher-order spectral control in heterophilous regimes. An important comment to make is that, if we were to compare PolyNSD and NSD with the same power, we would need to look at the very first column of the layers table. From this, we can notice further improvements: with the same computational power, PolyNSD is extremely stronger than NSD, reaching peaks of $+20\%$ in accuracy in heterophilic settings.

\subsubsection{PolyNSD VS NSD: Width Sweep}
\label{app:width-sweep}

We now consider the complementary scenario w.r.t. the previous experiment: we fix NSD depth to $L=2$ and sweep the hidden width $\text{Hidden} \in \{16,32,64,128,256\}$, keeping stalk dimension $4$ constant. \autoref{tab:nsd_width_plus_polysd_summary} reports the corresponding test
accuracies and parameter counts, together with the relative improvement of
PolyNSD is the best NSD configuration across the width sweep. 

\paragraph{Findings: PolyNSD improves performances w.r.t. NSD and it does this with fewer hidden channels.} On \textsc{PubMed} and \textsc{Chameleon}, PolyNSD matches or exceeds the best NSD performance while using fewer parameters dramatically (often only $1\%\text{--}20\%$ of the parameters of the widest NSD models). Increasing NSD width beyond a certain point yields only marginal gains at a substantial parameter cost. On \textsc{Squirrel}, NSD can sometimes surpass PolyNSD by combining wide layers with large depth, resulting in models with between $10^6$ and $10^7$ parameters. However, at equal or smaller parameter budgets, PolyNSD consistently attains higher accuracy, indicating a more favourable accuracy–efficiency frontier. 

\begin{table}[H]
\centering
\small
\setlength{\tabcolsep}{4pt}
\caption{\textit{PolySD vs NSD: hidden-channels sweep.} “OOM” stands for out of memory, whilst “N/A” means that the parameter count is not available. We also report the best results, for each dataset, for the three PolySD variant, with the associated parameter count. The last column contains the summary of improvements of the PolySD model variant, showing the PolyNSD \good{\textbf{Improvement}} or \bad{\textbf{Deterioration}} w.r.t. the NSD Best setting. Top-1 per dataset and model (per-row) are coloured as \textcolor{Top1}{\textbf{First}}.  }
\label{tab:nsd_width_plus_polysd_summary}
\resizebox{0.7\linewidth}{!}{%
\begin{tabular}{lccccccc}
\toprule
\textbf{Hidden Channels} & \textbf{16} & \textbf{32} & \textbf{64} & \textbf{128} & \textbf{256} & \textbf{Best} & \textbf{PolySD Improvement}\\
\midrule\addlinespace[6pt]

\multicolumn{8}{c}{\emph{\textbf{PubMed} ($h{=}0.80,\ \#\text{N}=18{,}707,\ \#\text{E}=44{,}327,\ \#\text{C}=3$)}}\\\addlinespace[4pt]
\multicolumn{8}{c}{\textbf{DiagChebySD}: \pmstd{88.18}{0.40} (K=8), \#params=48,665}\\
\multicolumn{8}{c}{\textbf{BundleChebySD}: \pmstd{87.91}{0.32} (K=4), \#params=49,621}\\
\multicolumn{8}{c}{\textbf{GeneralChebySD}: \pmstd{87.87}{0.49} (K=8), \#params=52,505}\\[2pt]

\midrule\addlinespace[6pt]
Diag-NSD    & \pmstd{87.82}{0.55} & \pmstd{87.86}{0.44} & \pmstd{88.02}{0.53} & \textcolor{Top1}{\textbf{\pmstd{88.05}{0.46}}} & \pmstd{88.03}{0.50} & 128 & \good{\textbf{+0.13\%}}\\
\paramcellsimp{48,655}{111,071}{277,375}{775,871}{2,436,415}{775,871}{\good{\textbf{-93,73\%(-727,206)}}}
Bundle-NSD  & \pmstd{87.70}{0.56} & \pmstd{87.83}{0.51} & \pmstd{87.84}{0.38} & \pmstd{87.87}{0.56} & \textcolor{Top1}{\textbf{\pmstd{87.92}{0.44}}} & 256 & \bad{\textbf{-0.01}}\\
\paramcellsimp{49,615}{112,991}{281,215}{783,551}{2,451,775}{2,451,775}{\good{\textbf{-97,98\% (-2,402,154)}}}
General-NSD & \pmstd{87.48}{0.64} & \pmstd{87.69}{0.53} & \pmstd{87.71}{0.66} & \pmstd{87.64}{0.55} & \textcolor{Top1}{\textbf{\pmstd{87.91}{0.37}}} & 256 & \bad{\textbf{-0.04\%}}\\
\paramcellsimp{52,495}{118,751}{292,735}{806,591}{2,497,855}{2,497,855}{\good{\textbf{-97,90\%(-2,445,350)}}}

\midrule\addlinespace[6pt]
\multicolumn{7}{c}{\emph{\textbf{Chameleon} ($h{=}0.23,\ \#\text{N}=2{,}277,\ \#\text{E}=31{,}421,\ \#\text{C}=5$)}}\\\addlinespace[4pt]
\multicolumn{7}{c}{\textbf{DiagChebySD}: \pmstd{70.57}{1.32} (K=16), \#params=194,835}\\
\multicolumn{7}{c}{\textbf{BundleChebySD}: \pmstd{66.64}{2.35} (K=8), \#params=195,787}\\
\multicolumn{7}{c}{\textbf{GeneralChebySD}: \pmstd{67.39}{2.50} (K=16), \#params=198,675}\\[2pt]

\midrule\addlinespace[6pt]
Diag-NSD    & \pmstd{64.43}{2.06} & \pmstd{65.33}{1.78} & \textcolor{Top1}{\textbf{\pmstd{65.66}{1.61}}} & \pmstd{65.42}{1.09} & \pmstd{64.78}{1.99} & 64 & \good{\textbf{+4.91\%}}\\
\paramcellsimp{194,817}{403,393}{862,017}{1,945,153}{4,774,977}{862,017}{\good{\textbf{-77,40\%(-667,182)}}}
Bundle-NSD  & \pmstd{46.75}{10.70} & \pmstd{61.91}{8.76} & \pmstd{61.53}{7.50} & \textcolor{Top1}{\textbf{\pmstd{63.68}{1.60}}} & \pmstd{62.10}{2.11} & 128 & \good{\textbf{+2,96\%}}\\
\paramcellsimp{195,777}{405,313}{865,857}{1,952,833}{4,790,337}{1,952,833}{\good{\textbf{-89,97\%(-1,757,046)}}}
General-NSD & \pmstd{61.12}{2.27} & \pmstd{64.28}{1.58} & \pmstd{64.74}{1.46} & \textcolor{Top1}{\textbf{\pmstd{65.04}{2.13}}} & \pmstd{63.92}{1.75} & 128 & \good{\textbf{+2.35\%}}\\
\paramcellsimp{198,657}{411,073}{877,377}{1,975,873}{4,836,417}{1,975,873}{\good{\textbf{-89,94\%(-1,777,198)}}}

\midrule\addlinespace[6pt]
\multicolumn{7}{c}{\emph{\textbf{Squirrel} ($h{=}0.22,\ \#\text{N}=5{,}201,\ \#\text{E}=198{,}493,\ \#\text{C}=5$)}}\\\addlinespace[4pt]
\multicolumn{7}{c}{\textbf{DiagChebySD}: \pmstd{47.72}{1.20} (K=16), \#params=174,675}\\
\multicolumn{7}{c}{\textbf{BundleChebySD}: \pmstd{44.48}{3.22} (K=16), \#params=176,913}\\
\multicolumn{7}{c}{\textbf{GeneralChebySD}: \pmstd{42.02}{2.50} (K=6), \#params=179,785}\\[2pt]

\midrule\addlinespace[6pt]
Diag-NSD    & \pmstd{41.98}{1.17} & \pmstd{49.21}{1.87} & \pmstd{49.35}{1.53} & \textcolor{Top1}{\textbf{\pmstd{49.59}{1.10}}} & \pmstd{49.51}{1.89} & 128 & \bad{\textbf{-1.87\%}}\\
\paramcellsimp{175,937}{365,633}{786,497}{1,794,113}{4,472,897}{1,794,113}{\good{\textbf{-90,26\%(-1,619,438)}}}
Bundle-NSD  & \pmstd{42.46}{1.45} & \pmstd{47.48}{3.47} & \textcolor{Top1}{\textbf{\pmstd{49.81}{1.47}}} & \pmstd{48.87}{2.70} & OOM & 64 & \bad{\textbf{-5.33\%}}\\
\paramcellsimp{176,897}{367,553}{790,337}{1,801,793}{N/A}{790,337}{\good{\textbf{-77,62\%(-613,424)}}}
General-NSD & \pmstd{39.11}{1.96} & \pmstd{39.22}{1.97} & \textcolor{Top1}{\textbf{\pmstd{42.58}{3.66}}} & OOM & OOM & 64 & \bad{\textbf{-0.56\%}}\\
\paramcellsimp{179,777}{373,313}{801,857}{N/A}{N/A}{801,857}{\good{\textbf{-77,58\%(-622,072)}}}

\bottomrule
\end{tabular}%
}
\end{table}

\subsection{Spectral-Response Diagnostics Across Homophilic and Heterophilic Regimes}
\label{app:spectral-response-diagnostics}

In this appendix subsection, we make some \emph{model diagnostic} and we try to ask to the question \emph{what kind of frequency response} the learned PolyNSD filters implement, and how this behaviour changes across transport classes, network depth $L$, stalk dimension $d$, and graph regimes (homophilic vs.\ heterophilic). PolyNSD replaces an affine-in-spectrum diffusion core with a learnable degree-$K$ polynomial in the (sheaf) Laplacian. Raw accuracy tables we have alone do not reveal \emph{how} the model uses its additional spectral expressivity, so the goal here is therefore to provide evidence that PolyNSD learns \emph{non-trivial, dataset-dependent spectral shaping}, and to characterise systematic differences between homophilic and heterophilic datasets.

More formally, for a (sheaf) Laplacian eigenpair $Lu=\lambda u$, we saw that a polynomial filter $p(L)$ acts diagonally in the eigenbasis: $p(L)u = p(\lambda)\,u$. In our implementation, the polynomial is evaluated on the rescaled eigenvalue $\xi(\lambda)=2\lambda/\lambda_{\max}-1\in[-1,1]$. We have also seen that our discrete PolyNSD layers include a high-pass reinjection term controlled by $\alpha_{\mathrm{hp}}$, which, under a one-step linearisation, contributes an approximately linear correction proportional to $(1-\lambda/\lambda_{\max})$. For interpretability, we therefore visualise three curves:
\begin{align}
p(\xi(\lambda)) &\quad \text{(polynomial component)},\\
\alpha_{\mathrm{hp}}\Big(1-\frac{\lambda}{\lambda_{\max}}\Big) &\quad \text{(HP correction, approximate)},\\
m(\lambda) := p(\xi(\lambda)) + \alpha_{\mathrm{hp}}\Big(1-\frac{\lambda}{\lambda_{\max}}\Big)
&\quad \text{(combined response, approximate)}.
\end{align}

\paragraph{Experimental Protocol and Statistics.}
We train PolyNSD and then plot the learned spectral response at the best validation checkpoint. We do this for four datasets: two heterophilic (\textsc{Chameleon} and \textsc{Squirrel}) and two homophilic (\textsc{CiteSeer}and \textsc{PubMed}). We run these experiments across multiple seeds and hyperparameters settings. To make the grid comparable at scale, we compute from the combined response $m(\lambda)$ some statistics: $G_{\text{low}}$, being the mean of $m(\lambda)$ over the lowest $20\%$ of the spectrum (``low frequencies''). $G_{\text{high}}$, being the mean of $m(\lambda)$ over the highest $20\%$ of the spectrum (``high frequencies''), and finally $\Delta G =  G_{\text{high}}-G_{\text{low}}$, being a separation measure between high and low spectral regions. We also report \texttt{Non-monotone}, being the number of sign changes in $\frac{d}{d\lambda}m(\lambda)$, a simple indicator of non-monotonic / band-pass-like behaviour. We report the diagnostic statistics in \autoref{tab:spectral-response-summary}, aggregated per dataset over the entire grid. In \autoref{fig:spectral-response-examples}, we show representative learned responses from our grid of 4 datasets.

\paragraph{Findings: Heterophilic vs.\ Homophilic Spectral Patterns.}
Across the full grid (all models, $L$, $d$, and seeds), we observe systematic differences between heterophilic and homophilic datasets. 
More in particular: 

\textit{(1) Homophilic datasets exhibit Stronger Low--High Spectral Separation in the learned response.}
Using $\Delta G$ as a robust separation proxy, homophilic graphs show substantially larger separation. Indeed, in average, we have that \textit{Heterophilic} dataset spectral separation is $\Delta G = 0.45\pm 0.14$, while \textit{Homophilic} one is: $\Delta G =  1.05\pm 0.34$. The reason for that is that the learned filter in homophilic regimes tends to implement a more pronounced ``contrast'' between different parts of the spectrum. In heterophilic regimes, the separation is milder, consistent with the need to balance information at multiple neighbourhood radii rather than primarily emphasising a single spectral extreme.

\textit{(2) Low-frequency Gain differs sharply between Regimes.}
A striking pattern is that $G_{\text{low}}$ is \emph{never} positive on \textsc{PubMed}/\textsc{CiteSeer} in our grid, while it is frequently positive on heterophilic graphs (the fraction of runs with $G_{\text{low}}>0$ is indeed \;\;38.8\% (heterophilic) vs.\ 0.0\% (homophilic)). This suggests that \text{heterophilic} datasets more often induce filters that \emph{retain or amplify} a portion of low-frequency content, whereas homophilic datasets consistently learn a response that suppresses the lowest end of the spectrum and relies more heavily on the interaction between the polynomial component and the HP reinjection.

\textit{(3) High-pass Reinjection is markedly Stronger on Homophilic datasets.} The learned $\alpha_{\mathrm{hp}}$ is consistently more negative (larger magnitude) on \textsc{PubMed} and \textsc{CiteSeer} than on \textsc{Chameleon} and \textsc{Squirrel}. Quantitatively, we have: \textit{Heterophilic} being $\alpha_{\mathrm{hp}} = -0.25\pm 0.08$ and \textit{Homophilic} equal to $\alpha_{\mathrm{hp}} = -0.98\pm 0.44$. This indicates that, in homophilic settings, the model uses more the HP skip to counteract oversmoothing and/or to preserve discriminative components that would otherwise be washed out by repeated diffusion.

\textit{(4) Heterophilic Graphs show more Consistently non-Monotone (Band-pass-like) responses.}
The learned curves are frequently non-monotone. The heterophilic runs show near-maximal non-monotonicity (\texttt{nonmonotone}: 2.00 (heterophilic) vs.\ 1.69 (homophilic) on average). Qualitatively, many heterophilic curves resemble mild band-pass shapes, aligning with the common observation that \textit{heterophily benefits from} more \textit{selective, higher-order spectral control} rather than repeated smoothing.

\paragraph{Spectral response evolving with depth $L$: earned responses are less extreme as layers are stacked.}
In \autoref{tab:spectral-response-depth}, we aggregate the
diagnostics per regime as a function of $L$ (averaging across transport classes, $d\in\{2,3,4\}$ and seeds). Here, increasing $L$: (i) makes $\alpha_{\mathrm{hp}}$ \emph{less negative} (smaller
magnitude) in both regimes, and (ii) reduces the low--high
separation $\Delta G$, especially on homophilic datasets. This indicates that depth partly substitutes for explicit HP reinjection and strong spectral separation: as more PolyNSD layers are stacked, the learned response becomes less ``extreme'' at the ends of the spectrum, while heterophilic runs remain non-monotone for all $L$.

\begin{table}[t]
\centering
\small
\setlength{\tabcolsep}{6pt}
\caption{\textit{Network depth $L$ effect on the learned spectral diagnostics.}
We report mean$\pm$std across all models, stalk dimensions $d\in\{2,3,4\}$ and seeds (3)
for heterophilic (\textsc{Chameleon}, \textsc{Squirrel}) vs.\ homophilic
(\textsc{CiteSeer}, \textsc{PubMed}) datasets. }
\label{tab:spectral-response-depth}
\begin{tabular}{c|cc|cc|cc}
\toprule
 & \multicolumn{2}{c|}{$\alpha_{\mathrm{hp}}$} & \multicolumn{2}{c|}{$\Delta G$} & \multicolumn{2}{c}{$\#\,$sign-changes}\\
$L$ & Hetero & Homo & Hetero & Homo & Hetero & Homo\\
\midrule
2 & -0.29$\pm$0.09 & -1.48$\pm$0.17 & 0.50$\pm$0.15 & 1.41$\pm$0.30 & 2.00$\pm$0.00 & 1.49$\pm$0.57\\
3 & -0.26$\pm$0.09 & -0.95$\pm$0.32 & 0.46$\pm$0.15 & 0.96$\pm$0.31 & 2.00$\pm$0.00 & 1.74$\pm$0.51\\
4 & -0.22$\pm$0.09 & -0.59$\pm$0.34 & 0.36$\pm$0.10 & 0.77$\pm$0.28 & 2.00$\pm$0.00 & 1.67$\pm$0.62\\
\bottomrule
\end{tabular}
\end{table}

\paragraph{Spectral response evolving with stalk dimension $d$: reduction of magnitude and separation.}
In \autoref{tab:spectral-response-d}, we aggregate the diagnostics as a function of $d$ (averaging across transport classes, depths $L\in\{2,3,4\}$ and seeds). On homophilic datasets, the diagnostic statistics are comparatively \emph{stable} across $d$, while for heterophilic datasets, instead, increasing $d$ tends to reduce both the magnitude of $\alpha_{\mathrm{hp}}$ and the separation $\Delta G$. This result
suggests a flatter (i.e., less contrastive) response when using wider stalks, indicating that, in heterophilic regimes, increasing fiber dimension can
act as an additional degree of freedom that partially compensates for the strong
spectral separation, whereas in homophilic regimes, the HP correction remains a
dominant and robust mechanism irrespective of $d$.

\begin{table}[t]
\centering
\small
\setlength{\tabcolsep}{6pt}
\caption{\textit{Stalk dimension $d$ effect on the learned spectral diagnostics.}
We report mean$\pm$std across all models, depths $L\in\{2,3,4\}$ and seeds (3).}
\label{tab:spectral-response-d}
\begin{tabular}{c|cc|cc|cc}
\toprule
 & \multicolumn{2}{c|}{$\alpha_{\mathrm{hp}}$} & \multicolumn{2}{c|}{$\Delta G$} & \multicolumn{2}{c}{$\#\,$sign-changes}\\
$d$ & Hetero & Homo & Hetero & Homo & Hetero & Homo\\
\midrule
2 & -0.30$\pm$0.10 & -1.03$\pm$0.49 & 0.49$\pm$0.15 & 1.06$\pm$0.33 & 2.00$\pm$0.00 & 1.76$\pm$0.44\\
3 & -0.27$\pm$0.09 & -0.96$\pm$0.38 & 0.50$\pm$0.16 & 1.04$\pm$0.35 & 2.00$\pm$0.00 & 1.57$\pm$0.63\\
4 & -0.20$\pm$0.07 & -1.03$\pm$0.46 & 0.33$\pm$0.09 & 1.05$\pm$0.37 & 2.00$\pm$0.00 & 1.56$\pm$0.65\\
\bottomrule
\end{tabular}
\end{table}

\begin{figure}[t]
\centering
\caption{\textit{Spectral-response plots}. Representative spectral response got for the 4 datasets, including the combined response $m(\lambda)$, the polynomial component and the HP correction term.}
\begin{minipage}{0.3\linewidth}
  \centering
  \includegraphics[width=\linewidth]{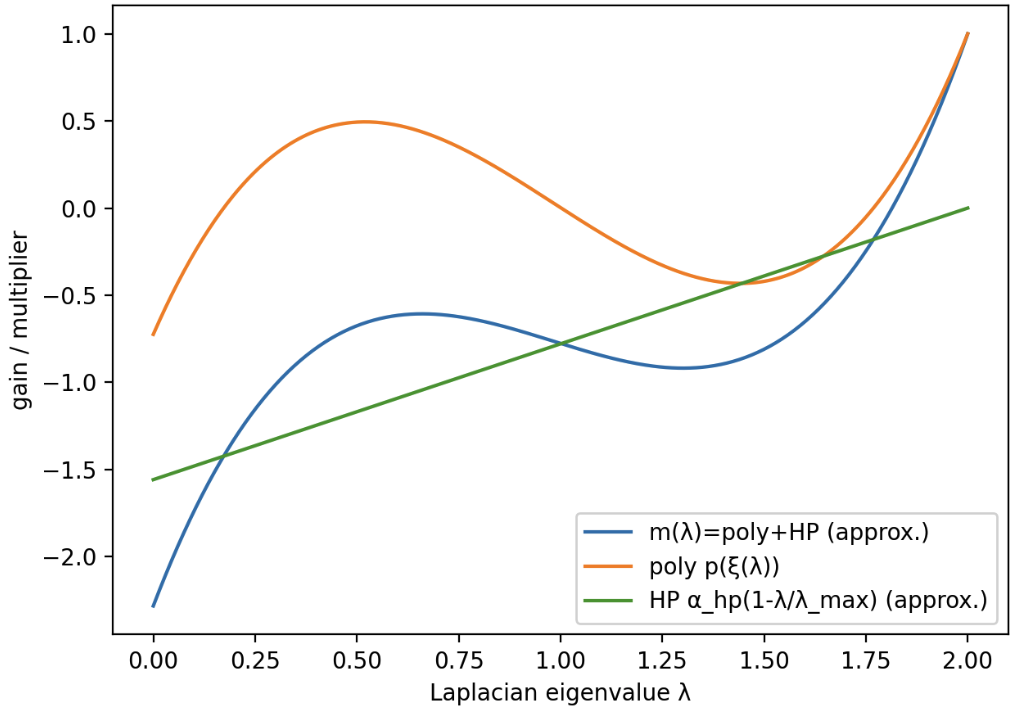}
  \vspace{-1mm}
  {\scriptsize \textsc{PubMed} (homophilic), Bundle, $d=4$, $L=2$.}
\end{minipage}
\begin{minipage}{0.3\linewidth}
  \centering
  \includegraphics[width=\linewidth]{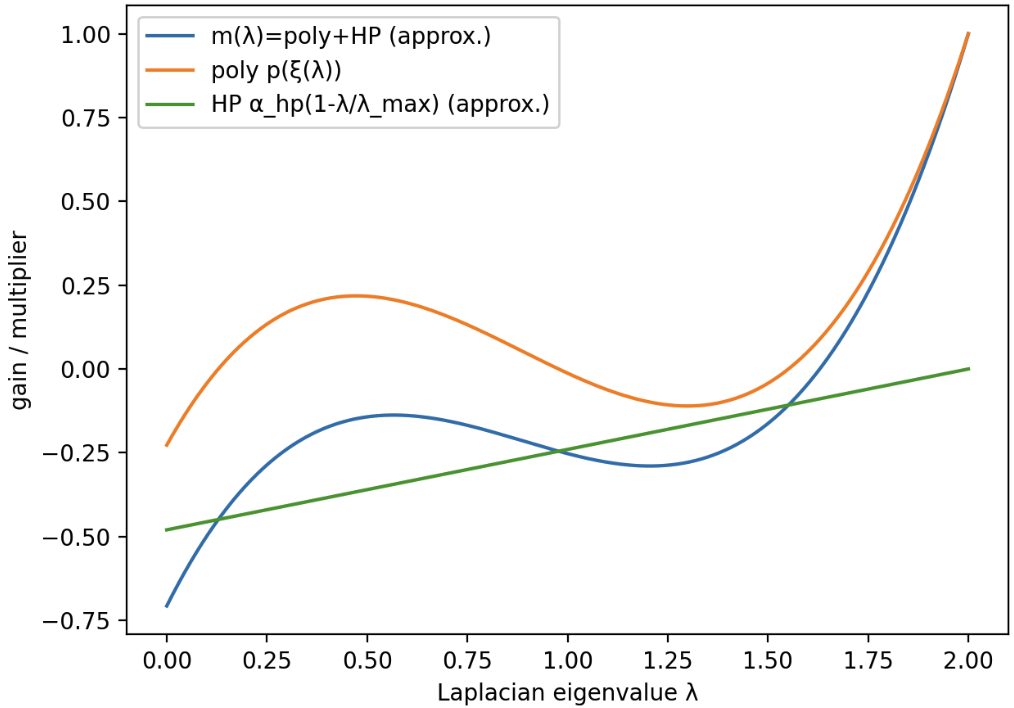}
  \vspace{-1mm}
  {\scriptsize \textsc{CiteSeer} (homophilic), Bundle, $d=2$, $L=4$.}
\end{minipage}

\vspace{2mm}

\begin{minipage}{0.3\linewidth}
  \centering
  \includegraphics[width=\linewidth]{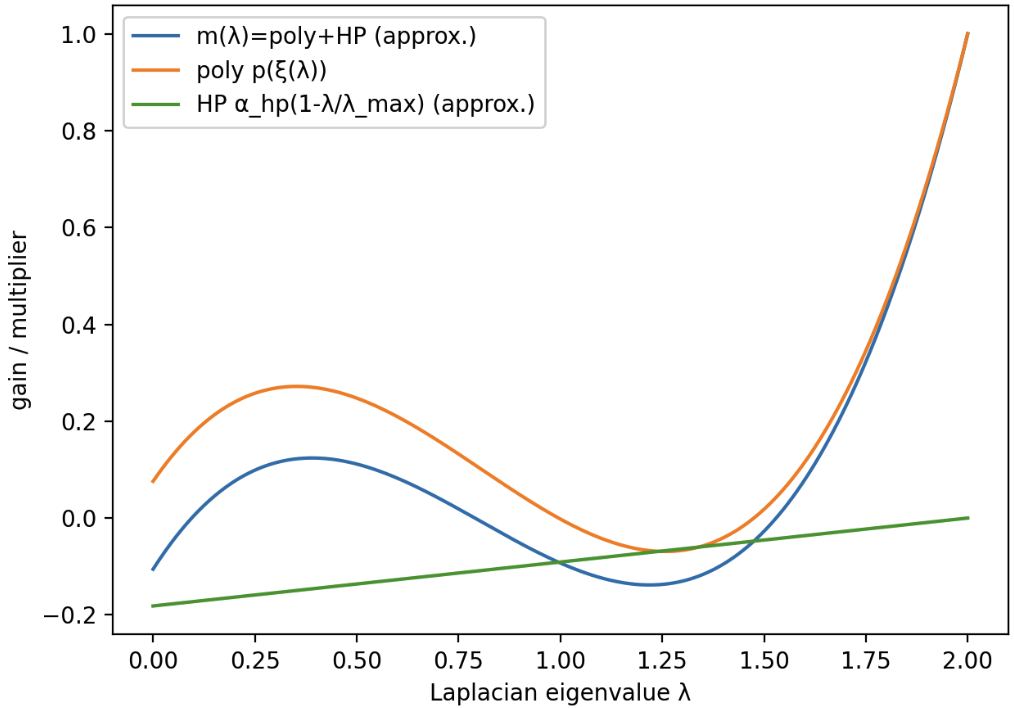}
  \vspace{-1mm}
  {\scriptsize \textsc{Chameleon} (heterophilic), Diagonal, $d=4$, $L=4$.}
\end{minipage}
\begin{minipage}{0.3\linewidth}
  \centering
  \includegraphics[width=\linewidth]{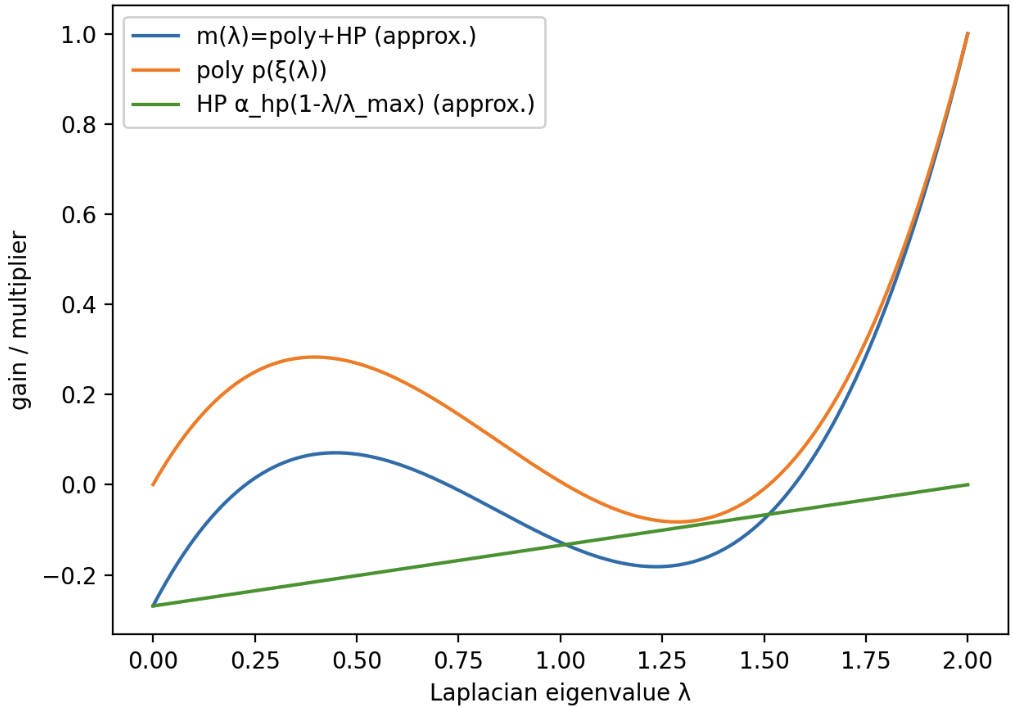}
  \vspace{-1mm}
  {\scriptsize \textsc{Squirrel} (heterophilic), Bundle, $d=2$, $L=3$.}
\end{minipage}
\label{fig:spectral-response-examples}
\end{figure}

\subsection{Continuous-Time PolyNSD via Neural Sheaf ODEs}
\label{app:ode-polynsd}

To confirm the model-agnostic claim, we extend the PolyNSD from discrete-time layers to \emph{continuous-time} formulation in which sheaf diffusion is parameterised as a neural ordinary differential equation (ODE), following the Neural Sheaf Diffusion (NSD) model of \cite{bodnar2022neural}. At the level of node features $X(t) \in \mathbb{R}^{nd \times f}$, NSD defines a time-evolving sheaf $(G,\mathcal{F}(t))$ and a diffusion-type ODE of the form: $\dot{X}(t)
  \;=\;
  -\,\sigma\!\big(
    \Delta_{\mathcal{F}(t)} \, (I_n \otimes W_1)\, X(t)\, W_2
  \big)$, 
where $\Delta_{\mathcal{F}(t)}$ is the (normalised) sheaf Laplacian at time $t$, $W_1, W_2$ are trainable weight matrices, and $\sigma$ is a (typically $1$-Lipschitz) nonlinearity. The restriction/transport maps and, hence, $\Delta_{\mathcal{F}(t)}$ are obtained from the current features via a learnable parametrisation $(G,\mathcal{F}(t)) = g(G,X(t);\theta)$, so that both the geometry and the diffusion field evolve over time.

\paragraph{From Discrete PolyNSD to PolyNSD ODEs.}
Discrete PolyNSD layers replace the one-step NSD diffusion core $aI + bL$ by a degree-$K$ spectral polynomial $q_\theta(L)$ in a (normalised or unnormalised) sheaf Laplacian $L$, evaluated via a three-term recurrence on a spectrally rescaled operator $L_{\mathrm{e}}$, and augmented with residual and high-pass terms (Section~\ref{sec:polysd}). In the continuous-time setting, we use the same sheaf prediction mechanism and polynomial filter to define the \emph{infinitesimal} vector field of a neural sheaf ODE:
\begin{equation}
  \dot{X}(t)
  \;=\;
  f_\theta\big(G,\mathcal{F}(t), X(t)\big)
  \;:=\;
  -\,\sigma\!\big(
    q_\theta(\Delta_{\mathcal{F}(t)}) \, (I_n \otimes W_1)\, X(t)\, W_2
  \big),
  \label{eq:polynsd-ode}
\end{equation}
where $q_\theta$ is a degree-$K$ polynomial in one variable (e.g.\ a convex combination of Chebyshev polynomials evaluated at $L_{\mathrm{e}}$). For fixed sheaf structure, linear activations and time-independent $q_\theta$, the ODE $\dot{X}(t) \;=\; -\,q_\theta(\Delta_{\mathcal{F}})\,X(t)$ has the closed-form solution $X(T) = \exp(-T\,q_\theta(\Delta_{\mathcal{F}}))\,X(0)$, so that each eigenmode with Laplacian eigenvalue $\lambda$ is multiplied at time $T$ by $\exp(-T\,q_\theta(\lambda))$. In this sense, continuous-time PolyNSD realises a \emph{continuous-depth} sheaf diffusion whose spectral response is given by an exponential of a learnable polynomial in the sheaf spectrum, generalising the NSD case where $q_\theta(\lambda)$ is effectively affine in~$\lambda$.

\paragraph{Continuous ODE Solvers.}
In practice, we work with the same three transport classes as in the discrete setting, yielding the continuous families \emph{Cont DiagChebySD}, \emph{Cont BundleChebySD} and \emph{Cont GeneralChebySD}. At each solver step, we: (i) use the current features $X(t)$ to predict edge-wise restriction maps via a parametric function $\Phi_\theta$, yielding a sheaf $(G,\mathcal{F}(t))$ and its Laplacian $\Delta_{\mathcal{F}(t)}$, (ii) construct the spectrally rescaled Laplacian $L_{\mathrm{e}}(t)$ and evaluate the polynomial $q_\theta(L_{\mathrm{e}}(t))$ via a stable three-term recurrence (with $K$ sparse matrix--vector products), (iii) apply the diffusion field $f_\theta$ in \eqref{eq:polynsd-ode} and advance the state with a fixed-step ODE solver (integrating over a time horizon $t \in [0,T]$ using a simple explicit method like the forward Euler or low-order Runge--Kutta.

\paragraph{Depth, Spectral Control, and the Impact of PolyNSD in Continuous Time.}
From a modelling perspective, the triple $(K, T, N_{\text{steps}})$ jointly controls the effective depth and spectral behaviour of the continuous-time dynamics. The polynomial degree $K$ determines the class of spectral generators $q_\theta(\lambda)$ that can be realised: for $K=1$ we recover NSD-type generators $a + b\lambda$, whereas $K>1$ allows higher-order shaping of the sheaf spectrum. The integration horizon $T$ scales the exponential factor $\exp(-T\,q_\theta(\lambda))$, thus modulating how quickly different frequencies are damped or preserved, and the numerical step size $\Delta t$ trades off solver stability and approximation accuracy.

\paragraph{Findings: PolyNSD continuous-time implementation reaches s.o.t.a. results.}
\autoref{tab:nodecls-homophily} shows that introducing polynomial spectral control in the \emph{continuous-time} setting is consistently beneficial, especially in heterophilic regimes, with the three proposed continuous PolyNSD variants (\textit{Cont DiagChebySD}, \textit{Cont BundleChebySD}, \textit{Cont GeneralChebySD}) achieving \emph{top-three} performance on most benchmarks. Compared to continuous NSD baselines, the improvements can be substantial in the most heterogeneous cases and this supports the interpretation that higher-order spectral generators $q_\theta(\lambda)$ are particularly effective when the task requires mixing information across multiple neighbourhood radii rather than primarily smoothing. As for the discrete-time case, no single transport class dominates uniformly across all datasets, making the \textit{diagonal} variant the preferred one once again, thanks to the important implicit savings it brings. This suggests that polynomial spectral control is complementary to the choice of transport class: the polynomial improves the \emph{spectral expressivity} of the dynamics, while the transport class controls the \emph{geometry} of information propagation.

\subsection{Alternative Orthogonal Polynomial Bases Used in PolyNSD}
\label{app:orthogonal-bases}

Our Polynomial Neural Sheaf Diffusion (PolyNSD) layers apply a spectral filter to a (sheaf) Laplacian \(\mathbf{L}\) via an orthogonal polynomial basis on the rescaled operator $\tilde{\mathbf{L}}$. Given a feature vector \(x\in\mathbb{R}^{N d}\) (stacking fibers) and a family of orthogonal polynomials \(\{\Phi_k\}_{k\ge 0}\) on \([-1,1]\), we define then the filter: $p(\tilde{\mathbf{L}})\,x
  \;=\;
  \sum_{k=0}^{K} \alpha_k \,\Phi_k(\tilde{\mathbf{L}})\,x,  \alpha_k = \mathrm{softmax}(\theta)_k$, where \(\theta\in\mathbb{R}^{K+1}\) are learnable logits and \(\alpha_k\) form a convex mixture of the basis responses.  All bases are implemented via stable three–term recurrences
directly on vectors \(v\mapsto \Phi_k(\tilde{\mathbf{L}})v\). We therefore experiment over multiple solutions in here, as listed in \autoref{tab:poly-bases-recurrences}.

\begin{table}[htb]
\centering
\caption{\emph{Polynomial bases recurrences evaluated on the rescaled Laplacian $\tilde{\mathbf{L}}$.}}
\label{tab:poly-bases-recurrences}

\resizebox{0.80\linewidth}{!}{%
\begin{tabular}{p{3.6cm} p{5.3cm} p{7.2cm}}
\hline
\textbf{Basis} & \textbf{Initialisation} & \textbf{Three--term recurrence ($k\ge 1$)} \\
\hline

Chebyshev Type I $T_k$ &
$T_0(\tilde{\mathbf{L}})x = x,\quad T_1(\tilde{\mathbf{L}})x = \tilde{\mathbf{L}}x$ &
$T_{k+1}(\tilde{\mathbf{L}})x
= 2\,\tilde{\mathbf{L}}\,T_k(\tilde{\mathbf{L}})x
- T_{k-1}(\tilde{\mathbf{L}})x$ \\

Chebyshev Type II $U_k$ &
$U_0(\tilde{\mathbf{L}})x = x,\quad U_1(\tilde{\mathbf{L}})x = 2\,\tilde{\mathbf{L}}x$ &
$U_{k+1}(\tilde{\mathbf{L}})x
= 2\,\tilde{\mathbf{L}}\,U_k(\tilde{\mathbf{L}})x
- U_{k-1}(\tilde{\mathbf{L}})x$ \\

Chebyshev Type III $V_k$ &
$V_0(\tilde{\mathbf{L}})x = x,\quad
V_1(\tilde{\mathbf{L}})x = (2\,\tilde{\mathbf{L}}-\mathbf{I})x$ &
$V_{k+1}(\tilde{\mathbf{L}})x
= 2\,\tilde{\mathbf{L}}\,V_k(\tilde{\mathbf{L}})x
- V_{k-1}(\tilde{\mathbf{L}})x$ \\

Chebyshev Type IV $W_k$ &
$W_0(\tilde{\mathbf{L}})x = x,\quad
W_1(\tilde{\mathbf{L}})x = (2\,\tilde{\mathbf{L}}+\mathbf{I})x$ &
$W_{k+1}(\tilde{\mathbf{L}})x
= 2\,\tilde{\mathbf{L}}\,W_k(\tilde{\mathbf{L}})x
- W_{k-1}(\tilde{\mathbf{L}})x$ \\

Legendre $P_k$ &
$P_0(\tilde{\mathbf{L}})x = x,\quad P_1(\tilde{\mathbf{L}})x = \tilde{\mathbf{L}}x$ &
$P_{k+1}(\tilde{\mathbf{L}})x
= \frac{2k+1}{k+1}\,\tilde{\mathbf{L}}\,P_k(\tilde{\mathbf{L}})x
- \frac{k}{k+1}\,P_{k-1}(\tilde{\mathbf{L}})x$ \\

Gegenbauer $C_k^{(\lambda)}$ ($\lambda>0$) &
$C_0^{(\lambda)}(\tilde{\mathbf{L}})x = x,\quad
C_1^{(\lambda)}(\tilde{\mathbf{L}})x = 2\lambda\,\tilde{\mathbf{L}}x$ &
$C_{k+1}^{(\lambda)}(\tilde{\mathbf{L}})x
= \frac{2(k+\lambda)}{k+1}\,\tilde{\mathbf{L}}\,C_k^{(\lambda)}(\tilde{\mathbf{L}})x
- \frac{k+2\lambda-1}{k+1}\,C_{k-1}^{(\lambda)}(\tilde{\mathbf{L}})x$ \\

Jacobi $P_k^{(\alpha,\beta)}$ ($\alpha,\beta>-1$) &
$\begin{aligned}[t]
P_0^{(\alpha,\beta)}(\tilde{\mathbf{L}})x &= x,\\
P_1^{(\alpha,\beta)}(\tilde{\mathbf{L}})x &= c_1\,\tilde{\mathbf{L}}x + c_0\,x
\end{aligned}$ &
$\begin{aligned}[t]
P_{k+1}^{(\alpha,\beta)}(\tilde{\mathbf{L}})x
&= A_k\,\tilde{\mathbf{L}}\,P_k^{(\alpha,\beta)}(\tilde{\mathbf{L}})x
 + B_k\,P_k^{(\alpha,\beta)}(\tilde{\mathbf{L}})x \\
&\qquad - C_k\,P_{k-1}^{(\alpha,\beta)}(\tilde{\mathbf{L}})x,
\quad \text{with:}\\[0.2em]
&\hspace{-1.5em} A_k = \frac{2(k+1)(k+\alpha+\beta+1)}
{(2k+\alpha+\beta+1)(2k+\alpha+\beta+2)}\\
&\hspace{-1.5em} B_k = \frac{\beta^2-\alpha^2}
{(2k+\alpha+\beta)(2k+\alpha+\beta+2)}\\
&\hspace{-1.5em} C_k = \frac{2(k+\alpha)(k+\beta)}
{(2k+\alpha+\beta)(2k+\alpha+\beta+1)}
\end{aligned}$\\
\\
\hline
\end{tabular}}
\end{table}

\paragraph{Findings: Alternative orthogonal bases yield consistent performance, with Chebyshev as a strong default.}
\autoref{tab:big-leaderboard-enhanced} show that PolyNSD is \emph{robust} to the specific choice of orthogonal polynomial basis: across transport classes (Diagonal / Bundle / General) and datasets spanning the homophily--heterophily spectrum, most bases attain performances that are close to the Chebyshev Type~I default, confirming that the main gains come from \emph{higher-order spectral control} rather than from a single privileged basis. These results confirm that PolyNSD is \emph{basis-agnostic} and the key ingredient is the ability to learn a bounded degree-$K$ spectral multiplier on the rescaled sheaf Laplacian via stable recurrences. Chebyshev Type~I offers the most consistent default, but alternative orthogonal bases provide a viable design space for tailoring approximation properties without changing the PolyNSD layer structure.

\begin{table}[H]
  \centering
  \caption{\textit{Extended PolyNSD discrete node classification benchmark results.} We report the accuracy $\pm$ stdev on node classification datasets, ordered by increasing homophily. Our techniques are denoted in \textbf{bold}. The first section includes Sheaf Neural Networks models, while the second includes other GNN models. The top three models for each dataset are coloured by \textcolor{Top1}{\textbf{First}}, \textcolor{Top2}{\textbf{Second}} and \textcolor{Top3}{\textbf{Third}}, respectively. 
}
  \label{tab:big-leaderboard-enhanced}
  \setlength{\tabcolsep}{4pt}
  \renewcommand{\arraystretch}{1.12}
  \begin{adjustbox}{max width=0.8\textwidth}
  \begin{tabular}{lccccccccc}
    \toprule
     & \textbf{Texas} & \textbf{Wisconsin} & \textbf{Film} & \textbf{Squirrel} & \textbf{Chameleon} & \textbf{Cornell} & \textbf{Citeseer} & \textbf{Pubmed} & \textbf{Cora}\\
    \midrule
    \textit{Homophily level} & \textbf{0.11} & \textbf{0.21} & \textbf{0.22} & \textbf{0.22} & \textbf{0.23} & \textbf{0.30} & \textbf{0.74} & \textbf{0.80} & \textbf{0.81} \\
    \#Nodes  & 183    & 251  & 7{,}600 & 5{,}201 & 2{,}277 & 183   & 3{,}327 & 18{,}717 & 2{,}708 \\
    \#Edges  & 295    & 466  & 26{,}752 & 198{,}493 & 31{,}421 & 280   & 4{,}676 & 44{,}327 & 5{,}278 \\
    \#Classes& 5      & 5    & 5     & 5      & 5      & 5     & 7     & 3      & 6 \\
    \midrule
    \textbf{DiagChebyT1SD}   &\pmstd{88.68}{3.12}& \pmstd{87.84}{4.45} & \pmstd{37.14}{1.26} & \textcolor{Top1}{\textbf{\pmstd{56.61}{2.06}}} & \pmstd{70.41}{2.47} & \textcolor{Top2}{\textbf{\pmstd{86.49}{5.54}}} & \textcolor{Top2}{\textbf{\pmstd{77.74}{1.26}}} & \pmstd{89.67}{0.34} & \textcolor{Top2}{\textbf{\pmstd{88.67}{1.29}}} \\
    \textbf{BundleChebyT1SD} & \textcolor{Top3}{\textbf{\pmstd{89.74}{5.32}}} & \pmstd{87.65}{3.29} & \pmstd{37.47}{0.86} & \pmstd{54.33}{2.67} & \pmstd{69.29}{1.88} & \pmstd{85.40}{7.94} & \pmstd{77.57}{1.55} & \textcolor{Top3}{\textbf{\pmstd{89.75}{0.34}}} & \pmstd{88.12}{1.35} \\
    \textbf{GeneralChebyT1SD}& \pmstd{88.94}{4.53} & \pmstd{88.23}{4.56} & \pmstd{37.20}{0.77} & \pmstd{53.88}{1.65} & \pmstd{67.34}{2.45} & \textcolor{Top3}{\textbf{\pmstd{86.49}{5.80}}} & \pmstd{77.10}{1.30} & \pmstd{89.73}{0.41} & \pmstd{88.47}{1.19} \\
    \textbf{DiagChebyT2SD}   & \pmstd{88.68}{5.27} & \pmstd{88.23}{3.40} & \pmstd{36.98}{0.71} & \pmstd{51.20}{3.16} & \pmstd{71.01}{2.57} & \pmstd{85.95}{5.38} & \pmstd{77.42}{1.80} & \pmstd{89.60}{0.44} & \pmstd{88.49}{1.34}\\
    \textbf{BundleChebyT2SD}   & \pmstd{88.42}{6.25} & \pmstd{88.23}{3.28} & \pmstd{37.14}{1.17} & \pmstd{52.30}{2.01} & \pmstd{67.67}{1.80} & \pmstd{86.22}{7.69} & \pmstd{77.56}{1.66} & \pmstd{89.57}{0.54} & \pmstd{88.17}{1.20}\\ \textbf{GeneralChebyT2SD}   & \pmstd{88.42}{4.74} & \pmstd{87.84}{3.90} & \pmstd{37.00}{1.03} & \pmstd{51.34}{1.35} & \pmstd{67.00}{2.38} & \pmstd{85.68}{4.53} & \pmstd{77.21}{1.58} & \pmstd{89.62}{0.28} & \pmstd{88.03}{1.11} \\
    \textbf{DiagChebyT3SD}   & \pmstd{88.68}{7.07} & \pmstd{87.45}{4.74} & \pmstd{36.99}{1.13} & \pmstd{53.28}{1.71} & \pmstd{70.09}{2.75} & \pmstd{85.40}{5.16} & \pmstd{77.43}{1.54} & \pmstd{89.67}{0.37} & \pmstd{88.37}{1.40}\\
    \textbf{BundleChebyT3SD}   & \pmstd{88.95}{4.37} & \pmstd{88.43}{4.51} & \pmstd{37.31}{0.93} & \pmstd{55.48}{2.84} & \pmstd{67.70}{1.21} & \pmstd{85.95}{6.71} & \pmstd{77.31}{1.23} & \pmstd{89.62}{0.34} & \pmstd{88.05}{1.49}\\ \textbf{GeneralChebyT3SD}   & \pmstd{89.21}{5.05} & \pmstd{88.23}{3.16} & \pmstd{37.00}{1.06} & \textcolor{Top3}{\textbf{\pmstd{55.79}{2.52}}} & \pmstd{66.49}{1.94} & \pmstd{84.59}{5.80} & \pmstd{77.03}{1.33} & \pmstd{89.51}{0.32} & \pmstd{87.99}{1.56}\\
    \textbf{DiagChebyT4SD}   & \textcolor{Top1}{\textbf{\pmstd{90.00}{4.68}}} & \pmstd{88.04}{2.70} & \pmstd{37.31}{0.98} & \pmstd{54.27}{1.99} & \textcolor{Top1}{\textbf{\pmstd{71.45}{2.03}}} & \pmstd{85.68}{5.54} & \pmstd{77.50}{1.54} & \pmstd{89.64}{0.34} & \textcolor{Top1}{\textbf{\pmstd{88.79}{1.13}}}\\
    \textbf{BundleChebyT4SD}   & \pmstd{89.21}{4.15} & \pmstd{88.43}{2.97} & \pmstd{37.05}{1.12} & \pmstd{53.23}{1.52} & \pmstd{67.34}{1.49} & \pmstd{85.40}{6.42} & \pmstd{77.09}{1.76} & \pmstd{89.62}{0.35} & \pmstd{87.99}{1.21}\\ \textbf{GeneralChebyT4SD}   & \pmstd{88.16}{5.30} & \pmstd{88.04}{4.92} & \pmstd{37.31}{0.98} & \pmstd{53.60}{2.92} & \pmstd{69.62}{1.85} & \pmstd{84.86}{6.86} & \pmstd{76.58}{1.72} & \pmstd{89.63}{0.29} & \pmstd{87.54}{1.84}\\
    \textbf{DiagChebyInterpSD}   & \textcolor{Top2}{\textbf{\pmstd{89.74}{4.05}}} & \pmstd{87.45}{3.74} & \pmstd{36.89}{0.70} & \pmstd{54.45}{3.47} & \pmstd{68.46}{3.11} & \pmstd{85.67}{5.54} & \textcolor{Top3}{\textbf{\pmstd{77.60}{1.75}}} & \pmstd{89.70}{0.32} & \pmstd{88.63}{1.09}\\
    \textbf{BundleChebyInterpSD}   & \pmstd{88.95}{4.04} & \pmstd{88.04}{3.56} & \pmstd{36.92}{0.72} & \pmstd{55.02}{2.32} & \pmstd{70.02}{2.11} & \pmstd{86.21}{7.09} & \pmstd{77.34}{1.72} & \pmstd{89.66}{0.41} & \pmstd{88.33}{1.34}\\ \textbf{GeneralChebyInterpSD}   & \pmstd{88.43}{3.76} & \pmstd{87.84}{4.45} & \pmstd{37.23}{0.59} & \pmstd{52.75}{6.60} & \pmstd{67.37}{2.65} & \pmstd{85.95}{6.14} & \pmstd{77.03}{2.21} & \pmstd{89.72}{0.26} & \pmstd{88.11}{1.23}\\
    \textbf{DiagLegendreSD}   & \pmstd{89.47}{4.99} & \pmstd{88.43}{4.15} & \pmstd{37.20}{0.55} & \pmstd{51.88}{2.61} & \pmstd{70.35}{2.30} & \pmstd{85.40}{6.86} & \pmstd{77.53}{1.57} & \pmstd{89.70}{0.48} & \textcolor{Top3}{\textbf{\pmstd{88.65}{1.17}}} \\
    \textbf{BundleLegendreSD}   & \pmstd{88.95}{4.04} & \textcolor{Top1}{\textbf{\pmstd{89.41}{4.04}}} & \pmstd{37.29}{0.84} & \pmstd{51.65}{1.83} & \textcolor{Top2}{\textbf{\pmstd{71.18}{1.46}}} & \pmstd{84.86}{4.22} & \pmstd{77.49}{1.46} & \pmstd{89.61}{0.32} & \pmstd{88.27}{1.38} \\ \textbf{GeneralLegendreSD}   & \pmstd{89.21}{5.05} & \pmstd{88.63}{2.88} & \pmstd{37.34}{1.13} & \pmstd{51.97}{1.98} & \pmstd{68.95}{2.37} & \pmstd{85.40}{6.96} & \pmstd{76.85}{1.69} & \pmstd{87.72}{0.32} & \pmstd{87.89}{1.12} \\
    \textbf{DiagGegenbauerSD}   & \pmstd{89.21}{4.77} & \pmstd{87.65}{3.72} & \pmstd{37.06}{1.16} & \pmstd{52.97}{4.69} & \pmstd{70.39}{4.31} & \pmstd{85.68}{5.54} & \pmstd{77.23}{1.83} & \pmstd{89.61}{0.32} & \pmstd{88.63}{1.81}\\
    \textbf{BundleGegenbauerSD}   & \pmstd{88.95}{5.10} & \pmstd{87.65}{3.93} & \pmstd{36.99}{1.12} & \pmstd{49.81}{1.35} & \pmstd{69.87}{2.02} & \textcolor{Top1}{\textbf{\pmstd{86.76}{4.90}}} & \pmstd{77.43}{1.54} & \pmstd{89.61}{0.44} & \pmstd{88.33}{1.44}\\ \textbf{GeneralGegenbauerSD}   & \pmstd{88.68}{3.73} & \pmstd{88.04}{3.22} & \pmstd{37.20}{0.80} & \pmstd{49.47}{1.47} & \pmstd{66.25}{2.29} & \pmstd{85.13}{6.54} & \pmstd{77.18}{1.72} & \pmstd{89.67}{0.35} & \pmstd{87.85}{0.99}\\
    \textbf{DiagJacobiSD}   & \pmstd{88.68}{6.23} & \pmstd{88.63}{3.59} & \pmstd{37.03}{0.72} & \pmstd{52.30}{0.93} & \pmstd{66.14}{1.50} & \pmstd{86.22}{5.05} & \pmstd{77.35}{1.43} & \pmstd{89.09}{0.89} & \pmstd{88.53}{1.02}\\
    \textbf{BundleJacobiSD}   & \pmstd{88.95}{4.96} & \pmstd{88.82}{3.83} & \pmstd{37.18}{0.97} & \pmstd{55.76}{2.02} & \pmstd{68.22}{1.42} & \pmstd{85.68}{5.68} & \pmstd{76.97}{1.74} & \pmstd{89.72}{0.48} & \pmstd{88.13}{1.19}\\ \textbf{GeneralJacobiSD}   & \pmstd{88.68}{5.14} & \pmstd{88.82}{4.89} & \pmstd{37.03}{0.95} & \pmstd{52.30}{1.30} & \pmstd{66.62}{1.07} & \pmstd{85.68}{3.83} & \pmstd{76.80}{1.68} & \pmstd{89.53}{0.46} & \pmstd{87.66}{1.26}\\
    \textbf{PolySpectralGNN}& \pmstd{64.59}{6.40} & \pmstd{58.62}{6.04} & \pmstd{25.50}{0.85} & \pmstd{51.72}{1.76} & \pmstd{62.65}{3.03} & \pmstd{54.59}{6.26} & \pmstd{60.4}{0.89} & \pmstd{78.7}{0.56} & \pmstd{76.2}{0.67} \\
    \midrule
    ChebNet               &\pmstd{54.73}{7.04} & \pmstd{52.65}{4.43} & \pmstd{26.32}{6.46} & \pmstd{39.71}{1.25} & \pmstd{41.36}{2.45} & \pmstd{55.00}{6.45} & \pmstd{50.65}{2.09} & \pmstd{82.75}{4.65} & \pmstd{82.93}{3.76}\\
    ChebNetII             &\pmstd{68.42}{6.96} & \pmstd{64.76}{6.93} & \pmstd{32.39}{4.45} & \pmstd{47.06}{1.74} & \pmstd{63.23}{5.83} & \pmstd{80.40}{5.56} & \pmstd{72.00}{1.66} & \pmstd{85.87}{3.34} & \pmstd{85.35}{6.75} \\
    BernNet               &\pmstd{73.68}{6.00} & \pmstd{63.65}{6.76} & \pmstd{30.23}{6.34} & \pmstd{46.47}{1.47} & \pmstd{61.00}{2.34} & \pmstd{76.87}{5.26} & \pmstd{65.29}{5.55} & \pmstd{83.94}{3.83} & \pmstd{84.76}{4.75}\\
    \midrule
    RiSNN\,-\,NoT & \pmstd{87.89}{4.28} & \pmstd{88.04}{2.39} & \placeholder{N/A} & \pmstd{51.24}{1.71} & \pmstd{66.58}{1.81} & \pmstd{82.97}{6.17} & \pmstd{75.07}{1.56} & \pmstd{87.91}{0.55} & \pmstd{85.86}{1.31} \\
    RiSNN         & \pmstd{86.84}{3.72} & \pmstd{87.84}{2.60} & \placeholder{N/A} & \pmstd{53.30}{3.30} & \pmstd{65.15}{2.40} & \pmstd{85.95}{6.14} & \pmstd{76.23}{1.81} & \pmstd{88.00}{0.42} & \pmstd{85.27}{1.11} \\
    JdSNN\,-\,NoW & \pmstd{87.30}{4.53} & \pmstd{88.43}{2.83} & \placeholder{N/A} & \pmstd{51.28}{1.80} & \pmstd{66.45}{3.46} & \pmstd{84.59}{6.95} & \pmstd{75.93}{1.41} & \pmstd{88.09}{0.49} & \pmstd{84.39}{1.47} \\
    JdSNN         & \pmstd{87.37}{5.10} & \textcolor{Top3}{\textbf{\pmstd{89.22}{3.42}}} & \placeholder{N/A} & \pmstd{49.89}{1.71} & \pmstd{66.40}{2.33} & \pmstd{85.41}{4.55} & \pmstd{73.27}{1.86} & \pmstd{88.19}{0.55} & \pmstd{85.43}{1.73} \\
    Conn\,-\,NSD  & \pmstd{86.16}{2.24} & \pmstd{88.73}{4.47} & \textcolor{Top1}{\textbf{\pmstd{37.91}{1.28}}} & \pmstd{45.19}{1.57} & \pmstd{65.21}{2.04} & \pmstd{85.95}{7.72} & \pmstd{75.61}{1.93} & \pmstd{89.28}{0.38} & \pmstd{83.74}{2.19} \\
    SAN   & \pmstd{84.05}{5.33} & \pmstd{86.47}{3.87} & \pmstd{37.09}{1.18} & \pmstd{50.96}{1.40} & \pmstd{67.46}{1.90} & \pmstd{84.32}{5.64} & \pmstd{72.57}{1.50} & \pmstd{87.12}{0.30} & \pmstd{85.90}{1.85} \\
    ANSD  & \pmstd{85.68}{4.69} & \pmstd{87.45}{3.19} & \pmstd{37.66}{1.40} & \pmstd{54.39}{1.76} & \pmstd{68.38}{2.14} & \pmstd{84.59}{5.93} & \pmstd{76.81}{1.82} & \pmstd{89.21}{0.37} & \pmstd{87.20}{1.03} \\
    Diag\,-\,NSD & \pmstd{85.67}{6.95} & \pmstd{88.63}{2.75} & \pmstd{37.79}{1.01} & \pmstd{54.78}{1.81} & \pmstd{68.68}{1.73} & \pmstd{86.49}{7.35} & \pmstd{77.14}{1.85} & \pmstd{89.42}{0.43} & \pmstd{87.14}{1.06} \\
    O($d$)\,-\,NSD & \pmstd{85.95}{5.51} & \textcolor{Top2}{\textbf{\pmstd{89.41}{4.74}}} & \textcolor{Top2}{\textbf{\pmstd{37.81}{1.15}}} & \textcolor{Top2}{\textbf{\pmstd{56.34}{1.32}}} & \pmstd{68.04}{1.58} & \pmstd{84.86}{4.71} & \pmstd{76.70}{1.57} & \pmstd{89.49}{0.40} & \pmstd{86.90}{1.13} \\
    Gen\,-\,NSD  & \pmstd{82.97}{5.13} & \pmstd{89.21}{3.84} & \textcolor{Top3}{\textbf{\pmstd{37.80}{1.22}}} & \pmstd{53.17}{1.31} & \pmstd{67.93}{1.58} & \pmstd{85.68}{6.51} & \pmstd{76.32}{1.65} & \pmstd{89.33}{0.35} & \pmstd{87.30}{1.15} \\
    \specialrule{1pt}{1pt}{1pt} 
    GGCN      & \pmstd{84.86}{4.55} & \pmstd{86.86}{3.29} & \pmstd{37.54}{1.56} & \pmstd{55.17}{1.58} & \textcolor{Top3}{\textbf{\pmstd{71.14}{1.84}}} & \pmstd{85.68}{6.63} & \pmstd{77.14}{1.45} & \pmstd{89.15}{0.37} & \pmstd{87.95}{1.05} \\
    H2GCN     & \pmstd{84.86}{7.23} & \pmstd{87.65}{4.98} & \pmstd{35.70}{1.00} & \pmstd{36.48}{1.86} & \pmstd{60.11}{2.15} & \pmstd{82.70}{5.28} & \pmstd{77.11}{1.57} & \pmstd{89.49}{0.38} & \pmstd{87.87}{1.20} \\
    GPRGNN    & \pmstd{78.38}{4.36} & \pmstd{82.94}{4.21} & \pmstd{34.63}{1.22} & \pmstd{31.61}{1.24} & \pmstd{46.58}{1.71} & \pmstd{80.27}{8.11} & \pmstd{77.13}{1.67} & \pmstd{87.54}{0.38} & \pmstd{87.95}{1.18} \\
    FAGCN     & \pmstd{82.43}{6.89} & \pmstd{82.94}{7.95} & \pmstd{34.87}{1.25} & \pmstd{42.59}{0.79} & \pmstd{55.22}{3.19} & \pmstd{79.19}{9.79} & \placeholder{N/A} & \placeholder{N/A} & \placeholder{N/A} \\
    MixHop    & \pmstd{77.84}{7.73} & \pmstd{75.88}{4.90} & \pmstd{32.22}{2.34} & \pmstd{48.30}{1.48} & \pmstd{60.50}{2.53} & \pmstd{73.51}{6.34} & \pmstd{76.26}{1.33} & \pmstd{85.31}{0.61} & \pmstd{87.61}{0.85} \\
    GCNII     & \pmstd{77.57}{3.83} & \pmstd{80.39}{3.40} & \pmstd{37.44}{1.30} & \pmstd{38.47}{1.58} & \pmstd{63.86}{3.04} & \pmstd{77.86}{3.79} & \pmstd{77.33}{1.48} & \textcolor{Top1}{\textbf{\pmstd{90.15}{0.43}}} & \pmstd{88.37}{1.25} \\
    Geom\,-\,GCN & \pmstd{66.76}{2.72} & \pmstd{64.51}{3.66} & \pmstd{31.59}{1.15} & \pmstd{38.15}{0.92} & \pmstd{60.00}{2.81} & \pmstd{60.54}{3.67} & \textcolor{Top1}{\textbf{\pmstd{78.02}{1.15}}} & \textcolor{Top2}{\textbf{\pmstd{89.95}{0.47}}} & \pmstd{85.35}{1.57} \\
    PairNorm  & \pmstd{60.27}{4.34} & \pmstd{48.43}{6.14} & \pmstd{27.40}{1.24} & \pmstd{50.44}{2.04} & \pmstd{62.74}{2.82} & \pmstd{58.92}{3.15} & \pmstd{73.59}{1.47} & \pmstd{87.53}{0.44} & \pmstd{85.79}{1.01} \\
    GraphSAGE& \pmstd{82.43}{6.14} & \pmstd{81.18}{5.56} & \pmstd{34.23}{0.99} & \pmstd{41.61}{0.74} & \pmstd{58.73}{1.68} & \pmstd{75.95}{5.01} & \pmstd{76.04}{1.30} & \pmstd{88.45}{0.50} & \pmstd{86.90}{1.04} \\
    GCN       & \pmstd{55.14}{5.16} & \pmstd{51.76}{3.06} & \pmstd{27.32}{1.10} & \pmstd{53.43}{2.01} & \pmstd{64.82}{2.24} & \pmstd{60.54}{5.30} & \pmstd{76.50}{1.36} & \pmstd{88.42}{0.50} & \pmstd{86.98}{1.27} \\
    GAT       & \pmstd{52.16}{6.63} & \pmstd{49.41}{4.09} & \pmstd{27.44}{0.89} & \pmstd{40.72}{1.55} & \pmstd{60.26}{2.50} & \pmstd{61.89}{5.05} & \pmstd{76.55}{1.23} & \pmstd{87.30}{1.10} & \pmstd{86.33}{0.48} \\
    MLP       & \pmstd{80.81}{4.75} & \pmstd{85.29}{3.31} & \pmstd{36.53}{0.70} & \pmstd{28.77}{1.56} & \pmstd{46.21}{2.99} & \pmstd{81.89}{6.40} & \pmstd{74.02}{1.90} & \pmstd{75.69}{2.00} & \pmstd{87.16}{0.37} \\
    \bottomrule
  \end{tabular}
  \end{adjustbox}
\end{table}

\subsection{Filtered and new Heterophily Benchmarks}
\label{app:filtered_heterophily_benchmarks}
We further evaluate our models on the filtered versions of the heterophilic
\textsc{Chameleon} and \textsc{Squirrel} datasets introduced by
Platonov et al.~\cite{platonov2023critical}. These datasets were proposed after
a re-examination of the commonly used Wikipedia heterophily benchmarks, where
Platonov et al. observed that the original \textsc{Chameleon} and
\textsc{Squirrel} graphs contain a large number of duplicate nodes. We include
these datasets to assess whether the performance of PolyNSD remains stable when the potential leakage present in the original benchmarks is removed. We retain
the $48\%/32\%/20\%$ train/validation/test split protocol used throughout the
Sheaf Neural Network evaluation setting, to keep the comparison
consistent with the rest of our experiments. 

\noindent\textbf{Findings: PolyNSD outperforms baselines in Filtered Wikipedia Datasets.} The results are reported in \Cref{tab:filtered_heterophily_results}. We compare PolyNSD against first-order Neural Sheaf Diffusion variants, standard GNNs, heterophily-specific architectures, and spectral baselines. Across both filtered datasets, PolyNSD variants obtain the strongest results among the considered models. In particular, PolyNSD improves over the corresponding NSD variants on both \textsc{Chameleon-Filtered} and \textsc{Squirrel-Filtered}, showing that the benefit of polynomial sheaf diffusion is preserved in this stricter benchmark. 
\begin{table}[t]
    \centering
    \scriptsize
    \caption{Results on the filtered versions of the heterophilic
    \textsc{Chameleon} and \textsc{Squirrel} datasets by Platonov et al.~\cite{platonov2023critical}. We also include standard GNNs,
    heterophily-specific models, and spectral baselines. Higher is better.
    The top three models for each dataset are coloured by
    \textcolor{Top1}{\textbf{First}},
    \textcolor{Top2}{\textbf{Second}} and
    \textcolor{Top3}{\textbf{Third}}, respectively.}
    \label{tab:filtered_heterophily_results}
    \begin{tabular}{lcc}
        \toprule
        & \texttt{Chameleon-Filtered} & \texttt{Squirrel-Filtered} \\
        \midrule
        ResNet              & $36.73 \pm 4.71$ & $36.55 \pm 1.82$ \\
        ResNet+SGC          & $41.01 \pm 4.54$ & $38.36 \pm 1.97$ \\
        ResNet+adj          & $38.67 \pm 3.87$ & $38.37 \pm 1.99$ \\
        GCN                 & $40.89 \pm 4.12$ & $39.47 \pm 1.47$ \\
        SAGE                & $37.77 \pm 4.14$ & $36.09 \pm 1.99$ \\
        GAT                 & $39.21 \pm 3.08$ & $35.62 \pm 2.06$ \\
        GAT-sep             & $39.26 \pm 2.50$ & $35.46 \pm 3.10$ \\
        GT                  & $38.87 \pm 3.66$ & $36.30 \pm 1.98$ \\
        GT-sep              & $40.31 \pm 3.01$ & $36.66 \pm 1.63$ \\
        \midrule
        H$_2$GCN            & $26.75 \pm 3.64$ & $35.10 \pm 1.15$ \\
        CPGNN               & $33.00 \pm 3.15$ & $30.04 \pm 2.03$ \\
        GPR-GNN             & $39.93 \pm 3.30$ & $38.95 \pm 1.99$ \\
        FSGNN               & $40.61 \pm 2.97$ & $35.92 \pm 1.32$ \\
        GloGNN              & $25.90 \pm 3.58$ & $35.11 \pm 1.24$ \\
        FAGCN               & $41.90 \pm 2.72$ & $41.08 \pm 2.27$ \\
        GBK-GNN             & $39.61 \pm 2.60$ & $35.51 \pm 1.65$ \\
        JacobiConv          & $39.00 \pm 4.20$ & $29.71 \pm 1.66$ \\
        \midrule
        ChebNet             & $24.98 \pm 3.52$ & $35.05 \pm 1.22$ \\
        ChebNetII           & $40.82 \pm 4.96$ & $41.09 \pm 2.41$ \\
        BernNet             & $37.30 \pm 4.17$ & $42.41 \pm 1.97$ \\
        PolySpectralGNN     & $38.25 \pm 5.33$ & $41.24 \pm 1.87$ \\
        \midrule
        NSD-Diag            & $41.05 \pm 4.01$ & $43.11 \pm 2.09$ \\
        NSD-Bundle          & $40.56 \pm 2.67$ & $42.43 \pm 3.23$ \\
        NSD-General         & $42.72 \pm 2.06$ & $43.71 \pm 2.06$ \\
        \textbf{PolyNSD-Diag}    
                            & \textcolor{Top2}{$\mathbf{45.88 \pm 3.60}$} 
                            & \textcolor{Top2}{$\mathbf{48.60 \pm 1.85}$} \\
        \textbf{PolyNSD-Bundle}  
                            & \textcolor{Top3}{$\mathbf{44.64 \pm 3.78}$} 
                            & \textcolor{Top3}{$\mathbf{47.86 \pm 1.79}$} \\
        \textbf{PolyNSD-General} 
                            & \textcolor{Top1}{$\mathbf{45.94 \pm 3.79}$} 
                            & \textcolor{Top1}{$\mathbf{48.82 \pm 1.60}$} \\
        \bottomrule
    \end{tabular}
\end{table}

\noindent\textbf{Findings: PolyNSD achieves strong performance on the new heterophily benchmark.}
The results are reported in \Cref{tab:new_heterophilic_results}. Across all five
datasets, PolyNSD consistently improves over first-order sheaf diffusion,
confirming that the benefits of higher-order polynomial propagation extend
beyond the classical small-scale heterophily benchmarks. In particular,
PolyNSD-General obtains the best result on all five datasets, while
PolyNSD-Bundle also reaches the top-three performance on
\textsc{Roman-Empire}, \textsc{Amazon-Ratings}, \textsc{Minesweeper}, and
\textsc{Tolokers}. 

\begin{table}[t]
\centering
\scriptsize
\setlength{\tabcolsep}{3.2pt}
\renewcommand{\arraystretch}{0.95}
\caption{Results on the new heterophily benchmark datasets. Accuracy is reported
for \texttt{roman-empire} and \texttt{amazon-ratings}; ROC-AUC is reported for
\texttt{minesweeper}, \texttt{tolokers}, and \texttt{questions}. We report
PolyNSD results under a stricter hyper-parameter space. Higher is better.
The top three models for each dataset are coloured by
\textcolor{Top1}{\textbf{First}},
\textcolor{Top2}{\textbf{Second}} and
\textcolor{Top3}{\textbf{Third}}, respectively.}
\label{tab:new_heterophilic_results}
\begin{tabular}{lccccc}
\toprule
 & \texttt{roman-empire} 
 & \texttt{amazon-ratings} 
 & \texttt{minesweeper} 
 & \texttt{tolokers} 
 & \texttt{questions} \\
\midrule
ResNet     
    & $65.88 \pm 0.38$ 
    & $45.90 \pm 0.52$ 
    & $50.89 \pm 1.39$ 
    & $72.95 \pm 1.06$ 
    & $70.34 \pm 0.76$ \\
ResNet+SGC 
    & $73.90 \pm 0.51$ 
    & $50.66 \pm 0.48$ 
    & $70.88 \pm 0.90$ 
    & $80.70 \pm 0.97$ 
    & $75.81 \pm 0.96$ \\
ResNet+adj 
    & $52.25 \pm 0.40$ 
    & $51.83 \pm 0.57$ 
    & $50.42 \pm 0.83$ 
    & $78.78 \pm 1.11$ 
    & $75.77 \pm 1.24$ \\
\midrule
GCN     
    & $73.69 \pm 0.74$ 
    & $48.70 \pm 0.63$ 
    & $89.75 \pm 0.52$ 
    & $83.64 \pm 0.67$ 
    & $76.09 \pm 1.27$ \\
SAGE    
    & $85.74 \pm 0.67$ 
    & \textcolor{Top3}{$\mathbf{53.63 \pm 0.39}$} 
    & $93.51 \pm 0.57$
    & $82.43 \pm 0.44$ 
    & $76.44 \pm 0.62$ \\
GAT     
    & $80.87 \pm 0.30$ 
    & $49.09 \pm 0.63$ 
    & $92.01 \pm 0.68$ 
    & $83.70 \pm 0.47$ 
    & $77.43 \pm 1.20$ \\
GAT-sep 
    & \textcolor{Top2}{$\mathbf{88.75 \pm 0.41}$}
    & $52.70 \pm 0.62$
    & $93.91 \pm 0.35$
    & \textcolor{Top3}{$\mathbf{83.78 \pm 0.43}$}
    & $76.79 \pm 0.71$ \\
GT      
    & $86.51 \pm 0.73$
    & $51.17 \pm 0.66$ 
    & $91.85 \pm 0.76$ 
    & $83.23 \pm 0.64$ 
    & $77.95 \pm 0.68$ \\
GT-sep  
    & $87.32 \pm 0.39$
    & $52.18 \pm 0.80$ 
    & $92.29 \pm 0.47$ 
    & $82.52 \pm 0.92$ 
    & \textcolor{Top3}{$\mathbf{78.05 \pm 0.93}$} \\
\midrule
H$_2$GCN    
    & $60.11 \pm 0.52$ 
    & $36.47 \pm 0.23$ 
    & $89.71 \pm 0.31$ 
    & $73.35 \pm 1.01$ 
    & $63.59 \pm 1.46$ \\
CPGNN       
    & $63.96 \pm 0.62$ 
    & $39.79 \pm 0.77$ 
    & $52.03 \pm 5.46$ 
    & $73.36 \pm 1.01$ 
    & $65.96 \pm 1.95$ \\
GPR-GNN     
    & $64.85 \pm 0.27$ 
    & $44.88 \pm 0.34$ 
    & $86.24 \pm 0.61$ 
    & $72.94 \pm 0.97$ 
    & $55.48 \pm 0.91$ \\
FSGNN       
    & $79.92 \pm 0.56$ 
    & $52.74 \pm 0.83$
    & $90.08 \pm 0.70$ 
    & $82.76 \pm 0.61$ 
    & \textcolor{Top2}{$\mathbf{78.86 \pm 0.92}$} \\
GloGNN      
    & $59.63 \pm 0.69$ 
    & $36.89 \pm 0.14$ 
    & $51.08 \pm 1.23$ 
    & $73.39 \pm 1.17$ 
    & $65.74 \pm 1.19$ \\
FAGCN       
    & $65.22 \pm 0.56$ 
    & $44.12 \pm 0.30$ 
    & $88.17 \pm 0.73$ 
    & $77.75 \pm 1.05$ 
    & $77.24 \pm 1.26$ \\
GBK-GNN     
    & $74.57 \pm 0.47$ 
    & $45.98 \pm 0.71$ 
    & $90.85 \pm 0.58$ 
    & $81.01 \pm 0.67$ 
    & $74.47 \pm 0.86$ \\
JacobiConv  
    & $71.14 \pm 0.42$ 
    & $43.55 \pm 0.48$ 
    & $89.66 \pm 0.40$ 
    & $68.66 \pm 0.65$ 
    & $73.88 \pm 1.16$ \\
\midrule
NSD-Diag            
    & $76.82 \pm 2.49$ 
    & $38.79 \pm 0.39$ 
    & $86.34 \pm 0.69$ 
    & $75.31 \pm 1.14$ 
    & $69.94 \pm 2.04$ \\
NSD-Bundle            
    & $78.75 \pm 2.23$ 
    & $40.23 \pm 0.23$ 
    & $87.43 \pm 0.96$ 
    & $78.65 \pm 2.94$ 
    & $66.23 \pm 1.49$ \\
NSD-General           
    & $80.41 \pm 0.72$ 
    & $42.76 \pm 0.54$ 
    & $92.15 \pm 0.84$ 
    & $80.31 \pm 0.76$ 
    & $69.69 \pm 1.46$ \\
\textbf{PolyNSD-Diag}        
    & $87.49 \pm 2.21$ 
    & $52.87 \pm 0.24$ 
    & \textcolor{Top3}{$\mathbf{96.98 \pm 0.55}$} 
    & $82.27 \pm 0.61$ 
    & $75.37 \pm 1.08$ \\
\textbf{PolyNSD-Bundle}        
    & \textcolor{Top3}{$\mathbf{88.54 \pm 1.65}$} 
    & \textcolor{Top2}{$\mathbf{53.87 \pm 0.28}$} 
    & \textcolor{Top2}{$\mathbf{97.23 \pm 0.55}$} 
    & \textcolor{Top3}{$\mathbf{83.92 \pm 0.61}$} 
    & $76.37 \pm 1.08$ \\
\textbf{PolyNSD-General}        
    & \textcolor{Top1}{$\mathbf{89.87 \pm 1.36}$} 
    & \textcolor{Top1}{$\mathbf{54.34 \pm 0.34}$} 
    & \textcolor{Top1}{$\mathbf{98.86 \pm 0.55}$} 
    & \textcolor{Top1}{$\mathbf{84.75 \pm 0.61}$} 
    & \textcolor{Top1}{$\mathbf{79.37 \pm 1.08}$} \\
PolySpectralGNN     
    & $59.16 \pm 1.64$ 
    & $36.98 \pm 0.64$ 
    & $86.59 \pm 0.60$ 
    & $78.20 \pm 0.61$ 
    & $69.40 \pm 1.74$ \\
\midrule
ChebNet             
    & $53.96 \pm 0.32$ 
    & $36.79 \pm 0.07$ 
    & $70.00 \pm 0.45$ 
    & $70.00 \pm 0.43$ 
    & $60.00 \pm 0.40$ \\
ChebNetII           
    & $63.86 \pm 0.43$ 
    & $38.82 \pm 0.07$ 
    & $74.73 \pm 1.51$ 
    & $75.09 \pm 0.87$ 
    & $68.50 \pm 1.55$ \\
BernNet             
    & $67.46 \pm 4.38$ 
    & $36.79 \pm 0.05$ 
    & $80.94 \pm 1.16$ 
    & $76.19 \pm 0.85$ 
    & $66.35 \pm 1.23$ \\
\bottomrule
\end{tabular}
\end{table}

\subsection{Evaluation on Malignant Heterophily Datasets with 60/20/20 Splits}
\label{app:malignant_heterophily_60_20_20}

We additionally evaluate PolyNSD on the malignant heterophily datasets considered
by Luan et al.~\cite{luan2024re}. This benchmark separates heterophilic datasets
according to whether graph-aware models benefit from, or are harmed by, message
passing over the input graph. In this taxonomy, \textsc{Texas},
\textsc{Wisconsin}, \textsc{Film}, and \textsc{Cornell} are classified as
malignant heterophily datasets because graph-aware baselines such as GCN and
SGC-1 underperforms its graph-agnostic counterparts, namely MLP-2 and MLP-1.
Thus, these datasets represent a stricter evaluation setting for methods designed
to operate under harmful heterophilic structure. Differently from the main Sheaf Neural Network evaluation protocol, this benchmark uses 10 random splits with $60\%/20\%/20\%$ train/validation/test proportions. We therefore report these results separately, so that comparisons are not conflated across different splitting protocols. The goal of this evaluation is to test whether the higher-order spectral propagation introduced by PolyNSD remains beneficial when the graph structure is explicitly identified as harmful for standard message passing.

\noindent\textbf{Findings: PolyNSD remains very competitive under malignant heterophily.}
The results are reported in \Cref{tab:malignant_heterophily_results}. We compare
PolyNSD against first-order NSD variants, graph-aware and graph-agnostic
baselines, and spectral baselines. Across the four datasets, PolyNSD consistently
improves over the corresponding NSD variants. On \textsc{Film}, where the
graph-agnostic MLP baselines are substantially weaker than PolyNSD, the gains
are especially clear. On \textsc{Texas}, \textsc{Wisconsin}, and
\textsc{Cornell}, PolyNSD is competitive with the strongest graph-agnostic
baselines and obtains top-ranked results among the sheaf and spectral models. In some cases, MLP solutions reach the highest accuracy, but they are way more computationally expensive than PolyNSD. These results suggest, therefore, that the polynomial sheaf propagator can mitigate the limitations of first-order diffusion by learning a higher-order spectral response that selectively combines local and non-local information.

\begin{table}[t]
    \centering
    \scriptsize
    \caption{Results on the malignant heterophily datasets \textsc{Texas},
    \textsc{Wisconsin}, \textsc{Film}, and \textsc{Cornell}. We follow the
    protocol of Luan et al.~\cite{luan2024re}, using 10 random splits with
    $60\%/20\%/20\%$ train/validation/test proportions. We include the
    graph-aware and graph-agnostic baselines reported in their benchmark,
    together with NSD, PolyNSD, and spectral baselines. Higher is better.
    The top three models for each dataset are coloured by
    \textcolor{Top1}{\textbf{First}},
    \textcolor{Top2}{\textbf{Second}} and
    \textcolor{Top3}{\textbf{Third}}, respectively.}
    \label{tab:malignant_heterophily_results}
    \resizebox{0.8\columnwidth}{!}{%
    \begin{tabular}{lcccc}
        \toprule
        & \texttt{Texas} & \texttt{Wisconsin} & \texttt{Film} & \texttt{Cornell} \\
        \midrule
        GCN        
            & $83.11 \pm 3.20$ 
            & $75.50 \pm 2.92$ 
            & $35.51 \pm 0.99$ 
            & $82.46 \pm 3.11$ \\
        MLP-2      
            & $92.26 \pm 0.71$ 
            & \textcolor{Top3}{$\mathbf{93.87 \pm 3.33}$} 
            & $38.58 \pm 0.25$ 
            & $91.30 \pm 0.70$ \\
        SGC-1      
            & $83.28 \pm 5.43$ 
            & $70.38 \pm 2.85$ 
            & $25.26 \pm 1.18$ 
            & $70.98 \pm 8.39$ \\
        MLP-1      
            & \textcolor{Top1}{$\mathbf{93.77 \pm 3.34}$} 
            & \textcolor{Top3}{$\mathbf{93.87 \pm 3.33}$} 
            & $34.53 \pm 1.48$ 
            & \textcolor{Top1}{$\mathbf{93.77 \pm 3.34}$} \\
        \midrule
        NSD-Diag              
            & $86.84 \pm 4.40$ 
            & $88.82 \pm 3.04$ 
            & $37.47 \pm 1.18$ 
            & $85.79 \pm 5.02$ \\
        NSD-Bundle            
            & $87.89 \pm 3.93$ 
            & $89.21 \pm 4.49$ 
            & $37.58 \pm 1.15$ 
            & $84.74 \pm 6.53$ \\
        NSD-General           
            & $87.37 \pm 4.37$ 
            & $88.63 \pm 2.74$ 
            & $36.93 \pm 1.11$ 
            & $85.53 \pm 4.60$ \\
        \textbf{PolyNSD-Diag}    
            & \textcolor{Top2}{$\mathbf{92.89 \pm 4.59}$} 
            & \textcolor{Top2}{$\mathbf{94.04 \pm 4.42}$} 
            & \textcolor{Top1}{$\mathbf{41.48 \pm 1.24}$} 
            & \textcolor{Top2}{$\mathbf{92.84 \pm 4.08}$} \\
        \textbf{PolyNSD-Bundle}  
            & $92.37 \pm 4.96$ 
            & $93.65 \pm 4.30$ 
            & \textcolor{Top3}{$\mathbf{40.43 \pm 0.97}$} 
            & $91.26 \pm 4.43$ \\
        \textbf{PolyNSD-General} 
            & \textcolor{Top3}{$\mathbf{92.74 \pm 3.21}$} 
            & \textcolor{Top1}{$\mathbf{94.62 \pm 3.59}$} 
            & \textcolor{Top2}{$\mathbf{41.33 \pm 0.98}$} 
            & \textcolor{Top3}{$\mathbf{92.26 \pm 5.02}$} \\
        PolySpectralGNN       
            & $84.47 \pm 5.19$ 
            & $80.20 \pm 4.34$ 
            & $36.68 \pm 1.12$ 
            & $75.79 \pm 6.20$ \\
        \midrule
        ChebNet               
            & $80.79 \pm 7.81$ 
            & $65.25 \pm 6.98$ 
            & $25.51 \pm 1.00$ 
            & $68.68 \pm 5.89$ \\
        ChebNetII             
            & $82.37 \pm 8.07$ 
            & $85.88 \pm 3.01$ 
            & $36.53 \pm 1.11$ 
            & $83.42 \pm 4.71$ \\
        BernNet               
            & $80.00 \pm 9.57$ 
            & $80.58 \pm 5.55$ 
            & $36.55 \pm 1.32$ 
            & $72.89 \pm 9.45$ \\
        \bottomrule
    \end{tabular}%
    }
\end{table}

\subsection{Transductive Long-Range Influence on CityNetworks}
\label{app:citynetworks_transductive}

We further evaluate PolyNSD in a transductive long-range graph learning setting using the \textsc{CityNetworks}\cite{liang2026quantifyinglongrangeinteractionsgraph} benchmark, which contains large city-scale graphs for \textsc{Paris}, \textsc{Shanghai}, \textsc{Los Angeles}, and \textsc{London}, and is designed to test whether graph models can propagate useful information across long-range dependencies in realistic spatial networks, stressing the ability of a model to integrate information over many hops in a transductive
setting (which is the main setting sheaf neural networks are currently built upon). We compare PolyNSD against first-order NSD, standard message-passing neural networks, graph transformers, and spectral graph baselines. For the baseline models, we include the best reported results at depth $L=16$ for both \emph{MPNNs}, \emph{GTs} and \emph{Spectral} classes, while for \emph{Sheaf} models, instead we only train for up to $L=6$, and report their results. 

\noindent\textbf{Findings: PolyNSD improves long-range transductive performance.}
The results are reported in \Cref{tab:citynetworks_transductive_results_app}.
PolyNSD-Diag obtains the strongest average performance among the considered
models, reaching $61.64\%$ average accuracy across the four cities. It improves substantially over first-order NSD-Diag, which reaches only $37.77\%$ on average, showing that local sheaf diffusion alone is not sufficient in this long-range transductive regime. PolyNSD also outperforms strong MPNN baselines such as GraphSAGE and ChebNet, as well as graph-transformer baselines such as GraphGPS, Exphormer, and SGFormer. The gains are also further highlighted since the results for \emph{Sheaf} models are obtained using only $L=6$ layers instead of the $L=16$ of the other cases. PolyNSD-Diag improves over the best reported baseline by a sizeable margin. These results suggest that polynomial sheaf propagation is particularly effective when the task requires controlled multi-hop integration over large spatial graphs.

\begin{table*}[t]
\centering
\scriptsize
\caption{\textit{Transductive long-range results on \textsc{CityNetworks}.}
We report accuracy (\%) on \textsc{Paris}, \textsc{Shanghai},
\textsc{Los Angeles}, and \textsc{London}. Baseline results are the best
reported configurations at depth $L=16$ for all models, except for the \emph{Sheaf} models that are reported at depth $L=6$. Higher is better. The top three models
for each dataset are coloured by
\textcolor{Top1}{\textbf{First}},
\textcolor{Top2}{\textbf{Second}} and
\textcolor{Top3}{\textbf{Third}}, respectively.}
\label{tab:citynetworks_transductive_results_full}
\setlength{\tabcolsep}{4.5pt}
\renewcommand{\arraystretch}{1.08}
\begin{adjustbox}{max width=\textwidth}
\begin{tabular}{llccccc}
\toprule
\textbf{Family} & \textbf{Method}
& \textbf{Paris}
& \textbf{Shanghai}
& \textbf{Los Angeles}
& \textbf{London}
& \textbf{Avg.} \\
\midrule
\multirow{7}{*}{MPNNs}
& MLP
& $25.50 \pm 0.40$
& $28.40 \pm 0.60$
& $24.10 \pm 0.50$
& $27.90 \pm 0.10$
& $26.48$ \\

& ChebNet
& $54.10 \pm 0.20$
& $66.50 \pm 0.10$
& $61.40 \pm 0.40$
& $54.70 \pm 0.20$
& $59.18$ \\

& GCN
& $53.20 \pm 0.30$
& $62.10 \pm 0.20$
& $58.30 \pm 0.30$
& $50.10 \pm 0.70$
& $55.93$ \\

& GraphSAGE
& $54.60 \pm 0.20$
& \textcolor{Top3}{\textbf{$68.30 \pm 0.50$}}
& $61.40 \pm 0.30$
& $55.40 \pm 0.20$
& $59.93$ \\

& GAT
& $51.10 \pm 0.30$
& $68.00 \pm 0.50$
& $59.50 \pm 0.30$
& $52.00 \pm 0.30$
& $57.65$ \\

& GCNII
& $51.30 \pm 0.20$
& $61.50 \pm 0.40$
& $56.00 \pm 0.30$
& $48.20 \pm 0.30$
& $54.25$ \\

& DropEdge
& $48.20 \pm 0.20$
& $60.80 \pm 0.40$
& $55.50 \pm 0.30$
& $45.00 \pm 0.30$
& $52.38$ \\
\midrule
\multirow{3}{*}{GTs}
& GraphGPS
& $52.10 \pm 0.60$
& $63.00 \pm 0.50$
& $59.80 \pm 0.50$
& OOM
& -- \\

& Exphormer
& $55.10 \pm 0.80$
& \textcolor{Top1}{\textbf{$70.20 \pm 0.40$}}
& \textcolor{Top1}{\textbf{$63.80 \pm 0.60$}}
& $49.50 \pm 0.40$
& $59.65$ \\

& SGFormer
& $52.00 \pm 0.80$
& $64.10 \pm 0.30$
& $60.10 \pm 0.70$
& $48.30 \pm 0.30$
& $56.13$ \\
\midrule
\multirow{3}{*}{Spectral}
& ChebNet
& $41.92 \pm 7.65$
& $59.34 \pm 0.00$
& $53.34 \pm 5.52$
& $51.25 \pm 4.42$
& $51.46$ \\

& ChebNetII
& \textcolor{Top3}{\textbf{$49.89 \pm 2.33$}}
& $66.83 \pm 0.00$
& $58.34 \pm 4.72$
& \textcolor{Top2}{\textbf{$57.37 \pm 5.45$}}
& $58.11$ \\

& BernNet
& $47.43 \pm 7.23$
& $65.23 \pm 0.00$
& $57.77 \pm 4.45$
& \textcolor{Top3}{\textbf{$56.75 \pm 4.91$}}
& $56.80$ \\
\midrule
\multirow{3}{*}{Sheaf}
& NSD-Diag
& $34.55 \pm 4.76$
& $43.85 \pm 0.78$
& $34.54 \pm 3.22$
& $38.14 \pm 2.67$
& $37.77$ \\

& \textbf{PolyNSD-Diag}
& \textcolor{Top1}{\textbf{$57.43 \pm 2.28$}}
& \textcolor{Top2}{\textbf{$68.57 \pm 0.34$}}
& \textcolor{Top2}{\textbf{$61.82 \pm 4.23$}}
& \textcolor{Top1}{\textbf{$58.72 \pm 3.33$}}
& \textcolor{Top1}{\textbf{$61.64$}} \\

& PolySpectralGNN
& $45.72 \pm 3.36$
& $57.78 \pm 0.00$
& $55.67 \pm 3.45$
& $51.25 \pm 2.04$
& $52.61$ \\
\bottomrule
\end{tabular}
\end{adjustbox}
\end{table*}

\noindent\textbf{Findings: PolyNSD offers the best accuracy-runtime trade-off.}
Accuracy alone does not fully capture the trade-off between expressivity and
efficiency. We therefore also report a runtime-aware comparison in
\Cref{tab:citynetworks_runtime}. Training and inference times are normalised so that PolyNSD-Diag corresponds to $1.00\times$. Although some purely spectral baselines are slightly faster per epoch, they obtain lower accuracy. Conversely, NSD-Diag is both less accurate and slower than PolyNSD-Diag in this setting. Thus, PolyNSD offers the best accuracy--runtime trade-off among the sheaf-based models: it substantially improves long-range predictive performance while remaining computationally competitive.
\begin{table}[t]
\centering
\scriptsize
\caption{\textit{Runtime-aware comparison on \textsc{CityNetworks}.}
Training and inference time per epoch are normalised so that
\textsc{PolyNSD-Diag}$=1.00\times$. Lower is better for runtime, while higher
is better for accuracy.}
\label{tab:citynetworks_runtime}
\setlength{\tabcolsep}{5pt}
\renewcommand{\arraystretch}{1.08}
\begin{tabular}{lccc}
\toprule
\textbf{Method}
& \textbf{Avg. Acc.}
& \textbf{Train / epoch}
& \textbf{Infer / epoch} \\
\midrule
NSD-Diag
& $37.77$
& $\sim 1.60\times$
& $\sim 1.45\times$ \\

\textbf{PolyNSD-Diag}
& \textbf{61.64}
& \textbf{$1.00\times$}
& \textbf{$1.00\times$} \\

PolySpectralGNN
& $52.61$
& $\sim 0.85\times$
& $\sim 0.82\times$ \\
\midrule
ChebNet
& $51.46$
& $\sim 0.83\times$
& $\sim 0.80\times$ \\

ChebNetII
& $58.11$
& $\sim 0.90\times$
& $\sim 0.87\times$ \\

BernNet
& $56.80$
& $\sim 0.92\times$
& $\sim 0.89\times$ \\
\bottomrule
\end{tabular}
\end{table}

\subsection{Restriction-Map Geometry Across Layers}
\label{app:restriction-map-geometry}

We inspect the geometry learned by the sheaf restriction maps across
layers. For each layer, we visualise the edge-wise restriction maps in two
complementary ways. First, we project the learned edge transports to two
dimensions using UMAP, colouring each edge by the norm of its transport map.
Second, we plot the raw restriction-map coordinates as a heatmap, where rows
correspond to edges and columns correspond to the restriction dimensions. This allows us to inspect both the global geometry of the learned edge transports and the coordinate-wise structure of the learned sheaf maps.

\paragraph{Layer-wise UMAP evolution and Growth of transport magnitude with depth.}
The UMAP projections show a clear progression across layers. In the first
layers, the edge transports form a relatively dense and continuous manifold:
edges are broadly distributed, and the transport norms vary smoothly across the embedding. This suggests that early sheaf layers learn a coarse transport geometry, where many edges still share similar geometric roles. As depth increases, the UMAP structure becomes more fragmented and filament-like. By layers $3$ and $4$, the projected edge transports concentrate along separated curves and branches, with several high-norm regions appearing at the periphery of the embedding. This indicates that deeper layers no longer treat edges as a single homogeneous population, but instead separate them into more specialised transport regimes. The colour scale of the UMAP plots also reveals a systematic increase in transport norm across layers. Early layers exhibit moderate edge-transport magnitudes, whereas deeper layers contain many edges with substantially larger norms. In particular, layers $3$ and $4$ show several high-norm regions, suggesting that the model progressively amplifies a selected subset of edge-wise transports. This behaviour is consistent with the role of sheaf restriction maps as anisotropic transport operators: rather than uniformly smoothing over all edges, the model learns which relations should carry stronger or weaker information.

\paragraph{Coordinate-wise structure of restriction maps.}
The heatmaps provide a more direct view of the learned restriction coordinates. In the first layers, the restriction dimensions exhibit relatively smooth and dimension-specific patterns. Some coordinates are mostly positive, while others are mostly negative, indicating that different stalk dimensions already acquire distinct transport roles. As the layers become deeper, the heatmaps become more structured and higher contrast. In particular, later layers show sparse high-magnitude positive bands against a background of mostly negative values. This suggests that the model learns a selective gating-like behaviour: most edges are attenuated in certain dimensions, while a smaller subset of edges is
strongly activated.

\paragraph{Findings: deeper layers specialise edge transports.}
The visualisations show that restriction maps become increasingly structured
with depth. UMAP projections evolve from compact, continuous clouds to more
separated and filamentary geometries, while heatmaps reveal stronger
dimension-wise specialisation and higher contrast between active and suppressed edge transports. This indicates that deeper PolyNSD layers learn more selective transport patterns, allowing the model to distinguish edges that should preserve, attenuate, or amplify information. These results provide an additional interpretability perspective on the empirical gains of PolyNSD: polynomial spectral propagation supplies higher-order mixing, while the learned sheaf restriction maps organise this mixing through edge-specific and dimension-specific transport geometry.

\begin{figure*}[t]
    \centering
    \begin{subfigure}[t]{0.32\textwidth}
        \centering
        \includegraphics[width=\linewidth]{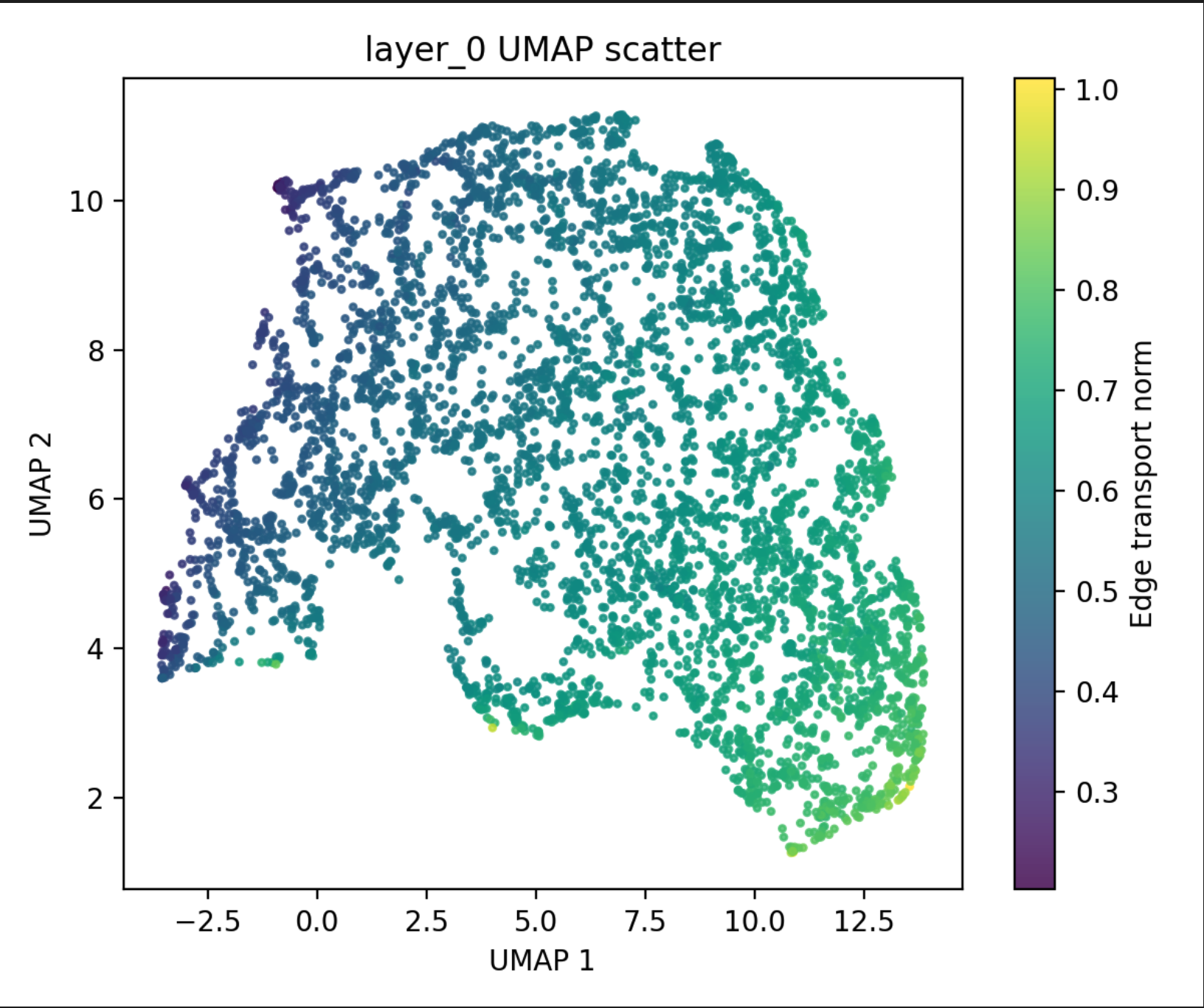}
        \caption{Layer 0 UMAP}
    \end{subfigure}
    \hfill
    \begin{subfigure}[t]{0.32\textwidth}
        \centering
        \includegraphics[width=\linewidth]{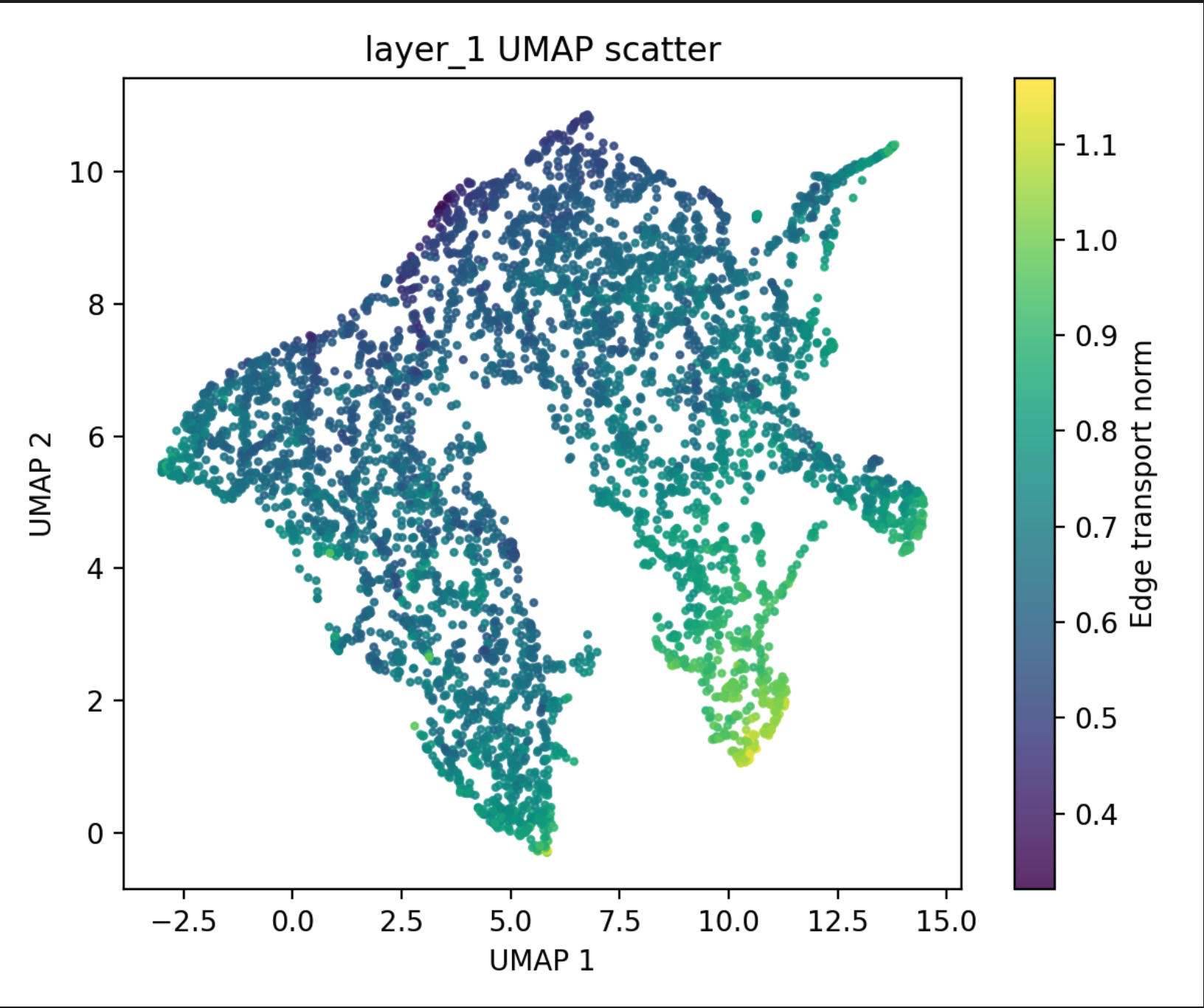}
        \caption{Layer 1 UMAP}
    \end{subfigure}
    \hfill
    \begin{subfigure}[t]{0.32\textwidth}
        \centering
        \includegraphics[width=\linewidth]{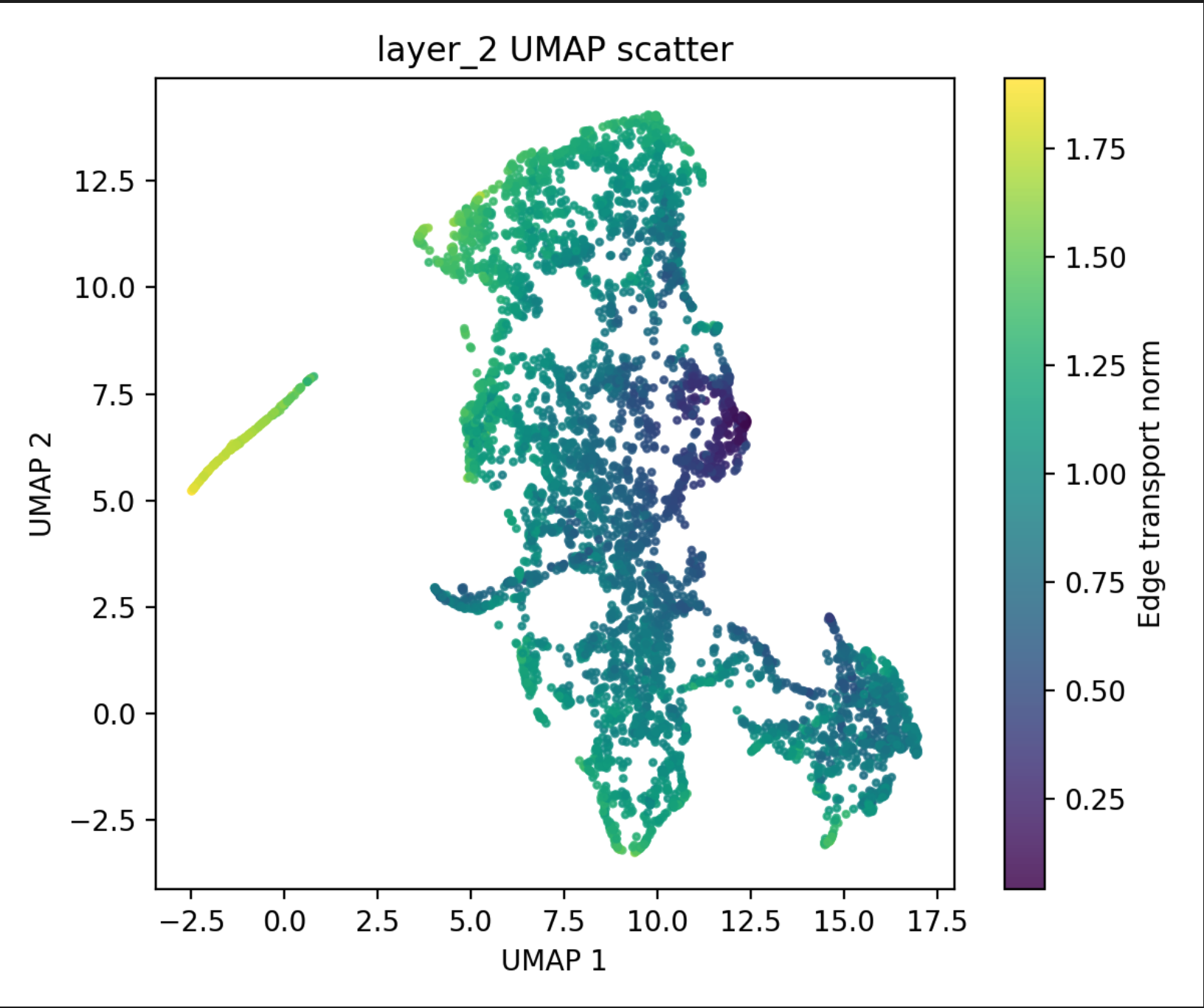}
        \caption{Layer 2 UMAP}
    \end{subfigure}

    \vspace{0.5em}
    \centering
    \begin{subfigure}[t]{0.32\textwidth}
        \centering
        \includegraphics[width=\linewidth]{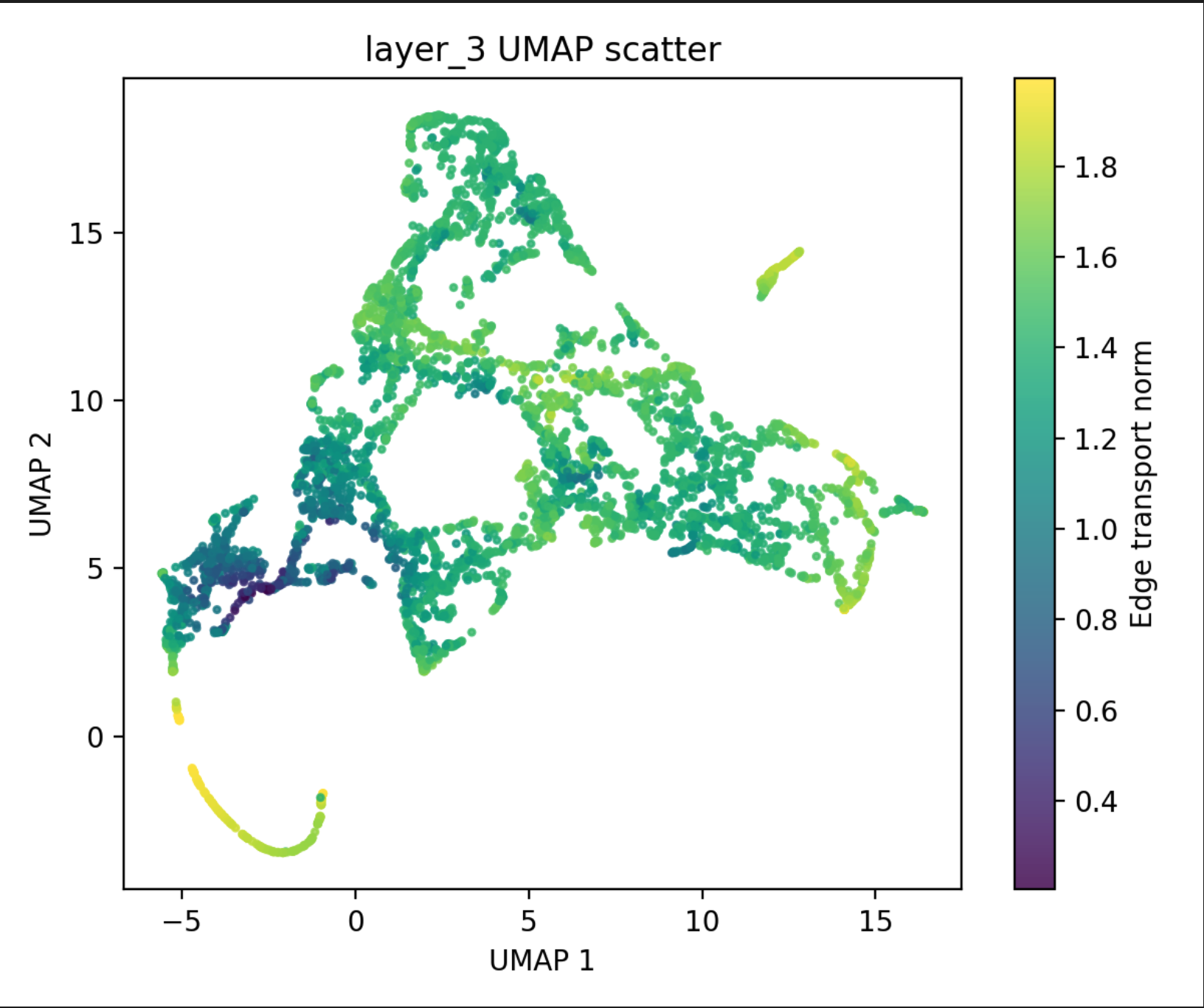}
        \caption{Layer 3 UMAP}
    \end{subfigure}
    \hfill
    \begin{subfigure}[t]{0.32\textwidth}
        \centering
        \includegraphics[width=\linewidth]{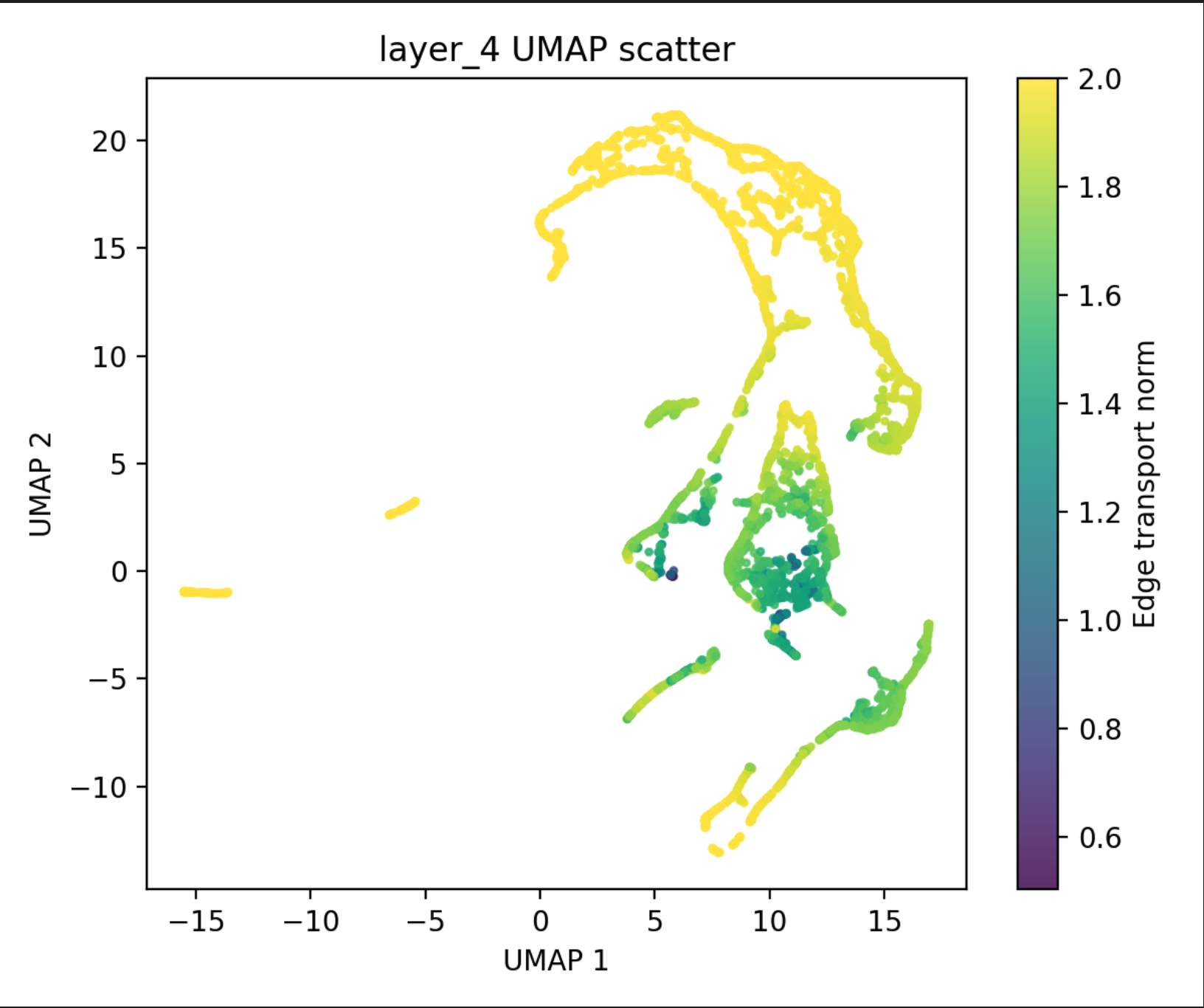}
        \caption{Layer 4 UMAP}
    \end{subfigure}

    \caption{\emph{UMAP projections of learned restriction maps across layers.}
    Each point corresponds to an edge-wise restriction map, projected to two
    dimensions with UMAP and coloured by the norm of the corresponding transport
    map. Deeper layers exhibit more separated and filament-like geometries,
    indicating increasing edge-transport specialisation.}
    \label{fig:restriction-map-umap}
\end{figure*}

\begin{figure*}[t]
    \centering
    \begin{subfigure}[t]{0.48\textwidth}
        \centering
        \includegraphics[width=\linewidth]{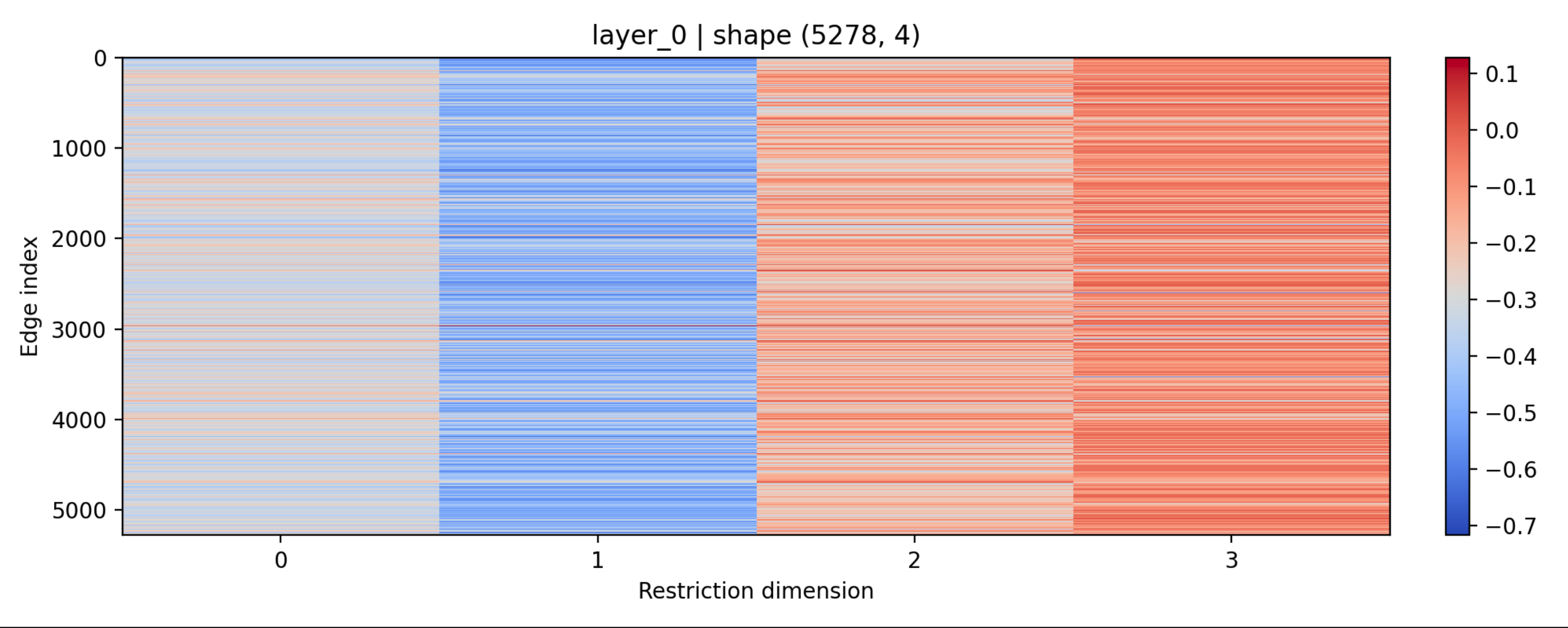}
        \caption{Layer 0}
    \end{subfigure}
    \hfill
    \begin{subfigure}[t]{0.48\textwidth}
        \centering
        \includegraphics[width=\linewidth]{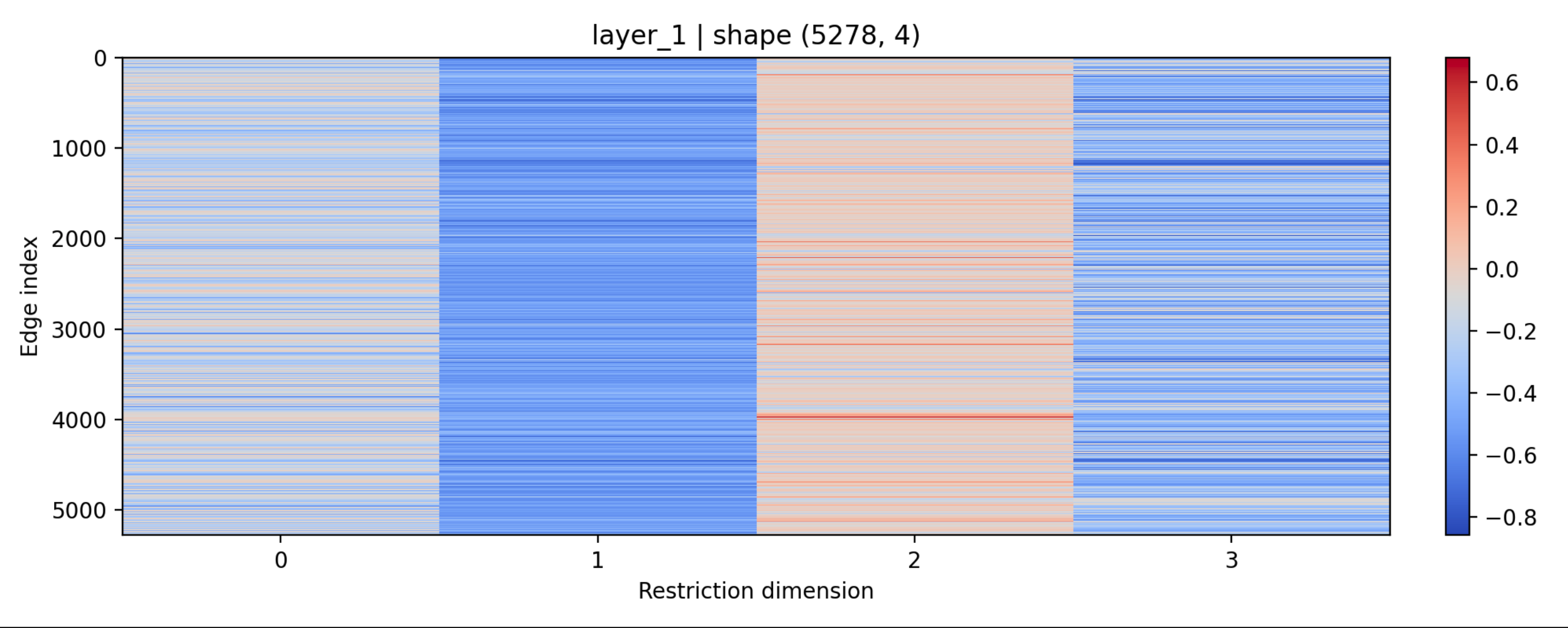}
        \caption{Layer 1}
    \end{subfigure}

    \vspace{0.5em}

    \begin{subfigure}[t]{0.48\textwidth}
        \centering
        \includegraphics[width=\linewidth]{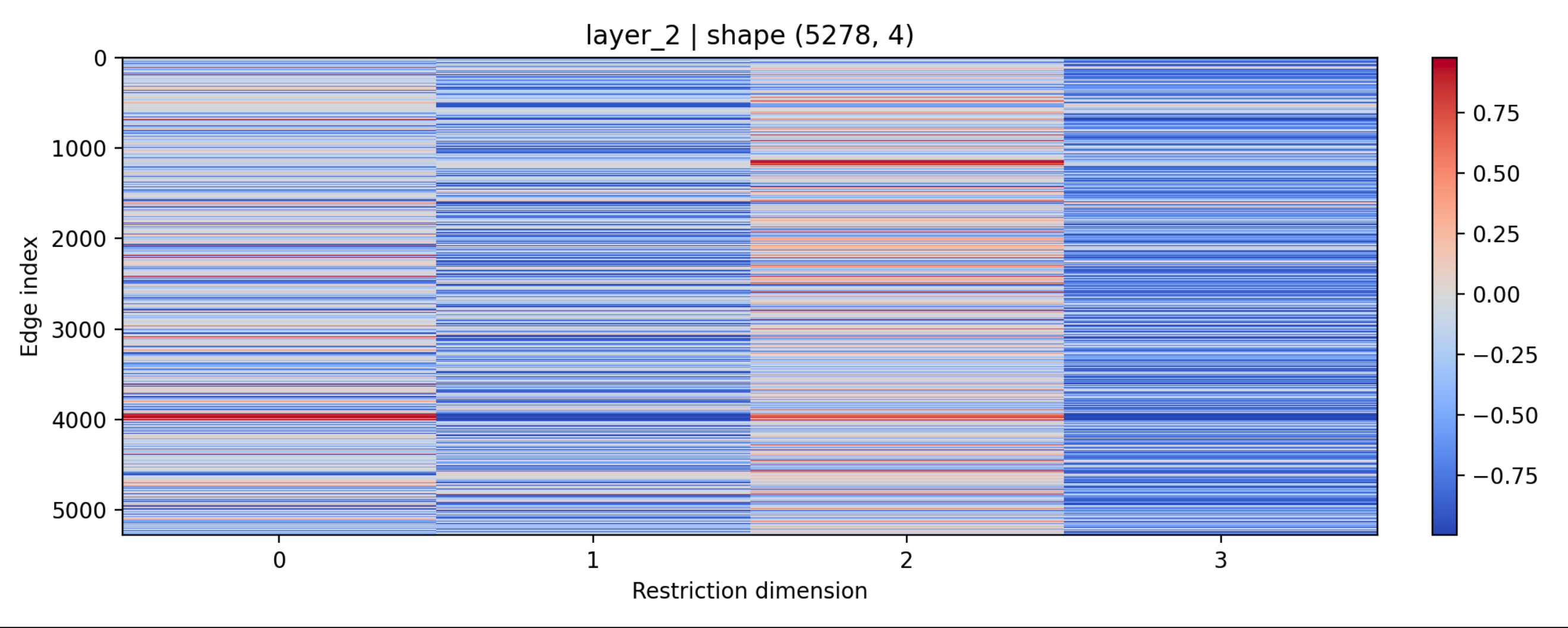}
        \caption{Layer 2}
    \end{subfigure}
    \hfill
    \begin{subfigure}[t]{0.48\textwidth}
        \centering
        \includegraphics[width=\linewidth]{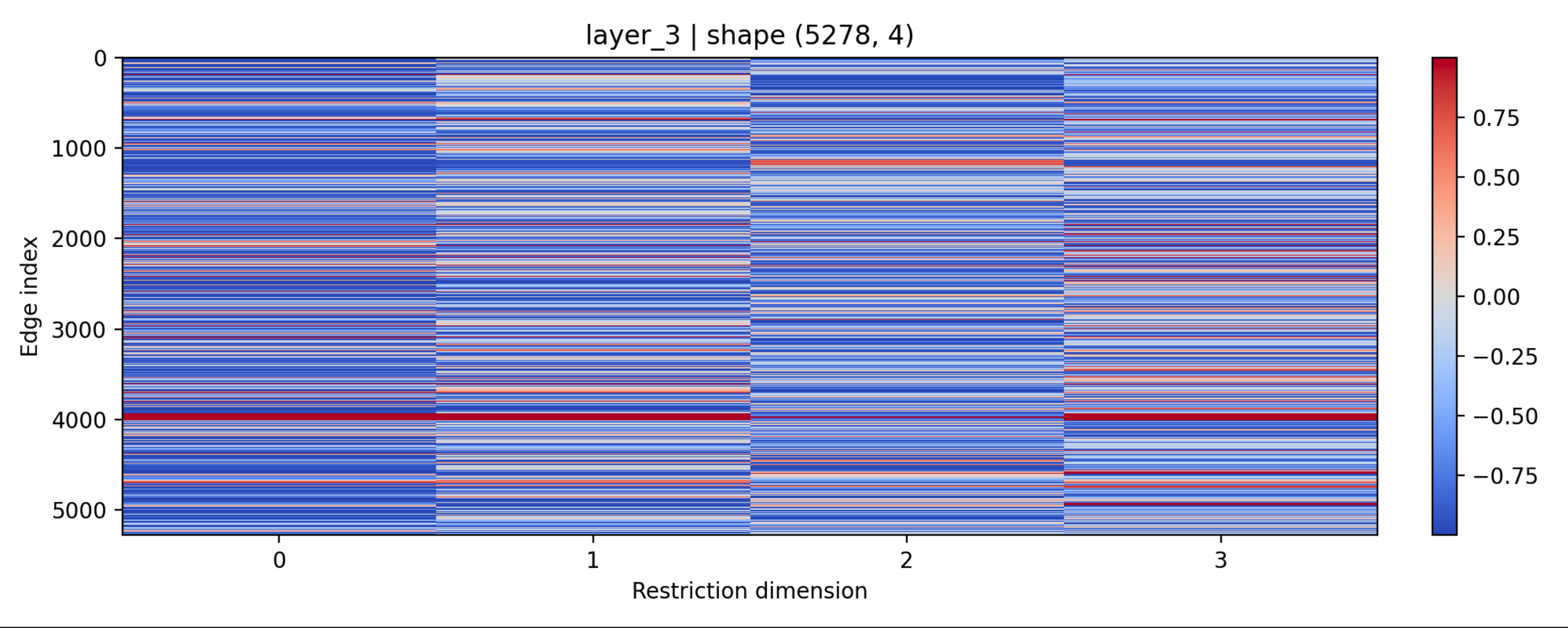}
        \caption{Layer 3}
    \end{subfigure}

    \vspace{0.5em}

    \begin{subfigure}[t]{0.48\textwidth}
        \centering
        \includegraphics[width=\linewidth]{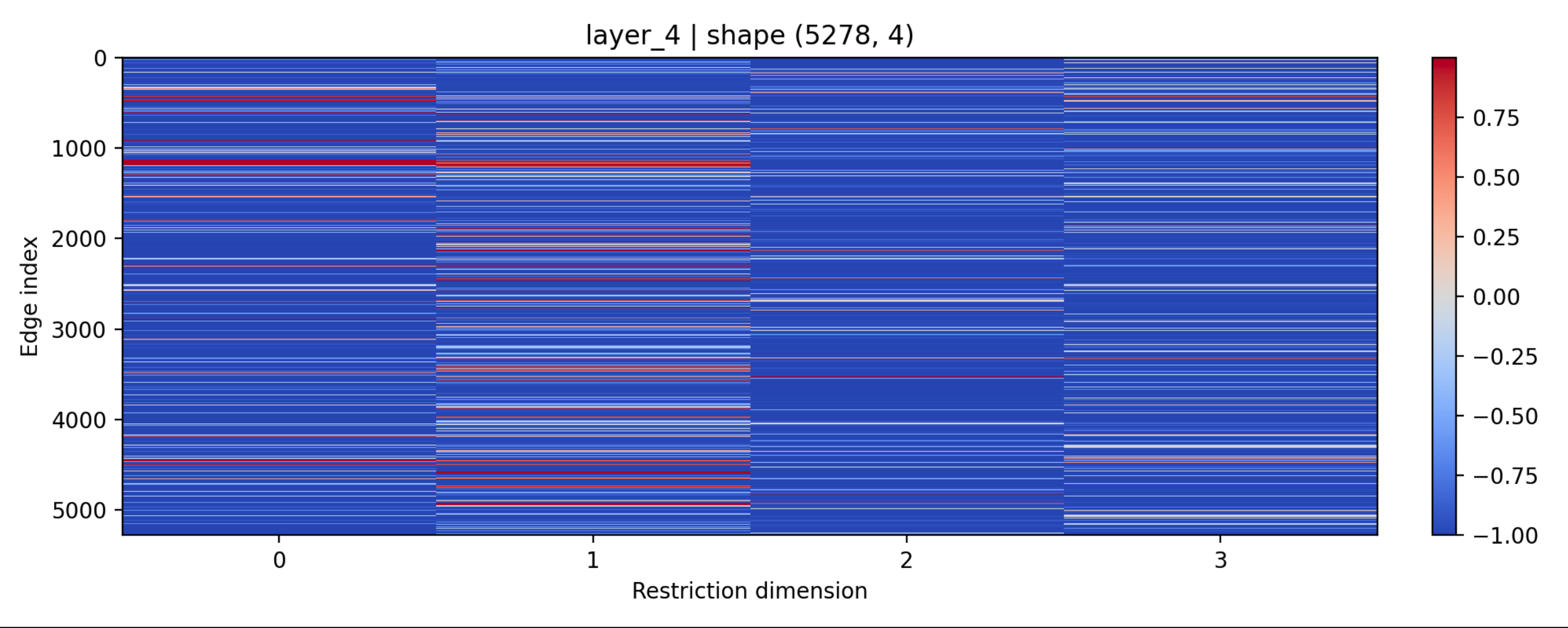}
        \caption{Layer 4}
    \end{subfigure}

    \caption{\emph{Heatmaps of learned restriction-map coordinates across layers.}
    Rows correspond to edges and columns to restriction dimensions. The heatmaps
    show increasing coordinate-wise specialisation with depth, with later layers
    exhibiting stronger high-contrast activation and suppression patterns.}
    \label{fig:restriction-map-heatmaps}
\end{figure*}

\subsection{Federated Causal Sheaf Learning on \textsc{Roman-Empire}}
\label{app:federated-causal-sheaf}

We further evaluate PolyNSD in a federated graph learning setting based on the FedATH causal federated training pipeline~\cite{fu2025less}. The goal of this experiment is to test whether polynomial sheaf-based propagation remains effective when the graph is distributed across multiple clients, and the model must be trained through local updates followed by server-side aggregation. We focus on the \textsc{Roman-Empire} dataset from the new heterophily benchmark suite\cite{platonov2023critical}, since it is a large, sparse, sequence-like graph with long-range syntactic dependencies and low homophily. This makes it a natural stress test for federated heterophilic graph learning.

\paragraph{Federated Setting, Causal and Biased branches.}
We partition the dataset into $K=10$ clients. Each client owns a local subgraph, with its own node features, labels, train/test indices, and edge structure. At each communication round, the current global model is broadcast to all clients. Each client then performs local training on its subgraph for a fixed number of epochs. After local training, the server aggregates the client models by weighted averaging, where the weight of each client is proportional to the number of nodes in its local subgraph. The resulting server model is then used as the global model for the next communication round. Following FedATH~\cite{fu2025less}, each client maintains two predictive branches: a causal branch and a biased branch. The causal branch is trained to predict labels from the edge subset considered causally useful, while the biased branch is trained on the complementary signal. The two branches are coupled by a dependence penalty, encouraging separation between causal and biased representations. The causal branch minimises a supervised cross-entropy loss plus a dependence regularisation term, while the biased branch minimises a negative-entropy objective plus the complementary dependence regularisation. As concerns backbones models for FedATH, we compare PolyNSD and NSD-based models with conventional graph backbones. The code implements these terms through \texttt{loss\_dependence} and \texttt{negative\_entropy}, and logs the cross-entropy, entropy, and dependence components during training. 

\paragraph{RGSheafMask: Restriction-map guided Masking Variant.}
To adapt FedATH to sheaf neural networks, we use learned sheaf restriction maps as edge-level structural features for the causal mask, creating the \emph{RGSheafMask} variant. In the \textsc{LoadRMaps} setting, restriction maps are first loaded from client-level checkpoints. A small mask network then maps the restriction maps to edge scores, which are expanded to the directed edge list and used to weight the causal branch. The biased branch receives the complementary mask $1-s_e$. This allows the causal/bias decomposition to depend not only on raw graph connectivity, but also on the learned transport geometry of the sheaf. 

\paragraph{FullSheafMask: End-to-end Sheaf-mask Variant.}
We also implement an end-to-end variant in which each client maintains three
local branches: a sheaf branch, a causal branch, and a biased branch. The sheaf branch is trained with supervised cross-entropy and produces restriction maps during training. These maps are flattened into edge-level features and passed to the mask network, which generates the causal edge scores used by the causal branch. The total local objective combines the supervised sheaf loss, the causal loss, and the biased loss. In this \emph{FullSheafMask}version, the restriction maps are therefore not only precomputed artifacts, but part of the local learning process. The server still aggregates the causal branch by weighted model averaging. 

\paragraph{Findings: Polynomial sheaf backbones strongly improve federated heterophilic learning.}
The results are reported in \Cref{tab:fedath_roman_empire_k10}. Standard
federated learning baselines remain close to chance-level performance for this setting: FedAvg, FedProx, MOON, FedOPT, and FedProto all obtain approximately $34$--$35\%$ accuracy. Federated graph learning baselines improve on this range, with FedSage+ reaching $41.59\%$ and the strongest conventional FedATH backbone, FedATH-GCN, reaching $48.18\%$. However, replacing conventional graph backbones with sheaf neural backbones yields a substantial improvement. In the \textsc{RGSheafMask} setting, FedCausalSheaf-General reaches $72.31\%$, already far above FedATH-GCN, and introducing polynomial sheaf propagation further raises the best configuration to $76.19\%$ with FedCausalSheaf-GeneralPoly. The strongest results are obtained in the \textsc{FullSheafMask} setting, where restriction maps are explicitly reused in the causal masking pipeline. Even the non-sheaf LoadRMaps-FedATH-GCN baseline improves to $60.63\%$, suggesting that
restriction-map-informed masking provides useful edge-level structure. When this masking mechanism is combined with sheaf backbones, performance increases further: LoadRMaps-FedCausalSheaf-General obtains $76.51\%$, while
LoadRMaps-FedCausalSheaf-BundlePoly reaches $78.00\%$. The best overall model is LoadRMaps-FedCausalSheaf-GeneralPoly, which achieves $80.24\%$ accuracy. These results indicate that learned sheaf restriction maps provide informative causal edge-level signals for federated graph learning, and that higher-order polynomial sheaf propagation further improves the ability of both local and global models to exploit long-range heterophilic dependencies.

\begin{table}[t]
      \centering
      \scriptsize
      \caption{\textit{Accuracy (\%) on \textsc{Roman-Empire} with ten clients
      ($K=10$).} FedATH~\cite{fu2025less} is a causal federated graph learning
      pipeline. We compare standard federated learning baselines, federated
      graph learning baselines, FedATH with conventional graph backbones, and
      FedATH with sheaf/PolyNSD backbones. Higher is better. The top three
      models are coloured by \textcolor{Top1}{\textbf{First}},
      \textcolor{Top2}{\textbf{Second}} and
      \textcolor{Top3}{\textbf{Third}}, respectively.}
      \label{tab:fedath_roman_empire_k10}
      \setlength{\tabcolsep}{4pt}
      \renewcommand{\arraystretch}{1.08}
      \begin{tabular}{llc}
        \toprule
        \textbf{Type} & \textbf{Method} & \textbf{\textsc{Roman-Empire} ($K=10$)} \\
        \midrule
        BL & FedAvg & 34.41 \\
        \midrule
        \multirow{4}{*}{FL}
          & FedProx & 34.30 \\
          & MOON & 33.97 \\
          & FedOPT & 34.35 \\
          & FedProto & 34.93 \\
        \midrule
        \multirow{8}{*}{FGL}
          & FedSage+ & 41.59 \\
          & FGSSL & 36.96 \\
          & FedPUB & 37.31 \\
          & FedTAD & 39.01 \\
          & FedATH-GCN & 48.18 \\
          & FedATH-GATv2 & 14.24 \\
          & FedATH-GIN & 25.69 \\
          & FedATH-Transformer & 13.47 \\
        \midrule
        \multirow{6}{*}{RGSheafMask}
          & FedCausalSheaf-Diag & 68.44 \\
          & FedCausalSheaf-Bundle & 70.07 \\
          & FedCausalSheaf-General & 72.31 \\
          & FedCausalSheaf-DiagPoly & 71.07 \\
          & FedCausalSheaf-BundlePoly & 72.05 \\
          & FedCausalSheaf-GeneralPoly & 76.19 \\
        \midrule
        \multirow{6}{*}{FullSheafMask}
          & LoadRMaps-FedCausalSheaf-Diag & 70.84 \\
          & LoadRMaps-FedCausalSheaf-Bundle & 75.27 \\
          & \textbf{LoadRMaps-FedCausalSheaf-General} & \textcolor{Top3}{\textbf{76.51}} \\
          & LoadRMaps-FedCausalSheaf-DiagPoly & 73.66 \\
          & \textbf{LoadRMaps-FedCausalSheaf-BundlePoly} & 
            \textcolor{Top2}{\textbf{78.00}} \\
          & \textbf{LoadRMaps-FedCausalSheaf-GeneralPoly} & 
            \textcolor{Top1}{\textbf{80.24}} \\
        \bottomrule
      \end{tabular}
\end{table}

\subsection{Synthetic Benchmarks: Heterophily, Scalability, and Noise}
\label{app:synthetic}

In this subsection, we detail synthetic experiments that
underpin the synthetic stress tests discussed in \autoref{subsec:hetero-real-main}. All experiments use the synthetic data-generation procedure described in \autoref{app:synthetic-datasets}.

\emph{Heterophily Sweeps.}
\label{app:heterophily-synthetic}
To probe performance under controlled heterophily, we vary the heterophily
coefficient $ het \in \{0, 0.25, 0.5, 0.75, 1\} $, while keeping all other graph and feature parameters fixed. We consider both synthetic regimes introduced in \cite{caralt2024joint}: (i) \textsc{RiSNN}-style: higher-dimensional features and a configuration mirroring the RiSNN experiments, and (ii) \textsc{Diff}-style: lower-dimensional features and a configuration mirroring the diffusion-style experiments. For each regime we evaluate multiple class counts and average results over $5$ synthetic graph realisations per configuration. We compare PolyNSD variants against MLP, GCN, VanillaSheaf, simple sheaf baselines, and the RiSNN/JdSNN architectures. \autoref{fig:hetero-synth} summarises the results as a $4\times 2$ grid (rows correspond to different numbers of classes, columns to the two synthetic regimes). Within each subplot, we plot test accuracy vs.~$het$ for all models. As heterophily increases, homophily-biased message passing (e.g., GCN) rapidly degrades towards MLP performance, confirming that edges become uninformative or adversarial. Sheaf-based models are more robust, and PolyNSD variants consistently occupy the top of the accuracy curves across all $het$ values, especially in the bundle and general transport classes. This confirm the real-world findings in \autoref{subsec:hetero-real-main} under controlled conditions.

\begin{figure}[htbp]
  \centering
  \includegraphics[width=0.9\textwidth,height=\textheight,keepaspectratio]{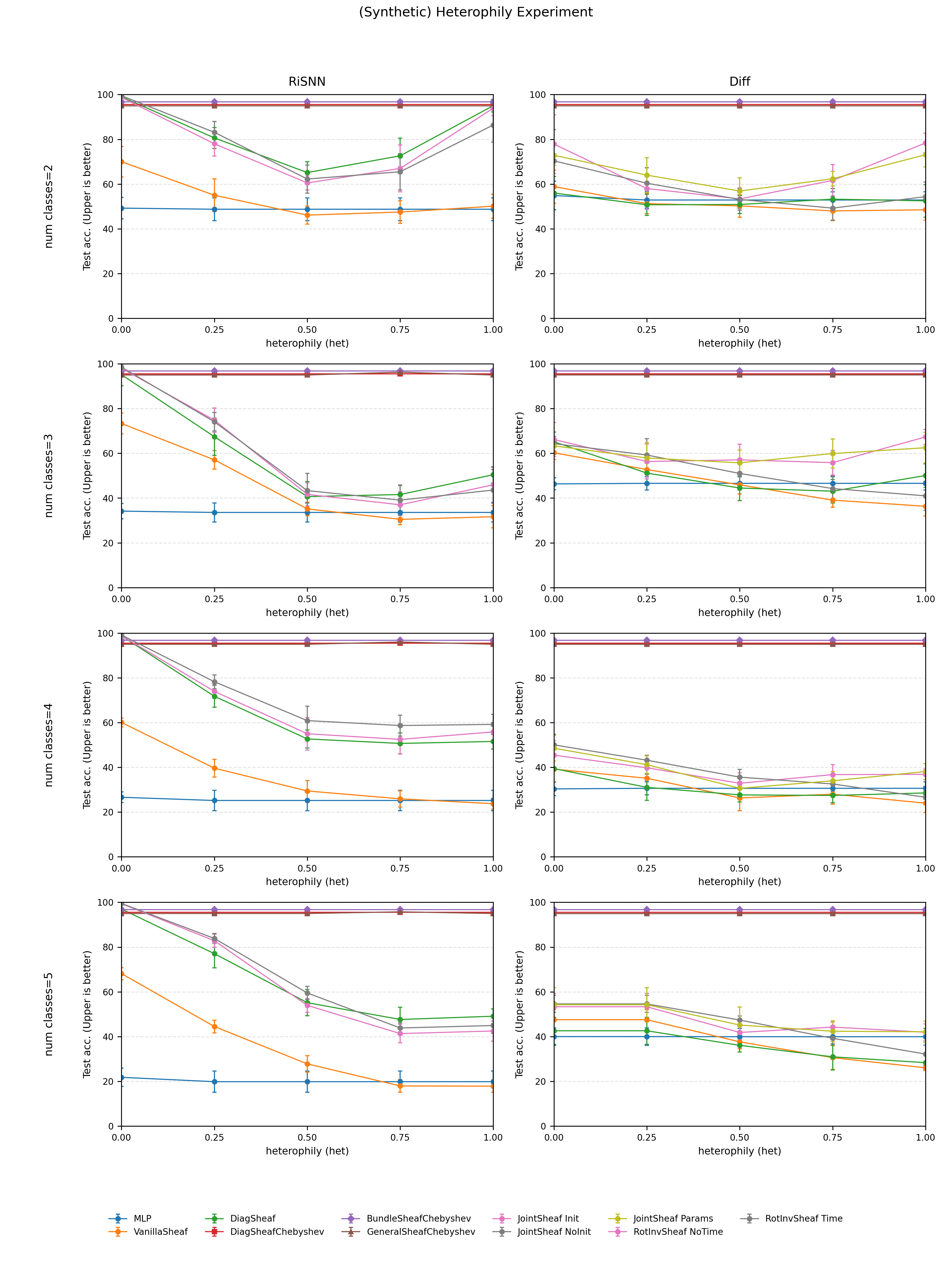}
  \caption{\textit{Synthetic heterophily sweeps.}
  Each row corresponds to a different number of classes; columns distinguish
  the \textsc{RiSNN} and \textsc{Diff} regimes.
  We sweep $het\in\{0,0.25,0.5,0.75,1.0\}$; error bars show mean$\pm$std over
  multiple random graph realisations.}
  \label{fig:hetero-synth}
\end{figure}

\emph{Data Scalability.}
To study how PolyNSD scales with graph size and degree, we jointly vary the number of nodes $N \in \{100, 500, 1000\}$ and the base degree $K \in \{2, 6, 10\} $, while keeping heterophily fixed at a high value ($het = 0.9$). This setting acts as a stress test where edges are mostly cross-class and graphs become denser as $K$ grows. For each $(N,K)$ pair we generate a new synthetic graph in both the \textsc{RiSNN} and \textsc{Diff} regimes and evaluate all models using the same training protocol as before. \autoref{fig:scalability-synth-fig} depicts the results as a grid with rows indexed by $N$ and columns by $K$. Each subplot reports test accuracy for PolyNSD variants and baselines. Across both regimes, PolyNSD variants maintain near-saturated performance (often close to $98\%$) across all scales, whereas baseline methods improve more slowly or plateau at lower accuracies as $N$ and $K$ increase, indicating that PolyNSD scales favourably with both the number of nodes and edge density, and that its polynomial filters remain effective even as graphs become larger and more connected.

\begin{figure}[H]
  \centering
  \includegraphics[width=0.6\textwidth]{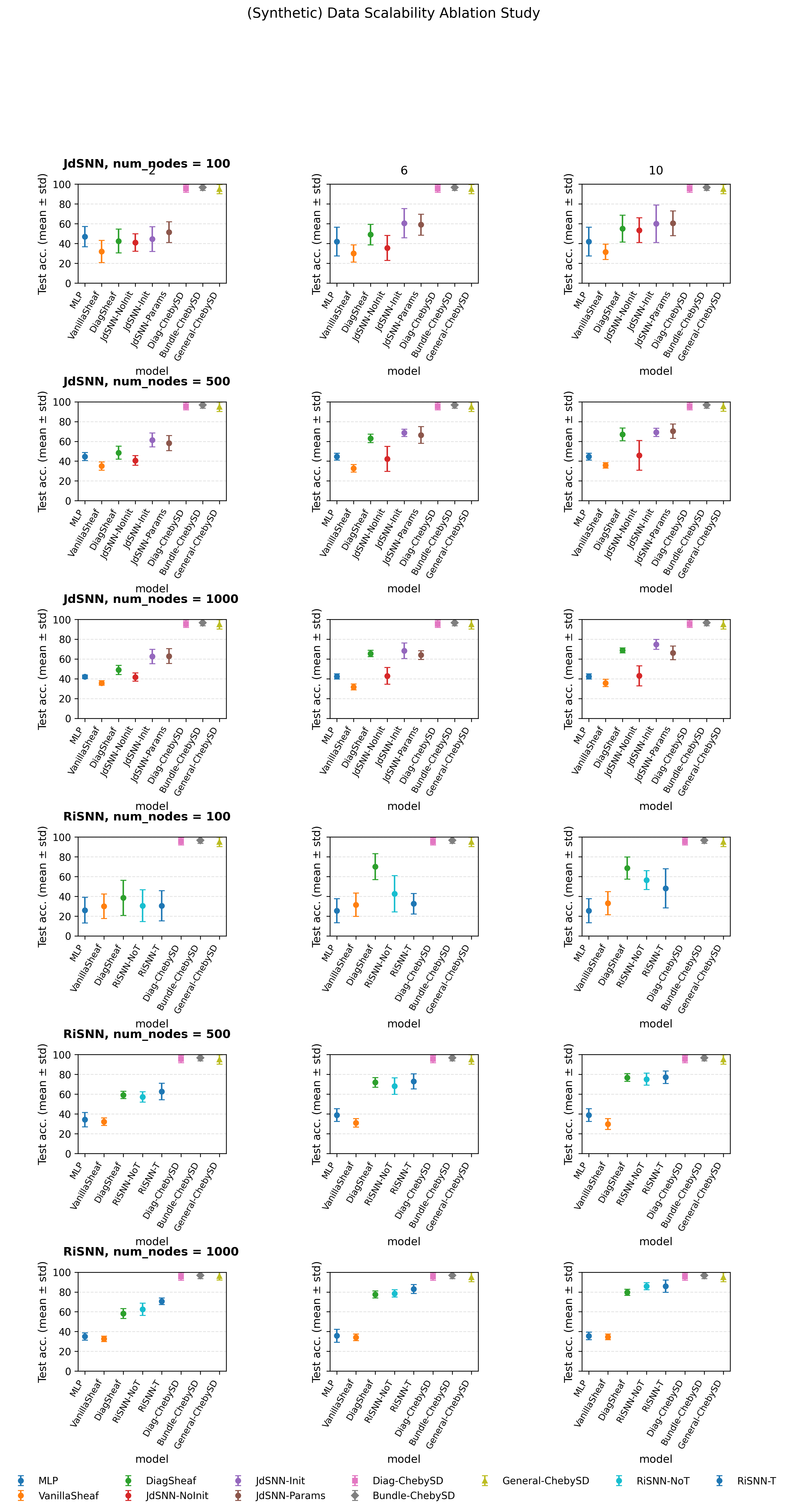}
  \caption{\textit{Data scalability ablation.}
  Rows correspond to increasing number of nodes $N\in\{100,500,1000\}$;
  columns to degree $K\in\{2,6,10\}$ at fixed high heterophily $het=0.9$.
  Top block: \textsc{Diff} setup; bottom block: \textsc{RiSNN} setup.
  PolyNSD maintains near-saturated performance across scales, while baselines
  plateau at lower accuracies.}
  \label{fig:scalability-synth-fig}
\end{figure}

\emph{Effect of Feature Noise.}
Finally, we examine robustness to feature corruption by injecting i.i.d.\ Gaussian noise into node features while keeping the underlying graphs maximally heterophilous ($het = 1$). This setting isolates the effect of covariate noise from that of connectivity. We consider two noise sweeps: for the \textsc{Diff}-style setup, we vary $\texttt{feat\_noise}\in\{0.0, 0.2, 0.4, 0.6, 0.8, 1.0\}$, while for the \textsc{RiSNN}-style setup, we use finer-grained noise levels $\texttt{feat\_noise}\in\{0.00, 0.05, 0.10, 0.15, 0.20, 0.25\}$. For each noise level we generate multiple synthetic instances and average test accuracy over these realisations. \autoref{fig:effect-noise-synth} summarises the results. As noise increases, GCN and other homophily-based message-passing models degrade rapidly, eventually approaching the performance of an MLP that ignores the graph. Sheaf-based models are more robust, and PolyNSD variants are consistently among the best-performing methods across all noise levels. The bundle and general transport classes, combined with spectral control, show the strongest robustness, retaining significant accuracy even at the highest noise levels. These findings support the interpretation of PolyNSD as a \emph{structure-aware denoiser}: the sheaf transports align features in local fibres before comparison, while the spectral polynomial can attenuate high-frequency noise modes and preserve informative low- and mid-frequency components.

\begin{figure}[htbp]
  \centering
  \includegraphics[width=0.7\textwidth]{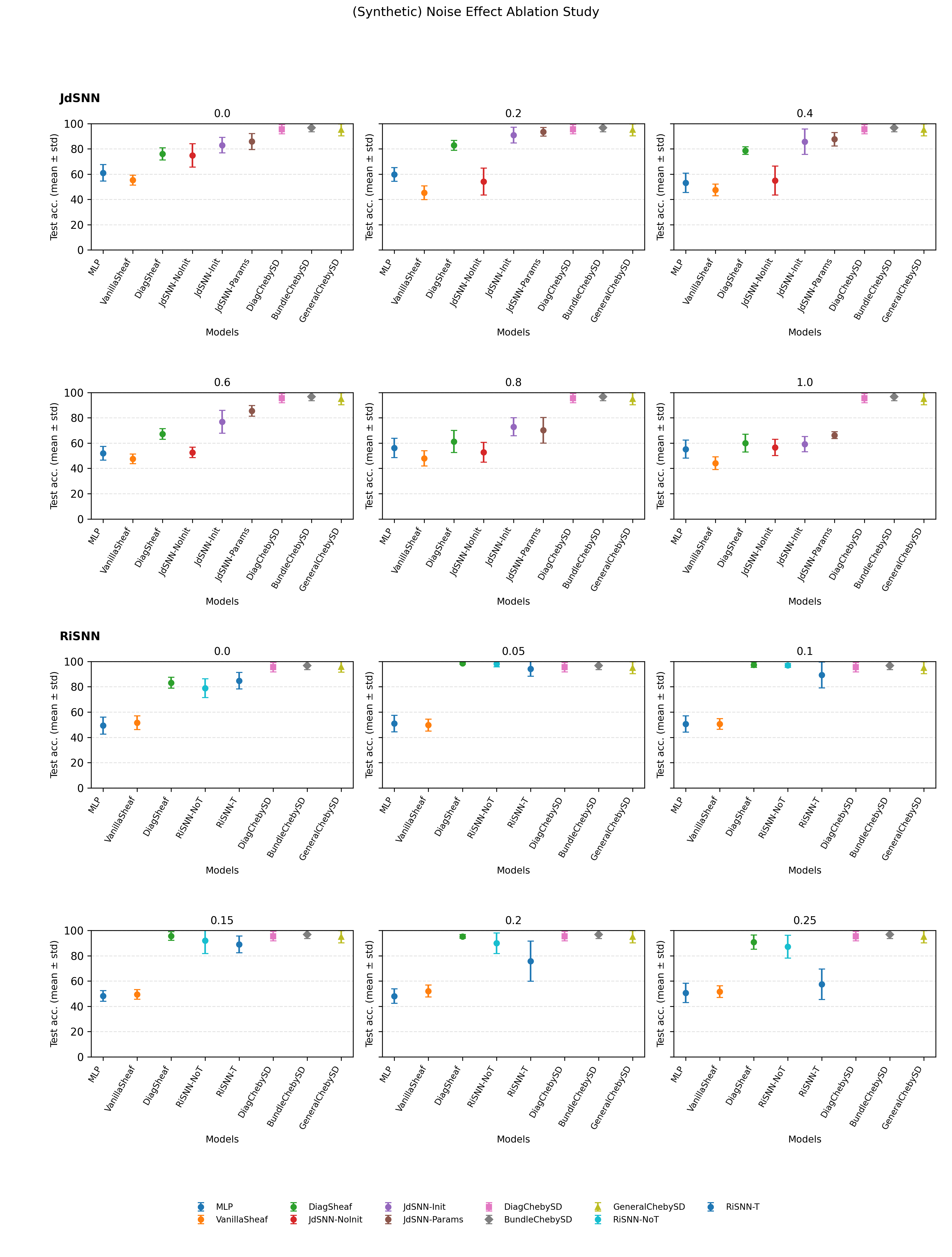}
  \caption{\textit{Effect of feature noise on synthetic tasks.}
  Top two rows: \textsc{Diff}-style setup with noise levels
  $\{0.0, 0.2, 0.4, 0.6, 0.8, 1.0\}$.
  Bottom two rows: \textsc{RiSNN}-style setup with noise levels
  $\{0.00, 0.05, 0.10, 0.15, 0.20, 0.25\}$.
  PolyNSD variants remain among the most robust models, degrading more slowly
  and exhibiting smaller variance than baselines.}
  \label{fig:effect-noise-synth}
\end{figure}


\end{document}